\def\thm@space@setup{%
\thm@preskip=1em \thm@postskip=0pt
}
\acrodef{dro}[DRO]{distributionally robust optimization}
\acrodef{ldt}[LDT]{large deviation theory}
\acrodef{ldp}[LDP]{large deviation principle}
\acrodef{lln}[LLN]{law of large numbers}
\acrodef{kl}[KL]{Kullback-Leibler}
\acrodef{iid}[{i.i.d.\ \!\!}]{independent identically distributed}
\acrodef{qp}[QP]{quadratic program}
\acrodef{qcqp}[QCQP]{quadratically constrained quadratic program}
\acrodef{vod}[VoD]{value of data}
\acrodef{saa}[SAA]{stochastic average approximation}
\acrodef{fl}[FL]{Fenchel-Legendre}
\pgfplotsset{compat=1.15}
\definecolor{red}{RGB}{163, 31, 52}
\definecolor{gray}{RGB}{194, 192, 191}
\definecolor{blue}{RGB}{59, 89, 152}
\definecolor{green}{RGB}{2, 138, 67}
\newcommand{\keywords}[1]{\textbf{Keywords:} #1}
\newcommand{\norm}[1]{\left\|#1\right\|}
\newcommand{\abs}[1]{\left|#1\right|}
\newcommand{\E}[2]{\mathbf{E}_{#1} \left[ #2 \right]}
\newcommand{\one}[1]{\mathbb{1}\left\{#1\right\}}
\newcommand{\set}[2]{\left\{ #1\ : \ #2 \right\}}
\newcommand{\tset}[2]{\{ #1\ : \ #2 \}}
\newcommand{\tpose}{^\top}
\newcommand{\defn}[0]{:=}
\renewcommand{\tfrac}[2]{#1/#2}
\newcommand{\Prob}{{\rm{Prob}}}
\DeclareMathOperator*{\argmin}{arg\,min}
\DeclareMathOperator*{\argmax}{arg\,max}
\def\d{\mathrm{d}}
\def\st{\mathrm{s.t.}}
\DeclareMathOperator{\supp}{supp}
\DeclareMathOperator{\cl}{cl}
\DeclareMathOperator{\interior}{int}
\theoremstyle{plain}
\newtheorem{theorem}{Theorem}[section]
\newtheorem{lemma}[theorem]{Lemma}
\newtheorem{corollary}[theorem]{Corollary}
\theoremstyle{definition}
\newtheorem{remark}[theorem]{Remark}
\newtheorem{example}[theorem]{Example}
\newcommand{\LPdist}{\pi_{\mathrm{LP}}}
\newcommand{\KL}{{\mathrm{KL}}}
\newcommand{\LP}{\mathrm{LP}}
\newcommand{\indic}{\mathbb{1}}
\newcommand{\CVaR}{\mathrm{CVaR}}
\newcommand{\HR}{\mathrm{HR}}
\newcommand{\HRo}{\mathrm{HRo}}
\newcommand{\Eb}{\mathbb{E}}
\newcommand{\Pb}{\mathbb{P}}
\newcommand{\Qb}{\mathbb{Q}}
\newcommand{\cX}{\mathcal{X}}
\newcommand{\cU}{\mathcal{U}}
\newcommand{\cB}{\mathcal{B}}
\newcommand{\cP}{\mathcal{P}}
\newcommand{\cN}{\mathcal{N}}
\newcommand{\cA}{\mathcal{A}}
\newcommand{\cD}{\mathcal{D}}
\newcommand{\into}{\mathrm{o}}
\newcommand{\LPtxt}{\mathrm{LP}}
\newcommand{\txi}{\Tilde{\xi}}
\newcommand{\tn}{\Tilde{n}}
\newcommand{\tc}{\Tilde{c}}
\newcommand{\tzeta}{\Tilde{\zeta}}
\newcommand{\noise}{n}
\newcommand{\Pemp}[1]{\hat{\Pb}_{#1}}
\newcommand{\Var}{\mathrm{Var}}
\newcommand\numberthis{\addtocounter{equation}{1}\tag{\theequation}}
\newcommand{\loss}{\ell}
\renewcommand{\Re}{\mathbf{R}}
\newcommand{\integ}{\mathbf{N}}
\date{}
\title{Holistic Robust Data-Driven Decisions}
\author[1]{Amine Bennouna}
\author[2]{Bart P.G.\ Van Parys}
\author[1]{Ryan Lucas}
\affil[1]{Massachusetts Institute of Technology}
\affil[2]{CWI Amsterdam}
\begin{document}

\maketitle

\begin{abstract}
  The design of data-driven formulations for machine learning and decision-making with good out-of-sample performance is a key challenge.
  The observation that good in-sample performance does not guarantee good out-of-sample performance is generally known as \textit{overfitting}.
  Practical overfitting can typically not be attributed to a single cause but is caused by several factors simultaneously.
  We consider here three overfitting sources: (i) statistical error as a result of working with finite sample data, (ii) data noise, which occurs when the data points are measured only with finite precision, and finally, (iii) data misspecification in which a small fraction of all data may be wholly corrupted.
  Although existing data-driven formulations may be robust against one of these three sources in isolation, they do not provide holistic protection against all overfitting sources simultaneously.
  We design a novel data-driven formulation that guarantees such holistic protection and is computationally viable.
  Our distributionally robust optimization formulation can be interpreted as a novel combination of a Kullback-Leibler and L\'evy-Prokhorov robust optimization formulation.
   In the context of classification and regression problems, we show that several popular regularized and robust formulations naturally reduce to a particular case of our proposed novel formulation. Finally, we apply the proposed HR formulation to two real-life applications and study it alongside several benchmarks: (1) training neural networks on healthcare data, where we analyze various robustness and generalization properties in the presence of noise, labeling errors, and scarce data, (2) a portfolio selection problem with real stock data, and analyze the risk/return tradeoff under the natural severe distribution shift of the application. 
\end{abstract} 

\keywords{Data-driven Decision-making, Machine Learning, Robustness, Generalization, Distributionally Robust Optimization, Kullback-Leibler Divergence, L\'evy-Prokhorov Metric.}

\section{Introduction} \label{sec:introduction}

In this paper, we study data-driven decision-making in the context of stochastic optimization.
Let $\cX$ be a set of decisions and $\txi$ a random variable representing an uncertain scenario realizing in a set
$\Sigma$. For a given scenario $\xi \in \Sigma$ of the uncertainty, and a decision $x \in \mathcal{X}$, the loss incurred for decision $x$ in scenario $\xi$ is denoted here as $\loss(x, \xi) \in \Re$. 
The random variable $\txi$ is distributed according to a probability measure $\Pb$ in the set of probability measures $\mathcal{P}$ over $\Sigma$.
A decision with minimal expected loss can be found as the solution to the stochastic optimization problem
\begin{equation}\label{eq: stochastic opt}
  \min_{x \in \mathcal{X}}\, \mathbb{E}_{\Pb}[\loss(x,\txi)].
\end{equation}
Stochastic optimization covers a wide variety of problems, such as decision-making under uncertainty and regression in machine learning.
We will assume here that the set $\mathcal X$ and $\Sigma$ are compact and the loss function $\ell$ is continuous which guarantees 
that the minimum in Equation (\ref{eq: stochastic opt}) indeed exists.

\begin{example}[Machine Learning]\label{exp: ML}
  Consider covariates $(X,Y) \in \Re^n\times \mathcal{Y}$ for a set of possible outputs $\mathcal{Y}$ and $n\in \integ$, a set of parameters $\Theta$, and a loss function $L: \Theta \times \Re^n\times \mathcal{Y} \rightarrow \Re$. Assume the data $(\tilde{X},\tilde{Y})$ follows an out-of-sample probability distribution $\Pb$. A myriad of popular machine learning problems attempt to find parameters as minimizers in
  \begin{equation*}
    \min_{\theta \in \Theta}\, \Eb_{\Pb}[L(\theta,\tilde X,\tilde Y)].
  \end{equation*}
  The objective function is denoted as the out-of-sample error of a regressor associated with the parameter $\theta\in \Theta$.
  For example, least squares linear regression corresponds to the loss $L(\theta, X,Y) = (Y - \theta^\top X)^2$ with $\theta$ the coefficient of the considered linear models.
  When training a neural network, $\theta$ corresponds to the weights of the network, and $L(\theta,X,Y)$ the loss incurred when evaluating a network with weights $\theta$ on data point $(X,Y)$.
\end{example}

\begin{example}[Newsvendor problem]
  Consider a newsvendor problem in which the decision maker decides on an inventory $x \in \cX \subset \Re_+$ in the face of uncertain upcoming demand $\tilde d \in \Re_+$ following a distribution $\Pb$. Any excess inventory leads to a cost $h>0$ per unit, and any demand that is not satisfied leads to a cost $b>0$ per unit. The newsvendor problem can be formulated as the stochastic optimization problem 
  \begin{equation*}
    \min_{x\in \cX} \, \Eb_{\Pb}[b(\tilde d-x)^++h(x-\tilde d)^+].
  \end{equation*}
\end{example}

The distribution $\Pb$ of the random variable $\txi$  is in many practical situations unknown. Rather, the decision-maker can only observe historical data points $\xi_1,\ldots,\xi_T \in \Sigma^T$. The goal is to construct a solution approximating the optimal solution of Problem \eqref{eq: stochastic opt} using only the data samples and without access to the data distribution. As we assume the order of the data points to be irrelevant here, the observed data can be compactly represented through its empirical distribution
\begin{equation}\label{eq: empirical distribution.}
  \Pemp{T} \defn \textstyle\frac{1}{T}\sum_{t\in[T]} \delta_{\xi_t},
\end{equation}
where $\delta_{\xi_t}$ is the point mass distribution at $\xi_t$.
Typically, one constructs an approximation or proxy for the unknown expectation of the loss (or the out-of-sample loss) $\Eb_{\Pb}[\loss(x,\txi)]$ for each decision $x$ as a function of the empirical distribution; say $\hat{c}(x,\Pemp{T})$. We denote such an approximation or proxy $\hat{c}$ here as a cost \textit{predictor}. The cost $\hat{c}(x,\Pemp{T})$ is then called the in-sample loss of the decision $x$.
The resulting data-driven counterpart to Problem \eqref{eq: stochastic opt} is then
\begin{equation}\label{eq: min predictor}
  \min_{x\in \cX} \, \hat{c}(x,\Pemp{T}).
\end{equation}
Perhaps the most natural such approximation involves substituting the out-of-sample loss for the empirical loss $\hat{c}(x,\Pemp{T}) = \Eb_{\Pemp{T}}[\loss(x,\txi)] = \frac{1}{T}\sum_{t\in[T]} \loss(x,\xi_t)$. This yields the so called sample average approximations (SAA) in the context of decision-making and empirical risk minimization (ERM) in the context of machine learning.

However, substitution of a naive cost predictor for the unknown out-of-sample cost often results in decisions exhibiting poor generalization or out-of-sample performance: the in-sample cost $\hat{c}(x,\Pemp{T})$ may considerably underestimate the actual out-of-sample cost $\Eb_{\Pb}[\loss(x,\txi)]$ and lead to poor decisions \citep{smith2006optimizer}. This phenomenon is broadly known as \textit{overfitting}. Constructing predictors which safeguard against overfitting is one of the fundamental challenges of learning and decision-making with data.

\subsection{Robustness}
To safeguard against overfitting, \textit{robust} predictors resulting in reliable decisions are desirable. A robust predictor $\hat{c}$ is here a predictor which verifies an out-of-sample guarantee of the form 
\begin{equation}\label{eq: OOS Guarantee}
\hat{c}(x,\hat{\Pb}_T) \geq \Eb_{\Pb}[\loss(x,\txi)]
\end{equation}
with ``high probability'' on the data generation, for all $x\in \cX$. This property guarantees that the \textit{in-sample} cost ($\hat{c}(x,\hat{\Pb}_T)$), estimated with data, is an upper bound on the \textit{out-of-sample} cost ($\Eb_{\Pb}[\loss(x,\txi)]$) with high probability. Hence, the predictor provides a conservative|rather than an overly optimistic|estimation of the out-of-sample cost. In other words, the out-of-sample cost is at least as good as what was estimated in-sample. As the predictor is intended to be used for optimization and solving the stochastic optimization \eqref{eq: stochastic opt}, then the guarantee \eqref{eq: OOS Guarantee} must also be verified in its prescribed optimal solution $\hat{x}^\star \in \argmin \hat{c}(x,\hat{\Pb}_T)$. This is the case when the guarantee holds uniform in the decision, that is, 
$
\hat{c}(x,\hat{\Pb}_T) \geq \Eb_{\Pb}[\loss(x,\txi)]
$, $\forall x \in \cX$ with high probability. Uniform guarantees ensure that subsequently minimizing the estimated loss indeed results in minimizing an upper bound on the out-of-sample loss. 

It is easy to construct overly conservative upper bounds on the out-of-sample cost. Indeed, a trivial predictor predicting large or even infinite cost would trivially verify the guarantee \eqref{eq: OOS Guarantee}. However, such overly conservative predictors would likely lead to poor decisions. Hence, merely verifying an out-of-sample guarantee does not ensure good decisions. We will seek therefore \textit{efficient} predictors which constitute the \textit{smallest} upper bound on the out-of-sample loss|in a sense we will make more precise.

\subsection{Sources of Overfitting}
\label{sec:sources-overfitting}

The first important step toward constructing formulations which are robust against overfitting and generalize well is to understand precisely what we seek to robustify against (``\textit{robustness to what?}''). In other words, what are the potential sources of overfitting which can cause data-driven formulations to generalize poorly? We discuss here what we believe to be the three most common sources which can cause overfitting and challenge generalization. We will point out later that most existing robust predictors in the literature protect against some but not (at least not efficiently)
all sources simultaneously. 

\subsubsection{Statistical Error}
Perhaps the most prevalent and fundamental source of overfitting comes from the fact that as the data set is finite, our knowledge of the true data distribution is necessarily limited. The cost prediction $\hat{c}(x,\Pemp{T})$ is therefore inevitably only an approximation of the true out-of-sample loss $\Eb_{\Pb}[\loss(x,\txi)]$ and fluctuates with the randomness of the finite sample.
When the randomness of the sample---and therefore of the empirical distribution $\Pemp{T}$---causes the prediction $\hat{c}(x,\Pemp{T})$ to underestimate the out-of-sample loss, the minimization problem \eqref{eq: min predictor} might indeed  suggest seemingly good solutions $x$ in-sample, i.e., with low predicted cost $\hat{c}(x,\Pemp{T})$, while performing poorly out-of-sample. For example, one source of statistical error in classification tasks is data imbalance. Even when the true out-of-sample distribution of the data is balanced, by the random nature of the sample one class may count considerably more data points than the others. This results in classifiers based on ERM to mistakenly reduce the in-sample classification error on this majority class at the expense of the others.

There is considerable work on how to design robust formulations against overfitting due to statistical error \citep{bertsimas2018data, bertsimas2018robust,lam2019recovering}.
One particular example of such robust predictors is the Kullback-Leibler divergence distributionally robust optimization predictor (KL-DRO)
\begin{equation}\label{eq: KL predictor}
  \hat{c}^r_{\mathrm{KL}}(x,\Pemp{T}) \defn
  \max\{\Eb_{\Pb'}[\loss(x,\txi)] \; : \; \Pb'\in \cP, \; \KL(\Pemp{T} || \Pb') \leq r \} \quad \forall x \in \cX,
\end{equation}
for $r>0$, where
$
\KL(\mu || \nu) 
\defn 
\int \log  \left(\tfrac{\d\mu}{\d\nu}(\xi)\right) \, \d\mu(\xi)
$ for all $\mu\ll\nu \in \cP$, and $+\infty$ otherwise,
is the Kullback–Leibler divergence. Informally, when the observed samples are independent realizations of the out-of-sample distribution $\Pb$, then $\Pb \in \{\Pb'\in \cP\; : \; \KL(\Pemp{T} || \Pb') \leq r\}$ with high probability\footnote{This is only true when the distributions are of finite support. In the general case however, it still holds that $\Eb_{\Pb}[\loss(x,\txi)] \leq \hat{c}_{\mathrm{KL}}(x,\Pemp{T})$ with high probability \citep{van2021data}.}. Hence, the KL-DRO predictor  minimizes a worst-case expectation over a set of distributions which includes the out-of-sample distribution $\Pb$ with high probability and hence is unlikely to underestimate the out-of-sample cost. \citet{van2021data} prove that the KL-DRO predictor efficiently guards against overfitting caused by statistical error and in fact guarantees that the probability that the out-of-sample cost is underestimated decays exponentially fast in the number of collected data points $T$. Furthermore, the KL-DRO predictor can be evaluated as a convex minimization problem \citep{love2015phi,van2021data}.

\citet{lam2019recovering, duchi2021statistics} finally show that when this probability is desired to decay subexponentially, 
robustness against statistical error can be obtained by considering the sample variance penalization predictor (SVP)
\begin{equation*}\label{eq: SVP predictor}
  \hat{c}_{\mathrm{SVP}}(x,\Pemp{T}) \defn
  \Eb_{\Pemp{T}}[\loss(x,\txi)] + \lambda \sqrt{\Var_{\Pemp{T}}[\loss(x,\txi)]} \quad \forall x \in \cX,
\end{equation*}
for $\lambda\geq 0$, where $\Var_{\Pemp{T}}[\loss(x,\txi)]$ is the empirical variance of the loss.
In fact, \citet{van2021data, bennouna2021learning} prove in a precise sense that the KL-DRO and SVP predictors are efficient predictors for robustness against statistical error. In some sense, both predictors optimally balance efficiency while guaranteeing a certain level of robustness against statistical error.

However, overfitting in practice is not caused by statistical error alone.
In fact, statistical error is perhaps the most mild source of overfitting among those we will discuss as it diminishes with the samples size at rate $\mathcal O(\tfrac{1}{\sqrt{T}})$. That is, for any decision $x$ and bounded and continuous loss function $\xi\mapsto\ell(x, \xi)$ we have\footnote{Note that this bound can be extended to $\min_{x\in\cX}\Eb_{\Pemp{T}}[\loss(x,\txi)] - \min_{x\in \cX}\Eb_{\Pb}[\loss(x,\txi)]$ under careful complexity assumptions on the decision set $\cX$, such as finite VC dimension \citep{vapnik1999nature,maurer2009empirical}.}
\[
  \Eb_{\Pemp{T}}[\loss(x,\txi)] - \Eb_{\Pb}[\loss(x,\txi)] = \mathcal O\left(\sqrt{\tfrac{\Var[\loss(x,\txi)]}{T}}\right)
\]
when the data points are independently sampled from the out-of-sample distribution $\Pb$.
Hence, statistical error is in general more prominent in settings with scarce data or high variance of the loss.
We next discuss two sources of overfitting which require robust predictors even in a large data regime where the amount of statistical error is negligible.

\subsubsection{Noise}
\label{sec:noise}

In practice the observed data sample $\{\xi_1, \dots, \xi_T\}$ may often not be a realization of $\txi$ following the out-of-sample distribution $\Pb$, but rather a realization of a noisy random variable $\txi + \Tilde{n}\in\Sigma$.
As the distribution of the noise $\Tilde{n}$ is not known, collecting more training data does not eliminate this potential source of overfitting.

Clearly, when no assumptions are made regarding the size of the noise $\Tilde{n}$ then it may completely swamp the signal $\txi$ and nothing interesting can be said further. Assume however that the noise realizes in a known bounded set $0\in\cN$. An example of such set is the epsilon norm ball $\cB(0,\epsilon) = \{\noise \; : \; \|\noise\| \leq \epsilon \}$, which models norm bounded noise up to size $\epsilon\geq 0$. The cost predictors and their associate decisions may be guarded against such adversarial noise by considering an inflated loss function 
\(
\loss^{\cN}:\cX\times\Sigma\to \Re, ~(x,\xi) \mapsto \max\set{\loss(x,\xi-\noise)}{\noise \in \cN, \, \xi-\noise\in \Sigma}
\)
rather than the loss function $\ell$ directly.
Recent works by \citet{madry2017towards}
indeed consider robust predictors of the form
\begin{align}\label{eq: nosie robust predictor}
  \hat{c}_{\mathrm{R}}(x,\Pemp{T}) \defn & \max\set{\textstyle\frac 1T\sum_{t\in[T]}\loss(x,\xi_t -\noise_t)}{\noise_t \in \cN,~\xi_t-\noise_t\in \Sigma ~~\forall t\in [T]}
  =  \textstyle\frac{1}{T}\sum_{t\in [T]} \loss^{\cN}(x,\xi_t).
\end{align}
The underlying intuition is that when the statistical error is negligible (i.e., the out-of-sample distribution is the empirical distribution of the noiseless sample), we are guaranteed that
$\Pb \in \{\sum_{t\in [T]} \delta_{\xi_t-\noise_t}/T : \; \noise_t\in \cN,~\xi_t-\noise_t\in \Sigma ~~ \forall t\in [T]\}$.
We remark hence that the robust predictor differs from the ERM predictor only in that an inflated loss function $\ell^{\cN}$ is considered instead of the loss function $\ell$ itself.
The robust formulation is practical to the extent we have access to an oracle with which we can (approximately) evaluate the inflated loss function $\ell^{\cN}$. Sometimes this inflated loss can be computed explicitly for commonly considered loss functions and norm bounded noise as we will illustrate in Section \ref{sec: application}. In some settings evaluating the inflated loss function $\ell^{\cN}$ can be hard as discussed by \citet{sinha2017certifying}. Efficient approximation methods however may still be available as discussed by \citet{madry2017towards,bertsimas2021robust}.
In the remainder of the paper we will assume that an oracle is available which can evaluate the inflated loss function $\ell^{\cN}$.

Perhaps the most well known instance of the robust approach discussed here is the popular LASSO predictor for linear regression (see Example \ref{exp: ML}) which can be written as
\[
\textstyle \sqrt{\frac{1}{T}\sum_{t\in [T]} L(\theta,Y_t,X_t)} + \frac{\lambda}{\sqrt{T}} \|\theta\|_1  
=
   \max_{}  \set{\sqrt{\frac{1}{T}\sum_{t\in[T]} L(\theta,X_t-n_t,Y_t)}}{\|\noise_t\|_2 \leq \lambda ~~\forall t\in [T]},
\]
where $L(\theta,X,Y) = (Y-\theta^\top X)^2$,
for all $\theta \in \Theta$ and $\lambda \geq 0$ \citep[Theorem 1]{xu2009robustness}.
Hence, in this perspective, LASSO can be interpreted as to protect precisely against particular norm bounded noise.
 
It is important to note that protection against one form of overfitting does not necessarily translate to protection against all others. As an extreme case, consider the KL-DRO formulation \eqref{eq: KL predictor} which as we pointed out in the previous section protects efficiently against statistical error. Its associated ambiguity set will in the presence of any amount of noise fail to contain the distribution $\Pb$ when it is a continuous distribution; see also Figure \ref{fig:distributional-shift}. Hence, the KL-DRO predictor can severely underestimate the out-of-sample cost under the presence of noise, even though it verifies out-of-sample guarantees in the clean data setting. This somewhat surprising observation perhaps helps explain why the Kullback-Leibler divergence is not as popular in practice in favor of other divergence metrics such as the Wasserstein distance (with norm transportation cost)
\begin{equation}\label{eq: Wasserstein}
   W(\mu,\nu) = \inf_{\gamma \in \Gamma(\mu,\nu)} \int \norm{\xi-\xi'} \,\d\gamma(\xi,\xi') \quad \forall \mu,\nu \in \cP,
\end{equation}
for a given norm $\|\cdot\|$
and where the set $\Gamma(\mu,\nu)$ denotes the collection of all joint measures on $\Sigma\times \Sigma$
with marginals $\mu$ and $\nu$ \citep{ambrosio2003lecture} (which we also call the set of couplings between $\mu$ and $\nu$). However, the W-DRO robust predictor, defined here as
\(
  \hat{c}_{\mathrm{W}}(x,\Pemp{T}) \defn \sup\{\Eb_{\Pb'}[\loss(x,\txi)] \; : \; \Pb'\in \cP, \;  W(\Pemp{T} || \Pb') \leq \epsilon \}
\)
for $\epsilon>0$ and all $x\in \mathcal X$, protects only against certain types of data noise, rather than offering holistic protection against statistical error and noise simultaneously. Indeed,
\citet{gao2016distributionally, wang2022mean} show that when the event set $\Sigma$ is convex and the loss function $\ell$ is concave in its second argument, then
\[
    \hat{c}_{\mathrm{W}}(x,\Pemp{T}) \defn \max\set{\textstyle\frac{1}{T}\sum_{t\in[T]} \loss(x,\xi_t - \noise_t)}{\textstyle \frac 1T\sum_{t\in[T]}\norm{\noise_t} \leq \epsilon, ~\xi_t-n_t\in \Sigma} \quad \forall x \in \cX, \; \forall t\in [T].
  \]
Hence, W-DRO protects here precisely against all noise which averaged over the samples remains bounded in norm. 
Nevertheless, W-DRO does verify out-of-sample statistical guarantees \citep{esfahani2015data}, and therefore also provides protection again statistical error when the radius is chosen sufficiently large. However, its statistical guarantees are a byproduct of its noise protection rather than an explicit consequence of its design. Indeed, it has been observed empirically in deep learning that while type-$\infty$ W-DRO (adversarial training) provides robustness to noisy inputs, it suffers a worse overfitting gap than ERM \citep{rice2020overfitting}. The previous observations indicate that W-DRO does not \textit{efficiently} protect against statistical error, as its ambiguity set yielding such statistical guarantees can be exceedingly large.

\begin{figure}[t]
  \centering
\includegraphics[width=0.5\textwidth]{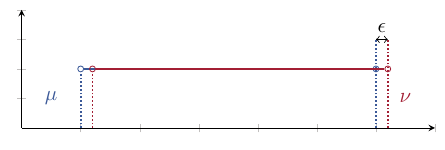}
  \caption{Two distributions $\mu$ and $\nu$ which are equivalent up to a small distributional shift $\epsilon>0$ are very dissimilar in terms of their entropic divergence, i.e., $\KL(\mu, \nu)=\infty$.}
  \label{fig:distributional-shift}
\end{figure}

The main characteristic of the noise considered here is that, in some sense, it remains small as it realizes in a compact set $\cN$.
The last source of overfitting we identify will neither be small nor diminish with increasing sample size.

\subsubsection{Misspecification}
\label{sssec:misspecification}

In most typical data sets, a hopefully small fraction of all data may be wholly corrupted.
This can be the case when errors in the data collection occurred or when fake data made its way into the data set.
We remark that this source of overfitting is quite distinct from the previously discussed overfitting source.
In the previous setting, the noise $n\in \cN$ remains bounded and consequently all samples carry at least some amount of information concerning the out-of-sample distribution. Here, however, we will assume that a small fraction $\alpha \ll 1$ of all data points $\xi_t\in \Sigma$ for $t\in [T]$ may carry no information at all.
This specific cause of overfitting has been studied since the pioneering work of \citet{tukey1958bias} and \cite{huber1981robust} in the context of robust statistics.
More recent work by \citet{diakonikolas2019robust} has renewed interest in this source of overfitting  in particular in the context of high-dimensional learning problems \citep{diakonikolas2019recent} and adversarial machine learning \citep{goldblum2022dataset}. In particular, in the context of adversarial machine learning the misspecification described here is also known as \textit{data poisoning}, with a specific instance being label flipping \citep{zhu_transferable_2019}.

In this context, the robust predictor for $x \in \cX$
\begin{equation}\label{eq: predictor misspecification}
  \hat{c}_{\mathrm{M}}(x,\Pemp{T}) \defn \max \set{\textstyle\frac{1}{T}\sum_{t\in[T]} \loss(x,\xi_t-\noise'_t)}{\sum_{t\in[T]}\one{\noise'_t\neq 0}/T\leq \alpha,~\xi_t-n'_t\in \Sigma}
\end{equation}
seems appropriate. The underlying intuition is that as it is assumed here that there is no statistical error (informally we again assume here that $\hat \Pb_T$ becomes the out-of-sample distribution of the corrupted samples) we are guaranteed that
$\Pb \in \tset{\sum_{t\in[T]} \delta_{\xi_t-\noise'_t}/T}{\sum_{t\in[T]}\one{\noise'_t\neq 0}/T\leq \alpha,~\xi_t-n_t\in \Sigma~~\forall t\in [T]}$. Therefore, we have
\(
  \Eb_{\Pb}[\loss(\hat{x}_{M}(\Pemp{T}),\txi)] \leq \hat{c}(\hat{x}_{M}(\Pemp{T}),\Pemp{T}))
\)
where $\hat{x}_{M}(\Pemp{T}) \in \argmin_{x\in \cX} \hat{c}_{M}(x,\Pemp{T})$, and this upper bound can be shown to be tight.
Hence, the predicted cost of the optimal decision $\hat{x}_{M}(\Pemp{T})$ exceeds its unknown out-of-sample cost and hence no overfitting has taken place.



Each of these three discussed sources of overfitting may have a certain varying importance in practice. For example, noise will be the main source of overfitting in settings where abundant but highly noisy data is available. Statistical error will come to dominate in settings where scarce but high quality data is available.
It is therefore instrumental to adapt the amount of protection offered against each of these three distinct sources to the appropriate level in a particular application.
Existing robust formulations typically protect specifically against only one of these aspects.
As already pointed out before, KL-DRO and SVP for instance protects efficiently against statistical error while LASSO and W-DRO protects efficiently against noise.
An important question that arises is therefore
\begin{center}
  \textit{Can we construct a predictor robust simultaneously against statistical error, noise and misspecification?}
\end{center}
This is precisely the objective of this paper. We seek to construct a \textit{holistic} robust predictor which protects against all sources of overfitting to any desirable degree.
In addition to (i) robustness, we seek robust formulations which (ii) are tractable (iii) and are efficient in that they are not overly conservative.

\subsection{Contributions}

We study in this paper how to construct a robust predictor tailored precisely to these three sources of overfitting, and their simultaneous and combined effect on data-driven decisions. We do so through a DRO framework, and seek to design a tailored ambiguity set that captures exactly the shift in the data caused by statistical error, noise and corruption simultaneously.

As a first building block towards our holistic robust predictor, we introduce the convex pseudo divergence metric
$\LP_{\cN}(\Pemp{T},\Pb') = \inf_{\gamma \in \Gamma(\Pemp{T},\Pb')} \int \indic(\xi-\xi' \not\in\cN) \,\d\gamma(\xi,\xi')$, for a given noise set $\cN$ and corruption fraction $\alpha \in [0,1]$.
The pseudo divergence $\LP_{\cN}$ generalizes the well-studied L\'evy-Prokhorov (LP) metric on probability distributions \citep{prokhorov1956convergence}; as we detail in Section \ref{sec:rel-DRO}. We show that the LP metric captures \textit{precisely} the distribution shift due to the combination of noise and corruption. Formally, given the observed empirical distribution of the corrupted samples $\Pemp{T}$, the ambiguity set $\{\Pb'\in \cP, \, \LP_{\cN}(\Pemp{T},\Pb') \leq \alpha\}$ captures exactly the set of possible out-of-sample distributions when the noise realizes in the set $\cN$ and less than $\alpha T$ data points are corrupted. 

We study the resulting LP-DRO formulation using this new ambiguity set, which provides efficient robustness to noise and corruption, and prove it admits an exact reformulation as combination of the $\CVaR$ and worse case loss. In particular, we point out that this result provides a first exact tractable reformulation for the robust optimization problems associated with the L\'evy-Prokhorov metric studied by \citet{erdougan2006ambiguous}, a special case of our LP-DRO predictor.

Building on these results, we then tackle the problem of protecting against statistical error on top of noise and corruption. The celebrated Sanov's theorem in large deviation theory \citep[Theorem 6.2.10]{dembod1996large} indicates that the KL divergence captures precisely the shift between an empirical distribution and its true generating distribution. Hence, we design a novel ambiguity set by inflating the LP ball by the KL divergence resulting in our novel holistic robust DRO predictor ($\HR$)
\begin{align*}
  \hat{c}^{\cN,\alpha,r}_{\HR}(x,\Pemp{T})\defn 
  \max \{ 
  \Eb_{\Pb'}[\loss(x,\txi)]
  & \; : \; 
    \Pb'\in \cP,\;\Qb' \in \cP, \;
    \LP_{\cN}(\Pemp{T},\Qb') \leq \alpha, \;
    \KL (\Qb'||\Pb') \leq r
    \},
\end{align*}
with parameters $\cN$, $\alpha \in [0,1]$ and $r \geq 0$.
These parameters set the desired robustness offered by our predictor against each source of overfitting separately.
We prove that when noise realizes in a set $\cN$ and less than a fraction $\alpha$ of data points are corrupted, the $\HR$ predictor provides uniform robustness out-of-sample guarantees: with probability larger than $1-e^{-rT + O(1)}$, $\hat{c}^{\cN,\alpha,r}_{\HR}(x,\Pemp{T}) \geq \Eb_{\Pb}[\loss(x,\txi)]$ \textit{uniformly} in $x \in \cX$, where $\Pemp{T}$ is the empirical distribution of the \textit{corrupted} samples (Theorem \ref{thm: HD robustness guarantee}). In particular, such uniform guarantees do not depend on the complexity of the decision set $\cX$, a crucial novel property in high-dimensional contexts. We further show that $\HR$ provides \textit{efficient} robustness: any other predictor that verifies such robustness out-of-sample guarantee is necessarily uniformly more conservative than the $\HR$ predictor (Theorem \ref{thm:efficiency-hd}). In particular, this proves theoretically that the new tailored ambiguity set matches exactly the shift caused by the considered sources of overfitting, and as a result, provides the exact amount of required conservatism. 

Computing the HR predictor is a challenging problem, as typical in DRO formulations, involving optimization over continuous distributions. 
We prove that for any bounded continuous loss function, the supremum in the HR-DRO formulation is attained in distributions of finite \textit{known} support. Subsequently, the HR predictor can be computed as a finite convex optimization problem with only conic and linear constraints (Theorem \ref{thm: HD finite formulation}). Moreover, using a dual formation of the predictor as a minimization problem (Theorem \ref{thm: dual rep HD}), we show that the predictor can be optimized efficiently and the problem $\min_{x\in \cX} \hat{c}^{\cN,\alpha,r}_{\HR}(x,\Pemp{T})$
is as hard as optimizing the inflated loss function $\min_{x\in \cX} \Eb_{\hat{\Pb}_T}[\loss^{\cN}(x,\txi)]$. This is in contrast with some popular DRO approaches, where tractability typically requires somewhat stringent conditions on the loss function \citep{wang2021sinkhorn}.

In Section \ref{sec: application}, we investigate the practicality of these insights with extensive experiments in real-life applications, beyond the theoretical setting. In particular, we chose challenging applications which violate, to various degrees, our assumptions on the nature of the three sources of overfitting. We also measure robustness through application-driven metrics. 

We experiment with HR in three distinct applications: (i) training neural networks for a healthcare classification task, (ii) selecting an investment portfolio using real stock data, and (iii) performing linear regression and classification with synthetic data. For each application, we extensively evaluate HR's robustness, benchmark its performance against classical methods, and analyze the influence of individual parameters on robustness outcomes. Additionally, we study the effectiveness of validation in selecting practically reasonable values for the noise set $\cN$, corruption fraction $\alpha$ and radius $r$.

\paragraph{Notations} 
Random variables are denoted with a tilde in this paper. For example, $\txi$ denotes the random variable with realizations $\xi \in \Sigma$.
We denote with $\mathcal B(\Sigma)$ the set of all Borel measurable subsets of the topological space $\Sigma$. $A^\into$ (or $int(A)$) and $\bar{A}$ (or $cl(A)$) denote respectively the interior and the closure of set $A$.
For a given measure $\mu \in \cP$, we denote its support by
$\supp(\mu) \defn \bigcap \{ A = \bar{A} \in \cB(\Sigma) \; : \; \mu(A^c) =0 \}
=
\{ \xi \in \Sigma \; : \; \exists U \in \cB(\Sigma), \; \xi \in U^\into, \; \mu(U^\into)>0 \}$ for all measure $\mu$.
We abuse the notation of measures by denoting $\mu(\xi) \defn \mu(\{\xi\})$ for all measure $\mu$ and $\xi \in \Sigma$.
For all $K \in \integ$, we denote $[K] = \{1,\ldots,K\}$.

\section{Robustness Against Noise and Misspecification}\label{sec: corruption}

Our focus in this section is robustness against noise and misspecification only. The underlying assumption is that the number $T$ of available data points $\{\xi_1,\dots, \xi_T\}$ is sufficiently large so that the effect of statistical error is negligible. This assumption, made by a plethora of work on constructing data-driven formulations robust against noise \citep{madry2017towards}, will be relaxed in Section \ref{sec:HR-robustness} where we account for statistical error on top of corruption.
In what follows, we propose a novel distributionally robust predictor which we prove protects efficiently against noise and misspecification. 
This predictor will then serve as a first building block towards holistic robustness.

\subsection{The LP-DRO predictor}
To construct our robust predictor, we seek to build an ambiguity set capturing precisely the considered overfitting sources.
We consider a setting of noise in a given set $\cN$ in full generality. For intuition purposes, the reader can consider the special case of $\cN$ being a norm ball for example, which is typically used in practice.
We associate with the set $\cN$ the convex pseudo divergence metric $\LP_{\cN}:\cP\times\cP\to \Re_+$ defined as
\begin{equation}\label{eq: LP pseudo-metric}
  \LP_{\cN}(\mu,\nu) \defn \inf_{\gamma \in \Gamma(\mu,\nu)} \int \indic(\xi-\xi' \not\in\cN) \,\d\gamma(\xi,\xi') \quad \forall \mu,\,\nu \in \cP.
\end{equation}
The considered metric is hence a generalized optimal transport metric associated with a particular transport cost function which is only sensitive to whether or not the transport distance remains bounded in the set $\cN$ or not. We consider the associated LP-DRO predictor
\begin{equation}\label{eq: LP-DRO predictor}
  \hat{c}^{\cN,\alpha}_{\LPtxt}(x,\Pemp{T})\defn 
  \max \{ 
  \Eb_{\Pb'}[\loss(x,\txi)]
  \; : \; 
  \Pb'\in \cP, \; 
  \LP_{\cN}(\Pemp{T},\Pb') \leq \alpha
  \}
  \quad \forall x\in \cX,
\end{equation}
with parameters $\cN$ and $\alpha \in [0,1]$. As visualized in Figure \ref{fig:LP-DRO} the LP-DRO predictor takes the supremum over all distributions close in probability.

\begin{figure}
\vspace{0.3em}
  \centering
  \begin{tikzpicture}
    \node (image) at (0,0) {
      \includegraphics[height=7cm]{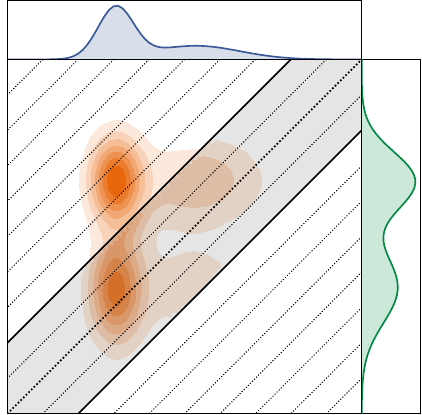}
    };
    \node at (1, 1.25) {$\cN$};
    \draw[latex-latex] (1.35, 2.5) -- (1.35, 0.1);
    \node[rotate=-90] at (2.9, 0.3) {{\color{green}$\mu$}};
    \node at (-1.6, 3) {{\color{blue}$\nu$}};
    \node at (-2.4, -0.3) {{\color{orange}$\gamma$}};
  \end{tikzpicture}
  \caption{Two distributions $\mu$ and $\nu$ satisfy the constraint $\LP_{\cN}(\mu,\nu)\leq \alpha$ if there is a coupling $\gamma$ so that at most $\alpha$ mass of the coupling is assigned to the outside of the cylinder strip associated with the noise set $\cN$.
  }
  \label{fig:LP-DRO}
\end{figure}

Intuitively, the ball $\{\Pb'\in \cP \; : \; \LP_{\cN}(\Pemp{T},\Pb') \leq \alpha\}$ contains all the distributions that can be obtained from shifting $\Pemp{T}$ by subtracting noise bounded in the set $\cN$ and misspecifying a random arbitrary fraction $\alpha$ of all data points. In fact, the distribution $\gamma$ in Equation \eqref{eq: LP pseudo-metric} can be interpreted as a coupling moving the distribution $\mu$ into the distribution $\nu$ ($\hat{\Pb}_T$ into $\Pb'$ in \eqref{eq: LP-DRO predictor}) and the objective function as the probability that a sample has been moved beyond the set $\cN$. Hence, $\LP_{\cN}(\Pemp{T},\Pb') \leq \alpha$ ensures that there exists a coupling that moves $\hat{\Pb}_T$ into $\Pb'$ by perturbing less than a fraction $\alpha$  of the data by beyond the set $\cN$. This is the misspecified part of the data. The remaining $1-\alpha$ fraction (or more) is perturbed by noise bounded in the set $\cN$. This is the noisy part of the data. 
The LP ambiguity set is therefore by construction expected to capture precisely our considered data alteration|noise and misspecification.
  We formalize this in the following theorem. Proofs of this section are deferred to Appendix \ref{App: proof of sec LP}.

Let $\txi$ be distributed as $\Pb$, the true out-of-sample distribution. Define the corrupted random observation as
$\txi^c = (\txi + \tn)\indic(\tc=0) + \txi_0 \indic(\tc=1) \in \Sigma$, where $\tn$ is a noise random variable, $\tc$ is a binary random variable indicating whether the sample is corrupted or not, and $\txi_0$ is a corrupted observation following an arbitrary distribution. Note that $\txi$, $\tn$, $\tc$ and $\txi_0$ can all be correlated and adversarial---we in particular do not impose any independence structure.
\begin{theorem}[Corruption set characterization]\label{thm: LP-DRO robustness}
Assume the noise $\tn$ realizes in the set $\cN$ and the misspecification probability is less than $\alpha$, i.e., $\Prob(\tilde c=1) < \alpha$.
For a given distribution $\Pb^c\in\cP$ of the corrupted sample $\txi^c\in \Sigma$, the set of possible out-of-sample distributions $\Pb$ of $\txi$ is 
$
  \set{\Pb'\in \cP}{\LP_{\cN}(\Pb^c, \Pb') < \alpha}.
$
\end{theorem}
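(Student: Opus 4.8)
The plan is to prove the claimed set equality through two inclusions. Write $\cA \defn \set{\Pb' \in \cP}{\rho_{\cN}(\Pb^c, \Pb') < \alpha}$ for the ambiguity set on the right, and let $\cB$ denote the set of those $\Pb \in \cP$ that are \emph{consistent} with the corruption model, i.e.\ for which one can exhibit random variables $\tn \in \cN$, $\tc \in \{0,1\}$ with $\Prob(\tc = 1) < \alpha$, and $\txi_0$, defined on a common probability space with $\txi \sim \Pb$, such that the induced corrupted observation $\txi^c = (\txi + \tn)\indic(\tc = 0) + \txi_0 \indic(\tc = 1)$ has law $\Pb^c$. I want to show $\cB = \cA$.

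For the inclusion $\cB \subseteq \cA$, I would take $\Pb \in \cB$ together with a witnessing realization $(\txi, \tn, \tc, \txi_0)$ and consider the joint law $\gamma$ of the pair $(\txi^c, \txi)$. By construction its marginals are $\Pb^c$ and $\Pb$, so $\gamma \in \Gamma(\Pb^c, \Pb)$ is an admissible coupling. The crucial observation is that on the event $\{\tc = 0\}$ we have $\txi^c - \txi = \tn \in \cN$, so the integrand $\indic(\txi^c - \txi \notin \cN)$ vanishes there and can be nonzero only on $\{\tc = 1\}$. Evaluating $\rho_{\cN}$ at this particular coupling then yields $\rho_{\cN}(\Pb^c, \Pb) \leq \Eb[\indic(\txi^c - \txi \notin \cN)] \leq \Prob(\tc = 1) < \alpha$, so $\Pb \in \cA$.

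The reverse inclusion $\cA \subseteq \cB$ is the one requiring an explicit construction. Given $\Pb'$ with $\rho_{\cN}(\Pb^c, \Pb') < \alpha$, the strict inequality guarantees (without needing attainment of the infimum) a coupling $\gamma \in \Gamma(\Pb^c, \Pb')$ with $\int \indic(\xi - \xi' \notin \cN)\,\d\gamma < \alpha$. I would then realize $(\txi^c, \txi)$ as the coordinate projections on $(\Sigma \times \Sigma, \gamma)$, so that $\txi^c \sim \Pb^c$ and $\txi \sim \Pb'$, and read off the corruption variables directly: set $\tc \defn \indic(\txi^c - \txi \notin \cN)$, $\tn \defn (\txi^c - \txi)\indic(\tc = 0)$, and $\txi_0 \defn \txi^c$. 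One checks that $\tn \in \cN$ (it equals $\txi^c - \txi \in \cN$ on $\{\tc = 0\}$ and $0 \in \cN$ on $\{\tc = 1\}$), that $\Prob(\tc = 1)$ equals the transport integral and is therefore $< \alpha$, and that $(\txi + \tn)\indic(\tc=0) + \txi_0\indic(\tc=1)$ collapses to $\txi^c$ on each event, so the induced corrupted observation has law $\Pb^c$ and $\Pb' \in \cB$. The realizability requirement $\txi^c \in \Sigma$ is automatic since $\txi^c$ is $\Pb^c$-distributed.

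The integral computations in both directions are routine. The step deserving the most care is the construction in $\cA \subseteq \cB$: I must confirm that $\tc$, $\tn$, and $\txi_0$ are Borel measurable, which holds because $(\xi, \xi') \mapsto \xi - \xi'$ is continuous and $\cN$ is (Borel) measurable so that $\{\xi - \xi' \notin \cN\}$ is a measurable event, and that placing all four variables on the single space $(\Sigma \times \Sigma, \gamma)$ is a legitimate way to exhibit a corruption model. I would finally verify that the strict inequalities match up on both sides—which they do, since $\Prob(\tc = 1)$ coincides exactly with the transport integral in the construction and only upper-bounds it in the forward direction.
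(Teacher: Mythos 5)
Your proof is correct and takes essentially the same route as the paper's: the forward inclusion bounds $\rho_{\cN}(\Pb^c,\Pb)$ by evaluating the transport cost at the coupling induced by the corruption model (the integrand vanishing on $\{\tc=0\}$ since $\txi^c-\txi=\tn\in\cN$ there), and the reverse inclusion exploits the strict inequality to pick a coupling with transport integral below $\alpha$ --- so no attainment of the infimum is needed --- and reads off $\tc = \indic(\txi^c-\txi\notin\cN)$, $\txi_0=\txi^c$, exactly as in the paper. The only cosmetic difference is your choice $\tn=0$ on $\{\tc=1\}$ where the paper picks an arbitrary $n\in\cN\neq\emptyset$; both are valid since $0\in\cN$ by the paper's standing assumption.
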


\begin{corollary}[Robustness against noise and misspecification]\label{cor: LP OOS guarantee}
    Assume the noise $\tn$ realizes in the set $\cN$ and the misspecification probability is less than $\alpha$, i.e., $\Prob(\tilde c=1) < \alpha$. Let $\Pb$ be the out-of-sample distribution of clean observations and $\Pb^c$ the distribution of the corrupted observations $\txi$. We have
    $
    \hat{c}^{\cN,\alpha}_{\LPtxt}(x,\Pb^c) \geq \Eb_{\Pb}[\loss(x,\txi)], \; \forall x \in \cX.
    $
\end{corollary}

Corollary \ref{cor: LP OOS guarantee} ensures that the LP-DRO predictor provides a robustness guarantee against noise and misspecification, whenever the noise realizes in $\cN$ and the misspecification is at frequency up to $\alpha$. The following theorem shows that the LP-DRO predictor provides efficient robustness: any other ``robust'' predictor is necessarily more conservative.

\begin{theorem}[Efficient robustness]\label{thm: LP optimality}
    Any predictor $\hat{c}$ that verifies the guarantee of Corollary \ref{cor: LP OOS guarantee} for all out-of-sample distribution $\Pb$ and noise limited to $\cN$ and less than a fraction $\alpha$ of data points misspecified, 
    is necessarily uniformally more conservative than the LP-DRO predictor, that is
    $
    \hat{c}(x,\Qb) \geq \hat{c}^{\cN,\alpha}_{\LPtxt}(x,\Qb), \; \forall \Qb \in \cP, \; \forall x \in \cX.
    $
\end{theorem}


The LP-DRO predictor characterized in Equation \eqref{eq: LP-DRO predictor} involves maximizing over probability measures which may in generally be computational hard.
The following result indicates however that the supremum defining our LP-DRO predictor can be computed almost in closed form.

\begin{theorem}\label{thm: LP-DRO expression for empirical}
  Let $x\in \cX$, $\alpha \in [0,1]$ and a noise set $\cN$ be given. Denote with $\{\xi_1,\ldots,\xi_K\}$ the support of $\Pemp{T}$, and consider an ordering such that $\loss^\cN(x, \xi_{[1]})\leq \dots \leq \loss^\cN(x, \xi_{[K]})$. Introduce for all $k\in [K]$ the auxiliary points $\xi'_k \in \xi_k-\argmax_{\noise \in \cN,\, \xi_k-\noise} \loss(x,\xi_k - \noise)$ and $\xi_\infty \in \argmax_{\xi \in \Sigma} \loss(x,\xi)$.
  We have
  \begin{align*}
    \hat{c}^{\cN,\alpha}_{\LPtxt}(x,\Pemp{T})
    = &~ (1-\alpha)\CVaR_{\Pemp{T}}^\alpha(\loss^{\cN}(x,\txi))+\alpha\max_{\xi\in \Sigma}\loss(x,\xi)
    =  \Eb_{\Pemp{T}^{\cN,\alpha}}[\loss(x,\txi)]
  \end{align*}
  where a worst-case distribution $\Pemp{T}^{\cN,\alpha}\in \cP$, $\LP_{\cN}(\Pemp{T},\Pemp{T}^{\cN,\alpha}) \leq \alpha$ can be found as
  \begin{equation}
    \label{eq:wc-distribution-noise-misspecification}
    \textstyle \Pemp{T}^{\cN,\alpha} =  \sum_{k=p+1}^{T} \delta_{\xi'_k} \Pemp{T}(\xi_k)+(1-\alpha-\sum_{k=p+1}^{T}\Pemp{T}(\xi_k)) \delta_{\xi'_{p}} + \alpha \delta_{\xi_\infty}
  \end{equation}
  where $p$ the smallest index so that $1-\alpha-\sum_{k=p+1}^{T}\Pemp{T}(\xi_k)$ is strictly positive.
\end{theorem}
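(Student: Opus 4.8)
The plan is to optimize jointly over the worst-case distribution and a witnessing coupling, reduce the result to a finite linear program in how much mass each support point sends ``inside'' versus ``outside'' the noise set $\cN$, and finally recognize the inside-contribution as a scaled conditional value-at-risk. First I would rewrite the predictor \eqref{eq: LP-DRO predictor} as a joint supremum over pairs $(\Pb',\gamma)$ with $\gamma \in \Gamma(\Pemp{T},\Pb')$ and $\int \indic(\xi-\xi'\not\in\cN)\,\d\gamma(\xi,\xi') \leq \alpha$; this is legitimate because, by \eqref{eq: LP pseudo-metric}, $\Pb'$ is feasible exactly when some coupling attains transport cost at most $\alpha$. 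Since $\Pemp{T}$ is discrete with support $\{\xi_1,\ldots,\xi_K\}$, I disintegrate $\gamma$ with respect to its first marginal, writing the conditional law of the target given source $\xi_k$ as $\gamma_k$, so that $\Pb'=\sum_k \Pemp{T}(\xi_k)\gamma_k$, the objective becomes $\sum_k \Pemp{T}(\xi_k)\Eb_{\gamma_k}[\loss(x,\cdot)]$, and the constraint reads $\sum_k \Pemp{T}(\xi_k)\gamma_k(\{\xi':\xi_k-\xi'\not\in\cN\}) \leq \alpha$.

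Next, for each source point I split $\gamma_k$ into its mass inside the strip $\{\xi':\xi_k-\xi'\in\cN\}$ and its mass outside. By definition of the inflated loss, any target reached without leaving the strip satisfies $\loss(x,\xi') \leq \loss^\cN(x,\xi_k)$, with equality attained at $\xi'_k$, whereas any other target satisfies $\loss(x,\xi') \leq \max_{\xi\in\Sigma}\loss(x,\xi)$, attained at $\xi_\infty$. Compactness of $\cN$ and $\Sigma$ with continuity of $\loss$ guarantees these maximizers exist, so the resulting upper bound is achievable. Writing $w_k \in [0,\Pemp{T}(\xi_k)]$ for the outside mass sent from $\xi_k$, the entire problem collapses to the finite linear program of maximizing $\sum_k \bigl[(\Pemp{T}(\xi_k)-w_k)\loss^\cN(x,\xi_k) + w_k \max_{\xi\in\Sigma}\loss(x,\xi)\bigr]$ subject to $\sum_k w_k \leq \alpha$.

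Since $\max_{\xi\in\Sigma}\loss(x,\xi) \geq \loss^\cN(x,\xi_k)$ for every $k$, the budget saturates at $\sum_k w_k=\alpha$, and the residual inside-mass $r_k \defn \Pemp{T}(\xi_k)-w_k$ obeys $\sum_k r_k = 1-\alpha$ with $r_k\in[0,\Pemp{T}(\xi_k)]$. The inside-contribution $\sum_k r_k \loss^\cN(x,\xi_k)$ is then precisely the supremum in the scaled-CVaR characterization \eqref{eq:cvar-max-char} evaluated at $R(\xi_k)=r_k/\Pemp{T}(\xi_k)$, so it equals $(1-\alpha)\CVaR_{\Pemp{T}}^\alpha(\loss^\cN(x,\txi))$, while the outside-contribution is $\alpha\max_{\xi\in\Sigma}\loss(x,\xi)$; this yields the first equality. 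For the explicit worst-case distribution I solve the program greedily: the maximizing allocation keeps the residual mass on the points of largest $\loss^\cN$, equivalently sends the lowest-$\loss^\cN$ mass outside until the budget $\alpha$ is spent, which under the stated ordering is exactly \eqref{eq:wc-distribution-noise-misspecification} with threshold index $p$. I would close by exhibiting the coupling that moves each retained point $\xi_k$ to $\xi'_k$ within $\cN$ and the remaining $\alpha$ mass to $\xi_\infty$, thereby verifying $\rho_{\cN}(\Pemp{T},\Pemp{T}^{\cN,\alpha}) \leq \alpha$ and hence feasibility and attainment.

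I expect the main obstacle to be the disintegration and per-source reduction: one must argue carefully that an optimal coupling can be taken to concentrate inside-mass at $\xi'_k$ and outside-mass at $\xi_\infty$, which relies on the attainment facts above and on handling the degenerate case $\xi_k-\xi_\infty\in\cN$ (where $\loss^\cN(x,\xi_k)=\max_{\xi\in\Sigma}\loss(x,\xi)$ and the inside/outside split is vacuous). Once this reduction is in place, the saturation argument, the greedy optimality, and the identification with $\CVaR$ via \eqref{eq:cvar-max-char} are routine.
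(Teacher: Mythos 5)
Your proof is correct, but it takes a genuinely different and more elementary route than the paper's. The paper proves the result as a corollary of a strictly more general statement (Theorem~\ref{thm: LP-DRO expression}, stated for an arbitrary, possibly continuous, center distribution $\Qb$): there, the lower bound is obtained by constructing a randomized transport map $S_{\cN,\alpha}$ with an auxiliary Bernoulli variable $\tzeta$ that splits the atom at the quantile $\tau_\alpha$, and verifying feasibility and cost via the quantile formula \eqref{eq:cvar-solution}; the upper bound is obtained for an arbitrary feasible coupling by splitting the expectation into inside-$\cN$ and outside-$\cN$ parts, replacing a test function $R(\xi,\xi')$ by its conditional expectation given $\txi$ so as to invoke the characterization \eqref{eq:cvar-max-char}, and finally maximizing over the realized budget $\alpha'\leq\alpha$. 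You instead exploit the finite support of $\Pemp{T}$: disintegrating the coupling over the $K$ source atoms collapses the whole problem to a finite linear program in the outside masses $w_k$, and its greedy solution simultaneously delivers the optimal value, the CVaR identification (Equation \eqref{eq:cvar-max-char} restricted to the support), the threshold index $p$, and the explicit worst-case distribution \eqref{eq:wc-distribution-noise-misspecification} — with the quantile-atom splitting handled transparently by the fractional weight on $\xi'_p$ rather than by randomization, and with no conditional-expectation step needed. What your route gives up is generality: the continuous-$\Qb$ version is reused later in the paper (in the proofs of Theorems~\ref{thm: HR finite formulation}, \ref{thm: general dual rep} and \ref{thm: dual rep HD}), so your argument proves the stated theorem but could not substitute for Theorem~\ref{thm: LP-DRO expression} wholesale. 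One shared gloss worth noting: your opening rewrite treats $\rho_{\cN}(\Pemp{T},\Pb')\leq\alpha$ as equivalent to the existence of a coupling of cost at most $\alpha$, which strictly requires attainment of the infimum in \eqref{eq: LP pseudo-metric} (weak compactness of $\Gamma(\Pemp{T},\Pb')$ together with lower semicontinuity of the transport cost for closed $\cN$, or an $\epsilon$-perturbation of $\alpha$); the paper performs the identical rewrite without comment, so this does not distinguish the two proofs.
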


\begin{figure}
  \centering
  \begin{tikzpicture}

\def\radius{0.43cm}
   \tikzstyle{sample} = [circle, minimum width = 2*\radius, very thick,
   draw=black!50!blue!60]
  \tikzstyle{arrow} = [thick,->,>=stealth]
   
  
   \node[sample,fill = green!50] (xi 9) at (9,5) {$\xi_{[T]}$};
    \foreach \x in {8,...,4}
    {
    \node[sample,fill = green!50] (xi \x) at (\x,5) {};
    }
   \begin{scope}
    \clip (3,5) circle (\radius);
        \fill[green!50] (3.1,6) rectangle (4,4);
    \end{scope}
    \node[sample] (xi 3) at (3,5) {$\xi_{[p]}$};
    \foreach \x in {2}
    {
    \node[sample] (xi \x) at (\x,5) {};
    }
    \node[sample] (xi 1) at (1,5) {$\xi_{[1]}$};
  
  \draw [orange, thick,
    decorate, 
    decoration = {brace,
        raise=2pt,
        amplitude=10pt}] (9.45,4.5) -- (3.1,4.5)
        node[pos=0.5,below=14pt,black]{Total loss = $(1-\alpha)\CVaR^{\alpha}$};

\draw[thick,dashed, black!50!blue!60] (3.1,4.3) -- (3.1,6) node[above] {$\alpha T$};

\draw[thick, black!50!blue!60] (0.5,4.4) -- (3.05,4.4);
\draw[thick, color = black!50!blue!60, text width=3cm] (0.5,4.3) -- (3.05,4.3) node[pos=0.7, below=10pt] {{\color{black}Replace with $\alpha \max_{\xi\in \Sigma} \loss(x,\xi)$}};

\draw[arrow] (6.5,6.3) -- (9,6.3) node[above, pos=0.5] {higher loss $\loss^{\mathcal N}$};
\end{tikzpicture} 
  \caption{Illustration of the LP-DRO expression of Theorem \ref{thm: LP-DRO expression for empirical}. The circles represent the observed corrupted samples ordered by increasing inflated loss $\loss^\cN(x,\xi_{[1]}) \leq  \ldots \leq \loss^\cN(x,\xi_{[T]})$, with $p = \lceil \alpha T \rceil$. The filled part represent the $1-\alpha$ fraction with highest inflated loss $\loss^\cN$. The LP-DRO predictor corresponds to the total loss of this $1-\alpha$ fraction of the samples with highest loss $\loss^\cN$ (which is $(1-\alpha)\CVaR$) plus $\alpha$ times the worst-case scenario. The adversary replaces the $\alpha$ fraction of the samples with lowest inflated loss $\loss^\cN$ with the worst-case loss.}
  \label{fig:LP-DRO illustration}
\end{figure}

We prove a more general version of Theorem \ref{thm: LP-DRO expression for empirical} in Appendix \ref{sec:gener-theor-refthm} where $\Pemp{T}$ is not required to be an empirical distribution.
Theorem \ref{thm: LP-DRO expression for empirical} shows that we can compute the LP-DRO predictor explicitly and provides an intuitive interpretation on how noise and misspecification interact when both are considered simultaneously, which we illustrate in Figure \ref{fig:LP-DRO illustration}.
First, the loss function considered here is the $\cN$-inflated counterpart $\loss^{\cN}$ of the loss function $\ell$
as is the case for predictors robust against noise $\hat{c}_{\mathrm{R}}(x,\Pemp{T})$ defined in \eqref{eq: nosie robust predictor}.
Second, the conditional value-at-risk of the inflated loss function above the $1-\alpha$ quantile is considered, which corresponds to the predictor $\hat{c}_{\mathrm{M}}(x,\Pemp{T})$ robust against misspecification stated in Equation \eqref{eq: predictor misspecification}.
Finally, the LP-DRO predictor considers the $1-\alpha$ percent worst $\cN$-inflated loss scenarios  and substitutes the ignored $\alpha$ percent lowest loss scenarios with the worst-case loss $\max_{\xi\in \Sigma}\loss(x,\xi)$.
This result is rather intuitive, as the misspecification is adversarial it will tend to target an $\alpha$ fraction of low loss scenarios and corrupt them into worst-case scenarios. The remaining $1-\alpha$ fraction of samples is then perturbed adversarially by bounded noise resulting in the $\cN$-inflated loss; see also Figure \ref{fig:LP-DRO illustration}.

  \begin{remark}[Robust Statistics and Outliers]
    From Figure \ref{fig:LP-DRO illustration} and the previous discussion it can be remarked that the LP-DRO formulation assigns more influence to those noise realizations associated with a large inflated loss while at the same time suppresses the influence of those realizations with a small inflated loss.
    This runs counter to the general strategy of robust methods introduced by \cite{tukey1958bias,huber1981robust} which try to identify the $\alpha$ fraction of the data points which are misspecified outliers and consequently suppress their influence. We remark however that outlier identification is only then a sound strategy if structural conditions on the unknown distribution such as normality \citep{huber1981robust} or bounded variance \citep{diakonikolas2019recent} are imposed. Critically, here we make no such assumptions and hence there is also no concept of statistical outlier as opposed to recent work by \cite{nietert2023OutlierRobustWassersteinDRO}. A detailed study of this seeming difference in paradigm between DRO and robust statistics can be found in \cite{chan2024DistributionalRobustnessRobust}.
  \end{remark}

Theorem \ref{thm: LP-DRO expression for empirical} implies that the LP-DRO formulation can be tractably computed provided we can evaluate the inflated loss function $\loss^{\cN}$. Hence, our novel LP-DRO formulation is computationally not harder to solve than the robust formulations proposed by \citet{madry2017towards}. In particular, when the loss function $\loss(x,\xi)$ is convex in the decision $x$ for all $\xi$, the LP-DRO predictor is convex in $x$ as well.

Finally, our LP-DRO robust predictor can be interpreted as a generalization of the predictors $\hat{c}_{\mathrm{R}}(x,\Pemp{T})$ and $\hat{c}_{\mathrm{M}}(x,\Pemp{T})$ which protect against both noise and misspecification, simultaneously.
\begin{corollary}
  \label{corollary:interpretation}
  When $\alpha T$ is integer, any $x\in \cX$ and noise set $\cN$, we have
  \begin{equation*}
  \displaystyle \hat{c}^{\mathcal N, \alpha}_{\LPtxt}(x,\Pemp{T}) = \max \set{\displaystyle\sum_{t\in[T]} \frac{\loss(x,\xi_t-\noise_t-\noise'_t)}{T}}{\sum_{t\in[T]}\frac{\one{\noise'_t\neq 0}}{T}\leq \alpha,~\noise_t \in \mathcal N, ~ \xi_t-\noise_t-\noise'_t\in \Sigma~~\forall t\in [T]}.
\end{equation*}
\end{corollary}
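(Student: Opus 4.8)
The plan is to reduce the statement entirely to the closed-form characterization already established in Theorem~\ref{thm: LP-DRO expression for empirical}, which gives
\[
  \hat{c}^{\cN,\alpha}_{\mathrm{LP}}(x,\Pemp{T}) = (1-\alpha)\CVaR_{\Pemp{T}}^\alpha(\loss^{\cN}(x,\txi)) + \alpha\max_{\xi\in\Sigma}\loss(x,\xi).
\]
It therefore suffices to show that the maximization appearing on the right-hand side of the corollary equals this same quantity. I would denote that maximization value by $V$ and analyze the structure of its feasible region, exploiting the fact that the two perturbations $\noise_t$ and $\noise'_t$ play asymmetric roles.

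First I would decouple the two perturbations. For any feasible $(\noise_t,\noise'_t)_{t\in[T]}$, let $S=\{t\in[T]:\noise'_t\neq 0\}$ be the set of misspecified indices, so that $|S|\leq \alpha T$ by the budget constraint. For each $t\notin S$ the term $\loss(x,\xi_t-\noise_t)$ with $\noise_t\in\cN$ and $\xi_t-\noise_t\in\Sigma$ is by definition of the inflated loss bounded above by $\loss^{\cN}(x,\xi_t)$, with equality attained by the maximizing noise. For each $t\in S$ the composite shift $\xi_t-\noise_t-\noise'_t$ can be made equal to an arbitrary point of $\Sigma$, in particular $\xi_\infty\in\argmax_{\xi\in\Sigma}\loss(x,\xi)$, so the best attainable value of its term is $\max_{\xi\in\Sigma}\loss(x,\xi)$. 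Consequently $V$ equals the combinatorial optimum
\[
  V = \max_{S\subseteq[T],\,|S|\leq \alpha T}\ \frac{1}{T}\Bigl(\textstyle\sum_{t\notin S}\loss^{\cN}(x,\xi_t) + |S|\max_{\xi\in\Sigma}\loss(x,\xi)\Bigr).
\]
Since $\loss^{\cN}(x,\xi_t)\leq \max_{\xi\in\Sigma}\loss(x,\xi)$ for every $t$ (because $\xi_t-\noise\in\Sigma$ is itself feasible in the worst-case problem), replacing an inflated-loss term by the worst-case loss can never decrease the objective; hence it is optimal to saturate the budget with $|S|=\alpha T$ and to place $S$ on the indices carrying the smallest inflated losses.

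Using the ascending ordering $\loss^{\cN}(x,\xi_{[1]})\leq\dots\leq\loss^{\cN}(x,\xi_{[T]})$, the optimal subset discards the $\alpha T$ lowest inflated losses, leaving
\[
  V = \frac{1}{T}\sum_{k=\alpha T+1}^{T}\loss^{\cN}(x,\xi_{[k]}) + \alpha\max_{\xi\in\Sigma}\loss(x,\xi).
\]
The final step identifies the first term with the scaled conditional value-at-risk. Because $\Pemp{T}$ assigns equal mass $1/T$ to its $T$ atoms and $\alpha T$ is an integer, the $\alpha$-quantile $\tau_\alpha$ of $\loss^{\cN}(x,\txi)$ falls exactly on an atom and the boundary correction $\tau_\alpha(\Pemp{T}(\loss^{\cN}(x,\txi)\leq\tau_\alpha)-\alpha)$ in \eqref{eq:cvar-solution} vanishes, so that $(1-\alpha)\CVaR^\alpha_{\Pemp{T}}(\loss^{\cN}(x,\txi))$ equals precisely the $1/T$-weighted average of the top $(1-\alpha)T$ inflated losses, matching the first term of $V$ and completing the identity.

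The main obstacle is the clean handling of the inner maximization rather than the reduction itself: one must argue that the misspecified coordinates genuinely reach the global worst-case loss (using compactness of $\Sigma$ and attainment of $\xi_\infty$), that it is simultaneously optimal to saturate the budget and to target the smallest inflated losses, and — most delicately — that the resulting ordered sum coincides exactly with the scaled CVaR. This last matching is where the hypothesis that $\alpha T$ be integer is essential, as it prevents a fractional atom from splitting the quantile; ties among equal inflated-loss values are harmless, since any tie-breaking rule yields the same objective value.
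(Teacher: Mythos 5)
Your proof is correct and follows essentially the same route as the paper's: absorb the bounded noise $\noise_t$ into the inflated loss $\loss^{\cN}$, observe that the optimal misspecification saturates the budget and replaces the $\alpha T$ smallest inflated losses by the worst-case loss $\max_{\xi\in\Sigma}\loss(x,\xi)$ (which is exactly where integrality of $\alpha T$ enters), and identify the remaining ordered sum with $(1-\alpha)\CVaR^{\alpha}_{\Pemp{T}}(\loss^{\cN}(x,\txi))$ via Equation \eqref{eq:cvar-solution} before invoking Theorem \ref{thm: LP-DRO expression for empirical} --- precisely the paper's chain of equalities, spelled out in more detail. One small imprecision worth noting: when several atoms tie at the quantile value, the correction term $\tau_{\alpha}\bigl(\Pemp{T}(\loss^{\cN}(x,\txi)\leq\tau_{\alpha})-\alpha\bigr)$ does \emph{not} vanish as you assert, but it exactly accounts for the tied atoms included in the top-$(1-\alpha)T$ average, so the identity you need still holds and the proof stands.
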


\subsection{Relationship with other DRO predictors}
\label{sec:rel-DRO}
We note in this section that the LP-DRO predictor is a generalization of several popular DRO predictors with metrics such as type-$\infty$ Wasserstein and total variation. We first exhibit its close relationship with the L\'evy-Prokhorov (LP) metric, on which basis we coined the LP-DRO predictor its name.

\paragraph{L\'evy-Prokhorov Metric}
The pseudo-metric $\LP_{\cN}$ is intimately related to the classical L\'evy-Prokhorov metric 
$$\LPdist(\mu,\nu) = \inf\{\epsilon>0 \; : \; \nu(A)\leq \mu(A^\epsilon) + \epsilon\quad \forall A \in \cB(\Sigma)\}$$
for $\epsilon>0$, for all $\mu$ and $\nu$ where $A^\epsilon \defn \{ \xi \in A \; | \exists \xi' \in A, \; \|\xi'-\xi \| < \epsilon \}$. The L\'evy-Prokhorov metric is perhaps best known for its topological properties. Indeed, weak convergence of probability distributions is equivalent to convergence in the L\'evy-Prokhorov metric. Following \citet{strassen1965existence}, we have that $\set{\nu\in \cP}{\LP_{\mathcal B(0, \epsilon)}(\mu,\nu) \leq \epsilon} =  \set{\nu\in \cP}{\LPdist(\mu,\nu) \leq \epsilon}$. The LP-DRO predictor hence comprises (for $\cN=\mathcal B(0, \epsilon)$ and $\alpha = \epsilon$) the L\'evy-Prokhorov robust cost predictor
$
  \sup \{ 
  \Eb_{\Pb'}[\loss(x,\txi)]
  \; : \; 
  \Pb'\in \cP, \; 
  \LPdist(\Pemp{T},\Pb') \leq \epsilon
  \}
$
studied by \citet{erdougan2006ambiguous} who propose an approximate evaluation. Hence, even in the restricted context of L\'evy-Prokhorov robust optimization the result stated in Theorem \ref{thm: LP-DRO expression for empirical} appears to be novel.

\paragraph{Type-$\infty$ Wasserstein Metric}
    The type-$p$ Wasserstein distance family for $p\geq 1$ between two distributions $\mu$ and $\nu$ is defined as $W_p(\mu, \nu)=\left(\inf \set{\int \norm{\xi-\xi'}^p \d \gamma(\xi, \xi')}{\gamma\in \Gamma(\mu,\nu)}\right)^{1/p}$.
    \citet[Proposition 3, Proposition 5]{givens1984class} show that we have in the limit
    \[
      \lim_{p\to\infty} W_p(\mu, \nu) = W_\infty(\mu, \nu) \defn \inf\{\epsilon>0 \; : \; \nu(A)\leq \mu(A^\epsilon)\quad \forall A \in \cB(\Sigma)\}.
    \]
    Following \citet{strassen1965existence}, we have that $\set{\nu\in \cP}{\LP_{\mathcal B(0, \epsilon)}(\mu,\nu) \leq 0} =  \set{\nu\in \cP}{W_\infty(\mu,\nu) \leq \epsilon}$.
    The LP-DRO predictor hence comprises (for $\cN=\mathcal B(0, \epsilon)$ and $\alpha = 0$) the type-$\infty$ Wasserstein robust cost predictor
    $
    \sup \{ 
    \Eb_{\Pb'}[\loss(x,\txi)]
    \; : \; 
    \Pb'\in \cP, \; 
    W_\infty(\Pemp{T},\Pb') \leq \epsilon
    \}
    $
    studied recently by \citet{bertsimas2022two, xie2019tractable} in the context of two-stage robust optimization and by \citet{nguyen2020distributionally} in the context of conditional estimation.

\paragraph{Total Variation}
    The total variation distance between any distributions $\mu$ and $\nu$ is defined as
    \[
      TV(\mu, \nu) = \inf\{\epsilon>0 \; : \; \nu(A)\leq \mu(A)+\epsilon \quad \forall A \in \cB(\Sigma)\}.
    \]
    Following again \citet{strassen1965existence}, we have that $\set{\nu\in \cP}{\LP_{\{0\}}(\mu,\nu) \leq \epsilon} =  \set{\nu\in \cP}{TV(\mu,\nu) \leq \epsilon}$. The LP-DRO predictor hence comprises (for $\cN=\{0\}$ and $\alpha = \epsilon$) the total variation robust cost predictor
$
  \sup \{ 
  \Eb_{\Pb'}[\loss(x,\txi)]
  \; : \; 
  \Pb'\in \cP, \; 
  TV(\Pemp{T},\Pb') \leq \epsilon
  \}.
  $
  Total variation robust formulations were proven effective by \citet{rahimian2019identifying} to identify critical scenarios in stochastic optimization models. With this perspective, Theorem \ref{thm: LP-DRO expression for empirical} can be seen as an extension of prior reformulations of TV-DRO \citep{shapiro2017distributionally,jiang2018risk}.

\section{Holistic Robustness}\label{sec:HR-robustness}

The previous section shows that the LP-DRO predictor defined in Equation \eqref{eq: LP-DRO predictor} provides robustness efficiently against noise and misspecification. On the other hand, as we mentioned in Section \ref{sec:introduction}, the KL-DRO predictor provides efficient robustness against statistical error as pointed out recently by \citet{duchi2021statistics, van2021data, bennouna2021learning}. However, none of the discussed robust predictors protects against all sources of overfitting simultaneously.

The question hence becomes whether a predictor can be constructed that is simultaneously robust against all the three sources of overfitting.
A natural idea is to combine the KL and LP ambiguity sets which separately capture robustness against statistical error, noise and misspecification, respectively.

\subsection{A Holistic Robust Predictor}\label{sec:HR-pred}

Consider the following data generation process in which sample data is generated by independent draws from the true out-of-sample distribution $\Pb$.
However, this sample data is not observed directly by the decision-maker.
Rather, each sample is first corrupted by noise realizing in $\cN$.
Finally, the decision maker only gets to observe data in which up to a fraction $\alpha$ of these noisy samples have been substituted by arbitrary fake data. Remark that as opposed to Section \ref{sec: corruption}, the statistical error is here \textit{not} negligible. We do not observe the full distribution of the corrupted uncertainty but rather must learn its distribution, and infer the clean distribution from samples. 

We construct a holistically robust predictor against statistical error, noise and misspecification.
Let us first provide intuition on the construction. The goal is to construct an ambiguity set containing the true out-of-sample distribution $\Pb$ with high probability, given the observed empirical distribution $\hat{\Pb}_T$ of the corrupted sample. Let $\hat{\Qb}$ be the empirical distribution of the unobserved sample, from the true out-of-sample distribution $\Pb$.
Sanov's theorem \citep[Theorem 6.2.10]{dembod1996large} indicates that this empirical distribution (intuitively) verifies\footnote{This is not true for  a continuous distribution $\Pb$. We state this here only for the purpose of intuition. The proof requires more elaborate arguments, see Appendix \ref{App: proof of sec det corr}.} $\KL(\hat{\Qb}||\Pb) \leq r$ with high probability $1-\exp(-rT+o(T))$.
As samples from of the empirical distribution $\hat{\Qb}$ are perturbed into $\Pemp{T}$ with noise in $\cN$ and a fraction up to $\alpha$ is misspecified, Theorem \ref{thm: LP-DRO robustness} ensures that we must have $\LP_{\cN}(\hat{\Pb}_T,\hat{\Qb}) \leq \alpha$. Hence, the out-of-sample distribution $\Pb$ is in the set $\tset{\Pb' \in \cP}{\exists\hat{\Qb} \in \cP, \;
  \LP_{\cN}(\Pemp{T}, \hat{\Qb}) \leq \alpha, \;
  \KL (\hat{\Qb}||\Pb') \leq r}$.
Figure \ref{fig:deterministic corruption} illustrates the hypothesized data generation process visually.
  
Following this intuition, for a given noise support $\cN$, a misspecification fraction $\alpha\in [0,1]$ and a desired statistical robustness parameter $r$, we consider the holistic robust predictor 
\begin{align}\label{eq: HD predictor}
  \hat{c}^{\cN,\alpha,r}_{\HR}(x,\Pemp{T})
  =
  \sup \{ 
  \Eb_{\Pb'}[\loss(x,\txi)]
  \; : \; 
  \Pb'\in \cP,\; \hat{\Qb} \in \cP, \;
  \LP_{\cN}(\Pemp{T}, \hat{\Qb}) \leq \alpha, \;
  \KL (\hat{\Qb}||\Pb') \leq r
  \} \quad \forall x \in \cX.
\end{align}

\begin{figure}
  \centering
  \begin{tikzpicture}


   \tikzstyle{dist} = [rectangle, minimum width=1.6cm, minimum height=1.7cm, text centered, text width=3cm, draw=black]
   \tikzstyle{arrow} = [thick,->,>=stealth]
   
   
   \node[dist] (P') at (-1,5) {True Distribution \\ $\Pb$};
   \node[dist] (Q) at (5,5) {Empirical distribution $\hat{\Qb}$};
   \node[dist] (PT) at (11,5) {Corrupted empirical distribution $\hat{\Pb}_T$};

   \draw[->] (P'.north) to[bend left] node[above,midway,align = center] {sample}  (Q) ;
  \draw[<->] (P'.east) -- (Q.west) node[below,midway,align = center] {KL divergence \\ $\leq r$};
  \draw[<->] (Q.east) -- (PT.west) node[below,midway,align = center] {LP distance \\ $\leq \alpha$};
  \draw[->] (Q.north) to[bend left] node[above,midway,align = center] {noise and misspecification} (PT) ;
  
\end{tikzpicture} 
  \caption{When learning with corrupted data, data points are first sampled independently from $\Pb$.
    Sampling from $\Pb$ itself results in statistical errors which however remain bounded with high probability $1-\exp(-rT+o(T))$ by $r$ as measured in by the KL divergence between $\Pb$ and $\hat{\mathbb{Q}}$.
    Subsequently, an adversary corrupts these samples with noise in $\cN$ and misspecification at frequency at most $\alpha$. Theorem \ref{thm: LP-DRO robustness} guarantees that the distance between $\hat{\mathbb{Q}}$ and $\Pemp{T}$ is a most $\alpha$ as measured by the pseudometric $\LP_{\cN}$.}
  \label{fig:deterministic corruption}
\end{figure}

We formally prove that the HR predictor enjoys robustness in the presence of noise and misspecification. Proofs of this section are in Appendix \ref{App: proof of sec det corr}.

\begin{theorem}[Uniform Robustness Out-of-sample Guarantee]
    \label{thm: HD robustness guarantee}
    Let $\Pemp{T}$ be the observed empirical distribution of corrupted samples. Assume that the noise realizes in a compact set in the interior of $\cN$ and the corrupted portion of the data is less than $\alpha T$. 
    We have
    \begin{equation*}
      \Prob\left(\exists x \in \cX,\, \hat{c}^{\cN,\alpha,r}_{\HR}(x,\Pemp{T}) < \Eb_{\Pb}[\loss(x,\txi)] \right) \leq  \exp(-rT +O(1)).
    \end{equation*}
\end{theorem}

The previous result establishes that the HR predictor enjoys a uniform asymptotic out-of-sample guarantee and hence safeguards against the three sources of overfitting simultaneously. In particular, the robustness out-of-sample guarantee is uniform in the decision set $\cX$, an important propriety for optimization as it implies $\Prob\left(\hat{c}^{\cN,\alpha,r}_{\HR}(\hat{x}^\star_{\HR},\Pemp{T}) \geq \Eb_{\Pb}[\loss(\hat{x}^\star_{\HR},\txi)]\right) \geq 1- \exp(-rT +O(1))$, where $\hat{x}^\star_{\HR}$  minimizes  the $\HR$ predictor. In classical statistical learning literature, such uniform upper bounds are typically obtained through restricting the complexity decision set (or hypothesis class) $\cX$, such as its VC dimension \citep{vapnik1999nature}. The then obtained bounds depend on such complexity measures, which may render them impractical in high-dimensional applications. 
In the proof of Theorem \ref{thm: HD robustness guarantee}, we exhibit a finite sample result as well in which we show that the $O(1)$ term is independent of the complexity of the set $\cX$.
We in fact show that for all $T\geq 1$, we have $\Prob\left(\exists x \in \cX,\, \hat{c}^{\cN^\delta,\alpha+\delta,r}_{\HR}(x,\Pemp{T}) < \Eb_{\Pb}[\loss(x,\txi)] \right) \leq \exp(-rT + m(\Sigma,\delta)\log \left( \tfrac{4}{\delta} \right))$ for any $\delta>0$, where $\cN^\delta = \cN + \cB(0,\delta)$ and $m(\Sigma, \delta)$ denotes the internal covering number of the support set $\Sigma$.


In the proof of Theorem \ref{thm: HD robustness guarantee}, we also shows that the ambiguity set of the HR predictor contains the out-of-sample distribution with high probability. This shows that the LP metric smooths the KL ambiguity sets overcoming some of its limitations \citep{liu2023smoothed}. Indeed, while KL-DRO is proven to be optimal in a certain sense when data is clean \citep{van2021data}, the KL ambiguity set never contains the out-of-sample distribution when this distribution is continuous. The HR ambiguity sets circumvents this issue by use of the optimal transport metric, and contains the out-of-sample distribution with high probability in the more general setting of corrupted observed data.

Several DRO formulations\footnote{But not all; for example, the KL divergence, as illustrated in Figure \ref{fig:distributional-shift}.} can verify the out-of-sample guarantee when their ambiguity sets are sufficiently inflated, i.e., when their radius is large enough. However, this often leads to overly conservative predictions. This raises an important question: how conservative is a given DRO formulation in verifying the out-of-sample guarantee?

The following result demonstrates that the HR predictor achieves \textit{efficient} robustness. Specifically, among all predictors (i.e., mappings from data to predictions) that satisfy the out-of-sample guarantee, the HR predictor yields the \textit{least conservative} cost estimates.

  \begin{theorem}[Efficient Robustness]
    \label{thm:efficiency-hd}
    Suppose $0 \in \interior(\cN), \cl(\interior(\cN)) = \cN$, $\alpha>0$, $r>0$ and $\xi \to \loss(x,\xi)$ continuous and bounded for all $x\in \cX$. 
    Let $\hat{c}$ be a predictor that verifies the out-of-sample guarantee 
    $$
    \limsup_{T \to \infty} \frac{1}{T} \log \Prob \left(\exists x \in \cX, \, \hat{c}(x,\hat{\Pb}_T) < \Eb_{\Pb}[\loss(x,\txi)]\right) \leq -r,
    $$
    for all out-of-sample distributions $\Pb$ and adversary which misspecifies less than a fraction $\alpha$ of training data points and perturbs the testing data with noise bounded in a compact set in the interior of $\cN$. Then, $\hat{c}$ is necessarily uniformly more conservative than $\hat{c}^{\cN,\alpha,r}_{\HR}$, that is, we have
    $
    \hat{c}(x,\hat{\Pb}) 
    \geq
    \hat{c}^{\cN,\alpha,r}_{\HR}(x,\hat{\Pb}),
    $
    for all $x \in \cX$ and observed empirical distribution $\hat{\Pb}$.
\end{theorem}

The previous results show that the $\HR$ predictor provides precisely the desired robustness, and is in fact an efficient robust predictor.
However, it is not clear whether the $\HR$ predictor is tractable as it is characterized as a maximum over probability measures. In the following result, we show that computing the predictor can be reduced to solving a tractable optimization problem. Moreover, we show that the supremum in Equation \eqref{eq: HD predictor} admits an optimal solution with finite \textit{known} support, which characterizes the worst-case adversary.
This is rather a surprising result, as the predictor characterized in Equation \eqref{eq: HD predictor} considers the worst-case distribution among all potentially continuous distributions on $\Sigma$.

\begin{theorem}[Finite Primal Reduction]
  \label{thm: HD finite formulation}
  Let $\{\xi_1,\ldots,\xi_K\}$ be the support of $\Pemp{T}$. For all $x\in \cX$, the HR predictor \eqref{eq: HD predictor} admits the representation
  \begingroup
  \allowdisplaybreaks
  \begin{align}
    \label{eq:finite-formulation-hd}
    \hat{c}^{\cN,\alpha,r}_{\HR}(x,\Pemp{T}) = &                                                          \left\{
                                                         \begin{array}{rl}
                                                           \max & \sum_{k\in[K]} p'_{k}\loss^{\cN}(x,\xi_k) + p'_{K+1} \max_{\xi \in \Sigma} \loss(x,\xi) \\[0.5em]
                                                           \st &  p'\in \Re^{K+1}_+, \, \hat q\in \Re^{K+1}_+, \, s \in \Re^{K}_+,\\[0.5em]
                                                                &  \sum_{k\in[K+1]} p'_{k} = 1,\quad \sum_{k\in[K+1]} \hat q_{k}=1\\[0.5em]
                                                                & \sum_{k\in[K+1]} \hat q_k \log \left( \frac{\hat q_k}{p'_{k}} \right) \leq r,\\[0.5em]
                                                                & \sum_{k\in[K]} s_k\leq \alpha, \\[0.5em]
                                                                & \hat q_k + s_k = \Pemp{T}(\xi_k) \quad \forall k \in [K].
                                                         \end{array}\right.
  \end{align}
  \endgroup
\end{theorem}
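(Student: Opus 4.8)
The plan is to prove the two inequalities separately after rewriting the HD predictor as a nested optimization. Since $\hat\Qb$ and $\Pb'$ are coupled in \eqref{eq: HD predictor} only through $\DKL(\hat\Qb||\Pb')\le r$, I would first write
\[
  \hat{c}^{\cN,\alpha,r}_{\mathrm{HD}}(x,\Pemp{T}) = \sup\set{\hat{c}^r_{\mathrm{KL}}(x,\hat\Qb)}{\hat\Qb\in\cP,~\rho_{\cN}(\Pemp{T},\hat\Qb)\le\alpha},
\]
where $\hat{c}^r_{\mathrm{KL}}(x,\hat\Qb)=\sup\set{\Eb_{\Pb'}[\loss(x,\txi)]}{\DKL(\hat\Qb||\Pb')\le r}$ is exactly the KL-DRO predictor \eqref{eq: KL predictor} with base distribution $\hat\Qb$. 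I would then record two structural facts about this inner problem. By strong duality for KL-DRO (valid since $\Sigma$ is compact and $\loss$ continuous, so $\max_{\xi\in\Sigma}\loss(x,\xi)<\infty$; see \citet{van2021data}), one has $\hat{c}^r_{\mathrm{KL}}(x,\hat\Qb)=\inf_{\lambda\ge0,\,\eta\ge\max_{\xi\in\Sigma}\loss(x,\xi)}\{\lambda(r-1)+\eta+\lambda\Eb_{\hat\Qb}[\log(\lambda/(\eta-\loss(x,\txi)))]\}$. Two consequences matter: the value depends on $\hat\Qb$ only through the law of $\loss(x,\txi)$ under $\hat\Qb$, and, since for each fixed $(\lambda,\eta)$ the map $\xi\mapsto\log(\lambda/(\eta-\loss(x,\xi)))$ is nondecreasing in $\loss(x,\xi)$, the value is monotone with respect to first-order stochastic dominance of this loss law (evaluating the objective of the dominated law at the dominating law's minimizer only increases it).

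For the inequality ``$\le$'', I would fix a feasible $\hat\Qb$ and a coupling $\gamma\in\Gamma(\Pemp{T},\hat\Qb)$ with $\int\one(\xi-\xi'\not\in\cN)\,\d\gamma\le\alpha+\epsilon$. Because $\Pemp{T}$ is finitely supported, $\gamma$ disintegrates as $\gamma=\sum_{k}\Pemp{T}(\xi_k)\,\delta_{\xi_k}\otimes\gamma_k$, and I split each conditional $\gamma_k$ into its restriction to $\{\xi':\xi_k-\xi'\in\cN\}$ and its complement, letting $s_k$ denote the mass sent outside; then $\sum_{k}s_k\le\alpha+\epsilon$. Every unit of mass leaving $\xi_k$ inside $\cN$ lands at a point of loss at most $\loss^{\cN}(x,\xi_k)$, and every unit leaving outside has loss at most $\max_{\xi\in\Sigma}\loss(x,\xi)$; hence the loss law of $\hat\Qb$ is stochastically dominated by that of the finitely supported $\hat\Qb^\star=\sum_{k}(\Pemp{T}(\xi_k)-s_k)\delta_{\xi'_k}+(\sum_{k}s_k)\delta_{\xi_\infty}$, using the auxiliary points $\xi'_k$ and $\xi_\infty$ of Theorem \ref{thm: LP-DRO expression for empirical}. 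Monotonicity yields $\hat{c}^r_{\mathrm{KL}}(x,\hat\Qb)\le\hat{c}^r_{\mathrm{KL}}(x,\hat\Qb^\star)$, and the latter is a finite KL-DRO whose maximizing $\Pb'$ may be taken supported on $\{\xi'_1,\dots,\xi'_K,\xi_\infty\}$ (any extra mass is best placed at the worst-case $\xi_\infty$, already in the support). Reading off the masses gives a point $(\hat q,s,p')$ feasible in \eqref{eq:finite-formulation-hd} with objective at least $\Eb_{\Pb'}[\loss(x,\txi)]$; letting $\epsilon\downarrow0$ establishes ``$\le$''.

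The reverse inequality ``$\ge$'' is the direct construction mirroring the one given after Theorem \ref{thm: HR finite formulation}. From any feasible $(p',\hat q,s)$ I set $\hat\Qb^\star=\sum_{k\in[K]}\hat q_k\delta_{\xi'_k}+\hat q_{K+1}\delta_{\xi_\infty}$, $\Pb'=\sum_{k\in[K]}p'_k\delta_{\xi'_k}+p'_{K+1}\delta_{\xi_\infty}$ and $\gamma=\sum_{k\in[K]}\hat q_k\delta_{(\xi_k,\xi'_k)}+\sum_{k\in[K]}s_k\delta_{(\xi_k,\xi_\infty)}$. The constraints $\hat q_k+s_k=\Pemp{T}(\xi_k)$ together with the two normalizations make $\gamma\in\Gamma(\Pemp{T},\hat\Qb^\star)$, certifying $\rho_{\cN}(\Pemp{T},\hat\Qb^\star)\le\sum_{k}s_k\le\alpha$ (the pairs $(\xi_k,\xi'_k)$ cost nothing), while $\DKL(\hat\Qb^\star||\Pb')=\sum_{k\in[K+1]}\hat q_k\log(\hat q_k/p'_k)\le r$ and $\Eb_{\Pb'}[\loss(x,\txi)]=\sum_{k\in[K]}p'_k\loss^{\cN}(x,\xi_k)+p'_{K+1}\max_{\xi\in\Sigma}\loss(x,\xi)$, matching the finite objective exactly. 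Taking the supremum over feasible $(p',\hat q,s)$ gives ``$\ge$''.

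The main obstacle is the ``$\le$'' direction, namely reducing a genuinely continuous pair $(\hat\Qb,\Pb')$ to the finite support $\{\xi'_1,\dots,\xi'_K,\xi_\infty\}$. The delicate points are (i) justifying strong duality and the resulting law-invariance and stochastic-dominance monotonicity of the inner KL-DRO uniformly over possibly non-discrete base measures $\hat\Qb$, and (ii) the measure-theoretic bookkeeping of the disintegration together with the fact that the $\rho_{\cN}$ infimum need not be attained, which forces the $\epsilon$-argument. Finally, the degenerate case $r=0$ (where HD collapses to the LP-DRO predictor of Theorem \ref{thm: LP-DRO expression for empirical} and the KL constraint forces $\Pb'=\hat\Qb$) should be treated separately, since the strong-duality argument is vacuous there.
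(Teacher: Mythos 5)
Your proposal is correct, but it reaches \eqref{eq:finite-formulation-hd} by a genuinely different route than the paper. The paper's proof of Theorem \ref{thm: HD finite formulation} runs through duality twice: it starts from the dual representation of Theorem \ref{thm: dual rep HD} (itself obtained from Lemma \ref{lemma:kl-solution}, Theorem \ref{thm: LP-DRO expression} and Sion's minimax theorem), forms the Lagrangian of that finite-dimensional minimization, verifies Slater's condition, and recovers the finite primal via convex-conjugate identities and a minimax theorem of Bertsekas. You instead argue directly on the primal: the nested rewriting $\hat{c}^{\cN,\alpha,r}_{\mathrm{HD}}(x,\Pemp{T})=\sup\{\hat{c}^r_{\mathrm{KL}}(x,\hat{\Qb})\,:\,\hat{\Qb}\in\cP,\ \rho_{\cN}(\Pemp{T},\hat{\Qb})\le\alpha\}$ (which the paper also uses, in the proofs of Theorem \ref{thm: dual rep HD} and Lemma \ref{lemma:hr:primal:reduction}), combined with law-invariance and first-order stochastic dominance monotonicity of the KL-DRO value extracted from Lemma \ref{lemma:kl-solution}, to collapse any feasible $\hat{\Qb}$ onto $\{\xi'_1,\dots,\xi'_K,\xi_\infty\}$; your move-extra-mass-to-$\xi_\infty$ argument for the inner $\Pb'$ is sound, since relocating $\Pb'$-mass to $\xi_\infty$ weakly increases the objective and weakly decreases $\DKL(\hat{\Qb}^\star||\Pb')$. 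In spirit your argument is much closer to the paper's proof of the HR analogue (Theorem \ref{thm: HR finite formulation}), which likewise modifies the intermediate measure into a finitely supported one and invokes Theorem \ref{thm: LP-DRO expression}, than to the paper's actual HD proof. What each buys: the paper's Lagrangian route leverages Theorem \ref{thm: dual rep HD} (needed anyway for optimization) and delivers the primal--dual pair in one sweep for $r>0$; your route is more elementary, makes the worst-case structure transparent, and, as you note, also covers $r=0$, which the paper's proof excludes (its minimax step explicitly ``exploits $r>0$'').

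Three loose ends, all repairable. First, your $\epsilon$-argument is avoidable: $\cN$ is compact (the paper uses this in the proof of Theorem \ref{thm: general dual rep}, and it is needed for the $\argmax$ defining $\xi'_k$ to be attained), so the cost $\mathbb{1}(\xi-\xi'\not\in\cN)$ is lower semicontinuous and $\Gamma(\Pemp{T},\hat{\Qb})$ is weakly compact, whence the infimum defining $\rho_{\cN}$ is attained and you may take $\epsilon=0$. If you keep the $\epsilon$-argument, you must additionally show that the optimal value of \eqref{eq:finite-formulation-hd} as a function of $\alpha$ is upper semicontinuous from the right (this follows from compactness and closedness of the feasible correspondence, the relative-entropy constraint being a closed condition); concavity in $\alpha$ alone does not suffice at $\alpha=0$. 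Second, when atoms collide ($\xi'_j=\xi'_k$ for $j\ne k$, or $\xi'_k=\xi_\infty$), the identification of $(\hat q,p')$ with measures is ambiguous; splitting the $\Pb'$-mass at a shared atom proportionally to the $\hat q$'s makes the indexed sum $\sum_{k\in[K+1]}\hat q_k\log(\hat q_k/p'_k)$ coincide with the measure-theoretic $\DKL(\hat{\Qb}^\star||\Pb')$ by the log-sum inequality, so feasibility transfers in both directions (the paper leaves the same bookkeeping implicit). Third, the dominance-monotonicity step should be phrased through near-minimizers of the KL dual, since the infimum over $(\lambda,\eta)$ need not be attained; your inequality chain survives this unchanged.
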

Notice that the nonlinear constraint in the maximization problem \eqref{eq:finite-formulation-hd} can be represented as an exponential cone constraint by introducing the auxiliary variable $t \in \Re^K$ with $\sum_{k\in [K]} t_k\leq r$ and the auxiliary constraints $(-t_k,\Pemp{T}(\xi_k),q_k) \in \mathcal{K}_{\mathrm{exp}}$ for all $k \in [K]$ where $\mathcal{K}_{\mathrm{exp}}$ is the exponential cone. Hence, problem \eqref{eq:finite-formulation-hd} can be solved using off-the-shelf exponential cone optimization solvers \citep{dahl2021primal}.

Given an optimal solution in the optimization problem \eqref{eq:finite-formulation-hd} we can explicitly construct an optimal solution in the optimization problem \eqref{eq: HD predictor} as well. To that end recall that for all $k\in [K]$ the points $\xi'_k \in \xi_k-\argmax_{\noise \in \cN, \,\xi_k - \noise\in\Sigma} \loss(x,\xi_k - \noise)$ and $\xi_\infty \in \argmax_{\xi \in \Sigma} \loss(x,\xi)$.
We can associate with any feasible $(\hat q, s, p')$ in \eqref{eq:finite-formulation-hd} the distributions
$\hat \Qb =  \textstyle \sum_{k\in [K]} \hat q_k \delta_{\xi'_k} + \hat q_{K+1} \delta_{\xi_\infty}$,
$\Pb' =  \textstyle \sum_{k\in [K]} p'_k \delta_{\xi'_k} + p'_{K+1} \delta_{\xi_\infty}$ and 
$\gamma = \textstyle \sum_{k\in K} \hat q_k \delta_{\xi_k,\xi_k'} + s_k \delta_{\xi_k, \xi_\infty}$.
The constraints $\hat q_k+s_k=\Pemp{T}(\xi_k)$ for all $k\in [K]$, $\sum_{k\in [K+1]}p'_k=1$ and $\sum_{k\in [K+1]}\hat q_k=1$ guarantee that $\gamma\in \Gamma(\Pemp{T}, \hat \Qb)$ while $\int \indic(\xi-\xi' \not\in\cN) \,\d\gamma(\xi,\xi')\leq \sum_{k\in[K]} s_k \leq \alpha$ guarantees $\LP_\cN(\Pemp{T}, \hat \Qb)\leq \alpha$. Furthermore, we have \(  \KL (\hat{\Qb}||\Pb') = \sum_{k\in[K+1]} \hat q_k \log(\tfrac{\hat q_k}{p'_k})\leq r \) and $\Eb_{\Pb'}[\loss(x,\txi)]=\sum_{k\in [K]}p'_k \loss(x,\xi_k') +p'_{K+1}\loss(x,\xi_\infty)=\sum_{k\in [K]}p'_k\loss^\cN(x, \xi_k)+p_{K+1}'\max_{\xi\in\Sigma}\loss(x, \xi)$. Hence, for all feasible solutions in \eqref{eq:finite-formulation-hd} we can associate finitely supported distributions feasible in the maximization problem \eqref{eq: HD predictor} characterizing the $\HR$ predictor attaining the same objective value.
A feasible solution $(p', q', s)$ in problem \eqref{eq:finite-formulation-hd} corresponds to the solution
\begin{equation*}
  \Pb'=\textstyle \sum_{k\in [K]} p'_k \delta_{\xi'_k}+p_{K+1}'\delta_{\xi_\infty}\quad {\rm{and}} \quad
  \hat \Qb =  \textstyle \sum_{k\in[K]} \hat q_k \delta_{\xi'_k}+\hat q_{K+1}\delta_{\xi_\infty}.
\end{equation*}
feasible in the maximization problem \eqref{eq: HD predictor} characterizing the $\HR$ predictor; see the proof of Theorem \ref{thm: HD finite formulation}.

Although Theorem \ref{thm: HD finite formulation} provides a tractable ways to evaluate the $\HR$ predictor, finding an optimal decision in  $\min_{x\in \cX} \hat{c}^{\cN,\alpha,r}_{\HR}(x,\Pemp{T})$ still requires solving a saddle point problem which may yet again be slightly awkward to solve in practice.
In the following, we provide an equivalent tractable dual minimization representation of the HR predictor.

\begin{theorem}[Dual Formulation]\label{thm: dual rep HD}
  Let $\{\xi_1,\ldots,\xi_K\}$ be the support of $\Pemp{T}$. For all $x\in \cX$, the HR predictor \eqref{eq: HD predictor} admits for all $r>0$ the dual representation
  \begin{align}
    \hat{c}^{\cN,\alpha,r}_{\HR}(x,\Pemp{T}) & = \left\{
    \begin{array}{rl}
      \inf & \sum_{k\in[K]} w_k \Pemp{T}(\xi_k) + \lambda (r-1) + \beta \alpha+ \eta\\[0.5em]
      \st &  w\in \Re^K,\; \lambda \geq 0, \; \beta \geq 0,\; \eta \in \Re, \\[0.5em]
           & w_k \geq \lambda\log \left( \frac{\lambda}{\eta - \loss^{\cN}(x,\xi_k)}\right),~ w_k \geq  \lambda \log \left( \frac{\lambda}{\eta - \max_{\xi \in \Sigma} \loss(x,\xi)}\right) -\beta \quad \forall k \in [K],\\[0.5em]
           & \eta \geq \max_{\xi \in \Sigma} \loss(x,\xi).
    \end{array}\right.
  \end{align}  
\end{theorem}

In Appendix \ref{App: Dual HD}, we prove this dual formulation in the more general case when $\Pemp{T}$ may be any possibly continuous distribution. 
Hence, minimizing the $\HR$ predictor $\min_{x\in \cX} \hat{c}^{\cN,\alpha,r}_{\HR}(x,\Pemp{T})$ using the dual formulation is tractable whenever the inflated loss function $\loss^{\cN}(x,\xi)$ can be evaluated efficiently and hence the HR formulation is not harder to solve than the robust formulation $\min_{x\in \cX} \Eb_{\hat{\Pb}_T}[\loss^\cN(x,\txi)]$.

Finally, we prove that the HR predictor can be interpreted as a KL-DRO predictor (protecting against statistical error) applied to a perturbed distribution (guarding against noise and misspecification).

\begin{lemma}
  \label{lemma:hr:primal:reduction}
  Consider the worst-case distribution
  \(
  \Pemp{T}^{\cN,\alpha} \in \arg \max \tset{\Eb_{\Pb'}[\loss(x,\txi)]}
  {\Pb'\in \cP, ~ \LP_{\cN}(\Pemp{T},\Pb') \leq \alpha}
  \)
  defined in the context of Theorem \ref{thm: LP-DRO expression for empirical}.
  For all $x\in \cX$, the HR predictor \eqref{eq: HD predictor} admits the decomposition
  $$\hat{c}^{\cN,\alpha,r}_{\HR}(x,\Pemp{T}) = \hat{c}^r_{\mathrm{KL}}(x,\Pemp{T}^{\cN,\alpha}).$$
\end{lemma}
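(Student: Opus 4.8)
The plan is to peel the joint maximization defining $\hat{c}^{\cN,\alpha,r}_{\mathrm{HD}}$ into a nested one and then exploit a monotonicity property of the KL-DRO predictor. First I would split the supremum in \eqref{eq: HD predictor} over the pair $(\hat{\Qb},\Pb')$ into an outer supremum over $\hat{\Qb}$ and an inner supremum over $\Pb'$; recognizing the inner problem as the KL-DRO predictor \eqref{eq: KL predictor} evaluated at reference measure $\hat{\Qb}$ gives
$$\hat{c}^{\cN,\alpha,r}_{\mathrm{HD}}(x,\Pemp{T}) = \sup\set{\hat{c}^r_{\mathrm{KL}}(x,\hat{\Qb})}{\hat{\Qb}\in\cP,~\rho_{\cN}(\Pemp{T},\hat{\Qb})\leq\alpha}.$$
Because Theorem \ref{thm: LP-DRO expression for empirical} guarantees $\rho_{\cN}(\Pemp{T},\Pemp{T}^{\cN,\alpha})\leq\alpha$, the measure $\Pemp{T}^{\cN,\alpha}$ is feasible, so the inequality $\hat{c}^{\cN,\alpha,r}_{\mathrm{HD}}(x,\Pemp{T})\geq\hat{c}^r_{\mathrm{KL}}(x,\Pemp{T}^{\cN,\alpha})$ is immediate. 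The substance of the proof is the reverse inequality, for which it suffices to show $\hat{c}^r_{\mathrm{KL}}(x,\hat{\Qb})\leq\hat{c}^r_{\mathrm{KL}}(x,\Pemp{T}^{\cN,\alpha})$ for every feasible $\hat{\Qb}$.

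To obtain this I would first record a monotonicity property of the KL-DRO predictor. By standard convex duality for the Kullback--Leibler ball one has
$$\hat{c}^r_{\mathrm{KL}}(x,\mu)=\inf\set{\lambda(r-1)+\eta+\lambda\,\Eb_\mu\!\left[\log\tfrac{\lambda}{\eta-\loss(x,\txi)}\right]}{\lambda\geq 0,~\eta>\max_{\xi\in\Sigma}\loss(x,\xi)},$$
where for each fixed $(\lambda,\eta)$ the integrand $t\mapsto\log\tfrac{\lambda}{\eta-t}$ is nondecreasing and continuous on $(-\infty,\eta)$. Consequently, if the law of $\loss(x,\txi)$ under $\mu_1$ is first-order stochastically dominated by its law under $\mu_2$, then $\Eb_{\mu_1}[\phi(\loss(x,\txi))]\leq\Eb_{\mu_2}[\phi(\loss(x,\txi))]$ for every nondecreasing $\phi$; in particular the dual objective at each $(\lambda,\eta)$ is no larger for $\mu_1$ than for $\mu_2$, and taking the infimum preserves the inequality, giving $\hat{c}^r_{\mathrm{KL}}(x,\mu_1)\leq\hat{c}^r_{\mathrm{KL}}(x,\mu_2)$.

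It then remains to show that $\Pemp{T}^{\cN,\alpha}$ stochastically dominates every feasible $\hat{\Qb}$ in the loss, that is $\Eb_{\hat{\Qb}}[\phi(\loss(x,\txi))]\leq\Eb_{\Pemp{T}^{\cN,\alpha}}[\phi(\loss(x,\txi))]$ for all nondecreasing continuous $\phi$. Here I would invoke Theorem \ref{thm: LP-DRO expression for empirical} applied to the continuous loss $\phi\circ\loss$ in place of $\loss$. The crucial observation is that the worst-case distribution constructed there is invariant under nondecreasing transformations of the loss: the inflated loss of $\phi\circ\loss$ equals $\phi\circ\loss^{\cN}$ (since $\phi$ nondecreasing and continuous lets the maximum commute with $\phi$), the maximizing points $\xi'_k$ and $\xi_\infty$ are unchanged, the ordering of the inflated losses is preserved, and the index $p$ depends only on the sample weights and $\alpha$. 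Hence the same $\Pemp{T}^{\cN,\alpha}$ attains $\max\set{\Eb_{\Pb'}[\phi(\loss(x,\txi))]}{\rho_{\cN}(\Pemp{T},\Pb')\leq\alpha}$, so in particular $\Eb_{\Pemp{T}^{\cN,\alpha}}[\phi\circ\loss]\geq\Eb_{\hat{\Qb}}[\phi\circ\loss]$ for every feasible $\hat{\Qb}$, which is precisely the asserted dominance. Combining this dominance with the monotonicity of the previous paragraph yields the reverse inequality and hence the claimed decomposition.

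The main obstacle I anticipate is making the dominance step airtight: one must verify carefully that the construction of $\Pemp{T}^{\cN,\alpha}$ depends on the loss only through the ordering of its inflated values and the locations of its maximizers, so that it remains simultaneously optimal for all nondecreasing $\phi$, and in particular for the family $t\mapsto\log\tfrac{\lambda}{\eta-t}$ appearing in the dual, which are continuous and bounded on the range of $\loss$ once $\eta>\max_{\xi\in\Sigma}\loss(x,\xi)$. The treatment of ties in the loss ordering, and of the boundary case $\eta\to\max_{\xi\in\Sigma}\loss(x,\xi)$ where the dual integrand becomes unbounded at the atom $\xi_\infty$, should be addressed explicitly but do not affect the infimum.
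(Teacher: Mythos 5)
Your proposal is correct and follows essentially the same route as the paper's proof: both decompose the supremum in \eqref{eq: HD predictor} into a nested form, pass to the dual representation of $\hat{c}^r_{\mathrm{KL}}$ from Lemma \ref{lemma:kl-solution}, and exploit that the integrand $t\mapsto \lambda\log\left(\frac{\lambda}{\eta-t}\right)$ is nondecreasing so that, by Theorem \ref{thm: LP-DRO expression for empirical}, a single worst-case distribution $\Pemp{T}^{\cN,\alpha}$ maximizes the inner problem independently of $(\lambda,\eta)$, with feasibility of $\Pemp{T}^{\cN,\alpha}$ giving the easy direction. Your packaging of this invariance as first-order stochastic dominance of the loss law, together with monotonicity of the KL-DRO dual objective, is just a mild abstraction of the paper's observation that the worst-case distribution is not a function of $\eta$ nor $\lambda$.
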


This results implies an alternative way of computing the HR predictor.
In fact, from Lemma \ref{lemma:hr:primal:reduction} and \citet[Proposition 5]{van2021data} the following even more compact reduced formulation follows.

\begin{lemma}
  \label{lemma:hr:dual:reduction}
  Consider the worst-case distribution
  \(
  \Pemp{T}^{\cN,\alpha} \in \arg \max \tset{\Eb_{\Pb'}[\loss(x,\txi)]}
  {\Pb'\in \cP, ~ \LP_{\cN}(\Pemp{T},\Pb') \leq \alpha}
  \)
  defined in the context of Theorem \ref{thm: LP-DRO expression for empirical}.
  For all $x\in \cX$ and $r>0$, the HR predictor \eqref{eq: HD predictor} admits the dual representation
  \begin{equation}
    \label{eq:univariate-dual}
    \hat{c}^{\cN,\alpha,r}_{\HR}(x,\Pemp{T}) =  \min \set{\textstyle \alpha - \exp(-r) \exp\left( \int \log\left(\alpha-\ell(x, \xi)\right) \d \Pemp{T}^{\cN,\alpha}(\xi)\right)}{\textstyle \alpha\ge\max_{\xi \in \Sigma} \loss(x,\xi)}.
  \end{equation}
  Furthermore, the minimization problem (\ref{eq:univariate-dual}) admits an minimizer $\alpha^\star$ which is bounded as
  \begin{equation}
    \label{eq:sandwich:alpha}
    \max_{\xi \in \Sigma} \loss(x,\xi)\leq \alpha^\star \leq \frac{\max_{\xi \in \Sigma} \loss(x,\xi) -e^{-r}\, \int \ell(x, \xi) \d \Pemp{T}^{\cN,\alpha}(\xi) }{1-e^{-r}}.
  \end{equation}
\end{lemma}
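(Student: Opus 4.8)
The plan is to derive the univariate dual (\ref{eq:univariate-dual}) by composing two already-available results and then to read off the sandwich (\ref{eq:sandwich:alpha}) from a first-order analysis of the resulting scalar program. First I would invoke Lemma~\ref{lemma:hr:primal:reduction}, which collapses the HD predictor onto a single KL-DRO predictor $\hat{c}^{\cN,\alpha,r}_{\mathrm{HD}}(x,\Pemp{T}) = \hat{c}^r_{\mathrm{KL}}(x,\Pemp{T}^{\cN,\alpha})$, so that it remains only to specialize the KL-DRO dual of \citet[Proposition~5]{van2021data} to the measure $\mu \defn \Pemp{T}^{\cN,\alpha}$. For the constraint direction $\DKL(\mu \,||\, \cdot)\le r$ used in \eqref{eq: KL predictor}, Lagrangian stationarity forces the worst-case measure to have $\mu$-density proportional to $1/(\eta-\loss(x,\cdot))$ for a single scalar $\eta$ that must exceed $M\defn\max_{\xi\in\Sigma}\loss(x,\xi)$ to keep this density nonnegative; eliminating the KL multiplier in closed form then reduces the dual to the univariate minimization of $\eta - e^{-r}\exp(\int\log(\eta-\loss(x,\xi))\,\d\mu(\xi))$ over $\eta\ge M$. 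Renaming this scalar $\eta$ as $\alpha$---not to be confused with the misspecification level baked into $\Pemp{T}^{\cN,\alpha}$---is exactly (\ref{eq:univariate-dual}).

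For the bounds, write $g(\eta)\defn\eta - e^{-r}\exp(\int\log(\eta-\loss(x,\xi))\,\d\mu(\xi))$ on $\eta\ge M$. The lower bound in (\ref{eq:sandwich:alpha}) is just the feasibility constraint $\eta\ge M$. A minimizer $\alpha^\star$ exists because $g$ is continuous on $[M,\infty)$ and coercive, with $g(\eta)\sim(1-e^{-r})\eta\to\infty$, so the infimum is attained on a compact subinterval. For the upper bound I would combine two elementary estimates valid for $\eta>M$: Jensen's inequality for the concave logarithm gives $\exp(\int\log(\eta-\loss)\,\d\mu)\le\int(\eta-\loss)\,\d\mu=\eta-\bar\ell$ with $\bar\ell\defn\int\loss(x,\xi)\,\d\mu(\xi)$, while $\loss\le M$ pointwise gives $\int(\eta-\loss)^{-1}\,\d\mu\le(\eta-M)^{-1}$. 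When $\alpha^\star>M$ the interior stationarity condition $g'(\alpha^\star)=0$ reads $1=e^{-r}\exp(\int\log(\alpha^\star-\loss)\,\d\mu)\,\int(\alpha^\star-\loss)^{-1}\,\d\mu$; bounding the two factors on the right gives $1\le e^{-r}(\alpha^\star-\bar\ell)/(\alpha^\star-M)$, which rearranges into $\alpha^\star(1-e^{-r})\le M-e^{-r}\bar\ell$, i.e.\ the asserted upper bound. When instead $\alpha^\star=M$ the bound is automatic, since $\bar\ell\le M$ makes $(M-e^{-r}\bar\ell)/(1-e^{-r})\ge M$.

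The remaining points are analytic rather than conceptual. Differentiating $g$ under the integral sign on $(M,\infty)$ is routine: although $\Pemp{T}^{\cN,\alpha}$ places an atom of mass $\alpha$ at the maximizer $\xi_\infty$, for $\eta>M$ one still has $\eta-\loss(x,\xi_\infty)=\eta-M>0$, so $\log(\eta-\loss)$ and $(\eta-\loss)^{-1}$ are bounded with locally uniform integrable bounds. I rely on strong duality for the KL-DRO program, but this is precisely the content of the cited \citet[Proposition~5]{van2021data} and needs no separate verification. The main obstacle I anticipate is the bookkeeping at the boundary $\eta=M$: the atom at $\xi_\infty$ makes $g(M)=M$ finite yet drives $g'(\eta)\to-\infty$ as $\eta\downarrow M$ whenever that atom has mass below one, so generically the minimizer is interior and the first-order branch is the operative one; cleanly absorbing the degenerate boundary case through $\bar\ell\le M$ is what makes the two-sided estimate hold unconditionally.
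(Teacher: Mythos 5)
Your proof is correct and takes essentially the same route as the paper, which derives the lemma directly by composing Lemma \ref{lemma:hr:primal:reduction} (collapsing the HD predictor to $\hat{c}^r_{\mathrm{KL}}(x,\Pemp{T}^{\cN,\alpha})$) with the univariate KL-DRO dual of \citet[Proposition 5]{van2021data}, whose statement also supplies the localization of the minimizer. The only deviation is that you re-derive the sandwich bound \eqref{eq:sandwich:alpha} yourself---from the interior stationarity condition via Jensen's inequality on the concave logarithm and the pointwise bound $\loss(x,\xi)\leq\max_{\xi'\in\Sigma}\loss(x,\xi')$, with the boundary case $\alpha^\star=\max_{\xi\in\Sigma}\loss(x,\xi)$ absorbed through $\int \ell(x,\xi)\,\d\Pemp{T}^{\cN,\alpha}(\xi)\leq \max_{\xi\in\Sigma}\loss(x,\xi)$---rather than citing it, and this self-contained derivation is valid.
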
  

We remark that the HR-DRO predictor is characterized in Lemma \ref{lemma:hr:dual:reduction} as the solution to a univariate convex optimization problem which can be efficiently solved using a simple bisection search starting from Equation (\ref{eq:sandwich:alpha}) whereas a worst-case distribution $\Pemp{T}^{\cN,\alpha}$ can be trivially found based on Equation (\ref{eq:wc-distribution-noise-misspecification}) after having simply sorted the values $(\ell^\cN(x, \xi_1), \dots, \ell^\cN(x, \xi_K))$ where here $(\xi_1,\dots, \xi_K)$ denotes again the support of the empirical distribution $\Pemp{T}$.

\subsection{Oblivious Adversaries}\label{sec: random corr}

The adversaries considered in the previous section are called \textit{adaptive} by \cite{ben1994power}; the data is first sampled from the out-of-sample distribution and then corrupted. 
An alternative corruption model can be considered by reversing the order of sampling and corruption; the out-of-sample distribution is first corrupted by an adversary, and then data is sampled from the corrupted distribution. This corruption model is associated with oblivious adversaries \citep{ben1994power}. More formally, let $\txi$ be a random variable following the true out-of-sample distribution of the uncertainty $\Pb$.
Each observed data point will be a sample, not from $\txi\in \Sigma$, but from a corrupted data source
$\txi^c = (\txi + \Tilde{n}) \indic(\tc =0) + \txi_0  \indic(\tc =1)\in\Sigma$, where $\Tilde{n}$ is a noisy random variable, $\tc$ is a binary random variable modeling whether the sample is misspecified, and $\txi_0$ is an arbitrary random variable representing the misspecified observation. We observe a finite data sample from this corrupted distribution. These adversaries are generally weaker than the previously considered adaptive adversaries \citep{ben1994power,zhu2022generalized, blanc2022power}.
Although the focus of this paper will be on adaptive adversaries, we briefly indicate in this section that inverting the order of LP and KL in the HR predictor's ambiguity set naturally leads to robustness to oblivious adversaries.

Let us first provide intuition on our construction. The goal is to construct an ambiguity set containing the true out-of-sample distribution $\Pb$ with high probability, given the observed empirical distribution $\hat{\Pb}_T$ of the corrupted sample.
Assume that the noise $\tn$ realizes in $\cN$ and the probability of a sample being corrupted is no more than the fraction $\alpha$ (i.e., $\Prob(\tilde{c}=1) \leq \alpha$).
Following Theorem \ref{thm: LP-DRO robustness}, this means that the corrupted data could in effect have been sampled by the adversary from any distribution $\Qb$ verifying $\LP_{\cN}(\Qb, \Pb)\leq \alpha$. Now $\hat{\Pb}_T$ is an empirical distribution sampled from the distribution $\Qb$. Following Sanov's theorem \citep[Theorem 6.2.10]{dembod1996large}, we have\footnote{Again, this is only for the purpous of intuition, but does not hold theoretically.} $\KL(\Pemp{T}||\mathbb Q)\leq r$ with probability $1-\exp(-rT+o(T))$. Hence the out-of-sample distribution $\Pb$ is expected to be in the set $\{\Pb'\in \cP \; : \;  \exists \Qb' \in \cP, \;
  \KL (\Pemp{T}||\Qb') \leq r, \;
  \LP_{\cN}(\Qb',\Pb') \leq \alpha  \}$ with high probability $1-\exp(-rT+o(T))$.
Figure \ref{fig:on-sample corruption} illustrates this data generation process visually.
Following this intuition, we consider the holistic robust predictor for oblivious adversaries (HRo)
\begin{equation}\label{eq: HRo predictor}
  \hat{c}^{\cN,\alpha,r}_{\HRo}(x,\Pemp{T})
  \defn
  \max \{ 
  \Eb_{\Pb'}[\loss(x,\txi)]
  \; : \; 
  \Pb'\in \cP,\, \Qb' \in \cP, \;
  \KL (\Pemp{T}||\Qb') \leq r, \;
  \LP_{\cN}(\Qb',\Pb') \leq \alpha 
  \} \quad \forall x\in \cX.
\end{equation}

\begin{figure}[t]
  \centering
  \begin{tikzpicture}


   \tikzstyle{dist} = [rectangle, minimum width=1.4cm, minimum height=1.7cm, text centered, text width=3cm, draw=black]
   \tikzstyle{arrow} = [thick,->,>=stealth]
   
   
   \node[dist] (P') at (-1,5) {True Distribution \\ $\Pb$};
   \node[dist] (Q) at (5,5) {Corrupted Distribution \\ $\Qb$};
   \node[dist] (PT) at (11,5) {Observed Samples \\ $\hat{\Pb}_T$};

   \draw[->] (P'.north) to[bend left] node[above,midway,align = center] {noise and misspecification}  (Q) ;
  \draw[<->] (P'.east) -- (Q.west) node[below,midway,align = center] {LP distance \\ $\leq \alpha$};
  \draw[<->] (Q.east) -- (PT.west) node[below,midway,align = center] {KL divergence \\ $\leq r$};
  \draw[->] (Q.north) to[bend left] node[above,midway,align = center] {sample} (PT) ;
  
\end{tikzpicture} 

  \caption{In the data generation process with oblivious advesaries, an adversary can corrupt the data generation distribution away from $\Pb$ towards any $\Qb$ within distance $\alpha$ as measured by our pseudo metric $\LP_\cN$. Sampling from $\Qb$ itself results in statistical errors which however remain bounded with high probability $1-\exp(-rT)$ by $r$ as measured in by the KL divergence between $\Pemp{T}$ and $\mathbb Q$.}
  \label{fig:on-sample corruption}
\end{figure}

We prove next prove that the $\HRo$ predictor indeed provides robustness in the oblivious adversary setting.
\begin{theorem}[Robustness Out-of-sample Guarantee]\label{thm: HR robustness}
  Let $\Pemp{T}$ be the observed empirical distribution of corrupted samples $\txi^c$ by an oblivious advesary. Assume that the noise $\tn$ realizes in $\cN$ and the probability of a sample being corrupted is no more than $\alpha$, i.e., $\Prob(\tc=1) \leq \alpha$.
  For all $x\in \cX$, we are disappointed, i.e., the event
  $
     \Eb_{\Pb}[\loss(x,\txi)] > \hat{c}^{\cN,\alpha,r}_{\HRo}(x,\Pemp{T})
   $
   occurs, with probability at most $\exp(-rT +o(T))$.
\end{theorem}

The proof of the previous result is in Appendix \ref{App: proofs of random corr}. We remark that Theorem \ref{thm: HR robustness} is weaker than its counterpart for adaptive adversaries found in Theorem \ref{thm: HD robustness guarantee}. Indeed, the result here is holds only asymptotically as $T$ tends to infinity whereas the guarantee in Theorem \ref{thm: HD robustness guarantee} holds for finite samples as well. Furthermore, the guarantee in Theorem \ref{thm: HD robustness guarantee} is pointwise rather than uniform in $x\in \cX$. Technically, this is due to a lack of a finite sample counterpart of the result by \cite[Exercise 4.5.5]{dembod1996large} to the oblivious adversaries. Uniform guarantees can hence be only guaranteed when $\cX$ is finite by a union bound argument or $\cX$ compact by way of an appropriate discretization method.

  We will also not derive here an efficiency guarantee as done in Theorem \ref{thm:efficiency-hd}.
  Rather, we do show here that a worst case distribution is attained in a fine support which we characterize. 
We prove a more general result for general distributions in Appendix \ref{App: proofs of random corr}.

\begin{theorem}[Finite Primal Reduction]\label{thm: HR finite formulation}
  Let $\{\xi_1,\ldots,\xi_K\}$ be the support of $\Pemp{T}\in \cP$. For all $x\in \cX$, the HR predictor admits for all $r>0$ the finite primal representation
  \begin{align}
    \label{eq:finite-formulation-hro}
    \hat{c}^{\cN,\alpha,r}_{\HRo}(x,\Pemp{T})=&                                                          \left\{\begin{array}{rl}
                                                               \max & \sum_{k\in[K]} p'_{k}\loss^{\cN}(x,\xi_k) + p'_{K+1} \max_{\xi \in \Sigma} \loss(x,\xi) \\[0.5em]
                                                               \st &  p'\in \Re^{K+1}_+, \, q'\in \Re^{K+1}_+, \, s \in \Re^{K}_+,\\[0.5em]
                                                                    &  \sum_{k\in[K+1]} p'_{k} = 1,\quad \sum_{k\in[K+1]} q'_{k} = 1,\\[0.5em]
                                                                    & \sum_{k\in[K]} \Pemp{T}(\xi_k) \log \left( \frac{\Pemp{T}(\xi_k)}{q'_{k}} \right) \leq r,\\[0.5em]
                                                                    & \sum_{k\in[K]} s_k\leq \alpha, \\[0.5em]
                                                               &  q'_k=p'_k +s_k \quad \forall k \in [K].
                                                                                                                       \end{array}\right.
  \end{align}
\end{theorem}

Given an optimal solution in the optimization problem \eqref{eq:finite-formulation-hro} we can explicitly construct an optimal solution in the optimization problem \eqref{eq: HRo predictor} as well. To that end consider again for all $k\in [K]$ the points $\xi'_k \in \xi_k-\argmax_{\noise \in \cN, \, \xi_k-\noise\in \Sigma} \loss(x,\xi_k - \noise)$ and let $\xi_\infty \in \argmax_{\xi \in \Sigma} \loss(x,\xi)$.
We can associate with any feasible $(q', s, p')$ in \eqref{eq:finite-formulation-hro} the distributions
$\Qb' =  \textstyle \sum_{k\in[K]} q'_k \delta_{\xi_k}+q_{K+1}'\delta_{\xi_\infty}$, 
$\Pb' =  \textstyle \sum_{k\in [K]} p'_k \delta_{\xi'_k} + p'_{K+1} \delta_{\xi_\infty}$ and 
$\gamma = \textstyle \sum_{k\in K} p'_k \delta_{\xi_k,\xi_k'} + s_k \delta_{\xi_k, \xi_\infty}+q_{K+1}\delta_{\xi_\infty, \xi_\infty}$.
The constraints $q'_k=p'_k+s_k$ for all $k\in [K]$, $\sum_{k\in [K+1]}p'_k=1$ and $\sum_{k\in [K+1]} q'_k=1$ guarantee that $\gamma\in \Gamma(\Qb', \Pb')$ while $\int \indic(\xi-\xi' \not\in\cN) \,\d\gamma(\xi,\xi')\leq \sum_{k\in[K]} s_k \leq \alpha$ guarantees $\LP_\cN(\Qb', \Pb')\leq \alpha$.
Furthermore, we have \(  \KL (\Pemp{T}||\Qb') = \sum_{k\in[K]} \Pemp{T}(\xi_k) \log ( \tfrac{\Pemp{T}(\xi_k)}{q'_{k}} ) \leq r \) and $\Eb_{\Pb'}[\loss(x,\txi)]=\sum_{k\in [K]}p'_k \loss(x,\xi_k') +p'_{K+1}\loss(x,\xi_\infty)=\sum_{k\in [K]}p'_k\loss^\cN(x, \xi_k)+p_{K+1}'\max_{\xi\in\Sigma}\loss(x, \xi)$. Hence, for all feasible solutions in \eqref{eq:finite-formulation-hro} we can associate finitely supported distributions feasible in the maximization problem \eqref{eq: HRo predictor} characterizing the $\HRo$ predictor attaining the same objective value.

A similar dual formulation is also verified for the $\HRo$ predictor, useful for optimization.

\begin{theorem}[Dual Formulation]\label{thm: dual rep HR}
  Let $\{\xi_1,\ldots,\xi_K\}$ be the support of $\Pemp{T}\in\cP$. For all $x\in \cX$, the $\HRo$ predictor \eqref{eq: HRo predictor} admits for all $r>0$ the dual representation
  \begin{equation*}
    \hat{c}^{\cN,\alpha,r}_{\HRo}(x,\Pemp{T})=
    \left\{
    \begin{array}{rl}
      \inf & \sum_{k\in[K]} w_k \Pemp{T}(\xi_k) + \lambda (r-1) + \beta \alpha + \eta\\[0.5em]
      \st & w\in \Re^K, \; \lambda \geq 0, \; \beta \geq 0, \; \eta \in \Re,\\[0.5em]
           & w_k \geq \lambda \log \left( \frac{\lambda}{\eta - \loss^{\cN}(x,\xi_k)}\right), ~ w_k \geq \lambda \log \left( \frac{\lambda}{\eta - \max_{\xi \in \Sigma} \loss(x,\xi) +\beta}\right) \quad \forall k \in [K], \\[0.5em]
           & \eta \geq \max_{\xi \in \Sigma} \loss(x,\xi).
    \end{array}
    \right.
  \end{equation*}
\end{theorem}

In Appendix \ref{App: Dual HR}, we prove this dual formulation in a slightly more general case when $\Pemp{T}$ may be any possibly continuous distribution. Hence, minimizing the $\HRo$ predictor $\min_{x\in \cX} \hat{c}^{\cN,\alpha,r}_{\HRo}(x,\Pemp{T})$ using the dual formulation is also tractable whenever the inflated loss function $\loss^{\cN}(x,\xi)$ can be evaluated efficiently and hence the $\HRo$ formulation is not harder to solve than the robust formulation $\min_{x\in \cX} \Eb_{\hat{\Pb}_T}[\loss^\cN(x,\txi)]$ by \citet{madry2017towards}.

\subsection{Special Cases}
\label{sec:special-cases}

In this section, we will point out that for several special parameter settings, the HR formulation is equivalent and collapses to well-known robust formulations.

As a first example, in linear regression, both Ridge (or Tikhonov) regularization and Sample Variance Penalization \citep{maurer2009empirical} naturally emerge as special cases of HR when applied to the $L_1$ regression loss. Similarly, in linear classification, HR applied to the hinge loss gives rise to soft-margin SVM \citep{cortes1995support} and $\nu$-SVMs \citep{scholkopf2000new}. These derivations are presented and discussed in detail in Appendix \ref{App: Linear-application}.

More generally, when no robustness against statistical error is desired ($r=0$), it follows from Equations \eqref{eq: HD predictor} that
\begin{equation}
  \label{eq:hr r=0}
\hat{c}^{\cN,\alpha,0}_{\HR}(x,\Pemp{T}) = \hat{c}^{\cN,\alpha}_{\LPtxt}(x,\Pemp{T})= (1-\alpha)\CVaR_{\Pemp{T}}^\alpha(\loss^{\cN}(x,\txi))+\alpha\max_{\xi\in \Sigma}\loss(x,\xi).
\end{equation}
That is, the HR predictor naturally reduce to the LP-DRO predictor which was the topic of Section \ref{sec: corruption}.
When additionally no robustness against misspecification is desired ($r=0$, $\alpha =0$), the HR predictor is equivalent to
\begin{align}
    \label{eq:hr r=0 alpha=0}
\hat{c}^{\cN,0,0}_{\HR}(x,\Pemp{T})=\hat{c}_{\mathrm{R}}(x,\Pemp{T})=&                  \textstyle\sum_{k\in[K]} \hat{\Pb}_T(\xi_k) \loss^{\cN}(x,\xi_k).
  \end{align}
This is the robust formulation advanced by \cite{madry2017towards}---adversarial training---which we discussed in Section \ref{sec:noise}. It can also be viewed as Wasserstein type-$\infty$ as discussed in Section \ref{sec:rel-DRO}.

When no robustness against misspecification is desired ($\alpha =0$), the HR predictor is equivalent to
\begin{align}
  \hat{c}^{\cN,0,r}_{\HR}(x,\Pemp{T})=
  \left\{
  \begin{array}{rl}
    \sup & \textstyle\sum_{k\in[K]} p_{k}\loss^{\cN}(x,\xi_k) + p_{K+1} \max_{\xi \in \Sigma} \loss(x,\xi)\\[0.5em]
    \st & p\in \Re^{K+1}_+, \sum_{k\in[K]} \Pemp{T}(\xi_k) \log ( \tfrac{\Pemp{T}(\xi_k)}{p_{k}} ) \leq r.
  \end{array}
  \right.
\end{align}
In particular, if additionally no robustness against noise is desired ($\cN = \{0\}$), the HR formulation is recognized as the classical KL-DRO \eqref{eq: KL predictor}.
Instead of using the empirical loss associated with the observed data points, HR predictors consider the worst-case inflated loss $\ell^\cN$ and reweigh each sample according to the variable $p$ subject to a likelihood constraint.
Additionally, a safeguard against unobserved events in $\Sigma$ is provided by assigning a weight $p_{K+1}$ to outcomes with worst-case loss $\max_{\xi \in \Sigma} \loss(x,\xi)$. Once a worst-case event has been observed, i.e., $\max_{k\in [K]}\loss^{\cN}(x,\xi_k)= \max_{\xi \in \Sigma} \loss(x,\xi)$, then it follows immediately from the monotonicity of the logarithm that without loss of optimality we may assume $p_{K+1}=0$
and hence
\begin{align}
  \label{eq:hr alpha=0}
  \hat{c}^{\cN,0,r}_{\HR}(x,\Pemp{T})=
  \left\{
  \begin{array}{rl}
    \sup & \textstyle\sum_{k\in[K]} p_{k}\loss^{\cN}(x,\xi_k)\\[0.5em]
    \st & p\in \Re^{K}_+, \; \sum_{k\in[K]} \Pemp{T}(\xi_k) \log ( \tfrac{\Pemp{T}(\xi_k)}{p_{k}} ) \leq r.
  \end{array}
  \right.
\end{align}
This final formulation is recognized as the robust likelihood formulation advanced by \citet{wang2016likelihood}.

  \subsection{Hyper-parameters selection}
  \label{ssec:hyper-param-select}

Our HR formulation involves three parameters, $\cN$, $\alpha$, and $r$, each targeting a distinct source of overfitting. Intuitively, when overfitting stems from the three described overfitting sources with varying levels, these three parameters are necessary to offer efficient robustness as described in Theorem \ref{thm:efficiency-hd}. That being said, it would be unreasonable to assume that practical data aligns \textit{exactly} with the data corruption model outlined in Section \ref{sec:sources-overfitting}.
The flexibility of our method enables, nevertheless, tailoring our ambiguity set to capture the likely out-of-sample distributions better.
However, increased flexibility comes with increased practical complexity, as selecting reasonable values for hyperparameters can be daunting.

In a first approach, one could select the hyperparameters $\cN$, $\alpha$, so there is no reason to believe that the data may suffer noise and contamination outside the considered levels. Similarly,  the hyperparameter $r$ can be chosen as a function of the data size and a desired disappointment probability based on Theorem \ref{thm: HD robustness guarantee}. This dogmatic approach comes with the benefit of offering guaranteed out-of-sample performance but can be quite conservative in practice.

An alternative approach is to tailor the hyperparameters based on validation or cross-validation on a validation set. In this practical approach, we are clearly in a state of ``sin'' as none of the theoretical guarantees offered in Theorem \ref{thm: HD robustness guarantee} apply. This approach, which we take in Section \ref{sec: application}, is nevertheless quite successful in practice. As a guide to practitioners, we leave a few recommendations on how to apply this approach successfully. These recommendations stem from extensive numerical experimentation. In the following section, we indeed present three numerical applications where we experiment in-depth HR with data-driven hyper-parameter selection and study the impact of each parameter on model performance.


\paragraph{Practical Hyperparameters Selection}
First, choosing the noise set $\cN$ as the norm ball $\{\noise \; : \; \|\noise\| \leq \epsilon \}$ is a reasonable choice in most settings with continuous data. Discrete sets should be considered, however, when the data is categorical. 

Tuning all three hyperparameters based on validation or cross-validation on a validation set yields superior results, but increases computational cost, requiring training $k_\epsilon \times k_\alpha \times k_r$ models, where $k_\epsilon$, $k_\alpha$, and $k_r$ are the number of candidate values for each parameter. We recommend this approach only when computational resources permit, allowing full utilization of the flexibility offered by our ambiguity set. A reduced HR formulation with only two parameters, $r$ and $\epsilon$, and setting $\alpha = 0$, seems to practically capture much of HR's advantages by addressing only two distinct sources of overfitting: statistical error and data tampering. We recommend this approach as the default.
In a resource-constrained setting, we finally recommend fixing $\epsilon$ to a small, non-zero, fraction of the data point amplitude and tuning only $r$.

\section{Applications}\label{sec: application}

In this section, we investigate the practicality our proposed method in real-life applications. In particular, we chose challenging applications which violate, to various degrees, our assumptions on the nature of the three sources of overfitting. We also measure robustness through application-driven metrics rather than precisely the out-of-sample guarantees for which HR is provably efficient (Theorem \ref{thm:efficiency-hd}).

  We test HR alongside key benchmarks across three main applications. The first application is a computer vision task in healthcare, specifically training deep learning models for brain tumor classification. Here, we examine the robustness of models to data scarcity, potential medical errors (mislabeling), and variations in MRI types (noisy images).
  The second application is investment portfolio selection with actual stock data, 
  where we analyze the volatility of learned portfolios under a particularly acute, hard-to-characterize distribution shift.
  Here, we examine in depth the impact of each HR parameter. 
  The third application is linear regression and classification with synthetic data. Here, we visualize the effects of HR's robustness parameters and the resulting worst-case distributions. Due to space limitations, this application is referred to Appendix \ref{App: Linear-application}.
  To experiment with both types of HR, with oblivious and adaptive adversaries, we apply the former in the first application and the latter in the second and third.

\subsection{Application to Neural Networks with Healthcare Data}

Our first application is training robust neural networks for medical imaging, where the high-stakes nature of medical diagnostics demands not only high accuracy but also robust and reliable models. In this case, we focus on the brain tumor classification problem \citep{msoud_nickparvar_2021}\footnote{\url{https://www.kaggle.com/datasets/masoudnickparvar/brain-tumor-mri-dataset}}, where the task involves identifying and categorizing brain tumors from MRI images by both malignancy (malignant or benign) and tumor type, resulting in four classes in total. There are 5712 labeled images available for training and 1311 for testing. Figure \ref{fig:sample_scans} presents sample images from the dataset.

\begin{figure}[H]
    \centering    \includegraphics[width=1\linewidth]{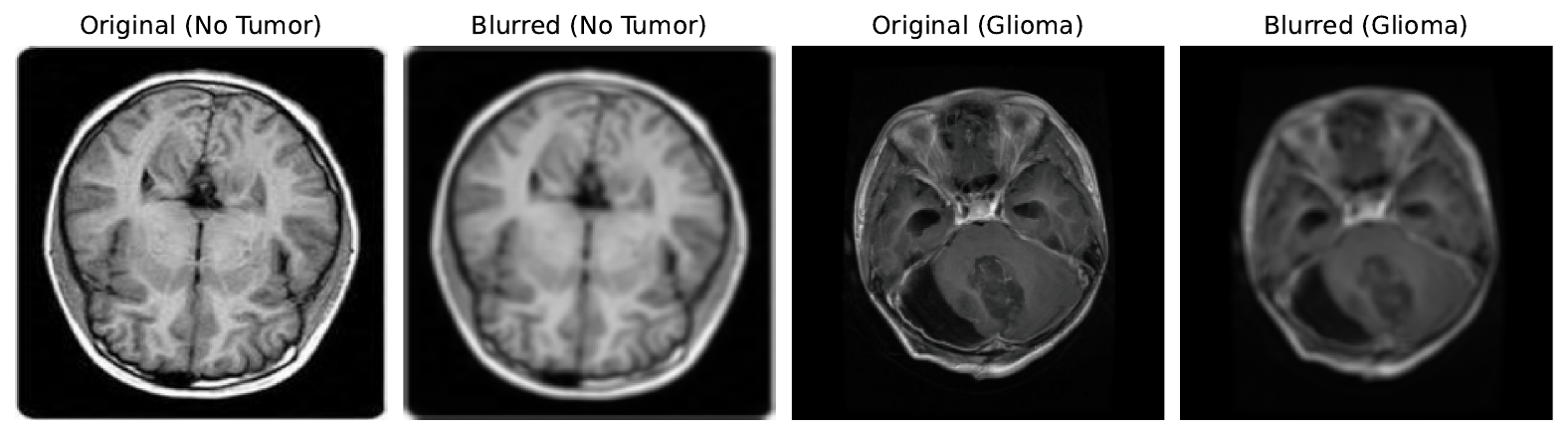}
    \caption{Sample brain scans from the dataset, both clean images and their blurred counterparts.}
    \label{fig:sample_scans}
\end{figure}

In medical imaging applications, the three sources of overfitting discussed in this paper arise naturally. Statistical error is influenced by data scarcity|the smaller the dataset, the higher is the effect of statistical error. Noise can emerge, for example, when training on MRI images from one hospital and deploying the model on images from another facility, potentially with differing resolutions or image characteristics. Finally, misspecification may result from medical errors or mislabeled MRIs.

\paragraph{Experimental Set-Up.}
To evaluate model robustness against each source of overfitting, we control their prevalence through the following:

\begin{enumerate}
    \item  \textbf{Statistical error:} Models are trained on only 25\% of the available training data to simulate the effects of limited data availability.

    \item \textbf{Noise:} Gaussian blur is applied to the testing images, simulating naturally occurring noise rather than adversarial perturbations. Figure \ref{fig:sample_scans} illustrates the resulting testing images.

    \item \textbf{Misspecification:} Label corruption is simulated by randomly flipping 10\% of the training labels.
\end{enumerate}

We choose here a noise set $\cN = \cB_2(0,\epsilon)$ as the norm two ball, and compare several DRO approaches: (i) \textbf{ERM/SAA} ($\alpha = 0, r = 0, \epsilon = 0$), (ii) \textbf{Type-$\infty$ W-DRO} ($\alpha=0$, $r=0$, $\epsilon>0$) with \( \epsilon\) chosen in \(\{0.05, 0.1, 0.15, 0.2\} \), (iii) \textbf{KL-DRO} ($\alpha = 0, r > 0, \epsilon = 0$) with $r$ chosen in \(\{0.05, 0.1, 0.15, 0.2\} \), (iv) \textbf{TV-DRO} ($\alpha > 0, r = 0, \epsilon = 0$) with $\alpha$ chosen in \(\{0.05, 0.1, 0.15, 0.2\} \), and finally (v) \textbf{HR-DRO} ($\alpha > 0, r > 0, \epsilon > 0$) with \( r \in \{0.05, 0.1\} \), \( \alpha \in \{0.05, 0.1\} \), \( \epsilon \in \{0.05, 0.1\} \).

Hyperparameters are selected in a purely data-driven manner using a holdout validation set consisting of 20\% of the (subsampled) training data. The parameters that achieve the best performance on this validation set are chosen. The whole experimental procedure, including subsampling, noise and label flipping, is repeated over 5 trials to obtain statistically significant results.

\paragraph{Training Neural Networks with HR.} 
In classification problems, training neural networks is minimizing the cross-entropy loss \(\ell(\theta, (X,Y)) = -\sum_{c=1}^C Y^{(c)} \log f^{(c)}_{\theta}(X)\), where \(C = 4\) (corresponding to our four output classes). Here, \(f_{\theta}(X) \in \mathbb{R}^C\) represents the neural network with parameters \(\theta \in \Theta\), producing softmax output probabilities for an input image \(X\) and its one-hot encoded class label \(Y \in \mathbb{R}^C\). The architecture we use follows a standard design in vision tasks: two convolutional layers for feature extraction, followed by fully connected layers to classify inputs into the four classes.

To train robust models, 
we train the neural network by minimizing the HR loss \eqref{eq: HRo predictor} (instead of ERM), i.e., solving \(\min_{\theta \in \Theta} \hat{c}^{\cN,\alpha,r}_{\HRo}(\theta,\Pemp{T})\) based on the cross-entropy loss $\ell$. To do this, we need to compute gradients for each batch of data, given a network with parameters \(\theta\), which we then use for backpropagation. This gradient is computed using the primal finite formulation \eqref{eq:finite-formulation-hro} of the HR predictor, combined with Danskin’s theorem, yielding
\[
\nabla_{\theta} \hat{c}^{\cN,\alpha,r}_{\mathrm{HR}}(\theta,\Pemp{T}) = \sum_{i=1}^T p'^\star_i \nabla_{\theta} \ell(\theta, (X_i',Y_i)) + p'^\star_{T+1}  \nabla_{\theta} \ell(\theta, (X_\infty',Y_\infty))
\]
where \( p^\star \) denotes the optimal solution of the finite formulation of Theorem \ref{thm: HR finite formulation}. Here, the terms \( X_i' \approx \arg\max_{\|X-X_i\|_2 \leq \epsilon} \ell(\theta, (X,Y_i)) \) represent ``adversarial examples'', and also a worst-case sample \((X_\infty', Y_\infty) = \arg\max_{(X_i', Y_i)} \ell(\theta, (X_i', Y_i)) \approx \arg\max_{(X,Y) \in \Sigma} \ell(\theta, (X,Y))\) for a given batch of data $\{X_i,Y_i\}_{i=1\ldots T}$.
Generating these adversarial examples has been widely studied in adversarial training (AT), with several efficient methods available. Here, we use the popular Projected Gradient Descent (PGD) method of \cite{madry2017towards}, which effectively performs a few steps of gradient ascent. 
With these adversarial examples, we can efficiently compute the HR loss by plugging the computed inflated loss \(\ell^{\cN}\) into the finite primal conic optimization problem \eqref{eq:finite-formulation-hro}, obtaining an optimal solution \(p'^\star\). Compared to standard AT, training with the HR loss incurs only a slight computational overhead: solving an additional exponential cone optimization problem of size \(O(T)\) in each batch iteration, where \(T\) is the batch size.
For example, over 100 epochs, HR took an average of only $2\%$ more computation time than regular adversarial training (7.10s vs 6.96s per epoch).


\paragraph{Results: Accuracy and Loss.}
Figure \ref{fig:results} shows the testing loss and accuracy of each DRO formulation. HR significantly outperforms other methods in terms of testing loss and accuracy. Notably, even though Wasserstein DRO provides better testing accuracy than ERM, it exhibits the highest testing loss among all approaches. This may be due to the well-documented ``robust overfitting'' phenomenon in deep learning \cite{rice2020overfitting}, which finds that adversarial training procedures tend to overfit to the adversarial examples on which they are trained. Remarkably, HR corrects this overfitting by the addition of the KL ball, accounting for statistical error, on top of the optimal transport ball, accounting for noise.

While accuracy is typically the primary metric of interest in these applications, a high loss can indicate poor reliability. In the following analysis, we demonstrate the benefits of HR in the context of confidence calibration as a measure of robustness and reliability.

\begin{figure}[H]
    \centering    \includegraphics[width=1\linewidth]{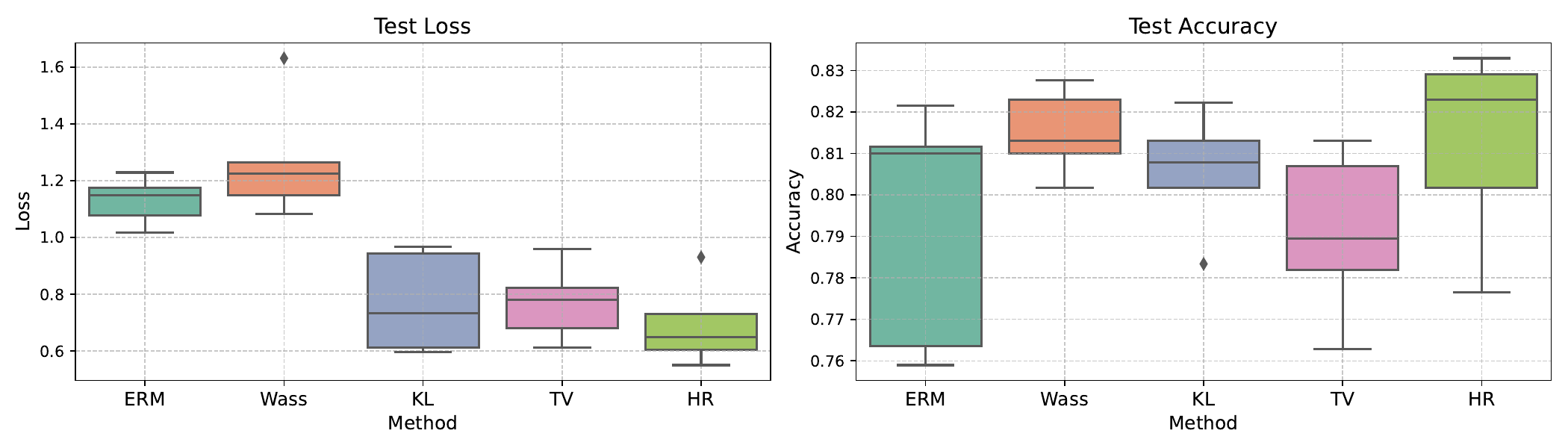}
    \caption{Testing Loss and Accuracy across 5 trials.}
    \label{fig:results}
\end{figure}

\paragraph{Results: Calibration as a Practical Measure of Robustness.}
Calibration, a popular evaluation metric in deep learning \cite{pmlr-v70-guo17a}, assesses how well a model's confidence in an outcome (predicted probabilities of class) aligns with the actual likelihood of that outcome. 
In a well-calibrated model, high confidence should correspond to correct predictions, while lower confidence should reflect uncertainty in a prediction. 
Figure \ref{fig:calibration} visualizes calibration performance across DRO approaches, showing the relationship between model confidence (horizontal axis) and actual accuracy (vertical axis) at each confidence level. These calibration curves  \citep{pmlr-v70-guo17a} provide insight into model reliability. The diagonal dashed line represents perfect calibration, where confidence matches accuracy. ERM and Wasserstein DRO display overconfidence, with their curves lying mostly below the diagonal.
In contrast, KL and HR-DRO approaches closely align with the diagonal, indicating well-calibrated predictions where reported confidence closely matches actual accuracy, thus providing more reliable probability estimates.
Hence, Wasserstein provides high accuracy, but tends to be overconfident in incorrect predictions, while KL has lower accuracy but is typically more reliable (cautious). HR combines the best in both, by providing the highest accuracy while being reliable.


\begin{figure}[H]
    \centering
    \includegraphics[width=1\linewidth]{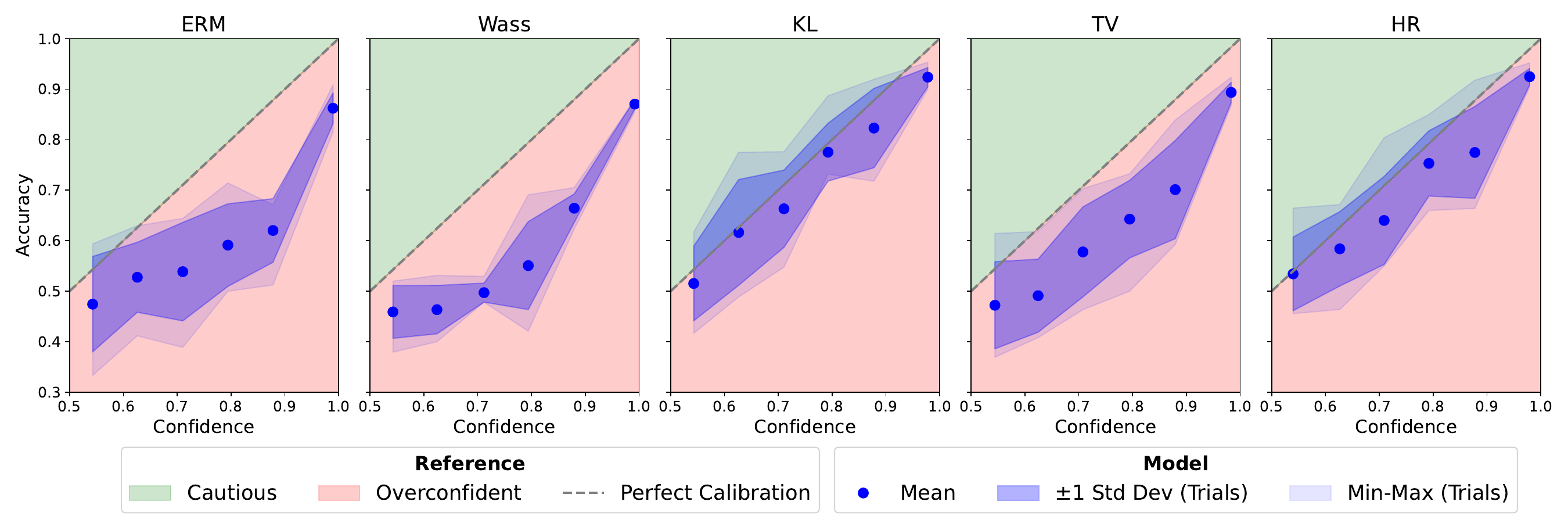}
    \caption{Calibration of DRO approaches. The horizontal axis denotes the confidence of the prediction, while the vertical axis denotes the accuracy at that given confidence level.}
    \label{fig:calibration}
\end{figure}


\subsection{Application to the Portfolio Selection with Real Stock Data}

In this experiment we investigate the \textit{flexibility} of our developed ambiguity set. The intuition is that HR, allows for flexible ambiguity sets which can adapt to a given distribution shift by selecting appropriate hyperparameters parameters through validation.
We use HR, along with other DRO and classical portfolio selection benchmarks, to decide on investment strategies using historical stock data. We then evaluate the models
with two metrics, namely expected return, and risk through the standard deviation of returns.

\subsubsection{Experimental Set-up.}
\paragraph{The Portfolio Selection Problem.} We are provided here with a set of assets $A$ and observe the returns $(\xi_{a,i})_{i\in [T]}$ of each asset $a\in A$ for $T$ times-steps. The goal is to design a portfolio $x \in \cX := \{x \in \Re_+^A \; : \; \sum_{a \in [A]} x_a = 1\}$  enjoying maximum expected return $\Eb[\langle x, \txi \rangle]$ given unknown future random returns $\txi$. The loss function can therefore be taken here as $\loss(x,\xi) = - \langle x, \xi \rangle$ for all $\xi \in \Re^{|A|}$ and $x \in \cX$.

\paragraph{Data.} Our data consists of historical returns of top 100 stocks of the S\&P 500 by market capitalization\footnote{S\&P 500 stocks were selected on \DTMdate{2024-04-07} and market capitalization measured on \DTMdate{2024-05-23}.} among those that have complete return data\footnote{More precisely ``closing price''. This amounts to 243 stocks.} from \DTMdate{1990-01-01} to \DTMdate{2024-03-31}|our complete data range.
Appendix \ref{stock-tickers} lists explicitly the chosen stocks.

Our target is \textit{yearly} returns.
To limit the data size, we record yearly returns every \textit{quarter} in our data $\xi \in \Re^{100\times T}$, where $T=114$ is the number of quarters. That is, $\xi_{a,t} \in \Re$ is the return of stock $a$ when investing on the last trading day of quarter $t$ and selling one year later.

We divide our data into three sets: training (68 quarters, from \DTMdate{1995-09-30} to \DTMdate{2012-09-30}), validation (18 quarters, from \DTMdate{2013-09-30} to \DTMdate{2018-03-31}), and testing (20 quarters, from \DTMdate{2019-03-31} to \DTMdate{2024-03-31}) respectively.
Notice that we keep a 4-quarter gap between each set to avoid information overlap (data leakage).

\paragraph{Experiment.} HR and each benchmark (see Section \ref{sec: benchmarks}) are used to select an investment portfolio as follow:
first, in each model class we train $p=2000$ models associated with different hyperparameter selections on the training set. The investment vector $x^\star$ for all these models associated with different hyper-parameters selections is then evaluated through its expected returns ($\Eb[\langle x^\star, \xi \rangle]$) and risk (standard deviation, $\sqrt{\Var[\langle x^\star, \xi \rangle]}$) on the validation set. We then select those models associated with hyperparameters which are on the Pareto frontier;  that is, models for which there is no other hyperparameter with both better return and risk. The models associated with the Pareto optimal hyperparameters are retrained on the full non-testing data (that is training, 4 quarters of gap and validation). Return and risk of the obtained investment decisions are then evaluated on the test data (after a 4-quarters gap). Considering all Pareto optimal models in each model class simultaneously allows us to judge the performance the model class independently from any specific return-risk tradeoff.

\subsubsection{HR and Benchmarks}\label{sec: benchmarks}
We compare our HR formulation to two DRO formulations and two classical portfolio selection formulations:
\begin{itemize}
    \item \textbf{Type-1 Wasserstein DRO} where we solve the problem $\inf_{x\in\cX}\sup\{ \Eb_{\Pb'}[\loss(x,\txi)] \; : \; \Pb' \in \cP, \; W(\hat{\Pb}_T,\Pb') \leq \epsilon\}$, with parameter $\epsilon\geq 0$, using the 1-Wasserstein distance \eqref{eq: Wasserstein} suggested in \cite{esfahani2015data}. In this particular setting, W-DRO reduces to a linear optimization problem \citep[eq 27]{esfahani2015data}.
    \item \textbf{KL DRO} where we solve the problem $\inf_{x\in\cX}\sup\{ \Eb_{\Pb'}[\loss(x,\txi)] \; : \; \Pb' \in \cP, \; \KL(\hat{\Pb}_T||\Pb') \leq r\}$, with parameter $r\geq 0$. The problem here reduces to an exponential cone optimization problem \citep{van2021data}.
    \item \textbf{Mean-CVaR} where we solve the problem $\inf_{x\in \cX} \Eb_{\hat{\Pb}_T}[\loss(x,\txi)] + \rho \CVaR_{\hat{\Pb}_T}^\gamma [\loss(x,\txi)]$, with parameter $\rho \geq 0$. The Mean-CVaR model can be reduced to a linear optimization problem \citep{rockafellar2000optimization}. Here, we fix $\gamma = 20 \%$. 
    \item \textbf{Markowitz} where we solve the problem 
    $\inf_{x\in \cX} \Eb_{\hat{\Pb}_T}[\loss(x,\txi)] + \rho \Var_{\hat{\Pb}_T}[\loss(x,\txi)]$ \citep{markowitz1952portfolio}. This Mean-Variance problem reduces to a second-order optimization problem.
\end{itemize}

To formulate the HR-DRO model, we take the noise set to be the 1-norm ball $\cN = \cB_1(0,\epsilon)$, $\epsilon\geq 0$. The associated inflated loss is given explicitly as $\ell^\cN(x,\xi) = \ell^\epsilon(x,\xi) = -\langle x,\xi \rangle + \epsilon \| x \|_\infty$, and we take $\max_{\xi \in \Sigma} \loss^{\cN}(x,\xi) = \max_{t \in [T]} \loss^{\cN}(x,\xi_t)$. Hence, following Theorem \ref{thm: dual rep HD} the HR portfolio optimization problem reduces to

\begin{equation*}
  \left\{
    \begin{array}{rl}
      \inf &  \frac 1T \sum_{t \in [T]} w_t + \lambda (r-1) + \beta \alpha + \eta\\[0.5em]
      \st & x\in \Re^A_+, \; w\in \Re^T, \; \lambda \geq 0, \; \beta \geq 0, \; \eta \in \Re,\\[0.5em]
      & \sum_{a \in [A]} x_a = 1, \quad
      \ell_t \geq - \langle x, \xi_t \rangle + \epsilon \|x\|_{\infty}, \; \forall t\in [T], \\[0.5em]
           & w_t \geq \lambda \log \left( \frac{\lambda}{\eta - \ell_t}\right), ~ w_t \geq \lambda \log \left( \frac{\lambda}{\eta - \max_{t'\in [T]}\ell_t'}\right)-\beta \quad \forall t \in [T], \\[0.5em]
           & \eta \geq \max_{t\in [T]}\ell_t
    \end{array}\right.
\end{equation*}
and therefore requires the solution of an exponential cone problem \citep{dahl2021primal}. In this application we chose to use only the parameters $r$ and $\epsilon$, and set $\alpha=0$ as recommended in Section \ref{ssec:hyper-param-select}. For further information see also Appendix \ref{sec: hyperparams}.

\subsubsection{Results}
\paragraph{Return-Risk performance; Parameters chosen through validation.}

Figure \ref{fig: 20240331-3} displays the testing return and risk for all models associated with the Pareto optimal hyperparameters selected using validation. HR mostly dominates other baselines with better risk-return performances, across all the spectrum, with the exception of few extreme low-risk points, where CVaR and Markowitz reach slightly better returns. Importantly, HR is able to reach (while performing well) a wider range of risk-return values. This is in contrast to Wasserstein which reaches mostly the high return but high risk points (upper right side of the figure), and the CVaR/Markowitz/KL which reach mostly the low risk but low return points (bottom left side). This highlights the flexibility of the new introduced ambiguity set compared to the classical benchmarks. The experiments also highlights that the increase in hyper-parameters, while benefiting flexibility, does not impede on validation which allows to successfully choose well performing parameter combinations. We recall that all approaches were trained with the same total number of hyper-parameter choices.

To ensure the significance of our observations, we repeat the whole experiment with different data sets (where dates are shifted back by 1, 2 and 3 quarters) and observe similar results; see Appendix \ref{app:other-performance}.

\paragraph{Achievable Return-Risk Performance.} 
To investigates further the flexibility of the novel ambiguity set, beyond hyper-parameter selection, we plot in Figure \ref{fig: 20240331-1} all the achievable testing risk-return performances by each approach with all the considered 2000 hyper-parameters. HR's Pareto frontier dominates benchmarks, except for a single Wasserstein point achieving return, with high risk. As observed in validation, HR is more flexible, achieving almost all the spectrum of risk-return frontier compared to bechmarks which achieve either high returns with high risks, or low returns with low risks. Hence, the observed flexibility in the previous experiment is not a by-product of validation, but an inherent property of the new ambiguity set.
HR hence provides the decision maker with more choice in terms of robustness-expected performance trade-off. See also Appendix \ref{app:other-performance} for similar plots on other data sets.

\begin{figure}
    \begin{minipage}[b]{0.45\textwidth}
        \includegraphics[width=\textwidth]{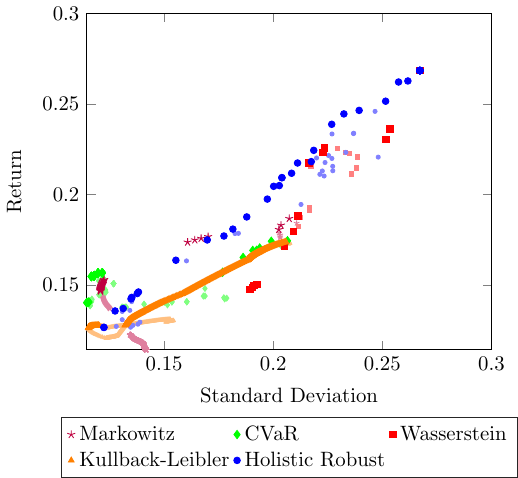}
        \caption{Testing return-risk with hyper-parameters chosen using validation. Points in the Pareto frontier are highlighted.}
        \label{fig: 20240331-3}
    \end{minipage}
    \hfill
    \begin{minipage}[b]{0.45\textwidth}
        \includegraphics[width=\textwidth]{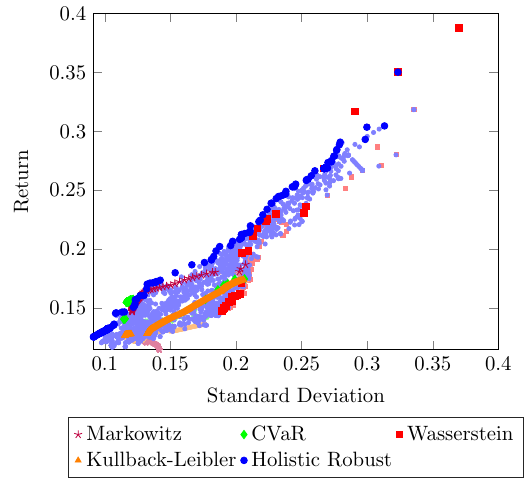}
        \caption{Full achievable testing return-risk with all hyper-parameters. Points in the Pareto frontier are highlighted.}
        \label{fig: 20240331-1}
    \end{minipage}
\end{figure}

\paragraph{Risk Tolerance Investment.} 
To test HR and benchmarks in a realistic investment scenario, we run the following experiment. We set a risk tolerance (bound on the standard deviation) and chose, for each approach, the hyper-parameter that achieves the highest return, while respecting the risk tolerance in \textit{validation}. If no such parameter exists\footnote{Which rarely happens in our experiments.}, we chose the parameter with the lowest risk. We then evaluate this investment in the test set and rank the approaches in terms of their Sharpe Ratio---defined here as average return divided by risk---a popular metric for the quality of investment. 
We repeat this experiment for 10 risk tolerances in a uniform discretization of $[0.05, 0.4]$, and 4 data sets (each shifted by 1,2 or 3 quarters from our original data-set). Table \ref{tab:rank_sr_rv} reports the average ranking of each approach, as well as the average risk tolerance violation \textit{in the test set}. The results show that CVaR, KL, and Markowitz are typically conservative (as expected from results of Figure \ref{fig: 20240331-3} and \ref{fig: 20240331-1}) and respect the risk tolerance more often. However, they largely under perform in terms of Sharpe ratio compared to HR and Wasserstein. Overall, HR most often outperforms other benchmarks with some datasets where HR and Wasserstein rank closely. HR has consistently smaller risk tolerance violation (by roughly 30\%) than Wasserstein. Table \ref{tab:test-violation-formula-2}, \ref{tab:validation-violation-formula-1}, \ref{tab:validation-violation-formula-2} in Appendix \ref{app:other-performance} show details on risk tolerance violation of each approach.


\begin{table}[htbp]
\centering
\setlength{\tabcolsep}{3pt} 
\begin{tabular}{lccccc}
\toprule
Dataset \hspace{0.7cm} & CVaR & Wasserstein & KL & Markowitz & HR \\
\midrule
Original & 3.5 \,(\scriptsize\textcolor{blue}{+1.2}\normalsize) & 2.2 \,(\scriptsize\textcolor{blue}{+4.8}\normalsize) & 4.0 \,(\scriptsize\textcolor{blue}{+1.9}\normalsize) & 3.2 \,(\scriptsize\textcolor{blue}{+2.0}\normalsize) & \textbf{2.1} \,(\scriptsize\textcolor{blue}{+4.5}\normalsize) \\
Shifted 1Q & 3.0 \,(\scriptsize\textcolor{blue}{+1.2}\normalsize) & 2.6 \,(\scriptsize\textcolor{blue}{+5.2}\normalsize) & 3.5 \,(\scriptsize\textcolor{blue}{+2.0}\normalsize) & 3.6 \,(\scriptsize\textcolor{blue}{+2.4}\normalsize) & \textbf{2.3} \,(\scriptsize\textcolor{blue}{+4.3}\normalsize) \\
Shifted 2Q & 3.4 \,(\scriptsize\textcolor{blue}{+1.4}\normalsize) & 2.5 \,(\scriptsize\textcolor{blue}{+4.9}\normalsize) & 3.4 \,(\scriptsize\textcolor{blue}{+1.8}\normalsize) & 3.4 \,(\scriptsize\textcolor{blue}{+1.9}\normalsize) & \textbf{2.3} \,(\scriptsize\textcolor{blue}{+4.5}\normalsize) \\
Shifted 3Q & 3.5 \,(\scriptsize\textcolor{blue}{+1.7}\normalsize) & \textbf{2.2} \,(\scriptsize\textcolor{blue}{+5.0}\normalsize) & 3.6 \,(\scriptsize\textcolor{blue}{+1.8}\normalsize) & 3.4 \,(\scriptsize\textcolor{blue}{+1.7}\normalsize) & 2.3 \,(\scriptsize\textcolor{blue}{+4.3}\normalsize) \\
\bottomrule
\end{tabular}
\caption{Average Sharpe ratio rankings across risk thresholds and ({\color{blue}average risk tolerance violation in the testing set, scaled by $10^2$}). Tolerance violation is defined as the positive part of the testing standard deviation minus the tolerance. Each dataset is shifted back by 0, 1, 2, or 3 quarters from the original dataset.}
\label{tab:rank_sr_rv}
\end{table}

\paragraph{Influence of each HR parameter.} To investigate the effect of each parameter in terms of expected performance and robustness, we plot the testing return, risk and Sharpe ratio of HR as a function of the two hyper-parameters we use ($r,\epsilon$) in Figures \ref{fig:3-in-1-contour}. The first plot (starting from the left) shows that high $\epsilon$ with low $r$ leads to better returns, while the second shows that high $r$ with low or medium $\epsilon$ achieves better risk. While these figures show that one parameter is sufficient to achieve good performance in each metric independently, the third plot, with the ratio of risk and return, shows that both are necessary, as neither parameter is sufficient to reach a good Sharpe ratio, and both must be set to be non-zero for the best performance. Figure \ref{fig:contour-plot-concatenated} in Appendix \ref{app:other-performance} shows a more exhaustive plot including $\alpha$.

\begin{figure}[h]
  \centering
  \includegraphics[width=0.9\linewidth]{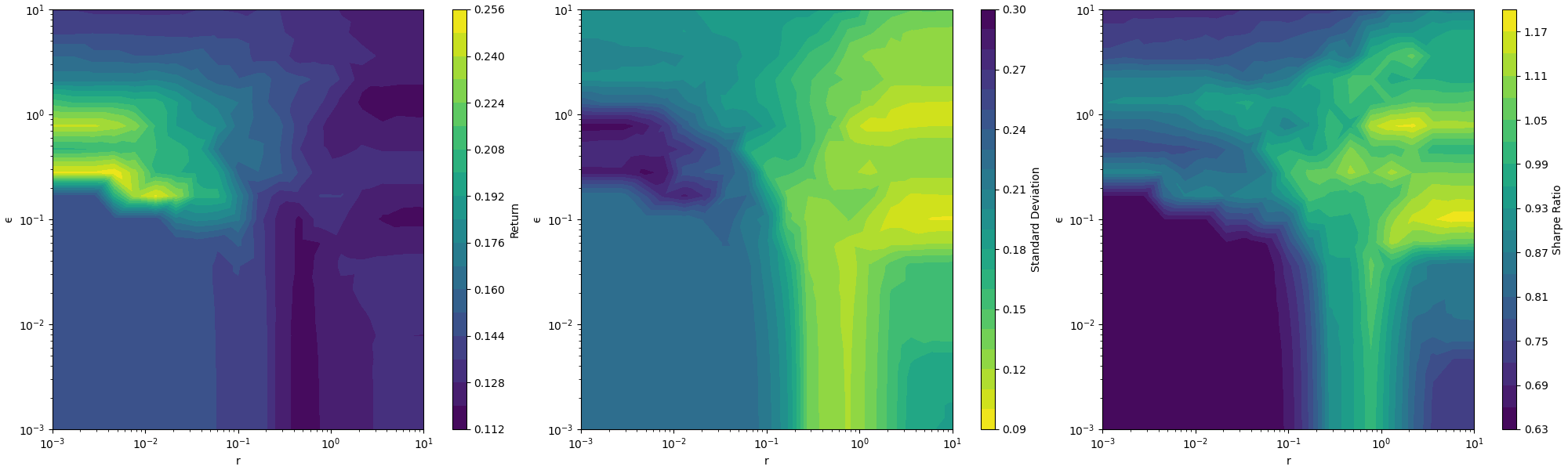} 
  \caption{From left to right: Testing return, standard deviation, and Sharpe ratio as a function of HR parameters, in log-scale. Brighter is better. Trained on data from \DTMdate{1995-03-31} to \DTMdate{2017-09-30} and tested on data from \DTMdate{2018-09-30} to \DTMdate{2023-09-30}. }
  \label{fig:3-in-1-contour}
\end{figure}

\section{Conclusion}

  Taking a step back, we would like to take the time here to emphasize two key methodological points advanced in this work.
  The first point concerns the importance of designing decision methods to meet a priori chosen robustness criteria rather than deriving robustness properties post-hoc. By articulating a specific data model and imposing a desired statistical performance, we are able to measure the robustness of a considered decision method objectively.
  The second point we want to reiterate here is that rather than merely meeting desired robustness criteria, an efficient decision method should do so by incurring as little conservatism as possible. These two key points are surprisingly at odds with how most robust decision methods were developed in the literature. Indeed, outside the few specific cases discussed in this work, we are not aware in which---if in any---natural statistical framework a general $f$-divergence or optimal transport robust decision method offers efficient robust protection.
  Since the release of the first version of this paper, HR has been employed in training deep learning models and was shown to achieve state-of-the-art generalization when facing data corruption in classic deep learning benchmarks \citep{pmlr-v202-bennouna23a}; a feat which we ascribe to the methodological advance by HR along the two aforementioned points.

The methodology presented in this paper can be generalized to settings in which other sources of over-fitting, such as, for instance, the noisy observation model of \citet{van2024efficient, farokhi2023distributionally}, by considering ambiguity sets using an appropriate combination of statistical distances. \cite{liu2023smoothed} shows, for instance, that the L\'evy-Prohorov distance, which serves in this work as a distance describing a distributional shift caused by adversarial noise and contamination, can be substituted for a general transportation distance without the loss of computational tractability.
This modularity of our proposed methodology is particularly beneficial in practical settings where domain-specific corruption must be considered.

Finally, the ambiguity set considered in this paper can be interpreted as a nonparametric minimal confidence set for the unknown out-of-sample distribution. Indeed, \citet{chan2024distributional} shows that by simply intersecting our ambiguity set with a location family of parametric distributions recovers the optimal statistical properties enjoyed by the classical robust statistics developed by \cite{huber1981robust}. This parametric perspective presents a road map on how to bridge the gap between modern robust optimization methods and robust statistical procedures.

\newpage

\bibliography{References}

\newpage
\appendix


\section{Omitted proofs of Section \ref{sec: corruption}}\label{App: proof of sec LP}
\subsection{Definitions and results on the conditional value-at-risk}
We define the scaled conditional value-at-risk of a random variable $C(\txi)$ at a quantile level $\alpha$ as
\begin{align}
  \label{eq:cvar-max-char}(1-\alpha)\CVaR_{\Pb}^\alpha(C(\txi)) \defn &~ \sup \set{\Eb_{\Pb}[C(\txi) R(\txi)]}{\Eb_{\Pb}[R(\txi)] = 1-\alpha, ~R:\Sigma\to [0,1]}\\
  \label{eq:cvar-min-char}= &~\min\, \set{ \beta(1-\alpha)+\Eb_{\Pb}[ \max(C(\txi)-\beta,0)] }{\beta\in \Re};
\end{align}
see for instance \citet{eichhorn2005polyhedral}. Denote the quantile of a random variable $C(\txi)$ with distribution $\Pb$ at level $\alpha$ as
$
\tau_{\alpha} \defn \inf\{ \tau \; : \; \Pb(C(\txi) \leq \tau) \geq \alpha \}.
$
We have from \citet[Theorem 1]{rockafellar2000optimization} that $\tau_{\alpha}$ is a minimizer in \eqref{eq:cvar-min-char} and hence
\begin{align}
  (1-\alpha)\CVaR_{\Pb}^\alpha(C(\txi)) =&~ \tau_{\alpha}(1-\alpha)+\Eb_{\Pb}[ \max(C(\txi)-\tau_{\alpha},0)]\nonumber\\
 \label{eq:cvar-solution} = &~\Eb_{\Pb}[C(\txi)\indic(C(\txi)> \tau_{\alpha})] + \tau_{\alpha}(\Pb(C(\txi) \leq \tau_{\alpha}) -\alpha).
\end{align}

\subsection{Proof of Corollary \ref{cor: LP OOS guarantee}, Theorem \ref{thm: LP optimality}, and Theorem \ref{thm: LP-DRO robustness}}
We first prove Theorem \ref{thm: LP-DRO robustness}.

\begin{proof}[Proof of Theorem \ref{thm: LP-DRO robustness}]
 Denote with $\mathcal C$ the set of possible distributions of the random variable $\txi$ and let
  $
  \mathcal C' = \set{\Pb'\in \cP}{\LP_{\cN}(\Pb^c, \Pb')\ < \alpha}.
  $
  We first show that $\mathcal C\subseteq \mathcal C'$.
  Consider any $\Pb\in \mathcal C$. We have
  \begin{align*}
    \LP_{\cN}(\Pb^c,\Pb)
    &=
      \inf_{\gamma \in \Gamma(\Pb^c,\Pb)} \textstyle\int \indic(\xi-\xi' \not\in \cN) \,\d\gamma(\xi,\xi') \\
    &\leq
      \inf\{\Prob(\txi_1-\txi_2\not\in \cN) \; : \; \text{$\txi_1$ and $\txi_2$ r.v.\ with respective distributions $\Pb^c$ and $\Pb$}\} \\
    &\leq
      \Prob((\txi + \tn)\indic(\tc=0) + \txi_0 \indic(\tc=1)- \txi  \not\in \cN) \\
    &\leq
      \Prob((\txi + \tn)-\txi\not \in \cN)\Prob(\tc=0) + \Prob(\tc=1)\\
    &< \Prob(\tn\not\in\cN)\Prob(\tc=0) + \alpha = \alpha.
  \end{align*}
  and hence $\Pb\in \mathcal C'$.
  
  We now show that $\mathcal C'\subseteq \mathcal C$ as well. Consider any $\Pb\in \mathcal C'$ and hence $\LP_{\cN}(\Pb^c,\Pb) < \alpha$. Consider random variables $\txi^c\sim \Pb^c$ and $\txi\sim \Pb$  with a joint distribution $\bar{\gamma}$ such that 
  $\int \indic(\xi-\xi' \not\in \cN)\,\d\bar{\gamma}(\xi,\xi') < \alpha$.
  This joint distribution exists as 
  $
  \inf\{ \int \indic(\xi-\xi' \not\in \cN)\,\d\gamma(\xi,\xi') \; : \; \gamma \in \Gamma(\Pb^c,\Pb) \} = \LP_{\cN}(\Pb^c,\Pb) < \alpha$.
  Set $\tc \defn \indic(\txi^c - \txi\ \not \in \cN)$ and $\tn \defn \indic(\tc=0) \cdot (\txi^c - \txi) + \indic(\tc=1) n$, for some $n\in \cN\neq \emptyset$, and $\txi_0 \defn \txi^c $. It follows that $\txi^c = (\txi+\tn)\cdot \indic(\tc=0) + \txi_0 \cdot \indic(\tc=1)$. Furthermore, we have $\tn\in \cN$ almost surely and $\Prob(\tc =1) = \int \indic(\xi-\xi' \not\in \cN)\,\d\bar{\gamma}(\xi,\xi') < \alpha$.
  Hence, we have that  $\Pb^c\in\mathcal C$ as well.

\end{proof}

Corollary \ref{cor: LP OOS guarantee} follows immediately. Let us now prove Theorem \ref{thm: LP optimality}.

\begin{proof}[Proof of Theorem \ref{thm: LP optimality}]
  Consider an arbitrary $\Pb^c \in \cP$, $x \in \cX$ and $\Pb \in  \set{\Pb'\in \cP}{\LP_{\cN}(\Pb^c, \Pb') < \alpha}$. Then Theorem \ref{thm: LP-DRO robustness} ensures that there exists an adversary, with the prescribed power corrupting $\Pb$ into $\Pb^c$. Suppose $\hat{c}$ verifies the out-of-sample guarantee. Hence, we must have $\hat{c}(x,\Pb^c) \geq \Eb_{\Pb}[\loss(x,\txi)]$. As $\Pb \in  \set{\Pb'\in \cP}{\LP_{\cN}(\Pb^c, \Pb') < \alpha}$ is arbitrary we must have in fact that $\hat{c}(x,\Pb^c) \geq \sup \{\Eb_{\Pb'}[\loss(x,\txi)] \; : \; \Pb' \in \cP, \; \LP_{\cN}(\Pb^c, \Pb') < \alpha \} = \sup \{\Eb_{\Pb'}[\loss(x,\txi)] \; : \; \Pb' \in \cP, \; \LP_{\cN}(\Pb^c, \Pb') \leq \alpha \} = \hat{c}^{\cN,\alpha}_{\LPtxt}(x,\Pb^c)$.
\end{proof}

\subsection{Generalization and proof of Theorem \ref{thm: LP-DRO expression for empirical}}
\label{sec:gener-theor-refthm}

\begin{theorem}\label{thm: LP-DRO expression}
  For all $x\in\cX$, $\Qb \in \cP$, noise set $\cN$ and $\alpha\in [0,1]$, the following equality holds
  \begin{equation}
    \label{eq: LP-DRO expression}
    \max \set{\Eb_{\Pb'}[\loss(x,\txi)]}{\Pb' \in \cP, \;\LP_\cN(\Qb,\Pb') \leq \alpha}
    = 
    (1-\alpha)\CVaR_{\Qb}^\alpha(\loss^\cN(x,\txi))
    +
    \alpha\max_{\xi\in \Sigma}\loss(x,\xi).
  \end{equation}
  Let $\xi_{\infty} \in \argmax_{\xi \in \Sigma} \loss(x,\xi)$ and define a selection $S(\xi) \in \xi- \argmax_{\noise \in \cN,\,\xi - \noise\in\Sigma } \loss(x,\xi - \noise)$. There exists an optimal $\Pb^\star$ in \eqref{eq: LP-DRO expression} and a coupling $\gamma^\star \in \Gamma(\Qb,\Pb')$ with $\int \indic(\xi-\xi'  \not \in \cN) \,\d\gamma^\star(\xi,\xi') \leq \alpha$ for which we have
  \[
     \Pb^\star(\set{S(\xi)}{\xi \in \supp \Qb} \cup \{ \xi_{\infty}\})=1 ~~ {\rm{and}} ~~
     \gamma^\star(\set{(\xi,S(\xi))}{\xi \in \supp \Qb} \cup \set{(\xi,\xi_{\infty})}{\xi \in \supp \Qb})=1.
  \]
\end{theorem}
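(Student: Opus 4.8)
The plan is to prove the identity by a standard two-sided bounding argument: I will establish ``$\leq$'' by a disintegration estimate valid for \emph{every} feasible $\Pb'$, and ``$\geq$'' by exhibiting the explicit distribution $\Pb^\star$ announced in the statement, which simultaneously certifies that $\Pb^\star$ is a maximizer. Throughout I abbreviate $M \defn \max_{\xi \in \Sigma}\loss(x,\xi)$ and $C(\xi) \defn \loss^\cN(x,\xi)$, and I use the two characterizations \eqref{eq:cvar-max-char}--\eqref{eq:cvar-min-char} and the solution \eqref{eq:cvar-solution} of the scaled CVaR recalled in the preliminaries.

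For the upper bound I would fix any $\Pb'$ with $\rho_\cN(\Qb,\Pb')\leq \alpha$ and, for arbitrary $\epsilon>0$, pick a coupling $\gamma \in \Gamma(\Qb,\Pb')$ with $\int \mathbb{1}(\xi-\xi'\notin\cN)\,\d\gamma \leq \alpha+\epsilon$. Disintegrating $\gamma(\d\xi,\d\xi') = \Qb(\d\xi)\kappa_\xi(\d\xi')$ against its first marginal $\Qb$, I set $R(\xi) \defn \kappa_\xi(\{\xi' : \xi-\xi' \in \cN\})$, so that $R:\Sigma\to[0,1]$ and $\int R\,\d\Qb \geq 1-\alpha-\epsilon$. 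Splitting $\Eb_{\Pb'}[\loss(x,\txi)] = \int \loss(x,\xi')\,\d\gamma$ according to whether $\xi-\xi'\in\cN$, on the ``inside'' region I bound $\loss(x,\xi') \leq \loss^\cN(x,\xi)$ (there $\xi' = \xi-\noise$ with $\noise\in\cN$ and $\xi'\in\Sigma$) and on the complement $\loss(x,\xi')\leq M$, giving $\Eb_{\Pb'}[\loss] \leq \int C\,R\,\d\Qb + (1-\int R\,\d\Qb)M$. Since $C\leq M$ pointwise, replacing $R$ by any $R'\leq R$ with $\int R'\,\d\Qb = 1-\alpha-\epsilon$ can only increase this expression (the first term loses at most $M\int(R-R')\,\d\Qb$, exactly what the second term gains); invoking \eqref{eq:cvar-max-char} then yields $\Eb_{\Pb'}[\loss] \leq (1-\alpha-\epsilon)\CVaR_\Qb^{\alpha+\epsilon}(\loss^\cN) + (\alpha+\epsilon)M$. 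Letting $\epsilon\downarrow 0$ and using continuity of $\alpha\mapsto (1-\alpha)\CVaR_\Qb^\alpha(\loss^\cN)$ (immediate from the min-representation \eqref{eq:cvar-min-char}) closes the upper bound.

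For the lower bound and the explicit optimizer I would build on \eqref{eq:cvar-solution}. With $\tau_\alpha$ the $\alpha$-quantile of $C$ under $\Qb$, define $R^\star \defn \mathbb{1}(C>\tau_\alpha) + \theta\,\mathbb{1}(C=\tau_\alpha)$ with $\theta\in[0,1]$ chosen so $\int R^\star\,\d\Qb = 1-\alpha$, which attains the supremum in \eqref{eq:cvar-max-char}. Using the selection $S$ and maximizer $\xi_\infty$ from the statement, I set $\gamma^\star(\d\xi,\d\xi') = \Qb(\d\xi)\big[R^\star(\xi)\delta_{S(\xi)}(\d\xi') + (1-R^\star(\xi))\delta_{\xi_\infty}(\d\xi')\big]$ and let $\Pb^\star$ be its second marginal. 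A direct check shows the first marginal is $\Qb$, that $\gamma^\star$ and $\Pb^\star$ are concentrated on the announced sets, and that $\int \mathbb{1}(\xi-\xi'\notin\cN)\,\d\gamma^\star \leq \int (1-R^\star)\,\d\Qb = \alpha$ since transport along $S$ stays inside $\cN$; hence $\rho_\cN(\Qb,\Pb^\star)\leq\alpha$ and $\Pb^\star$ is feasible. Since $\loss(x,S(\xi)) = \loss^\cN(x,\xi)$ and $\loss(x,\xi_\infty) = M$, I obtain $\Eb_{\Pb^\star}[\loss] = \int R^\star\loss^\cN\,\d\Qb + \alpha M = (1-\alpha)\CVaR_\Qb^\alpha(\loss^\cN) + \alpha M$, matching the upper bound.

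I expect the main obstacles to be measure-theoretic bookkeeping rather than the core idea: justifying the disintegration, establishing the existence of a measurable selection $S$ (via compactness of $\cN$, continuity of $\loss$, and the Kuratowski--Ryll-Nardzewski theorem), and verifying that $\xi\mapsto\loss^\cN(x,\xi)$ is measurable (in fact upper semicontinuous, as a maximum of a continuous function over a compact set) so that $R$, $R^\star$ and all integrals are well defined. The only genuinely delicate analytic point is the passage $\epsilon\downarrow 0$ in the upper bound, handled through the continuity of the CVaR level noted above; everything else reduces to the mass-shrinking monotonicity inequality enabled by $\loss^\cN\leq M$ together with the two CVaR characterizations.
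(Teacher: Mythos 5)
Your proposal is correct and follows essentially the same route as the paper's proof of Theorem \ref{thm: LP-DRO expression}: the upper bound via splitting the coupled expectation according to whether $\xi-\xi'\in\cN$ and reducing to the max-characterization \eqref{eq:cvar-max-char} of the scaled $\CVaR$ (your disintegration kernel $R(\xi)=\kappa_\xi(\{\xi':\xi-\xi'\in\cN\})$ plays the role of the paper's conditional expectation $R'(\txi)=\Eb_\gamma[R(\txi,\txi')\,|\,\txi]$), and the lower bound via the identical explicit construction transporting mass above the quantile $\tau_\alpha$ along the selection $S$ and the remaining $\alpha$ mass to $\xi_\infty$, with your fractional weight $\theta$ on the atom $\{C=\tau_\alpha\}$ matching the paper's Bernoulli randomization $\tzeta$. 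If anything, your write-up is slightly more careful on two points the paper passes over silently: you prove (via the mass-shrinking argument using $\loss^\cN\leq\max_{\xi\in\Sigma}\loss(x,\xi)$, plus continuity of $\alpha\mapsto(1-\alpha)\CVaR^\alpha_{\Qb}$) the monotonicity in the radius that the paper merely asserts in its final display, and your $\epsilon$-approximate couplings sidestep the attainment of the infimum defining $\rho_\cN$, which the paper implicitly assumes when rewriting the problem as a joint supremum over $(\Pb',\gamma)$.
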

\begin{proof}
  We can rewrite the LHS of Equation \eqref{eq: LP-DRO expression} as
  \begin{equation}\label{proof eq: sup with int}
    \sup 
    \left\{ \Eb_{\Pb'}[\loss(x,\txi)] 
      \; : \; 
      \Pb' \in \cP, \;
      \gamma \in \Gamma(\Qb,\Pb'), \;
      \int \indic(\xi-\xi' \not \in \cN) \,\d\gamma(\xi,\xi') \leq \alpha 
    \right\}
  \end{equation}
  We will prove the equality by first constructing a feasible solution to this problem that attains the RHS term in Equation \eqref{eq: LP-DRO expression}. Subsequently, we  then show than any other feasible solution attains an objective smaller than the RHS.

  \textbf{Constructing a feasible solution attaining the RHS.} 
  We start by providing some intuition on the construction that will follow. A feasible solution consists of a coupling $\gamma$ that moves mass from the distribution $\Qb$ into the distribution $\Pb'$. For $(\xi,\xi') \in \Sigma \times \Sigma$, the quantity $\gamma(\xi,\xi')$ represents the amount of mass moved from $\xi$, in the distribution $\Qb$, to $\xi'$ in the distribution $\Pb'$. The maximization problem seeks to move this mass in a way so to maximize the expected loss for the new mass distribution $\Pb'$ under the constraint that at most $\alpha$ mass is moved by more than accounted for in $\cN$. Hence, we are allowed to move $\alpha$ mass of points arbitrary far and $1-\alpha$ with distance bounded in $\cN$. In order to maximize the expectation, we naturally move the $\alpha$ mass of points (that can be moved arbitrarily) into the worst-case event in $\Sigma$ (that maximizes the loss $\loss$), which we denote later by $\xi_{\infty}$. The remaining mass is  moved to the worst-case scenario within the noise set $\cN$.

  Let $\xi_{\infty} \in \argmax \{ \loss(x,\xi) \; : \; \xi \in \Sigma\}$.
  Consider a Bernoulli random variable $\tzeta \in \{0,1\}$ independent of $\txi$ with distribution $\mathbb{Z}$ such that $\mathbb{Z}(\tzeta =1) = (\Qb(\loss^\cN(x,\txi) \leq \tau_{\alpha}) -\alpha)/\Qb(\loss^\cN(x,\txi) = \tau_{\alpha})$ and  $\mathbb{Z}(\tzeta =0) = 1- \mathbb Z(\tzeta =1)$
  for
  $
  \tau_{\alpha} = \inf\{ \tau \in \Re\; : \; \Qb(\loss^\cN(x,\txi) \leq \tau) \geq \alpha \}
  $
  the $\alpha$-quantile of the inflated loss function.
  We denote by $\bar{\Qb}$ the product distribution $\Qb\otimes \mathbb Z$.
  Define the mapping $S_{\cN,\alpha}: \Sigma \times \{0,1\} \to \Sigma$ as
  $$
  S_{\cN,\alpha}(\xi,\zeta)=
  \begin{cases}
    \xi-\argmax \{ \loss(x,\xi - \noise) \; : \; \noise \in \cN,~ \xi-\noise\in \Sigma \}
    \quad 
    &\text{if} \; \loss^\cN(x,\xi) > \tau_\alpha, \\
    \xi-\argmax \{ \loss(x,\xi - \noise) \; : \; \noise \in \cN,~ \xi-\noise\in \Sigma \}
    &\text{if} \; \loss^\cN(x,\xi) = \tau_\alpha \; \text{and} \; \zeta =1,\\
    \xi_{\infty}
    &\text{if} \; \loss^\cN(x,\xi) = \tau_\alpha \; \text{and} \; \zeta =0,\\
    \xi_{\infty} 
    \quad 
    &\text{if} \;  \loss^\cN(x,\xi) < \tau_\alpha. 
  \end{cases}
  $$
  
  Define now a point $(\Pb',\gamma)$ through 
  $$
  \Pb'(A) 
  \defn
  \bar{\Qb}(\{(\xi,\zeta) \in \Sigma \times \{0,1\} \; : \;  S_{\cN,\alpha}(\xi,\zeta) \in A\}) \quad  \forall A \in \cB(\Sigma),
  $$
  and 
  $$
  \gamma(\Pi) \defn \bar{\Qb}\left( \{ (\xi,\zeta) \in \Sigma \times \{0,1\} \; : \; (\xi,S_{\cN,\alpha}(\xi,\zeta)) \in \Pi \}\right)
  \quad \forall \Pi\in \cB(\Sigma\times\Sigma).
  $$
  By definition of $\Pb'$, it is clear that $\supp \Pb' \subseteq \set{\xi-\argmax_{\noise \in \cN, \xi - \noise\in \Sigma} \loss(x,\xi - \noise)}{\xi \in \supp \Qb} \cup \{ \xi_{\infty}\}$.
  We next show that the solution is feasible and its cost is exactly the RHS. 

  To show feasibility we need to have that $\gamma \in \Gamma(\Qb,\Pb')$. For all event $A\in \cB(\Sigma)$, we have 
  $
  \gamma(A\times \Sigma) 
  =
  \bar{\Qb}(\{(\xi,\zeta) \in \Sigma \times \{0,1\} \; : \; (\xi,S_{\cN,\alpha}(\xi,\zeta)) \in A \times \Sigma\}) 
  =
  \Qb(A) 
  $.
  Furthermore
  $
  \gamma(\Sigma \times A) 
  =
  \bar{\Qb}(\{(\xi,\zeta) \in \Sigma \times \{0,1\} \; : \; S_{\cN,\alpha}(\xi,\zeta) \in A) 
  = 
  \Pb'(A)
  $. Hence $\gamma \in \Gamma(\Qb,\Pb')$.
  It also remains to verify the last constraint. We have
  \begingroup
  \allowdisplaybreaks
  \begin{align*}
    \int \indic(\xi-\xi' \not \in \cN) \,\d\gamma(\xi,\xi')
    &=
      \gamma
      \left(
      \{(\xi,\xi') \in  \Sigma \times \Sigma \; : \; \xi-\xi'  \not \in \cN \}
      \right) \\  
    &= 
      \bar{\Qb}\left(
      \{
      (\xi,\zeta) \in \Sigma \times \{0,1\} \; : \; 
      \xi-S_{\cN,\alpha}(\xi,\zeta)\not \in \cN\} \right)\\
    &\leq  
      \bar{\Qb}\left(\loss^\cN(x,\txi) = \tau_\alpha \; \text{and} \; \tilde \zeta =0
      \;
      \text{or}
      \;
      \loss^\cN(x,\txi) < \tau_\alpha
      \right) 
      \numberthis \label{proof eq: penalisation 1} \\
    &=
      \bar{\Qb}(\loss^\cN(x,\txi) = \tau_\alpha \; \text{and} \; \tilde \zeta =0) 
      +
      \bar{\Qb}\left(\loss^\cN(x,\txi) < \tau_\alpha
      \right) \\
    & =
      \Qb(\loss^\cN(x,\txi) = \tau_\alpha)\mathbb{Z}(\tilde \zeta =0) 
      +
      \Qb\left(\loss^\cN(x,\txi) < \tau_\alpha
      \right) 
      \numberthis \label{proof eq: penalisation 2}\\
    &= \alpha 
  \end{align*}
  \endgroup
  where \eqref{proof eq: penalisation 1} is by definition of $S_{\cN,\alpha}$ and \eqref{proof eq: penalisation 2}  is justified by the independence between $\txi$ and $\tilde \zeta$. The ultimate equality is by definition of $\mathbb{Z}(\zeta =0)$.

  Let us now compute the cost of this solution.  We can decompose this cost as
  \begin{align*}
    \Eb_{\Pb'}[\loss(x,\txi')]  & = \Eb_{\gamma}[\loss(x,\txi')] = \Eb_{\bar{\Qb}}[\loss(x,S_{\cN,\alpha}(\txi,\tzeta))]\\
    &= 
      \Eb_{\bar{\Qb}}[\loss(x,S_{\cN,\alpha}(\txi,\tzeta)) \indic(\loss^\cN(x,\txi)> \tau_{\alpha})]
      +
      \Eb_{\bar{\Qb}}[\loss(x,S_{\cN,\alpha}(\txi,\tzeta)) \indic(\loss^\cN(x,\txi) = \tau_\alpha \; \text{and} \; \tzeta =1)]\\
    &\quad+
      \Eb_{\bar{\Qb}}[\loss(x,S_{\cN,\alpha}(\txi,\tzeta)) \indic(\loss^\cN(x,\txi) = \tau_\alpha \; \text{and} \; \tzeta =0, \; \text{or} \;\loss^\cN(x,\txi) < \tau_\alpha)]
      \numberthis \label{proof eq: penalisation 3}
  \end{align*}
  Let us now examine the second and third terms. By definition of $S_{\cN,\alpha}$ the second term can be written as
  \begin{align*}
    \Eb_{\bar{\Qb}}[\loss(x,S_{\cN,\alpha}(\txi,\tzeta)) \indic(\loss^\cN(x,\txi) = \tau_\alpha \; \text{and} \; \tilde \zeta =1)]
    &=
      \Eb_{\bar{\Qb}}[\loss^\cN(x,\txi) \indic(\loss^\cN(x,\txi) = \tau_\alpha \; \text{and} \; \tilde \zeta =1)]\\
    &=
      \tau_{\alpha}\Eb_{\bar{\Qb}}[ \indic(\loss^\cN(x,\txi) = \tau_\alpha \; \text{and} \; \tilde \zeta =1)]\\
    &= 
      \tau_{\alpha}\bar{\Qb}( \loss^\cN(x,\txi) = \tau_\alpha \; \text{and} \; \tilde \zeta =1)\\
    &=
      \tau_{\alpha}\Qb( \loss^\cN(x,\txi) = \tau_\alpha)\mathbb{Z}(\tilde \zeta =1)\\
    &= 
      \tau_{\alpha}
      (\Qb(\loss^\cN(x,\txi) \leq \tau_{\alpha}) -\alpha)
  \end{align*}
  where the last two equalities are by the independence between $\txi$ and $\tilde \zeta$ and by the definition of $\mathbb{Z}(\tilde \zeta =1)$. We now examine the third term. By definition of $S_{\cN,\alpha}$, we have
  \begingroup
  \allowdisplaybreaks
  \begin{align*}
    \Eb_{\bar{\Qb}}[\loss(x,S_{\cN,\alpha}(\txi,\tzeta)) &\indic(\loss^\cN(x,\txi) = \tau_\alpha \; \text{and} \; \zeta =0, \; \text{or} \;\loss^\cN(x,\txi) < \tau_\alpha)]\\
                                                              &= 
                                                                \max_{\xi\in \Sigma}\loss(x,\xi)
                                                                \Eb_{\bar{\Qb}}[\indic(\loss^\cN(x,\xi) = \tau_\alpha \; \text{and} \; \zeta =0, \; \text{or} \;\loss^\cN(x,\txi) < \tau_\alpha)]\\
                                                              &= 
                                                                \max_{\xi\in \Sigma}\loss(x,\xi)
                                                                \left[
                                                                \bar{\Qb}(\loss^\cN(x,\txi) = \tau_\alpha \; \text{and} \; \zeta =0) 
                                                                +
                                                                \bar{\Qb}(\loss^\cN(x,\txi) < \tau_\alpha)
                                                                \right]\\
                                                              &= 
                                                                \max_{\xi\in \Sigma}\loss(x,\xi)
                                                                \left[
                                                                \Qb(\loss^\cN(x,\txi) = \tau_\alpha )\mathbb{Z}(\zeta =0) 
                                                                +
                                                                \Qb(\loss^\cN(x,\txi) < \tau_\alpha)
                                                                \right] \\
                                                              &=
                                                                \max_{\xi\in \Sigma}\loss(x,\xi)
                                                                \left[
                                                                \Qb(\loss^\cN(x,\txi) = \tau_\alpha )
                                                                -
                                                                \Qb(\loss^\cN(x,\txi) \leq \tau_{\alpha}) +\alpha
                                                                +
                                                                \Qb(\loss^\cN(x,\txi) < \tau_\alpha)
                                                                \right] \\
                                                              &= 
                                                                \alpha\max_{\xi\in \Sigma}\loss(x,\xi)
  \end{align*}
  \endgroup
  where the two last equalities are by independence of $\tzeta$ and $\txi$ and definition of $\mathbb{Z}(\tilde \zeta =0)$. Plugging the values of the second and third term in \eqref{proof eq: penalisation 3}, we get exactly the RHS after observing Equation \eqref{eq:cvar-solution}.

  \textbf{Proving LHS$\leq$RHS.} We show that the cost of every feasible solution of the supremum problem \eqref{proof eq: sup with int} is smaller than the RHS. Let $\Pb'\in \cP$ and $\gamma \in \Gamma(\Qb,\Pb')$ be a feasible solution to the supremum problem \eqref{proof eq: sup with int}. Let $\gamma(\{ (\xi, \xi')\in \Sigma\times\Sigma \; :\; \xi-\xi' \not \in \cN \}) = \alpha'\leq \alpha$. As $\gamma$ has marginals $\Qb$ and $\Pb'$ we have
  \begin{equation*}
    \Eb_{\Pb'}[\loss(x,\txi')]
    =
      \Eb_{\gamma}[\loss(x,\txi')]=
      \Eb_{\gamma}[\loss(x,\txi') \indic(\txi-\txi' \in \cN)]
      +
      \Eb_{\gamma}[\loss(x,\txi') \indic(\txi-\txi'\not \in \cN)]
  \end{equation*}
  Let us bound each of the two terms. We start by the second term. We have
  \begin{align*}
    \Eb_{\gamma}[\loss(x,\txi') \indic(\txi-\txi' \not \in \cN)]
    \leq
    \gamma(\{ (\xi,\xi') \in \Sigma\times\Sigma \; :\; \xi'-\xi \not \in \cN \})\max_{\xi \in \Sigma} \loss(x,\xi)
    =
    \alpha'\max_{\xi \in \Sigma} \loss(x,\xi).
  \end{align*}
  We now turn to the first term. When $\txi-\txi' \in \cN$, we have $\loss^{\cN}(x,\txi) = \max_{\noise \in \cN,\,\txi - n\in \Sigma } \loss(x,\txi - n) \geq \loss(x,\txi')$.
  Hence,
  \begin{align*}
    \Eb_{\gamma}[\loss(x,\txi') \indic(\txi-\txi' \in \cN)]
    &\leq
      \Eb_{\gamma}[\loss^\cN(x,\txi) \indic(\txi-\txi' \in \cN)]\\
    &\leq
      \sup_{R:\Sigma\times\Sigma\to [0,1] }\{
      \Eb_{\gamma}[\loss^\cN(x,\txi) R(\txi,\txi')] \; : \; \Eb_{\gamma}[R(\txi,\txi')] = 1-\alpha' \}
  \end{align*}
  where the $\sup$ is taken over measurable functions.
  The second inequality follows from $\Eb_{\gamma}[\indic(\txi-\txi' \in \cN)] =1- \gamma(\{ (\xi, \xi')\in \Sigma\times\Sigma \; :\; \xi-\xi' \not \in \cN \}) = 1-\alpha'$.
  Furthermore,
  \begin{align*}
    & \Eb_{\gamma}[\loss(x,\txi') \indic(\txi-\txi' \in \cN)]\\
    & \leq \sup_{R:\Sigma\times\Sigma\to [0,1] }\{
      \Eb_{\gamma}[\loss^\cN(x,\txi) R(\txi,\txi')] \; : \; \Eb_{\gamma}[R(\txi,\txi')] = 1-\alpha'\}\\
    & = \sup_{R:\Sigma\times\Sigma\to [0,1] }\{
      \Eb_{\Qb}[\loss^\cN(x,\txi)  \Eb_{\gamma}[R(\txi,\txi')|\txi]] \; : \; \Eb_{\Qb}(\Eb_{\gamma}(R(\txi,\txi')|\txi)) = 1-\alpha'\}\\
    & = \sup_{R':\Sigma\to [0,1] }\{
      \Eb_{\Qb}(\loss^\cN(x,\txi) R'(\txi)) \; : \; \Eb_{\Qb}(R'(\txi)) = 1-\alpha'\}\\
    &= (1-\alpha')\CVaR_{\Qb}^{\alpha'}(\loss^\cN(x,\txi)).
  \end{align*}

  Here the first equality holds by the law of total expectation and the fact that $\loss^\cN(x,\txi)$ does not depend on $\txi'$.
  The second inequality follows from the change of variables $R'(\txi) = \Eb_{\gamma}(R(\txi,\txi')|\txi))$. That is, the random variable $R'$ denotes the conditional expectation of the random variable $R$ given $\txi$.
  The last equality is by definition of the conditional value-at-risk.
  Combining the bound on the first and second term, we get 
  \begin{align*}
    \Eb_{\Pb'}(\loss(x,\txi')) \leq  &
    (1-\alpha')\CVaR_{\Qb}^{\alpha'}(\loss^\cN(x,\txi))
    +
                                 \alpha'\max_{\xi\in \Sigma}\loss(x,\xi)\\
    \leq &\max_{\alpha'\leq \alpha}~
    (1-\alpha')\CVaR_{\Qb}^{\alpha'}(\loss^\cN(x,\txi))
    +
           {\alpha'}\max_{\xi\in \Sigma}\loss(x,\xi)\\
    \leq &
    (1-\alpha)\CVaR_{\Qb}^{\alpha}(\loss^\cN(x,\txi))
    +
           {\alpha}\max_{\xi\in \Sigma}\loss(x,\xi)
  \end{align*}
  which is exactly the RHS.
\end{proof}

\subsection{Proof of Corollary \ref{corollary:interpretation}}

\begin{proof}[Proof of Corollary \ref{corollary:interpretation}]
Consider an ordering of the observed samples $\xi_{[1]},\ldots,\xi_{[T]}$ by increasing inflated loss so that $\loss^\cN(x,\xi_{[1]}) \leq \loss^\cN(x,\xi_{[2]}) \leq \ldots \leq \loss^\cN(x,\xi_{[T]})$.
We have
\begingroup
\allowdisplaybreaks
\begin{align*}
  & \max \set{\textstyle\frac{1}{T}\sum_{t\in[T]} \loss(x,\xi_t-\noise_t-\noise'_t)}{\sum_{t\in[T]}\tfrac{\one{\noise'_t\neq 0}}{T}\leq \alpha,~\noise_t \in \mathcal N, ~ \xi_t-\noise_t-\noise'_t\in \Sigma~~\forall t\in [T]}\\
  = & \max \set{\textstyle\frac{1}{T}\sum_{t\in[T]} \loss^\cN(x,\xi_t-\noise'_t)}{\sum_{t\in[T]}\tfrac{\one{\noise'_t\neq 0}}{T}\leq \alpha,~\noise_t \in \mathcal N, ~ \xi_t-\noise'_t\in \Sigma~~\forall t\in [T]} \\
  = & \textstyle\sum_{t=\alpha T+1}^{T} \tfrac{\loss^\cN(x,\xi_{[t]})}{T}   +
      \alpha\max_{\xi\in \Sigma}\loss(x,\xi)\\
  = & \textstyle
      (1-\alpha)\CVaR_{\Pemp{T}}^{\alpha}(\loss^\cN(x,\txi))
      +
      \alpha\max_{\xi\in \Sigma}\loss(x,\xi)
      =   \hat{c}^{\mathcal N, \alpha}_{\LPtxt}(x,\Pemp{T}).
\end{align*}
\endgroup
The first equality follows from the definition of the inflated loss function as $\ell^\cN(x, \xi) = \max_{n\in \cN, \xi-n\in \Sigma}\ell(x, \xi-n)$.
The second equality use the fact that $\alpha T$ is integer. The penultimate equality follows from Equation \eqref{eq:cvar-solution}.
The final equality follow from Theorem \ref{thm: LP-DRO expression for empirical}.
\end{proof}

\section{Omitted proofs of Section \ref{sec:HR-pred}}\label{App: proof of sec det corr}
\subsection{Proof of Theorem \ref{thm: HD robustness guarantee}}\label{App: proof robustness HD}
We first prove the following key lemma.

\begin{lemma}
  Let $\hat{\Pb}_T$ be the empirical distribution of $T$ independent samples with distribution $\Pb$ on a compact set $\Sigma$. Then for all $\delta >0$ we have
  $$
  \Prob\left(
    \exists \Pb'\in \cP, \; \LP_{\cB(0,\delta)}(\hat{\Pb}_T, \Pb')\leq \delta,\, \KL(\Pb',\Pb) \leq r
  \right)
  \geq 
  1 - \left( \frac{4}{\delta} \right)^{m(\Sigma,\delta)} \exp(-rT)
  $$
  where $m(\Sigma, \delta) := \min \{ k \geq 0 \; : \; \exists \xi_1, \ldots, \xi_k \in \Sigma \; \st \; \cup_{i=1}^k \cB(\xi_i,\delta) \supseteq \Sigma \}$ denotes the internal covering number of the support set $\Sigma$.
\end{lemma}\label{lemma: Uniform-KL-inflated-with-LP}
\begin{proof}[Proof of Theorem \ref{thm: HD robustness guarantee}]
  We have
  \begin{align*}
    &\Prob(\exists \Pb'\in \cP \; \st \; \LP_{\cB(0,\delta)}(\hat{\Pb}_T,\Pb') \leq \delta, \; \KL(\Pb', \Pb) \leq r ) \\
    & = 
      \Prob(\exists \Pb'\in\cP \; \st \; \LPdist(\hat{\Pb}_T,\Pb') \leq \delta, \; \KL(\Pb', \Pb) \leq r \} ) \\
    &=
      \Prob( \hat{\Pb}_T \in \{ \hat \Pb \in \cP :  \exists \Pb' \in \cP \; \st \; \LPdist(\hat{\Pb},\Pb') \leq \delta, \; \KL(\Pb', \Pb) \leq r \} ) \\
    &= 
      1- \Prob( \hat{\Pb}_T \in \cA)
  \end{align*}
  where $\cA$ is defined as $\cA^c = \{ \hat \Pb \in \cP : \exists \Pb' \in \cP \; \st \; \LPdist(\hat \Pb,\Pb') \leq \delta, \; \KL(\Pb', \Pb) \leq r\}$. We will show using results from \citet{dembod1996large} that $\Prob(\hat{\Pb}_T \in \cA ) \leq (4/\delta)^{m(\Sigma,\delta)} \exp(-T \inf_{\Pb' \in \cA^\delta} \KL(\Pb',\Pb))$, where $\cA^\delta = \{ \Pb' \in \cP \; : \; \LPdist(\Pb',\Pb'') \leq \delta, \; \Pb'' \in \cA \}$ is the $\delta$-inflation of the set $\cA$, by the LP distance $\LPdist$. This last inequality immediately leads to the conclusion by remarking that $\Pb' \in \cA^\delta \implies \KL(\Pb',\Pb) >r$. Indeed, suppose that there exists $\Pb'' \in \cA^\delta$ with $\KL(\Pb'',\Pb) \leq r$. By definition of $\cA^\delta$, there exists $\Pb' \in \cA$ such that $\LPdist(\Pb'',\Pb') = \LPdist(\Pb',\Pb'') \leq \delta$. As we have both $\LPdist(\Pb',\Pb'') \leq \delta$ and $\KL(\Pb'',\Pb) \leq r$ which implies that $\Pb' \in \cA^c$, a contradiction.

  Let us now show that 
  $\Prob(\hat{\Pb}_T \in \cA) \leq (4/\delta)^{m(\Sigma,\delta)} \exp(-T \inf_{\Pb' \in \cA^\delta} \KL(\Pb',\Pb))$.
  Denote $\cB^{\cA}_{\LPtxt}(\Pb') := \{ \Pb'' \in \cP \; : \; \LPdist(\Pb'',\Pb') \leq \delta\}$ the LP ball inside $\cA$, which is compact as $\LPdist$ is continuous in the weak topology and $\Sigma$ is compact \cite{prokhorov1956convergence}. \citet[Exercise 4.5.5]{dembod1996large} established that for any set $\cA$  and $\delta>0$, we have for all $n \geq 1$ the upper bound
  $$
  \Prob(\hat{\Pb}_T \in \cA) \leq m_{\LPtxt}(\cA,\delta) \exp\left( -T \inf_{\Pb' \in \cA^\delta} \KL(\Pb', \Pb) \right)
  $$
  where $m_{\LPtxt}(\cA,\delta) = \min \{ k \geq 0 \; : \; \exists \Pb_1, \ldots, \Pb_k \in \cA \; \st \; \cup_{i=1}^k \cB_{\LPtxt}(\Pb_i,\delta) \supseteq \cA \}$ denotes the internal covering number of the set $\cA$ of interest with LP balls of radius $\delta$.
  \citet[Exercise 6.2.19]{dembod1996large} upper bounds the  covering number  of any set $\cA$ in terms of the  covering number of the event set $\Sigma$ as
  \(
  m_{\LPtxt}(\mathcal A, \delta)\leq m_{\LPtxt}(\mathcal P, \delta)\leq \left(\tfrac{4}{\delta}\right)^{m(\Sigma ,\delta)}
  \), for all $\delta >0$. 
\end{proof}

\begin{proof}[Proof of Theorem \ref{thm: HD robustness guarantee}]
Denote with $\cN'$ the compact set in the interior of $\cN$ where the noise realizes and $\alpha'<\alpha$ the fraction of data points misspecified.
There exists $\delta >0$ such that $\cN' + \cB(0,\delta) \subset \cN$ and $\alpha' + \delta \leq \alpha$. Denote with $\hat{\Qb}_T$ the empirical distribution of clean samples from $\Pb$ before corruption. Theorem \ref{thm: LP-DRO robustness} ensures that 
$\LP_{\cN'}(\hat{\Pb}_T,\hat{\Qb}_T) \leq \alpha'$. Furthermore, Lemma \ref{lemma: Uniform-KL-inflated-with-LP} ensures that 
$$
\Prob(\Pb \in \tset{\Pb''\in \cP}{\exists \Pb' \in \cP \; \st \; \LP_{\cB(0,\delta)}(\hat{\Qb}_T,\Pb') \leq \delta, \; \KL(\Pb',\Pb'') \leq r} ) \geq 1 - (4/\delta)^{m(\Sigma,\delta)} \exp(-rT).
$$
Combining these two results yields 
\begin{align*}
  & ~1 - (4/\delta)^{m(\Sigma,\delta)} \exp(-rT)\\
  \leq & ~ \Prob(\Pb \in \tset{\Pb''\in \cP}{\exists \Pb'\in \cP,\, \hat{\Qb} \in \cP \; \st \; \LP_{\cN'}(\hat{\Pb}_T,\hat{\Qb}) \leq \alpha', \; \LP_{\cB(0,\delta)}(\hat{\Qb},\Pb') \leq \delta, \; \KL(\Pb',\Pb'') \leq r }) \\
  = & ~\Prob(\Pb \in \tset{\Pb''\in \cP}{\exists \Pb'\in \cP \; \st \; \LP_{\cN' + \cB(0,\delta)}(\hat{\Pb}_T,\Pb') \leq \alpha + \delta, \; \KL(\Pb',\Pb'') \leq r} ). 
\end{align*}
The final equality follows from  $\mathcal U'\defn \tset{\Pb''\in \cP}{\exists \Pb'\in \cP,\, \hat{\Qb} \in \cP \; \st \; \LP_{\cN'}(\hat{\Pb}_T,\hat{\Qb}) \leq \alpha', \; \LP_{\cB(0,\delta)}(\hat{\Qb},\Pb') \leq \delta, \; \KL(\Pb',\Pb'') \leq r} = \tset{\Pb''\in \cP}{\exists \Pb'\in \cP \; \st \; \LP_{\cN' + \cB(0,\delta)}(\hat{\Pb}_T,\Pb') \leq \alpha + \delta, \; \KL(\Pb',\Pb'') \leq r}$.
Hence it follows that
\begin{align*}
    \Prob\left(
        \sup_{\Pb' \in \cU'} \Eb_{\Pb'}[\loss(x,\txi)] \geq \Eb_{\Pb}[\loss(x,\txi)], \; \forall x \in \cX
    \right)
    \geq 1 - (4/\delta)^{m(\Sigma,\delta)} \exp(-rT).
\end{align*}
we get then the desired result by noticing that $\hat{c}^{\cN,\alpha,r}_{\HR}(x,\Pemp{T}) \geq \sup_{\Pb' \in \cU'} \Eb_{\Pb'}[\loss(x,\txi)]$
as $\cU'$ is a subset of the ambiguity set of $\hat{c}^{\cN,\alpha,r}_{\HR}$.
\end{proof}

\subsection{Proof of Theorem \ref{thm:efficiency-hd}}
\begin{proof}[Proof of Theorem \ref{thm:efficiency-hd}]
    Suppose for the sake of contradiction that there exists a predictor $\hat{c}$ verifying the out-of-sample guarantee and such that there exists $x_0$ and $\hat{\Pb}_T \in \cP$ for some $T$ such that
    \begin{equation}\label{proof eq: conservativness}
    \hat{c}(x_0,\hat{\Pb}_T) < \hat{c}^{\cN,\alpha,r}_{\HR}(x_0,\hat{\Pb}_T).
    \end{equation}
    Denote $\cD = \{\xi_1,\ldots,\xi_T\}$ the data points which constitute the empirical distribution $\hat{\Pb}_T$. Without loss of generality, we may assume $\loss^\cN(x_0,\xi_1) \leq \ldots \leq \loss^\cN(x_0,\xi_T)$.
    Denote $\epsilon := \hat{c}^{\cN,\alpha,r}_{\HR}(x_0,\hat{\Pb}_T) - \hat{c}(x_0,\hat{\Pb}_T) >0$. 
    
    The goal is to construct a data generation process and a malicious adversary (with the prescribed power) for which $\hat{c}$ does not verify the out-of-sample guarantee thereby reaching a contradiction. We will first show that the inequality \eqref{proof eq: conservativness} is still verified with slightly perturbed parameters $\alpha$ and $\cN$. These perturbed parameters will be convenient to construct our malicious adversary.

    Recall that 
    $$
    \hat{c}^{\cN,\alpha,r}_{\HR}(x_0,\Pemp{T})
      =
      \sup \{ 
      \Eb_{\Pb'}[\loss(x_0,\txi)]
      \; : \; 
      \Pb',\hat{\Qb} \in \cP, \;
      \LP_{\cN}(\Pemp{T}, \hat{\Qb}) \leq \alpha, \;
      \KL (\hat{\Qb}||\Pb') \leq r
      \} \quad \forall x \in \cX.
    $$
    Remark that the function $\hat{c}^{\cN,\alpha,r}_{\HR}(x_0,\hat{\Pb}_T)$ is continuous in the parameters $\alpha$ and $r$. Indeed, both the pseudo divergence $\LP_\eta$ and the $\KL$ divergence are jointly convex in the optimization variables $\Pb'$ and $\hat{\Qb}$ and the objective function $\Eb_{\Pb'}[\loss(x_0,\txi)]$ is linear, which implies that $\hat{c}^{\cN,\alpha,r}_{\HR}(x_0,\hat{\Pb}_T)$ is concave in $(\alpha, r)\in \Re^2_+$. As concave functions are continuous in the interior of their domains, $\hat{c}^{\cN,\alpha,r}$ is continuous at $r>0$ and $\alpha>0$. Hence, we can consider $0<r'<r$ and $0<\alpha'<\alpha$, with $\alpha'$ rational so that $\hat{c}^{\cN,\alpha',r'}_{\HR}(x_0,\hat{\Pb}_T) \geq  \hat{c}^{\cN,\alpha,r}_{\HR}(x_0,\hat{\Pb}_T) - \epsilon/4$. Consider an integer $k \geq 1$ such that $Tk\alpha'$ is integer and denote $T' = kT$. Note that $\hat{\Pb}_T$ can also be interpreted as an empirical distribution of $T'$ data points (by duplicating all data points $k$ times) and our adversary can misspecify up to exactly $T'\alpha' \in \integ$ data points, which will be useful later on.

    We now build our data generating distribution $\Pb^\star$ and our adversary.
    Let us consider a worst-case corrupted version of our new dataset defined as
    $$
    \cD' = \{\xi'_i := \xi_i + \argmax_{\delta \in \cN, \xi_i + \delta \in \Sigma} \loss(x_0,\xi_i + \delta)\}_{i=1}^T \cup \{\xi'_{\infty} \in \argmax_{\xi \in \Sigma} \loss(x_0,\xi)\}
    $$
    which is well defined as $\xi \to \loss(x_0,\xi)$ is continuous and $\cN$ is compact. Each of the points $\xi'_i$ is a perturbed version of $\xi_i$ by adversarial noise, and $\xi'_\infty$ is a worst-case data point maximizing the loss. As the empirical distribution $\hat{\Pb}_T$ is finitely supported, from Lemma \ref{lemma:hr:primal:reduction} and Corollary \ref{corollary:interpretation}, it follows that we can consider the maximizers in the HD problem, which write as
    $$
    \hat{\Pb}_T^{\cN,\alpha'} = \sum_{i=\tau +1}^T \delta_{\xi_i'} \Pemp{T}(\xi_i) + (1-\alpha'-\sum_{i=\tau +1}^T \Pemp{T}(\xi_i)) \delta_{\xi'_\tau} + \alpha' \delta_{\xi'_\infty}
    $$
    with $\tau$ the smallest integer such that $1-\alpha' - \sum_{i=\tau+1}^T \Pemp{T}(\xi_i) >0$, and $\Pb^\star$ verifying $\supp(\Pb^\star) \subseteq \cD'$ such that
    $$
    \LP_{\cN}(\Pemp{T},\hat{\Pb}_T^{\cN,\alpha'}) \leq \alpha', \;
    \KL(\hat{\Pb}_T^{\cN,\alpha'},\Pb^\star) \leq r', \; 
    \hat{c}^{\cN,\alpha',r'}_{\HR}(x_0,\Pemp{T}) = \Eb_{\Pb^\star}[\loss(x_0,\txi)].
    $$
    We have $\supp(\hat{\Pb}_T^{\cN,\alpha'}) \subseteq \cD'$, and as $\alpha' T'$ is integer, it follows that $\hat{\Pb}_T^{\cN,\alpha'}(\xi')\in \frac{1}{T'}\integ$ for all $\xi' \in \cD'$. This implies that $\hat{\Pb}_T^{\cN,\alpha'}$ can also be seen as the empirical distribution of $T'$ points in $\Sigma$. 
    
    Finally, to complete building our adversary, we perturb slightly the support of $\Pb^\star$ and $\supp(\hat{\Pb}_T^{\cN,\alpha'})$. As $\xi \to \loss(x_0,\xi)$ is continuous and $\cl(\interior(\mathcal N))=\mathcal N$, we can find points $\xi''_i$ such that $\xi_i'' - \xi_i \in \interior(\cN)$ so that for $\bar{\Pb}^\star$ defined as  $\bar{\Pb}^\star(\xi_i'') = \Pb^\star(\xi'_i)$ for all $i$ and  $\bar{\Pb}^\star(\xi'_\infty) = \Pb^\star(\xi'_\infty)$, we have
    $\Eb_{\bar{\Pb}^\star}[\loss(x_0,\txi)] + \epsilon/4 \geq \Eb_{\Pb^\star}[\loss(x_0,\txi)] = \hat{c}^{\cN,\alpha',r'}_{\HR}(x_0,\hat{\Pb}_T)$. Here, we move each point $\xi'_i$ of $\cD'$ (the support of $\Pb^\star$) into a close point $\xi_i''$. Define $\cN' = \{0\} \cup \{\xi_i'' - \xi_i\} \subset \interior(\cN)$. Define similarly the distribution $\hat{\Pb}_T^{\cN',\alpha'}$ through $\hat{\Pb}_T^{\cN',\alpha'}(\xi''_i) = \hat{\Pb}_T^{\cN,\alpha'}(\xi'_i)$ for all $i$ and $\hat{\Pb}_T^{\cN',\alpha'}(\xi_\infty) = \hat{\Pb}_T^{\cN,\alpha'}(\xi_\infty)$. 
   The distributions $\hat{\Pb}_T^{\cN',\alpha'}$ and $\bar{\Pb}^\star$ preserve the properties of $\hat{\Pb}_T^{\cN,\alpha'}$ and $\Pb^\star$;
    as $\cN' \subset \cN$, we have $\LP_{\cN}(\Pemp{T},\hat{\Pb}_T^{\cN',\alpha'}) \leq \LP_{\cN}(\Pemp{T},\hat{\Pb}_T^{\cN,\alpha'}) \leq \alpha'$ and as the support of $\hat{\Pb}_T^{\cN',\alpha'}$ and $\bar{\Pb}^\star$ moved in exactly the same way, we have $\KL(\hat{\Pb}_T^{\cN',\alpha'},\bar{\Pb}^\star) = \KL(\hat{\Pb}_T^{\cN,\alpha'},\Pb^\star) \leq r$. Hence, our constructed distributions verify
    $$
    \LP_{\cN}(\Pemp{T},\hat{\Pb}_T^{\cN',\alpha'}) \leq \alpha', \;
    \KL(\hat{\Pb}_T^{\cN',\alpha'},\bar{\Pb}^\star) \leq r',
    $$
    and
    \begin{equation}\label{proof eq: disap-ineq}
    \Eb_{\bar{\Pb}^\star}[\loss(x_0,\txi)] \geq \hat{c}^{\cN,\alpha',r'}_{\HR}(x_0,\Pemp{T}) - \epsilon/4 \geq \hat{c}^{\cN,\alpha,r}_{\HR}(x_0,\Pemp{T}) - \epsilon/2 > \hat{c}(x_0, \hat{\Pb}_T).
    \end{equation}
    
    We are now ready to construct our data generation process and our adversary.
    Consider the data generation process of out-of-sample distribution $\bar{\Pb}^\star$ and denote $\hat{\Qb}_{t}^\star$ its empirical distribution for all $t \in \integ$. In particular, $\hat{\Qb}_{t}^\star$ is a random variable. As $\hat{\Pb}_T^{\cN',\alpha'}$ shares the same support as $\bar{\Pb}^\star$, and $\hat{\Pb}_T^{\cN',\alpha'}(\xi) \subset \frac{1}{T'} \integ$, the distribution $\hat{\Pb}_T^{\cN',\alpha'}$ can be seen as a potential realization of the empirical distribution of $\hat{\Qb}_{t}^\star$. As $\LP_{\cN'}(\hat{\Pb}_T,\hat{\Pb}_T^{\cN',\alpha'}) \leq \alpha'$, there exists an adversary following Corollary that can perturb $\hat{\Pb}_T^{\cN',\alpha'}$ into $\hat{\Pb}_T$ by noise limited to the set $\cN'$ and misspecfication of less than $\alpha'$. For this adversary, denote $\hat{\Pb}_t^\star$ the corrupted distribution from the clean samples $\hat{\Qb}_t^\star$ for all $t$.
    We have
    \begin{align*}
        \Prob\left(\hat{c}(x_0,\hat{\Pb}^\star_{mT'}) < \Eb_{\bar{\Pb}^\star}[\loss(x_0,\txi)] \right)
        \geq 
        \Prob\left(
        \hat{\Pb}^\star_{mT'} 
        =
        \hat{\Pb}_T
        \right)
        \geq
        \Prob\left(
        \hat{\Qb}_{mT'}^\star = \hat{\Pb}_T^{\cN',\alpha'}
        \right), \; \forall m \geq 1
    \end{align*}
    Indeed, for any data size $mT'$, if the empirical distribution from $\bar{\Pb}^\star$ realizes as $\hat{\Pb}_T^{\cN',\alpha'}$, then the considered adversary can corrupt $\hat{\Qb}^\star_{mT'} = \hat{\Pb}_T^{\cN',\alpha'}$ into 
    $\hat{\Pb}^\star_{mT'} 
    =
    \hat{\Pb}_T$. Then, inequality \eqref{proof eq: disap-ineq} ensures that 
    $
    \Eb_{\bar{\Pb}^\star}[\loss(x_0,\txi)] 
    > 
    \hat{c}(x_0, \hat{\Pb}_T) = \hat{c}(x_0,\hat{\Pb}^\star_{mT'})
    $.

    If suffices now to lower bound the probability of the event $\hat{\Qb}_{mT'}^\star = \hat{\Pb}_T^{\cN',\alpha'}$.
    As $\bar{\Pb}^\star$ is a discrete distribution supported on less than $T+1$ points, and as $\hat{\Pb}_T^{\cN',\alpha'}(\xi) \subset \frac{1}{T'} \integ$, Theorem 11.1.4 in \citet{cover1991information} implies
    $$
    \Prob(\hat{\Qb}_{mT'}^\star = \hat{\Pb}_T^{\cN',\alpha'})
    \geq 
    \frac{1}{(mT'+1)^{T+1}} 
    \exp\left(
    -mT'\KL(\hat{\Pb}_T^{\cN',\alpha'},\bar{\Pb}^\star)
    \right)
    \geq 
    \frac{1}{(mT'+1)^{T+1}} 
    \exp\left(
    -mT'r'
    \right), \; \forall m\geq 1.
    $$
    Taking $m\to \infty$ leads then to
    $$
    \limsup_{t \to \infty} \frac{1}{t} \log \Prob \left( \hat{c}(x_0,\hat{\Pb}^\star_t) < \Eb_{\Pb}[\loss(x_0,\txi)]\right) \geq -r' > -r,
    $$
    a contradiction with $\hat{c}$ veriying the out-of-sample guarantee for out-of-sample distribution $\bar{\Pb}^\star$ and the considered adversary.
\end{proof}

\subsection{Proof of Theorem \ref{thm: HD finite formulation}}

\begin{proof}[Proof of Theorem \ref{thm: HD finite formulation}]
  Following Theorem \ref{thm: dual rep HD} and the change of variables $w_k'=w_k-\lambda$ we have
  \begin{align}
    \label{eq:dual-cov}
    \hat{c}^{\cN,\alpha,r}_{\HR}(x,\Pemp{T}) & = \left\{
                                                       \begin{array}{rl}
                                                         \inf & \sum_{k\in[K]} w'_k \Pemp{T}(\xi_k) + \lambda r + \beta \alpha+ \eta\\[0.5em]
                                                         \st &  w\in \Re^K,\; \lambda \geq 0, \; \beta \geq 0,\; \eta \in \Re, \\[0.5em]
                                                              & w'_k \geq \lambda \left(\log \left( \frac{\lambda}{\eta - \loss^{\cN}(x,\xi_k)}\right) -1 \right) \quad \forall k \in [K],\\[0.5em]
                                                              & w'_k \geq  \lambda \left(\log \left( \frac{\lambda}{\eta - \max_{\xi \in \Sigma} \loss(x,\xi)}\right)-1\right) -\beta \quad \forall k \in [K],\\[0.5em]
                                                              & \eta \geq \max_{\xi \in \Sigma} \loss(x,\xi).
                                                       \end{array}\right.
  \end{align}
  Introduce now the associated Lagrangian function
  \begin{align*}
     L(w', \lambda, \beta, \eta; \hat q,  s)
    \defn & \textstyle\sum_{k\in[K]} w'_k \Pemp{T}(\xi_k) + \lambda r + \beta \alpha+ \eta + \sum_{k\in [K]} \hat q_k\left( \lambda \left(\log \left( \frac{\lambda}{\eta - \loss^{\cN}(x,\xi_k)}\right) -1 \right) - w'_k \right)\\
    & \textstyle\quad + \sum_{k\in [K]}  s_k\left(\lambda \left(\log \left( \frac{\lambda}{\eta - \max_{\xi \in \Sigma} \loss(x,\xi)}\right)-1\right) -\beta-w_k'\right)\\
    = & \textstyle \sum_{k\in[K]} w'_k\left( \Pemp{T}(\xi_k) - \hat q_k - s_k \right) + \beta \left(\alpha - \sum_{k\in [K]} \hat s_k \right)+\lambda r +\eta\\
    & \textstyle \quad +\sum_{k\in[K]} \hat q_k  \lambda \left(\log \left( \frac{\lambda}{\eta - \loss^{\cN}(x,\xi_k)}\right) -1 \right) +\sum_{k\in[K]} s_k \lambda \left(\log \left( \frac{\lambda}{\eta - \max_{\xi \in \Sigma} \loss(x,\xi)}\right)-1\right)
  \end{align*}
  and dual function
  \begin{align*}
    & g(\hat q,  s)\\
    \defn & \inf \set{L(w, \lambda, \beta, \eta; \hat q,  s)}{w'\in \Re^K, \lambda\in \Re_+, \beta\in \Re_+, \eta\geq \max_{\xi \in \Sigma} \loss(x,\xi)}\\
    = & \textstyle \sum_{k\in[K]}\chi_{-\infty}(\Pemp{T}(\xi_k) = \hat q_k + s_k) + \chi_{-\infty}(\sum_{k\in [K]}  s_k\leq \alpha) \\
    & \quad +\textstyle
      \left\{
      \begin{array}{rl}
        \inf & \lambda r +\eta+\sum_{k\in[K]} \hat q_k  \lambda \left(\log \left( \frac{\lambda}{\eta - \loss^{\cN}(x,\xi_k)}\right) -1 \right) +\sum_{k\in[K]} s_k \lambda \left(\log \left( \frac{\lambda}{\eta - \max_{\xi \in \Sigma} \loss(x,\xi)}\right)-1\right) \\
        \st & \lambda\in \Re_+, \eta\geq \max_{\xi \in \Sigma} \loss(x,\xi)
      \end{array}
      \right.
  \end{align*}
  where $\chi_{-\infty}(S)=0$ if $S$ is true and $-\infty$ otherwise.
  We remark that the minimization problem in Equation \eqref{eq:dual-cov} satisfies the Slater constraint qualification condition. 
  As Slater's constraint qualification condition is met we have strong duality and the dual optimal value is attained \citep[Proposition 5.3.1]{bertsekas2009convex}. Hence,
  \begingroup
  \allowdisplaybreaks
  \begin{align}
    & \hat{c}^{\cN,\alpha,r}_{\HR}(x,\Pemp{T})\\
    = & \max\set{g(\hat q, s)}{\hat q \in \Re^K_+,s \in \Re^K_+}\nonumber\\
    = & \left\{
        \begin{array}{rl}
          \max & \inf \{ \lambda r +\eta+\sum_{k\in[K]} \hat q_k  \lambda \left(\log \left( \frac{\lambda}{\eta - \loss^{\cN}(x,\xi_k)}\right) -1 \right)  \\
               & \hspace{3em} +\sum_{k\in[K]} s_k \lambda \left(\log \left( \frac{\lambda}{\eta - \max_{\xi \in \Sigma} \loss(x,\xi)}\right)-1\right) : \eta\geq \max_{\xi \in \Sigma} \loss(x,\xi), ~\lambda\in\Re_+ \}\\
          \st & \hat q \in \Re^K_+,s \in \Re^K_+,\\
               & \Pemp{T}(\xi_k) = \hat q_k + s_k \quad \forall k\in [K],\\
               &\sum_{k\in [K]}  s_k\leq \alpha
        \end{array}
        \right.\nonumber\\
    = & \left\{
        \begin{array}{rl}
          \max & \inf \{ \lambda r +\eta+\sum_{k\in[K]} \hat q_k  \sup_{p'_k\geq 0} (\loss^{\cN}(x,\xi_k) - \eta)\frac{p'_k}{\hat q_k} +\lambda\log\left(\frac{p'_k}{\hat q_k}\right)  \\
               & \hspace{3em} +\left(\sum_{k\in[K]} s_k\right) \sup_{p'_{K+1} \geq 0} (\max_{\xi \in \Sigma} \loss(x,\xi) - \eta)\frac{p'_{K+1}}{\sum_{k\in[K]} s_k} +\lambda\log\left(\frac{p_{K+1}'}{\sum_{k\in[K]} s_k}\right) \\
               & \hspace{6em} : \eta\geq \max_{\xi \in \Sigma} \loss(x,\xi), ~\lambda\in\Re_+ \}\\
          \st & \hat q \in \Re^K_+,s \in \Re^K_+,\\
               & \Pemp{T}(\xi_k) = \hat q_k + s_k \quad \forall k\in [K],\\
               &\sum_{k\in [K]}  s_k\leq \alpha
        \end{array}
        \right.\nonumber\\
    = & \left\{
        \begin{array}{rl}
          \max & \inf \{ \lambda r +\eta+\sum_{k\in[K]} \hat q_k \left[(\loss^{\cN}(x,\xi_k) - \eta)\frac{p'_k}{\hat q_k} +\lambda\log\left(\frac{p'_k}{\hat q_k}\right) \right] \\
               & \hspace{3em} +\left(\sum_{k\in[K]} s_k\right) \left[(\max_{\xi \in \Sigma} \loss(x,\xi) - \eta)\frac{p'_{K+1}}{\sum_{k\in[K]} s_k} +\lambda\log\left(\frac{p_{K+1}'}{\sum_{k\in[K]} s_k}\right) \right] \\
               & \hspace{6em}: \eta\geq \max_{\xi \in \Sigma} \loss(x,\xi), ~\lambda\in\Re_+ \}\\
          \st & \hat q \in \Re^K_+,s \in \Re^K_+, p'\in \Re_+^{K+1},\\
               & \Pemp{T}(\xi_k) = \hat q_k + s_k \quad \forall k\in [K],\\
               &\sum_{k\in [K]}  s_k\leq \alpha
        \end{array}
        \right.\nonumber\\
    = & \left\{
        \begin{array}{rl}
          \max & \sum_{k=1}^N \ell^\cN(\xi_k) p_k' +  (1-\sum_{k\in[K]}p'_{k})\max_{\xi\in\Sigma}\ell(x, \xi)\\
          \st & \hat q \in \Re^K_+,s \in \Re^K_+, p'\in \Re_+^{K+1},\\
               & \Pemp{T}(\xi_k) = \hat q_k + s_k \quad \forall k\in [K],\\
               & \sum_{k\in[K]}p'_k + p_{K+1}\leq 1,\\
               &\sum_{k\in [K]}  s_k\leq \alpha,\\          
               & \sum_{k\in [K]} \hat q_k \log(\frac{\hat q_k}{p'_k})+(\sum_{k\in[K]} s_k)\log\left(\frac{\sum_{k\in[K]} s_k}{p_{K+1}}\right)\leq r.
        \end{array}\right.\nonumber\\
        = & \left\{
        \begin{array}{rl}
          \max & \sum_{k=1}^N \ell^\cN(\xi_k) p_k' +  p_{K+1}\max_{\xi\in\Sigma}\ell(x, \xi)\\
          \st & \hat q \in \Re^K_+,s \in \Re^K_+, p'\in \Re_+^{K+1},\\
               & \Pemp{T}(\xi_k) = \hat q_k + s_k \quad \forall k\in [K],\\
               & \sum_{k\in[K+1]}p'_k= 1,\\
               &\sum_{k\in [K]}  s_k\leq \alpha,\\          
               & \sum_{k\in [K]} \hat q_k \log(\frac{\hat q_k}{p'_k})+(\sum_{k\in[K]} s_k)\log\left(\frac{\sum_{k\in[K]} s_k}{p_{K+1}}\right)\leq r.
        \end{array}\right.\nonumber\\
    = & \left\{
        \begin{array}{rl}
          \max & \sum_{k=1}^N \ell^\cN(\xi_k) p_k' +  p_{K+1}\max_{\xi\in\Sigma}\ell(x, \xi) \\
          \st & \hat q \in \Re^{K+1}_+,s \in \Re^K_+, p'\in \Re_+^{K+1},\\
               & \Pemp{T}(\xi_k) = \hat q_k + s_k \quad \forall k\in [K],\\
               & \sum_{k\in[K+1]}p'_k=1,\\
               & \sum_{k\in[K+1]}\hat q_k=1,\\
               &\sum_{k\in [K]}  s_k\leq \alpha,\\
               & \sum_{k\in[K+1]} \hat q_k \log(\frac{\hat q_k}{p'_k})\leq r.
        \end{array}\right.\nonumber
  \end{align}
  \endgroup
  Here the third equality follows from the standard convex conjugate identifies. First, observe that
  \(
    \sup_{r\geq 0} a r + \log(r) = -(1+\log(-a))
  \)
  for $a\leq 0$. Hence, for every $k\in[K]$ we have
  \[
    \textstyle-\lambda\left(1+\log\left(-\frac{\eta-\loss^{\cN}(x,\xi_k)}{\lambda} \right)\right) = \sup_{r_k\geq 0} (\eta-\loss^{\cN}(x,\xi_k))r_k+\lambda\log(r)
  \]
  and
  \[
    \textstyle-\lambda\left(1+\log\left(-\frac{\eta-\max_{\xi\in\Sigma}\ell(x, \xi)}{\lambda} \right)\right) = \sup_{r_{K+1}\geq 0} (\eta-\max_{\xi\in\Sigma}\ell(x, \xi))r_{K+1}+\lambda\log(r)
  \]
  from which the second equality follows where we choose $r_k=\tfrac{p'_k}{\hat q_k}$ and $r_{K+1}=\tfrac{p'_{K+1}}{(\sum_{k\in[K]} s_k)}$ for some vector $p\in\Re_+^{K+1}$. The fourth equality follows readily from a standard minimax theorem of \citet[Proposition 5.5.7]{bertsekas2009convex} with respect to the saddle point function
  \begin{align*}
    L(\eta, \lambda, p') \defn & \textstyle\lambda r +\eta+\sum_{k\in[K]} \hat q_k \left[(\loss^{\cN}(x,\xi_k) - \eta)\frac{p'_k}{\hat q_k} +\lambda\log\left(\frac{p'_k}{\hat q_k}\right)\right]\\
                               & \textstyle\quad +\left(\sum_{k\in[K]} s_k\right) \left[(\max_{\xi \in \Sigma} \loss(x,\xi) - \eta)\frac{p'_{K+1}}{\sum_{k\in[K]} s_k} +\lambda\log\left(\frac{p_{K+1}'}{\sum_{k\in[K]} s_k}\right) \right]\\
    = & \textstyle \lambda\left(r - \sum_{k\in[K]}\hat q_k \log\left(\frac{\hat q_k}{p'_k} \right)- (\sum_{k\in[K]} s_k)\log\left(\frac{\sum_{k\in[K]} s_k}{p'_{K+1}}\right)\right)+\eta(1-\sum_{k\in[K+1]} p'_k)\\
    & \textstyle\quad +\sum_{k\in[K]}p_k\loss^{\cN}(x,\xi_k)+p_{K+1}\max_{\xi \in \Sigma} \loss(x,\xi)
  \end{align*}
  where we may exploit $r>0$. The penultimate inequality follows from the fact that we may choose without loss of optimality
  \(
    p_{K+1} = 1-\sum_{k\in [K]}p_k
  \)
  as the logarithm is an increasing function. The final equality follows similarly from defining $q_{K+1}=1-\sum_{k\in[K]} q_k$ and observing that we have
  \(
  \textstyle \sum_{k\in K}\Pemp{T}(\xi_k) = 1 = \sum_{k\in K}\hat q_k + \sum_{k\in K}s_k.
  \)
\end{proof}

\subsection{Proof of Lemma \ref{lemma:hr:primal:reduction}}

  \begin{proof}[Proof of Lemma \ref{lemma:hr:primal:reduction}]
    First, observe that
    \begingroup
    \allowdisplaybreaks
    \begin{align*}
      & \hat{c}^{\cN,\alpha,r}_{\HR}(x,\Pemp{T})\\
      \defn & \sup \set{\Eb_{\Pb'}[\loss(x,\txi)]}{\Pb'\in \cP,\; \hat{\Qb} \in \cP, \; \LP_{\cN}(\Pemp{T}, \hat{\Qb}) \leq \alpha, \; \KL (\hat{\Qb}||\Pb') \leq r}\\
      = & \sup\set{\sup \set{\Eb_{\Pb'}[\loss(x,\txi)]}{\Pb'\in \cP, \;\KL (\hat{\Qb}||\Pb') \leq r}}{\hat{\Qb} \in \cP, ~\LP_{\cN}(\Pemp{T}, \hat{\Qb}) \leq \alpha}\\
      = & \sup\Bigg\{\inf \set{\int \lambda \log\left(\frac{\lambda}{\eta - \loss(x,\xi)}\right)\, \d \hat{\Qb}(\xi)+(r-1)\lambda+\eta}{\lambda\geq 0,~\eta \geq \max_{\xi \in \Sigma} \loss(x,\xi)}\\
      & \hspace{30em} :\; \hat{\Qb} \in \cP, ~\LP_{\cN}(\Pemp{T}, \hat{\Qb}) \leq \alpha\Bigg\}\\
      \leq & \inf \Bigg\{\sup \set{\int \lambda \log\left(\frac{\lambda}{\eta - \loss(x,\xi)}\right)\, \d \hat{\Qb}(\xi)+(r-1)\lambda+\eta}{\hat{\Qb} \in \cP, ~\LP_{\cN}(\Pemp{T}, \hat{\Qb}) \leq \alpha}\\
      & \hspace{30em} : \; \lambda\geq 0,~\eta \geq \max_{\xi \in \Sigma} \loss(x,\xi) \Bigg\}.
    \end{align*}
    \endgroup
    Here, the second equality follows from Lemma \ref{lemma:kl-solution}. Second, we remark that for any $\lambda\geq 0$ and $\eta \geq \max_{\xi \in \Sigma} \loss(x,\xi)$ we have that
    \(
    a \mapsto \lambda \log\left(\tfrac{\lambda}{(\eta - a)}\right)
    \)
    in an increasing function. In particular, denote with $\{\xi_1,\ldots,\xi_K\}$ the support of $\Pemp{T}\in \cP$ ordered to that $\loss^\cN(x, \xi_1)\leq \dots \leq \loss^\cN(x, \xi_K)$ then clearly we also have $\log\left(\tfrac{\lambda}{(\eta - \loss^\cN(x, \xi_1))}\right)\leq \dots \leq \log\left(\tfrac{\lambda}{(\eta - \loss^\cN(x, \xi_K))}\right)$. Following now Theorem \ref{thm: LP-DRO expression for empirical} we have for all $\lambda\geq 0$ and $\eta \geq \max_{\xi \in \Sigma} \loss(x,\xi)$ that
    \[
      \sup \set{\int \!\lambda \log\left(\frac{\lambda}{\eta - \loss(x,\xi)}\right) \d \hat{\Qb}(\xi)\!}{\!\hat{\Qb} \in \cP, \,\LP_{\cN}(\Pemp{T}, \hat{\Qb}) \leq \alpha} = \E{\Pemp{T}^{\cN, \alpha}\!\!}{\lambda \log\left(\frac{\lambda}{\eta - \loss(x,\txi)}\right)}.
    \]
    with worst-case distribution $\Pemp{T}^{\cN, \alpha}\in \cP$ satisfying $\LP_{\cN}(\Pemp{T}, \Pemp{T}^{\cN, \alpha})\leq \alpha$ and given explicitly in Equation \eqref{eq:wc-distribution-noise-misspecification}. We observe that the  worst-case distribution $\Pemp{T}^{\cN, \alpha}\in \cP$ is not a function of $\eta$ nor $\lambda$.
    \citet[Proposition 5]{van2021data} now implies that
    \begin{align*}
      \hat{c}^{\cN,\alpha,r}_{\HR}(x,\Pemp{T}) \leq & ~\textstyle \inf \set{\E{\Pemp{T}^{\cN, \alpha}}{\lambda \log\left(\frac{\lambda}{\eta - \loss(x,\txi)}\right)} + (r-1)\lambda+\eta} {\lambda\geq 0,~\eta \geq \max_{\xi \in \Sigma} \loss(x,\xi) }\\
      \leq &~ \hat{c}^r_{\mathrm{KL}}(x,\Pemp{T}^{\cN,\alpha}) = \E{\Pemp{T}^{\cN,\alpha, r}}{\ell(x, \txi)}.
    \end{align*}
    for some worst-case distribution $\Pemp{T}^{\cN,\alpha, r}\in \cP$ satisfying $\KL (\Pemp{T}^{\cN, \alpha}||\Pemp{T}^{\cN, \alpha, r}) \leq r$. Clearly, as our defined worst-case distributions $\Pemp{T}^{\cN, \alpha}\in \cP$ and $\Pemp{T}^{\cN,\alpha, r}\in \cP$ satisfy $\LP_{\cN}(\Pemp{T}, \Pemp{T}^{\cN, \alpha})\leq \alpha$ and $\KL (\Pemp{T}^{\cN, \alpha}||\Pemp{T}^{\cN, \alpha, r}) \leq r$, respectively. Hence, the distributions $\hat\Qb=\Pemp{T}^{\cN, \alpha}$ and $\Pb'=\Pemp{T}^{\cN, \alpha, r}$ are feasible in the maximization problem characterizing the HD predictor in Equation (\ref{eq: HD predictor}). Hence, we have
    $$\hat{c}^{\cN,\alpha,r}_{\HR}(x,\Pemp{T}) \geq \E{\Pemp{T}^{\cN,\alpha, r}}{\ell(x, \txi)}= \hat{c}^r_{\mathrm{KL}}(x,\Pemp{T}^{\cN,\alpha})$$
    from which the claim follows immediately.
  \end{proof}

\subsection{Generalization and Proof of Theorem \ref{thm: dual rep HD}}
\label{App: Dual HD}

\begin{theorem}[General Dual Formulation]
  Let $\hat{\Pb} \in \cP$. For all $x \in \cX$, the HD robust predictor \eqref{eq: HD predictor} admits for all $r>0$ the dual representation
  \begin{align*}
    \hat{c}^{\cN,\alpha,r}_{\HR}(x,\hat{\Pb}) =
    \left\{
    \begin{array}{rl}
      \inf & \int w(\xi) \,\d \hat{\Pb}(\xi) + \lambda (r-1) + \beta \alpha+ \eta\\[0.5em]
      \st &  w: \Sigma \rightarrow \Re,\; \lambda \geq 0, \; \beta \geq 0,\; \eta \in \Re, \\[0.5em]
           & w(\xi) \geq \lambda\log \left( \frac{\lambda}{\eta - \loss^{\cN}(x,\xi)}\right),~ w(\xi) \geq  \lambda \log \left( \frac{\lambda}{\eta - \max_{\xi' \in \Sigma} \loss(x,\xi')}\right) -\beta \quad \forall \xi \in \Sigma.
    \end{array}\right.
  \end{align*}
\end{theorem}

\begin{proof}
  We have that
  \begingroup
  \allowdisplaybreaks
  \begin{align*}
    & \hat{c}^{\cN,\alpha,r}_{\HR}(x,\hat{\Pb})\\
    \defn & \sup \set{\Eb_{\Pb'}[\loss(x,\txi)]}{\Pb'\in \cP,\; \hat{\Qb} \in \cP, \; \LP_{\cN}(\hat{\Pb}, \hat{\Qb}) \leq \alpha, \; \KL (\hat{\Qb}||\Pb') \leq r}\\
    = & \sup\set{\sup \set{\Eb_{\Pb'}[\loss(x,\txi)]}{\Pb'\in \cP, \;\KL (\hat{\Qb}||\Pb') \leq r}}{\hat{\Qb} \in \cP, ~\LP_{\cN}(\hat{\Pb}, \hat{\Qb}) \leq \alpha}\\
    = & \sup\Bigg\{\inf \set{\int \lambda \log\left(\frac{\lambda}{\eta - \loss(x,\xi)}\right)\, \d \hat{\Qb}(\xi)+(r-1)\lambda+\eta}{\lambda\geq 0,~\eta \geq \max_{\xi \in \Sigma} \loss(x,\xi)}\\
    & \hspace{30em} :\; \hat{\Qb} \in \cP, ~\LP_{\cN}(\hat{\Pb}, \hat{\Qb}) \leq \alpha\Bigg\}\\
    = & \lim_{\epsilon \downarrow 0}\sup\Bigg\{ \inf \set{\int \lambda \log\left(\frac{\lambda}{\eta - \loss(x,\xi)}\right)\, \d \hat{\Qb}(\xi)+(r-1)\lambda+\eta}{\lambda\geq 0,~\eta \geq \max_{\xi \in \Sigma} \loss(x,\xi)+\epsilon} \\
    & \hspace{30em} : \; \hat{\Qb} \in \cP, ~\LP_{\cN}(\hat{\Pb}, \hat{\Qb}) \leq \alpha \Bigg\}\\
    = & \lim_{\epsilon \downarrow 0} \inf \Bigg\{\sup \set{\int \lambda \log\left(\frac{\lambda}{\eta - \loss(x,\xi)}\right)\, \d \hat{\Qb}(\xi)+(r-1)\lambda+\eta}{\hat{\Qb} \in \cP, ~\LP_{\cN}(\hat{\Pb}, \hat{\Qb}) \leq \alpha}\\
    & \hspace{27.8em} : \; \lambda\geq 0,~\eta \geq \max_{\xi \in \Sigma} \loss(x,\xi) +\epsilon \Bigg\}
  \end{align*}
  Here, the second equality follows from Lemma \ref{lemma:kl-solution}.
  The third inequality follows from the fact that
  \begin{align*}
    & \inf \set{\int \lambda \log\left(\frac{\lambda}{\eta - \loss(x,\xi)}\right)\, \d \hat{\Qb}(\xi)+(r-1)\lambda+\eta}{\lambda\geq 0,~\eta \geq \max_{\xi \in \Sigma} \loss(x,\xi)}\\
    \leq & \inf \set{\int \lambda \log\left(\frac{\lambda}{\eta - \loss(x,\xi)}\right)\, \d \hat{\Qb}(\xi)+(r-1)\lambda+\eta}{\lambda\geq 0,~\eta \geq \max_{\xi \in \Sigma} \loss(x,\xi)+\epsilon}\\
    \leq & \inf \set{\int \lambda \log\left(\frac{\lambda}{\eta - \loss(x,\xi)}\right)\, \d \hat{\Qb}(\xi)+(r-1)\lambda+\eta}{\lambda\geq 0,~\eta \geq \max_{\xi \in \Sigma} \loss(x,\xi)}+\epsilon.
  \end{align*}
  for any $\epsilon>0$.
  We will now show that the fourth equality follows from the minimax theorem of \citet{sion1958general}. Let
  \[
    L(\lambda,\eta, \hat \Qb) = \int \lambda \log\left(\frac{\lambda}{\eta - \loss(x,\xi)}\right)\, \d \hat{\Qb}(\xi)+(r-1)\lambda+\eta
  \]
  be our saddle point function and remark that this function is convex in $(\lambda, \eta)$ for any fixed $\hat \Qb$ and concave (linear) in $\hat \Qb$ for any fixed $(\lambda, \eta)$. As shown in the proof of \citet[Proposition 5]{van2021data} the variables $(\lambda, \eta)$ can be restricted to the compact set
  \[
    \eta \leq M_3\defn\frac{\max_{\xi\in\Sigma}\ell(x, \xi)+\epsilon-\exp(-r)\min_{\xi\in\Sigma}\ell(x, \xi)}{1-\exp(-r)} \quad {\rm{and}} \quad \lambda \leq M_r\defn\exp(\log(M_3 -\textstyle\min_{\xi\in \Sigma}\ell(x, \xi))-r)
  \]
  without loss of optimality. Furthermore, we have the dual representation $L(\lambda,\eta, \hat \Qb)= \max_{\Pb'\in \cP_+} \int \ell(x, \xi) \, \d\Pb'+\eta(1-\int \, \d\Pb')+\lambda(r-\KL (\hat{\Qb}||\Pb'))$ \citep{van2021data} where $\cP_+$ denotes here all nonnegative Borel measures on $\Sigma$ establishing the lower semicontinuity of the function $L(\lambda,\eta, \hat \Qb)$ in $(\lambda, \beta)$ for any $\hat \Qb$.
  The minimax theorem of \citet{sion1958general} applies as $L(\lambda,\eta, \hat \Qb)$ is continuous in $\hat \Qb$ as the integrant
  \[
    \xi\mapsto \lambda \log\left(\frac{\lambda}{\eta - \loss(x,\xi)}\right)
  \]
  is continuous and bounded for any $\lambda$ and $\eta$ such that $\eta - \loss(x,\xi)\geq \epsilon>0$ for all $\xi\in \Sigma$.
  
  Finally,
  \begin{align*}
    & \hat{c}^{\cN,\alpha,r}_{\HR}(x,\hat{\Pb})\\
    = & \lim_{\epsilon \downarrow 0}  \left\{
        \begin{array}{rl}
          \inf & (1-\alpha)\beta+ \int \max\left(\lambda \log\left(\frac{\lambda}{\eta - \loss^\cN(x,\xi)}\right)-\beta, 0\right) \, \d \Pb'(\xi) +\alpha \lambda\log\left(\tfrac{\lambda}{(\eta - \max_{\xi\in\Sigma} \loss(x,\xi))}\right) \\
               & \qquad +(r-1)\lambda+\eta\\
          \st & \beta\leq \lambda\log\left(\tfrac{\lambda}{(\eta - \max_{\xi\in\Sigma} \loss(x,\xi))}\right),~\lambda\geq 0,~\eta \geq \max_{\xi \in \Sigma} \loss(x,\xi)+\epsilon.
        \end{array}
        \right.\\
    = & \lim_{\epsilon \downarrow 0}  \left\{
        \begin{array}{rl}
          \inf & \int \max\left(\lambda \log\left(\frac{\lambda}{\eta - \loss^\cN(x,\xi)}\right), \beta\right) \, \d \Pb'(\xi)+(r-1)\lambda-\alpha\beta+\eta+\alpha \lambda\log\left(\tfrac{\lambda}{(\eta - \max_{\xi\in\Sigma} \loss(x,\xi))}\right) \\
          \st & \beta\leq \lambda\log\left(\tfrac{\lambda}{(\eta - \max_{\xi\in\Sigma} \loss(x,\xi))}\right),~\lambda\geq 0,~\eta \geq \max_{\xi \in \Sigma} \loss(x,\xi)+\epsilon.
        \end{array}
        \right.\\
    = & \lim_{\epsilon \downarrow 0} \left\{
        \begin{array}{rl}
          \inf & \int \max\left(\lambda \log\left(\frac{\lambda}{\eta - \loss^\cN(x,\xi)}\right), \lambda\log\left(\frac{\lambda}{(\eta - \max_{\xi\in\Sigma} \loss(x,\xi))}\right)-\beta'\right) \, \d \Pb'(\xi)+(r-1)\lambda+\alpha\beta'+\eta \\
          \st & \beta'\geq 0,~\lambda\geq 0,~\eta \geq \max_{\xi \in \Sigma} \loss(x,\xi)+\epsilon.
        \end{array}
        \right.\\
    = & \left\{
        \begin{array}{rl}
          \inf & \int \max\left(\lambda \log\left(\frac{\lambda}{\eta - \loss^\cN(x,\xi)}\right), \lambda\log\left(\frac{\lambda}{(\eta - \max_{\xi\in\Sigma} \loss(x,\xi))}\right)-\beta'\right) \, \d \Pb'(\xi)+(r-1)\lambda+\alpha\beta'+\eta \\
          \st & \beta'\geq 0,~\lambda\geq 0,~\eta \geq \max_{\xi \in \Sigma} \loss(x,\xi).
        \end{array}
        \right.
  \end{align*}
  \endgroup
  The first equality follows from Theorem \ref{thm: LP-DRO expression} where we remark that
  \[
  \max_{n\in\cN, \xi-n\in \Sigma} \lambda\log\left(\tfrac{\lambda}{(\eta - \loss(x,\xi-n))}\right) = \lambda\log\left(\tfrac{\lambda}{(\eta - \loss^\cN(x,\xi))}\right)
  \]
  and that we can restrict
  $\beta\leq \max_{\xi\in\Sigma}\lambda\log\left(\tfrac{\lambda}{(\eta - \loss^\cN(x,\xi))}\right) = \lambda\log\left(\tfrac{\lambda}{(\eta - \max_{\xi\in\Sigma} \loss(x,\xi))}\right)$.
  The second equality follows from $\beta+\max\left(\lambda \log\left(\tfrac{\lambda}{(\eta - \loss^\cN(x,\xi))}\right)-\beta, 0\right) = \max\left(\lambda \log\left(\tfrac{\lambda}{(\eta - \loss^\cN(x,\xi))}\right), \beta\right)$. The third equality follows from the change of variable $\beta'=\lambda\log\left(\tfrac{\lambda}{(\eta - \max_{\xi\in\Sigma} \loss(x,\xi))}\right)-\beta$. The final inequality follows from
  the lower semicontinuity of the objective function due to the representation
  \begin{align*}
    & \int \max\left(\lambda \log\left(\frac{\lambda}{\eta - \loss^\cN(x,\xi)}\right), \lambda\log\left(\frac{\lambda}{(\eta - \max_{\xi\in\Sigma} \loss(x,\xi))}\right)-\beta'\right) \, \d \Pb'(\xi)+(r-1)\lambda+\alpha\beta'+\eta\\
   = & \sup \set{L(\lambda, \eta, \hat \Qb)}{\hat{\Qb} \in \cP, ~\LP_{\cN}(\hat{\Pb}, \hat{\Qb}) \leq \alpha}
  \end{align*}
  where the lower semicontinuity of $L(\lambda, \eta, \hat \Qb)$ in $(\lambda,\eta)$ was established before.
\end{proof}

\section{Omitted proofs of Section \ref{sec: random corr}}\label{App: proofs of random corr}

\subsection{Proof of Theorem \ref{thm: HR robustness}}\label{App: proof of robustness HR}

\begin{proof}[Proof of Theorem \ref{thm: HR robustness}]
We distinguish two cases.
  
\textbf{Case I: $\Eb_{\Pb}[\loss(x,\txi)] = \max_{\xi \in \Sigma} \loss(x,\xi)$.} In this case, the distribution $\Pb$ is supported on the set $\Sigma_\infty=\argmax_{\xi \in \Sigma} \loss(x,\xi)$.
Let $\Sigma_\infty^\cN=((\Sigma_\infty+\cN) \cap \Sigma)\setminus \Sigma_\infty$.
As before, we denote with $\hat{\Pb}_T$ the empirical distribution of the data and we define $\hat \alpha^\infty_T\defn \hat{\Pb}_T(\Sigma_\infty)$, $\hat \alpha^\cN_T\defn \hat{\Pb}_T(\Sigma_\infty^\cN)$ and $\hat \alpha^\Sigma_T\defn \hat{\Pb}_T(\Sigma\setminus (\Sigma_\infty\cup \Sigma_\infty^\cN))$.
Intuitively, $\hat \alpha^\infty_T$ denotes the fraction of all observed data points in the set $\Sigma_\infty$, $\hat \alpha^\cN_T$ the fraction of all observed data points close but not in the set $\Sigma_\infty$ and $\hat \alpha^\Sigma_T$ the remaining fraction of samples far from $\Sigma_\infty$.
Similarly, we let $\Qb$ denote the distribution of the noisy data $\txi^c$ and define $\alpha^\infty=\Qb(\Sigma_\infty)$,  $\alpha^\cN\defn \Qb(\Sigma^\cN_\infty)$ and $\alpha^\Xi\defn \Qb(\Sigma\setminus (\Sigma_\infty\cup\Sigma_\infty^\cN))$. We remark that the random variable $T (\hat \alpha^\infty_T, \hat \alpha^\cN_T, \hat \alpha^\Sigma_T)$ is distributed as a multinomial distribution with parameter $(\alpha^\infty, \alpha^\cN, \alpha^\Sigma)$. From the fact that $\LP_{\cN}(\Qb,\Pb)\leq \alpha$ it follows that we must have that $\alpha^\Sigma\leq \alpha$.

  Define first a distribution $\bar{\Qb}_T \in \cP$ through the characterization
  \begin{align*}
    & \bar{\Qb}_T(B)\\
    \defn & \int_B \frac{\one{\xi\in \Sigma_\infty}}{\hat{\Pb}_T(\Sigma_\infty)} \alpha^\infty \, \d \hat{\Pb}_T(\xi) + \int_B \frac{\one{\xi\in \Sigma_\infty^\cN}}{\hat{\Pb}_T(\Sigma_\infty^\cN)} \alpha^\cN \, \d \hat{\Pb}_T(\xi)+\int_B \frac{\one{\xi\in \Sigma\setminus (\Sigma_\infty \cup \Sigma^\cN_\infty)}}{\hat{\Pb}_T(\Sigma\setminus (\Sigma_\infty \cup \Sigma^\cN_\infty))} \alpha^\Sigma \, \d \hat{\Pb}_T(\xi)
  \end{align*}
  for any measurable set $B$ in $\Xi$. Notice that by construction we have that $\bar{\Qb}_T(\Sigma^\infty)=\alpha^\infty$, $\bar{\Qb}_T(\Sigma^\infty_\cN)=\alpha^\cN$  and $\bar{\Qb}_T(\Sigma\setminus(\Sigma_\infty \cup \Sigma^\cN_\infty))=\alpha^\Sigma$ with Radon-Nikodym derivative
  \[
    \frac{\d \hat{\Pb}_T}{\d  \bar{\Qb}_T }(\xi) = \one{\xi\in \Sigma_\infty} \frac{\hat \alpha^\infty_T}{\alpha^\infty}+\one{\xi\in \Sigma_\infty^\cN} \frac{\hat \alpha^\cN_T}{\alpha^\cN}+\one{\xi\in \Sigma\setminus(\Sigma_\infty \cup \Sigma^\cN_\infty)} \frac{\hat \alpha_T^\Sigma}{\alpha^\Sigma}
  \]
  for all $\xi\in \Sigma$.
  Intuitively, the distribution $\bar{\Qb}_T$ is a simple rescaling of the distribution $\hat{\Pb}_T$ in each of the regions of interest $\Sigma_\infty$, $\Sigma^\cN_\infty$ and $\Sigma\setminus(\Sigma_\infty \cup \Sigma^\cN_\infty)$ so that the total mass in those regions coincides with the total probability mass assigned to those regions by the distribution $\Qb$.
  Hence, we have
  \begingroup
  \allowdisplaybreaks
  \begin{align*}
      & \KL(\hat{\Pb}_T || \bar{\Qb}_T) \\
      &= \int \log\left( \frac{\d \hat{\Pb}_T}{\d  \bar{\Qb}_T }(\xi) \right) \,\d \hat{\Pb}_T(\xi) \\
      &= \int_{\Sigma_\infty} \log\left( \frac{\d \hat{\Pb}_T}{\d  \bar{\Qb}_T }(\xi) \right) \,\d \hat{\Pb}_T(\xi) +\int_{\Sigma^\cN_\infty} \log\left( \frac{\d \hat{\Pb}_T}{\d  \bar{\Qb}_T }(\xi) \right) \,\d \hat{\Pb}_T(\xi) + \int_{\Sigma\setminus (\Sigma_\infty \cup \Sigma^\cN_\infty)} \log\left( \frac{\d \hat{\Pb}_T}{\d  \bar{\Qb}_T }(\xi) \right) \,\d \hat{\Pb}_T(\xi) \\
      &=
        \hat \alpha_T^\infty
        \log\left( \frac{\hat \alpha_T^\infty}{\alpha^\infty}\right)
        +
        \hat \alpha_T^\cN
        \log\left( \frac{\hat \alpha_T^\cN}{\alpha^\cN}\right)
        +
         \hat \alpha_T^\Sigma
        \log\left( \frac{\hat \alpha_T^\Sigma}{\alpha^\Sigma}\right)=: D((\hat \alpha^\infty_T, \hat \alpha^\cN_T, \hat \alpha^\Sigma_T), (\alpha^\infty, \alpha^\cN, \alpha^\Sigma))
  \end{align*}  
  \endgroup
  It can be shown that asymptotically in $T$, $\Qb^{\infty}(D((\hat \alpha^\infty_T, \hat \alpha^\cN_T, \hat \alpha^\Sigma_T), (\alpha^\infty, \alpha^\cN, \alpha^\Sigma)) > r) \leq \exp(-rT + o(T))$ for all $r>0$
  \citep{csiszar1998method, agrawal2020finite} exploiting that $T (\hat \alpha^\infty_T, \hat \alpha^\cN_T, \hat \alpha^\Sigma_T)$ is distributed as a multinomial distribution with parameter $(\alpha^\infty, \alpha^\cN, \alpha^\Sigma)$. Hence, $\Qb^{\infty}(\KL(\hat{\Pb}_T || \bar{\Qb}_T) \leq r) = 1-\Prob(D((\hat \alpha^\infty_T, \hat \alpha^\cN_T, \hat \alpha^\Sigma_T), (\alpha^\infty, \alpha^\cN, \alpha^\Xi))  > r) \geq 1- \exp(-rT + o(T))$.

  Remark that we can associate with any point $\xi\in \Sigma_\infty\cup\Sigma^\cN_\infty$ a point $\Pi_{\Sigma_\infty}(\xi)\in \Sigma_\infty$ so that $\xi-\Pi_{\Sigma_\infty}\in \cN$. Furthermore, denote with $\xi_\infty$ an arbitrary point in $\Sigma_\infty$.
  Define the random variable $\bar \Pb_T$ through
  \[
    \bar \Pb_T(B) = \bar \Qb_T(\set{\xi\in \Sigma_\infty\cup\Sigma^\cN_\infty}{\Pi_{\Sigma_\infty}(\xi)\in B})+\bar \Qb_T(\Sigma\setminus(\Sigma_\infty\cup\Sigma^\cN_\infty))\cdot \one{\xi_\infty\in B}
  \]
  for all measurable sets $B$ in $\Sigma$. It can be remarked that $\bar \Pb_T \in \cP(\Sigma^\infty)$ as indeed we have $\bar \Pb_T(\Sigma^\infty)=\bar \Qb_T(\set{\xi\in \Sigma_\infty\cup\Sigma^\cN_\infty}{\Pi_{\Sigma_\infty}(\xi)\in\Sigma_\infty })+\bar \Qb_T(\Sigma\setminus(\Sigma_\infty\cup\Sigma^\cN_\infty))=\Qb_T(\Sigma)=1$. We can interpret $\bar \Pb_T$ as the resulting measure after transporting the probability measure $\bar \Qb_T$ by moving all mass at any location $\xi\in \Sigma_\infty\cup\Sigma^\cN_\infty$ to their associated element in $\Sigma_\infty$ over a distance bounded in $\cN$ and the remaining mass at location $\xi\not\in \Sigma_\infty\cup\Sigma^\cN_\infty$ is moved to the point $\xi_\infty\in \Sigma_\infty$. That is, according to a coupling $\gamma_T(\xi, \Pi_{\Sigma_\infty}(\xi))=\bar \Qb_T(\xi)$ for all $\xi\in \Sigma_\infty\cup\Sigma^\cN_\infty$ and $\gamma_T(\xi, \xi_\infty)=\bar \Qb_T(\xi)$ for all $\xi\in \Sigma\setminus(\Sigma_\infty\cup\Sigma^\cN_\infty)$.
  Note that indeed $\gamma_T\in \Gamma(\bar \Qb_T, \bar \Pb_T)$.
  We have thus
\begin{align*}  
  \LP_{\cN}(\bar{\Qb}_T, \bar \Pb_T) & = \inf_{\gamma\in \Gamma(\bar Q_T, \bar \Pb_T) }\int \one{\xi-\xi'\not\in \cN} \,\d \gamma(\xi, \xi')\\
                                         & \leq \int \one{\xi-\xi'\not\in \cN} \,\d \gamma_T(\xi, \xi')\\
                                         & = \int_{\Sigma_\infty\cup\Sigma^\cN_\infty}\one{\xi-\Pi_{\Sigma_\infty}(\xi) \not\in \cN } \,\d \bar {\Qb}_T(\xi)+\int_{\Sigma\setminus(\Sigma_\infty\cup\Sigma^\cN_\infty)} \one{\xi-\xi_\infty \not\in \cN } \,\d \bar {\Qb}_T(\xi)\\
                                         & = \int_{\Sigma\setminus(\Sigma_\infty\cup\Sigma^\cN_\infty)} \,\d \bar {\Qb}_T(\xi)\\
                                         & =  \alpha^\Sigma \leq \alpha.
\end{align*}
Here the second equality follows from the fact that by construction $\xi-\Pi_{\Xi_\infty}(\xi)\in \cN$ when $\xi\in \Sigma_\infty\cup \Sigma_\infty^\cN$.

Hence,
   \begingroup
  \allowdisplaybreaks
  \begin{align*}
    \Qb^\infty\left(
    \Eb_{\Pb}[\loss(x,\txi)]) \leq \hat{c}^{\cN,\alpha,r}_{\HRo}(x,\Pemp{T}) 
    \right) 
    &= 
      \Qb^\infty\left(
    \max_{\xi\in \Sigma}\ell(x, \xi) = \hat{c}^{\cN,\alpha,r}_{\HRo}(x,\Pemp{T})\right) \\
    & = 
      \Qb^\infty\left(
       \exists \Pb' \in \cP(\Sigma_\infty), \; \exists \Qb' \in \cP, \;
      \KL (\Pemp{T}||\Qb') \leq r, \;  \LP_\cN(\Qb',\Pb') \leq \alpha
      \right) \\
    &\geq 
      \Qb^\infty\left(
      \KL (\Pemp{T}||\bar{\Qb}_T) \leq r, \;  \LP_\cN(\bar{\Qb}_T,\bar \Pb_T) \leq \alpha 
      \right) \\
    &= 
      \Qb^\infty\left(
      \KL (\Pemp{T}||\bar{\Qb}_T) \leq r
      \right) \\
    &\geq 
       1- \exp(-rT + o(T)).
  \end{align*}
  \endgroup
  
  \textbf{Case II: $\Eb_{\Pb}[\loss(x,\txi)] < \max_{\xi \in \Sigma} \loss(x,\xi)$.}
  Let $\txi \sim \Pb$ be the random variable of the true uncertainty before noise and misspecification. Let $\txi^c$ be the random noisy and corrupted observation obtained from $\txi$ and denote with $\Qb \in \cP$ its distribution. Note that $\txi$ and $\txi^c$ may indeed be correlated and let $\hat{\gamma}\in \Gamma(\Qb,\Pb)$ be their joint distribution. We have
  \begin{align*}
    \LP_\cN(\Qb,\Pb) 
    &=
      \inf_{\gamma \in \Gamma(\Qb,\Pb)} \int \indic(\xi-\xi'  \not \in \cN) \,\d\hat{\gamma}(\xi,\xi') \\
    &\leq
      \int\indic(\xi-\xi'  \not \in \cN) \,\d\hat{\gamma}(\xi,\xi')\\
    &= \Prob\left(\txi^c - \txi \not \in \cN \right).
  \end{align*}
  As the noise $\tn$ realizes in $\cN$, the event $\txi^c - \txi \not \in \cN$ occurs only in the case of misspecification of the sample. Hence, $\LP_\cN(\Qb,\Pb) \leq \Prob\left(\tc = 1\right) \leq \alpha$.
  

  Let $\mathcal{D}: = \{\hat{\Pb} \in \cP \; : \; \hat{c}^{\cN,\alpha,r}_{\HRo}(x,\hat{\Pb}) < \Eb_{\Pb}[\loss(x,\txi)]) \}$. We have
  $$
  \Qb^\infty\left(
    \hat{c}^{\cN,\alpha,r}_{\HRo}(x,\Pemp{T}) \geq \Eb_{\Pb}[\loss(x,\txi)])
    \right)
    =
    1-
    \Qb^\infty\left(
    \Pemp{T} \in \mathcal{D}
    \right).
  $$
  If $\mathcal{D}$ is empty, the result is trivial. We suppose now it is not.
  As $\Pemp{T}$ is an empirical distribution sampled from the corrupted distribution $\Qb$, Sanov's theorem ensures that
  \begin{align*}
  \Qb^\infty\left(
    \hat{c}^{\cN,\alpha,r}_{\HRo}(x,\Pemp{T}) \geq \Eb_{\Pb}[\loss(x,\txi)]
    \right)
    \geq
      1 - \exp\left(-T\inf_{\hat{\Pb} \in \bar{\mathcal{D}}} \KL(\hat{\Pb} || \Qb) + o(T)\right).
  \end{align*}
  It suffices to show that $\inf_{\hat{\Pb} \in \bar{\mathcal{D}}} \KL(\hat{\Pb} || \Qb) \geq r$. For this purpose, we show that for all $\hat{\Pb} \in \bar{\mathcal{D}}$, $\KL(\hat{\Pb} || \Qb) \geq r$. By the lower semicontinuity of $\hat{c}^{\cN,\alpha,r}_{\HRo}(x,\cdot)$ (see Lemma \ref{lemma: HR continuous}), we have $\bar{\mathcal{D}} \subseteq \{\hat{\Pb} \in \cP \; : \; \hat{c}^{\cN,\alpha,r}_{\HRo}(x,\hat{\Pb}) \leq \Eb_{\Pb}[\loss(x,\txi)] \}$.
  Let $\hat{\Pb} \in \bar{\mathcal{D}}$ and suppose for the sake of contradiction that $\KL(\hat{\Pb} || \Qb) < r$. We will show that $\hat{c}^{\cN,\alpha,r}_{\HRo}(x,\hat{\Pb}) > \Eb_{\Pb}[\loss(x,\txi)]$ contradicting therefore that $\hat{\Pb} \in \bar{\mathcal{D}}$.
  
  Recall that $\Qb$ is the distribution of the corrupted observations and $\Pb$ is the distribution before corruption.
  We have $\LP_\cN(\Qb,\Pb)  \leq \alpha$ and $\KL(\hat{\Pb} || \Qb) < r$. Therefore,
  $\Pb \in 
      \{ \Pb' \in \cP \; : \; \exists \Qb' \in \cP,  \;
      \KL (\hat{\Pb}||\Qb') \leq r, \;  \LP_\cN(\Qb',\Pb') \leq \alpha \}$ which implies that $\hat{c}^{\cN,\alpha,r}_{\HRo}(x,\hat{\Pb}) \geq \Eb_{\Pb}[\loss(x,\txi)]$ by definition of $\hat{c}^{\cN,\alpha,r}_{\HRo}$ stated in Equation \eqref{eq: HRo predictor}. 
      
Let $\xi_{\infty} \in \argmax_{\xi\in \Sigma} \loss(x,\xi)$ and let denote $\delta_{\xi_{\infty}}$ the a Dirac distribution on $\xi_{\infty}$. The function $\lambda \in [0,1] \rightarrow \KL(\hat{\Pb} || (1-\lambda)\Qb + \lambda \delta_{\xi_{\infty}})$ is continuous in $\lambda$ as the KL divergence is lower semicontinuous and any convex function is upper semicontinuous on a locally simplicial set \citep[Theorem 10.2]{rockafellar2015convex}. Therefore, as $\KL(\hat{\Pb} || \Qb) <r$, we can chose $\lambda>0$ small enough such that $\KL(\hat{\Pb} || (1-\lambda)\Qb + \lambda \delta_{\xi_{\infty}}) \leq r$. We show next that this implies that 
$(1-\lambda)\Pb + \lambda \delta_{\xi_{\infty}} \in 
      \{ \Pb' \in \cP \; : \; \exists \Qb' \in \cP,  \;
      \KL (\hat{\Pb}||\Qb') \leq r, \;  \LP_\cN(\Qb',\Pb') \leq \alpha \}$. Denote $\Qb(\lambda) = (1-\lambda)\Qb + \lambda \delta_{\xi_{\infty}}$ and $\Pb(\lambda) = (1-\lambda)\Pb + \lambda \delta_{\xi_{\infty}}$.
      It suffices to show that $\LP_\cN(\Qb(\lambda),\Pb(\lambda)) \leq \alpha$, as then, we have the inclusion in the set by choosing $\Qb' = \Qb(\lambda)$. We have
      \begin{align*}
          \LP_\cN(\Qb(\lambda),\Pb(\lambda)) 
          &=
          \inf \left\{\int \indic(\xi-\xi' \not\in\cN) \,\d\gamma(\xi,\xi') \; : \; \gamma \in \Gamma(\Qb(\lambda),\Pb(\lambda)) \right\} \\
          &\leq
          \inf \left\{\int \indic(\xi-\xi' \not\in\cN) \,\d\gamma(\xi,\xi') 
          \; : \;
          \gamma = (1-\lambda) \gamma' + \lambda \delta_{\xi_{\infty},\xi_{\infty}}, \;  \gamma' \in \Gamma(\Qb,\Pb) \right\} \\
          &= 
          \inf \left\{(1-\lambda)\int \indic(\xi-\xi' \not\in\cN) \,\d\gamma'(\xi,\xi') 
          \; : \;
          \gamma' \in \Gamma(\Qb,\Pb) \right\} + \lambda\one{0\not\in\cN}\\
          &= (1-\lambda) \LP_\cN(\Qb,\Pb)  = (1-\lambda)\alpha \leq \alpha.
      \end{align*}      
Hence, we have $\Pb(\lambda) \in 
      \{ \Pb' \in \cP \; : \; \exists \Qb' \in \cP,  \;
      \KL (\hat{\Pb}||\Qb') \leq r, \;  \LP_\cN(\Qb',\Pb') \leq \alpha \}$
which implies by definition of $\hat{c}^{\cN,\alpha,r}_{\HRo}$ that $\hat{c}^{\cN,\alpha,r}_{\HRo}(x,\hat{\Pb}) \geq \Eb_{\Pb(\lambda)}[\loss(x,\txi)] = (1-\lambda) \Eb_{\Pb}[\loss(x,\txi)] + \lambda \max_{\xi \in \Sigma}\loss(x,\xi) > \Eb_{\Pb}[\loss(x,\txi)]$.
\end{proof}

\subsection{Proof of Theorem \ref{thm: HR finite formulation}} \label{App: proof of finite HR}
\begin{proof}[Proof of Theorem \ref{thm: HR finite formulation}]
  We can write the supremum in Equation \eqref{eq: HRo predictor} as
  \begin{align*}
    \sup \left\{ 
    \Eb_{\Pb'}[\loss(x,\txi)]
    :  
    \Pb'\in \cP,\,\Qb' \in \cP, \gamma \in \Gamma(\Qb',\Pb'),
    \int \log  \left(\frac{\d\Pemp{T}}{\d\Qb'}(\xi)\right)\, \d\Pemp{T}(\xi) \leq r,
    \int \indic(\xi-\xi' \not \in \cN) \,\d\gamma(\xi,\xi') \leq \alpha
    \right\}.
  \end{align*}
  For all $k\in [K]$, let $\xi'_k \in \xi_k-\argmax_{\noise \in \cN} \loss(x,\xi_k - n)$ and let $\xi_\infty \in \argmax_{\xi \in \Sigma} \loss(x,\xi)$.
  We will show that there exists an optimal solution $\bar{\Pb}' \in \cP, \bar{\Qb}' \in \cP$ and $\bar{\gamma} \in \Gamma(\bar{\Qb}',\bar{\Pb}')$ verifying: (i) $\supp\bar{\Pb}' \subseteq \{\xi'_{k} \; : \; k \in [K]\} \cup \{\xi_{\infty}\}$, 
  (ii) $\supp\bar{\Qb}' \subseteq \{\xi_{k} \; : \; k \in [K]\} \cup \{\xi_{\infty}\}$,
  (iii) $\supp \bar{\gamma} \subseteq \{ (\xi_k,\xi'_{k}) \; : \; k \in [K]\} \cup \{ (\xi_k,\xi_{\infty}) \; : \; k \in [K]\}\cup \{\xi_\infty, \xi_\infty\}$.
  This result implies the stated finite formulation, by choosing $p'_k = \bar{\Pb}'(\xi'_k)$, $q'_k = \bar{\Qb}'(\xi_k)$ and $s_k = \gamma(\xi_k,\xi_{\infty})$ for all $k \in [K]$ as well as $q_{K+1} = \bar{\Qb}'(\xi_\infty)$ and $p'_{K+1} = \bar{\Pb}'(\xi_\infty)$. 

  Let $\Pb'^\star\in \cP,\Qb'^\star \in \cP$ be an optimal solution in Equation \eqref{eq: HRo predictor}. Denote $\Xi \defn \supp (\Qb'^\star) \setminus \left(\{\xi_{k} \; : \; k \in [K]\} \cup \{\xi_{\infty}\} \right)$.
  We start by constructing $\bar{\Qb}'$. Consider $\bar{\Qb}'\in \cP$ defined as $\bar{\Qb}'(A) = \Qb'^\star(A \setminus \Xi) + \Qb'^\star(\Xi)\indic(\xi_{\infty} \in A)$ for all events $A \in \cB(\Sigma)$. 
  Notice that indeed we have $\supp(\bar \Qb') \subseteq \{\xi_{k} \; : \; k \in [K]\} \cup \{\xi_{\infty}\}$.
  We first verify that the new solution still verifies $\KL (\Pemp{T}||\bar{\Qb}') \leq r$.
  We have
  \begin{align*}
    \int \log  \left(\frac{\d\Pemp{T}}{\d\bar{\Qb}'}(\xi)\right)\, \d\Pemp{T}(\xi)
    &=
      \sum_{k\in[K]} \log \left( \frac{\Pemp{T}(\xi_k)}{\bar{\Qb}'(\xi_k)} \right)\Pemp{T}(\xi_k)\\
    &= 
      \sum_{k\in[K]} \log \left( \frac{\Pemp{T}(\xi_k)}{\Qb'^\star(\xi_k) + \Qb'^\star(\Xi)\indic(\xi_k = \xi_{\infty})} \right)\Pemp{T}(\xi_k)\\
    &\leq 
      \sum_{k\in[K]} \log \left( \frac{\Pemp{T}(\xi_k)}{\Qb'^\star(\xi_k)} \right)\Pemp{T}(\xi_k) \leq r
  \end{align*}
  where the first inequality uses the fact that the logarithm is increasing and the second inequality is by feasibility of $\Qb'^\star$.

  Theorem \ref{thm: LP-DRO expression} ensures that there exists $\bar{\Pb}' \in \argmax \{\Eb_{\Pb'}[\loss(x,\txi)] \; : \; \Pb' \in \cP, \; \LP_\cN(\bar{\Qb}',\Pb') \leq \alpha\}$ such that $\supp \bar{\Pb}' \subseteq \{ \xi'_k \; : \; k \in [K] \} \cup \{ \xi_{\infty}\}$. Consider the solution $\bar{\Pb}'\in \cP,\bar{\Qb}' \in \cP$. We have
  \begin{align*}
    \max
    \{
    \Eb_{\Pb'}[\loss(x,\txi)]
    & \; : \; 
      \Pb'\in \cP,\;\Qb' \in \cP, \;
      \KL (\Pemp{T}||\Qb') \leq r, \;
      \LP_\cN(\Qb',\Pb') \leq \alpha 
      \}\\
    &=
      \max 
      \{
      \max \{
      \Eb_{\Pb'}[\loss(x,\txi)]
      \; : \; 
      \Pb' \in \cP, \;
      \LP_\cN(\Qb',\Pb') \leq \alpha 
      \}
      \; : \;
      \Qb' \in \cP,\;
      \KL (\Pemp{T}||\Qb') \leq r
      \}
  \end{align*}
  Hence, the solution $\bar{\Pb}',\bar{\Qb}'$ is feasible, as $\bar{\Qb}'$ is feasible in the outer maximum and $\bar{\Pb}'$ is feasible in the inner by construction. Finally, Theorem \ref{thm: LP-DRO expression} ensures that there exists a coupling $\bar{\gamma} \in \Gamma(\bar{\Qb}',\bar{\Pb}')$ of support $\supp \bar{\gamma} \subseteq \{ (\xi_k,\xi'_{k}) \; : \; k \in [K]\} \cup \{ (\xi_k,\xi_{\infty}) \; : \; k \in [K]\}\cup \{\xi_\infty, \xi_\infty\}$ corresponding to this solution.

  Let us show that $\bar{\Qb}'$ also attains the optimal cost.
  Using Theorem \ref{thm: LP-DRO expression} and the $\CVaR$ formula \eqref{eq:cvar-max-char}, we have

  \begin{align*}
    \Eb_{\Pb'^\star}(\loss(x,\Tilde{\xi}))
    &=
      \max \{
      \Eb_{\Pb'}(\loss(x,\Tilde{\xi}))
      \; : \; 
      \Pb' \in \cP, \;
      \LP_\cN(\Qb'^\star,\Pb') \leq \alpha 
      \} \\
    & =
      \sup\{\Eb_{\Qb'^\star}(\loss^\cN(x,\Tilde{\xi}) R(\txi)) \; : \; \Eb_{\Qb'^\star}(R(\txi)) = 1-\alpha,~R:\Sigma\to[0,1] \} +
      \alpha \max_{\xi \in \Sigma} \loss(x,\xi) \numberthis \label{proof eq: cost of new sol}
  \end{align*}  
  Consider the mapping $S:\Sigma\to\Sigma$ defined as
  \begin{align*}
    S(\xi)=
    \begin{cases}
      \xi & \text{if} \; \xi \in \{\xi_{k} \; : \; k \in [K]\}, \\
      \xi_{\infty} & \text{otherwise.}
    \end{cases}
  \end{align*}
  Informally, this mapping moves the mass distribution $\Qb'^\star$ into the mass distribution $\bar{\Qb}'$. 
  Recall that $\loss(x,\xi_{\infty}) = \max_{\xi \in \Sigma} \loss(x,\xi)$. Hence, for all $\xi \in \Sigma$, we have $\loss^\cN(x,\xi) \leq \loss^\cN(x,S(\xi))$. Hence,
  \begin{align*}
    &\sup\{\Eb_{\Qb'^\star}(\loss^\cN(x,\Tilde{\xi}) R(\txi)) \; :  \; \Eb_{\Qb'^\star}(R(\txi)) = 1-\alpha,~R:\Sigma\to[0,1] \}
                           \\
    \leq &\sup\{\Eb_{\Qb'^\star}(\loss^\cN(x,S(\Tilde{\xi})) R(\txi)) \; : \; \Eb_{\Qb'^\star}(R(\txi)) = 1-\alpha,~R:\Sigma\to[0,1] \}\\
    = & \textstyle \sup\{\sum_{k\in[K]} \int_{S(\xi)=\xi_k} \loss^\cN(x,\xi_k) R(\xi) \, \d\Qb'^\star(\xi) + \int_{S(\xi)=\xi_\infty} \loss^\cN(x,\xi_\infty) R(\xi) \, \d\Qb^\star(\xi) \; : \\
    & \qquad\qquad  \; \Eb_{\Qb'^\star}(R(\txi)) = 1-\alpha,~R:\Sigma\to[0,1] \}\\
    = & \textstyle \sup\{\sum_{k\in[K]} \loss^\cN(x,\xi_k) \tilde R(\txi_k) \Qb'^\star(S(\xi)=\txi_k) + \loss^\cN(x,\xi_\infty) \tilde R(\txi_\infty) \Qb'^\star(S(\xi)=\txi_\infty)  \; : \\
    & \qquad \qquad\; \textstyle\sum_{k\in [K]}\tilde R(\txi_k) \Qb'^\star(S(\xi)=\txi_k)+ \tilde R(\txi_\infty) \Qb'^\star(S(\xi)=\txi_\infty)= 1-\alpha,~\tilde R:\Sigma\to[0,1] \}\\
        =& \sup\{\Eb_{\bar \Qb'}(\loss^\cN(x,\Tilde{\xi}) \tilde R(\txi)) \; :  \; \Eb_{\bar \Qb'}(\tilde R(\txi)) = 1-\alpha,~\tilde R:\Sigma\to[0,1] \}.
  \end{align*}
  Plugging this inequality in \eqref{proof eq: cost of new sol}, and using Theorem \ref{thm: LP-DRO expression}, we get 
  \begin{align*}
    \Eb_{\Pb'^\star}(\loss(x,\Tilde{\xi}))
    &\leq \sup\{\Eb_{\bar \Qb'}(\loss^\cN(x,\Tilde{\xi}) \tilde R(\txi)) \; :  \; \Eb_{\bar \Qb'}(\tilde R(\txi)) = 1-\alpha,~\tilde R:\Sigma\to[0,1] \} + \alpha \max_{\xi \in \Sigma} \loss(x,\xi) \\
    &= 
      \sup \{
      \Eb_{\Pb'}(\loss(x,\Tilde{\xi}))
      \; : \; 
      \Pb' \in \cP, \;
      \LP_\cN(\bar{\Qb}',\Pb') \leq \alpha 
      \}\\
    &= 
      \Eb_{\bar{\Pb}'}(\loss(x,\Tilde{\xi})).
  \end{align*}
  Hence, the distributions $\bar{\Pb}',\bar{\Qb}'$ are also optimal solutions.
\end{proof}

\begin{lemma}[{\citet{love2015phi}, \citet{van2021data}}]
  \label{lemma:kl-solution}
  We have
  \begin{align*}
    \hat{c}^r_{\mathrm{KL}}(x,\Pb')
    & =
      \min \left\{ 
      \int \lambda \log\left(\frac{\lambda}{\eta - \loss(x,\xi)}\right)\, \d \Pb'(\xi)+(r-1)\lambda+\eta 
      \; : \; \lambda\geq 0,~\eta \geq \max_{\xi \in \Sigma} \loss(x,\xi)
    \right\}
  \end{align*}
  for all $\Pb'\in\cP$ and $r> 0$.
\end{lemma}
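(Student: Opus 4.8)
The plan is to establish this as a standard Lagrangian duality result for the Kullback--Leibler constrained worst-case expectation; since the reference distribution sits in the first argument of $\DKL$, this is the ``reverse'' KL-DRO problem, and the argument must account for worst-case distributions $\Qb$ that place mass outside the support of $\Pb'$. Unfolding the definition, $\hat{c}^r_{\mathrm{KL}}(x,\Pb') = \sup\set{\Eb_{\Qb}[\loss(x,\txi)]}{\Qb\in\cP,\ \DKL(\Pb'||\Qb)\le r}$, where feasibility forces $\Pb'\ll\Qb$. First I would reduce this infinite-dimensional problem to an optimization over a density and a scalar. Writing $\Qb=\Qb_{ac}+\Qb_s$ with $\Qb_{ac}\ll\Pb'$ and $\Qb_s\perp\Pb'$, set $g\defn \d\Qb_{ac}/\d\Pb'>0$ ($\Pb'$-a.e., since $\Pb'\ll\Qb$) and $m\defn \Qb_s(\Sigma)$. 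Then $\DKL(\Pb'||\Qb)=-\int\log g\,\d\Pb'$ depends only on $g$, while $\int\loss\,\d\Qb\le\int\loss\, g\,\d\Pb'+m\max_{\xi\in\Sigma}\loss(x,\xi)$ with equality when the singular mass is concentrated at a maximizer $\xi_\infty\in\argmax_{\xi\in\Sigma}\loss(x,\xi)$. This rewrites the predictor as $\sup\set{\int\loss\, g\,\d\Pb'+m\max_{\xi}\loss}{g\ge0,\ m\ge0,\ \int g\,\d\Pb'+m=1,\ -\int\log g\,\d\Pb'\le r}$.

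Next I would dualize. Introducing a multiplier $\lambda\ge0$ for the KL constraint and $\eta$ for the normalization constraint, the Lagrangian separates additively, and after collecting terms it reads $\eta+\lambda r+\int\bigl[(\loss-\eta)g+\lambda\log g\bigr]\d\Pb'+m(\max_{\xi}\loss-\eta)$. The inner maximization over $m\ge0$ is finite only if $\eta\ge\max_{\xi\in\Sigma}\loss(x,\xi)$, in which case its optimal value is attained at $m=0$; this is precisely the origin of the constraint $\eta\ge\max_{\xi\in\Sigma}\loss(x,\xi)$ in the statement. The pointwise maximization over $g>0$ of the concave integrand $(\loss(x,\xi)-\eta)g+\lambda\log g$ yields $g^\star(\xi)=\lambda/(\eta-\loss(x,\xi))$, which is admissible exactly because $\eta-\loss(x,\xi)>0$; substituting back gives an integrand value of $\lambda\log\bigl(\lambda/(\eta-\loss(x,\xi))\bigr)-\lambda$. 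The dual objective therefore collapses to $\int\lambda\log\bigl(\lambda/(\eta-\loss(x,\xi))\bigr)\d\Pb'(\xi)+(r-1)\lambda+\eta$, matching the right-hand side.

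The remaining and main obstacle is justifying strong duality, i.e.\ that the sup and inf may be exchanged and that the pointwise maximization commutes with integration. I would invoke the convexity of the KL constraint together with Slater's condition, which holds precisely because $r>0$: the choice $\Qb=\Pb'$ is strictly feasible since $\DKL(\Pb'||\Pb')=0<r$. This licenses the standard $\phi$-divergence duality machinery of \citet{love2015phi}, from which the result follows; alternatively the identity is stated directly in \citet{van2021data}, so the argument above can be read as a self-contained rederivation. Care is also needed to confirm that the worst-case density $g^\star$ together with the singular mass at $\xi_\infty$ indeed defines a feasible probability measure $\Qb^\star$, certifying attainment of the supremum and hence that the minimum on the right-hand side is attained.
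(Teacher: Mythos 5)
Your proposal is correct, but it cannot be compared line-by-line with a proof in the paper: the paper does not prove this lemma at all, importing it verbatim from \citet{love2015phi} and \citet[Proposition 5]{van2021data}. Your derivation is essentially a self-contained reconstruction of the standard argument in those references, and it gets the two genuinely delicate points of the reverse-KL setting right: (i) the worst-case measure may carry mass singular with respect to $\Pb'$, and concentrating that mass at a loss maximizer is what generates the term $m(\max_{\xi\in\Sigma}\loss(x,\xi)-\eta)$ and hence the constraint $\eta\geq\max_{\xi\in\Sigma}\loss(x,\xi)$; (ii) strong duality via Slater, which holds precisely because $\Qb=\Pb'$ is strictly feasible when $r>0$. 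Three small repairs are worth recording. First, attainment of the minimum on the right-hand side follows from Slater's condition (existence of optimal dual multipliers), not, as your closing sentence suggests, from attainment of the primal supremum --- primal attainment does not certify dual attainment. Second, your reduction implicitly treats $\delta_{\xi_\infty}$ as singular with respect to $\Pb'$; if $\Pb'$ has an atom at $\xi_\infty$ this fails, but the construction still works because adding mass $m$ at an atom only increases the density of $\Qb$ there, so $\DKL(\Pb'||\Qb)\leq -\int\log g\,\d\Pb'\leq r$ and feasibility is preserved, keeping the reduction an exact equality. Third, the interchange of the pointwise maximization over $g$ with the integral deserves a one-line justification (the maximizer $g^\star(\xi)=\lambda/(\eta-\loss(x,\xi))$ is a measurable selection that dominates pointwise), and one should fix the convention $\lambda\log(\lambda/a)\to 0$ as $\lambda\downarrow 0$, which keeps the dual objective well defined and covers degenerate cases such as $\Pb'$ concentrated on the maximizers of the loss, where the minimum is attained at $(\lambda,\eta)=(0,\max_{\xi\in\Sigma}\loss(x,\xi))$. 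All of these are routine and subsumed by the cited $\phi$-divergence duality machinery, so the proposal stands as a valid rederivation.
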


\subsection{Generalization and Proof of Theorem \ref{thm: dual rep HR}}
\label{App: Dual HR}

\begin{theorem}[General Dual Formulation]\label{thm: general dual rep}
Let $\hat{\Pb} \in \cP$. For all $x\in \cX$, the $\HRo$ predictor \eqref{eq: HRo predictor} admits for all $r>0$ the dual representation
  \begin{equation*}
    \hat{c}^{\cN,\alpha,r}_{\HRo}(x,\hat{\Pb})=
    \left\{
    \begin{array}{rl}
      \inf & \int w(\xi) \, \d\hat{\Pb}(\xi) + \lambda (r-1) + \beta \alpha + \eta\\[0.5em]
      \st & w: \Sigma \rightarrow \Re, \; \lambda \geq 0, \; \beta \in \Re, \; \eta \in \Re,\\[0.5em]
           & w(\xi) \geq \lambda \log \left( \frac{\lambda}{\eta - \loss^{\cN}(x,\xi)}\right), ~ w(\xi) \geq \lambda \log \left( \frac{\lambda}{\eta - \max_{\xi' \in \Sigma} \loss(x,\xi') +\beta}\right) \quad \forall \xi \in \Sigma, \\[0.5em]
           & \eta \geq \max_{\xi \in \Sigma} \loss(x,\xi).
    \end{array}
    \right.
  \end{equation*}
\end{theorem}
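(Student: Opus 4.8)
The plan is to collapse the two nested ambiguity sets into a single Kullback--Leibler robust problem whose dual is already available through Lemma \ref{lemma:kl-solution}, and then to recognize the two constraints on $w$ as the epigraph description of a pointwise maximum. Write $M \defn \max_{\xi\in\Sigma}\loss(x,\xi)$. First I would carry out the inner maximization over $\Pb'$ for fixed $\Qb'$ using Theorem \ref{thm: LP-DRO expression}, which turns the defining problem \eqref{eq: HR predictor} into the single-level KL robust problem
\begin{equation*}
  \hat{c}^{\cN,\alpha,r}_{\mathrm{HR}}(x,\hat{\Pb}) = \max\set{(1-\alpha)\CVaR_{\Qb'}^\alpha(\loss^{\cN}(x,\txi)) + \alpha M}{\Qb'\in\cP,\ \DKL(\hat{\Pb}\|\Qb')\le r}.
\end{equation*}
The next step is a linearization of the CVaR that also absorbs the constant $\alpha M$: using the min-representation \eqref{eq:cvar-min-char} together with the change of variable $\tilde\beta = M-\beta$, one verifies the identity
\begin{equation*}
  (1-\alpha)\CVaR_{\Qb'}^\alpha(\loss^{\cN}(x,\txi)) + \alpha M = \min_{\beta\ge 0}\set{\beta\alpha + \Eb_{\Qb'}[\tilde\loss^\beta(\txi)]}{\;}, \qquad \tilde\loss^\beta(\xi) \defn \max\big(\loss^{\cN}(x,\xi),\, M-\beta\big),
\end{equation*}
so that the predictor becomes a $\max_{\Qb'}\min_{\beta}$ of a function that is linear (hence concave and weakly upper semicontinuous) in $\Qb'$ and convex lower semicontinuous in $\beta$.

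Second, I would swap the order of optimization. The $\beta$-range may be restricted to the compact interval $[0,M]$ without loss (outside it the CVaR minimizer is never attained, since $\loss^{\cN}\le M$), while the feasible set $\set{\Qb'\in\cP}{\DKL(\hat{\Pb}\|\Qb')\le r}$ is convex and weakly compact because $\Sigma$ is compact (tightness is automatic) and $\DKL(\hat{\Pb}\|\cdot)$ is weakly lower semicontinuous. Sion's minimax theorem then yields
\begin{equation*}
  \hat{c}^{\cN,\alpha,r}_{\mathrm{HR}}(x,\hat{\Pb}) = \min_{\beta\ge 0}\left[\beta\alpha + \max\set{\Eb_{\Qb'}[\tilde\loss^\beta(\txi)]}{\DKL(\hat{\Pb}\|\Qb')\le r}\right].
\end{equation*}
The inner maximization is exactly a KL-DRO problem with nominal distribution $\hat{\Pb}$ and loss $\tilde\loss^\beta$. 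Since $0\in\cN$ gives $\max_{\xi}\loss^{\cN}(x,\xi)=M$, one has $\max_{\xi}\tilde\loss^\beta(\xi)=M$ for every $\beta\ge 0$, so applying Lemma \ref{lemma:kl-solution} introduces precisely the dual variables $\lambda\ge 0$ and $\eta\ge M$ appearing in the claim, with the per-point integrand $\lambda\log\!\big(\lambda/(\eta-\tilde\loss^\beta(\xi))\big)$.

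Finally I would unfold the pointwise maximum defining $\tilde\loss^\beta$. Because $\eta-\tilde\loss^\beta(\xi)=\min\big(\eta-\loss^{\cN}(x,\xi),\,\eta-M+\beta\big)$ and $t\mapsto\lambda\log(\lambda/t)$ is decreasing, the integrand equals $\max\!\big(\lambda\log\tfrac{\lambda}{\eta-\loss^{\cN}(x,\xi)},\,\lambda\log\tfrac{\lambda}{\eta-M+\beta}\big)$; introducing an epigraph variable $w(\xi)$ bounded below by both branches and minimizing $\int w\,\d\hat{\Pb}$ reproduces the two stated constraints and the objective $\int w\,\d\hat{\Pb}+\lambda(r-1)+\beta\alpha+\eta$. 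I expect the main obstacle to be the rigorous justification of the minimax interchange together with the boundary cases of the KL dual (the degenerate regimes $\lambda\downarrow 0$ and $\eta = M$, where the logarithmic terms must be interpreted by the usual limiting conventions); these are handled exactly as in the proof of Lemma \ref{lemma:kl-solution}, and reducing to the discrete support $\{\xi_1,\dots,\xi_K\}$ of Theorem \ref{thm: dual rep HR} then follows by specializing $\hat\Pb$ to $\Pemp{T}$.
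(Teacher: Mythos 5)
Your proposal is correct and follows essentially the same route as the paper's proof in Appendix \ref{App: Dual HR}: collapse the inner $\rho_\cN$-ball via Theorem \ref{thm: LP-DRO expression}, linearize the resulting $\CVaR$ through \eqref{eq:cvar-min-char}, interchange $\max_{\Qb'}$ and $\min_\beta$ by Sion's theorem, and then invoke Lemma \ref{lemma:kl-solution} on the inner KL problem. The only difference is cosmetic --- you perform the substitution $\beta \mapsto M-\beta$ inside the $\CVaR$ linearization up front (yielding the loss $\max(\loss^{\cN}(x,\xi), M-\beta)$ directly), whereas the paper works with $\max(\loss^{\cN}(x,\xi)-\beta,0)$ and applies the changes of variables $\eta'=\eta+\beta$, $\beta'=M-\beta$ only after dualizing.
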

\begin{proof}
  We have that
  \begingroup
  \allowdisplaybreaks
  \begin{align*}
    & \hat{c}^{\cN,\alpha,r}_{\HRo}(x,\hat{\Pb})\\
     \defn & \sup \set{\Eb_{\Pb'}[\loss(x,\txi)]}{\Pb'\in \cP,\, \Qb' \in \cP, \; \KL (\hat{\Pb}||\Qb') \leq r, \; \LP_{\cN}(\Qb',\Pb') \leq \alpha}\\
    = & \sup\set{\sup \set{\Eb_{\Pb'}[\loss(x,\txi)]}{\Pb'\in \cP, \;\LP_{\cN}(\Qb',\Pb') \leq \alpha}}{\Qb' \in \cP, ~ \KL (\hat{\Pb}||\Qb') \leq r}\\
  = & \sup\set{(1-\alpha)\CVaR_{\Qb'}^\alpha(\loss^\cN(x,\txi))
      +
      \alpha\max_{\xi\in \Sigma}\loss(x,\xi)}{\Qb' \in \cP, ~ \KL (\hat{\Pb}||\Qb') \leq r}\\
    = & \sup\set{\inf \set{\beta(1-\alpha)+\Eb_{\Qb'}[ \max(\loss^\cN(x, \txi)-\beta,0)]}{\beta\leq \max_{\xi\in \Sigma} \ell(x, \xi)}
        }{\Qb' \in \cP, ~ \KL (\hat{\Pb}||\Qb') \leq r} \\
    & \qquad +
        \alpha\max_{\xi\in \Sigma}\loss(x,\xi)\\
    = & \inf\set{\beta(1-\alpha)+\sup \set{\Eb_{\Qb'}[ \max(\loss^\cN(x, \txi)-\beta,0)]}{\Qb' \in \cP, ~ \KL (\hat{\Pb}||\Qb') \leq r}
        }{\beta\leq \max_{\xi\in \Sigma} \loss(x, \xi)}\\
      & \qquad +
        \alpha\max_{\xi\in \Sigma}\loss(x,\xi)
  \end{align*}
  Here, the second equality follows from Theorem \ref{thm: LP-DRO expression}. The third equality follows from the dual characterization of the conditional value-at-risk stated in Equation \eqref{eq:cvar-min-char} where we remark that we can restrict $\beta$ to be satisfy $\beta\leq \max_{\xi\in \Sigma}\loss^\cN(x, \xi)=\max_{\xi\in \Sigma}\loss(x, \xi)$ as $0\in \cN$. We will now show that the fourth equality follows from the minimax theorem of \citet{sion1958general}. Let
  \[
    L(\beta, \Qb') = \beta(1-\alpha)+\Eb_{\Qb'}[ \max(\loss^\cN(x, \txi)-\beta,0)]
  \]
  be our saddle point function and remark that this function is convex in $\beta$ for any fixed $\Qb'$ and concave (linear) in $\Qb'$ for any fixed $\beta$. As for any $\Qb'$ the function $L(\beta, \Qb')$ is convex in $\beta\in\Re$ it is also continuous. We also need that $L(\beta, \Qb')$ is upper semicontinuous in $\Qb'$ for any fixed $\beta$. From the maximum theorem of \cite[p.\ 116]{berge1997topological} and the compactness of the  noise set $\cN$ it follows that the inflated loss function is continuous. Clearly, the function $\max(\loss^\cN(x, \txi)-\beta,0)$ is continuous as well and is bounded by $\max_{\xi\in\Sigma}\ell^\cN(x, \xi) = \max_{\xi\in\Sigma}\ell(x, \xi)<\infty$.
  Hence, the function $L(\beta, \Qb')$ is continuous in $\Qb'$ for any $\beta$ as we have indeed for any sequence $\Qb'_i\in\cP$ converging to $\Qb'$ that
  \[
    \lim_{i\to\infty} \int \max(\loss^\cN(x, \txi)-\beta,0) \, \d \Qb'_i  = \int \max(\loss^\cN(x, \txi)-\beta,0) \, \d \Qb'.
  \]
  The minimax theorem of \citet{sion1958general} hence holds.

  From Lemma \ref{lemma:kl-solution}, we have therefore
  \begin{align*}
    & \hat{c}^{\cN,\alpha,r}_{\HRo}(x,\hat{\Pb})\\
    = & \left\{
        \begin{array}{r@{~}l}
          \inf & \displaystyle\int \lambda \log\left(\frac{\lambda}{\eta - \max(\loss^\cN(x, \xi)-\beta,0)}\right)\, \d \Pb'(\xi)+(r-1)\lambda+\eta+\beta(1-\alpha)+\alpha\max_{\xi\in \Sigma}\loss(x,\xi)\\
          \st & \beta\leq \max_{\xi\in \Sigma} \ell(x, \xi), ~\lambda\geq 0,~\eta \geq \max_{\xi \in \Sigma} \max(\loss^\cN(x,\xi)-\beta, 0)
        \end{array}
        \right.\\
    = & \left\{
        \begin{array}{r@{~}l}
          \inf & \displaystyle\int \max\left(\lambda \log\left(\frac{\lambda}{\eta - \loss^\cN(x, \xi)+\beta}\right), \lambda \log\left(\frac{\lambda}{\eta}\right)\right)\, \d \Pb'(\xi)+(r-1)\lambda+\eta+\beta(1-\alpha)+\alpha\max_{\xi\in \Sigma}\loss(x,\xi)\\
          \st & \beta\leq \max_{\xi\in \Sigma} \ell(x, \xi), ~\lambda\geq 0,~\eta \geq \max(\max_{\xi \in \Sigma}\loss^\cN(x,\xi)-\beta, 0)
        \end{array}
        \right.\\
    = & \left\{
        \begin{array}{r@{~}l}
          \inf & \displaystyle\int \max\left(\lambda \log\left(\frac{\lambda}{\eta' - \loss^\cN(x, \xi)}\right), \lambda \log\left(\frac{\lambda}{\eta'-\beta}\right)\right)\, \d \Pb'(\xi)+(r-1)\lambda+\eta'-\beta\alpha+\alpha\max_{\xi\in \Sigma}\loss(x,\xi)\\
          \st & \beta\leq \max_{\xi\in \Sigma} \ell(x, \xi), ~\lambda\geq 0,~\eta' \geq \max(\max_{\xi\in \Sigma}\loss(x,\xi), \beta)
        \end{array}
        \right.\\
    = & \left\{
        \begin{array}{r@{~}l}
          \inf & \displaystyle\int \max\left(\lambda \log\left(\frac{\lambda}{\eta' - \loss^\cN(x, \xi)}\right), \lambda \log\left(\frac{\lambda}{\eta'-\max_{\xi\in \Sigma}\loss(x,\xi)+\beta'}\right)\right)\, \d \Pb'(\xi)+(r-1)\lambda+\eta'+\beta'\alpha\\
          \st & \beta'\geq 0, ~\lambda\geq 0,~\eta' \geq \max(\max_{\xi\in \Sigma}\loss(x,\xi), \max_{\xi\in \Sigma}\loss(x,\xi)-\beta')
        \end{array}
        \right.\\
        = & \left\{
        \begin{array}{r@{~}l}
          \inf & \displaystyle\int \max\left(\lambda \log\left(\frac{\lambda}{\eta' - \loss^\cN(x, \xi)}\right), \lambda \log\left(\frac{\lambda}{\eta'-\max_{\xi\in \Sigma}\loss(x,\xi)+\beta'}\right)\right)\, \d \Pb'(\xi)+(r-1)\lambda+\eta'+\beta'\alpha\\
          \st & \beta'\geq 0, ~\lambda\geq 0,~\eta' \geq \max_{\xi\in \Sigma}\loss(x,\xi)
        \end{array}
        \right.    
  \end{align*}
  \endgroup
  The second equality follows from the fact that the logarithm function is an increasing function. The third and fourth inequalities follow from the change of variables $\eta'=\eta+\beta$ and $\beta'=\max_{\xi\in\Sigma}\ell(x, \xi)-\beta$. Finally, remark that $\max_{\xi\in\Sigma}\ell^\cN(x, \xi) = \max_{\xi\in\Sigma}\ell(x, \xi)$ as $0\in \cN$.
\end{proof}

We can derive an equivalent dual problem based on \citet[Proposition 5]{van2021data} which we will use to prove continuity of the of the HR predictor $\hat{c}^{\cN,\alpha,r}_{\HRo}$ in Lemma \ref{lemma: HR continuous}.

\begin{lemma}[General Dual Formulation II]\label{thm: general dual rep bis}
Let $\hat{\Pb} \in \cP$. For all $x\in \cX$, the $\HRo$ predictor \eqref{eq: HRo predictor} admits for all $r>0$ the dual representation
\begin{equation}
  \label{eq: general dual rep bis}
    \hat{c}^{\cN,\alpha,r}_{\HRo}(x,\hat{\Pb})=
    \left\{
    \begin{array}{rl}
      \inf & \displaystyle \eta - \exp(-r) \exp\left( \int \log\left(\eta - w(\xi)\right) \d\hat{\Pb}(\xi) \right) + \beta \alpha \\[0.5em]
      \st & w: \Sigma \rightarrow \Re, \; \beta \in \Re_+, \; \eta \in \Re,\\[0.5em]
           & w(\xi) \geq \loss^\cN(x, \xi), ~ w(\xi) \geq \max_{\xi\in \Sigma}\loss(x,\xi)-\beta \quad \forall \xi \in \Sigma, \\[0.5em]
           & \eta \geq \max_{\xi \in \Sigma} \loss(x,\xi).
    \end{array}
    \right.
  \end{equation}
\end{lemma}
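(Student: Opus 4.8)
The plan is to start from the dual representation already established in the proof of Theorem~\ref{thm: general dual rep} and to carry out two further simplifications: collapsing the pointwise maximum of the two logarithmic terms into a single epigraph variable $w$, and then minimizing the resulting expression analytically over the scaling multiplier $\lambda$. Recall that the concluding display in the proof of Theorem~\ref{thm: general dual rep} establishes that
\[
  \hat{c}^{\cN,\alpha,r}_{\mathrm{HR}}(x,\hat{\Pb}) = \inf\left\{ \int \max\!\left(\lambda \log\tfrac{\lambda}{\eta - \loss^\cN(x, \xi)},\, \lambda \log\tfrac{\lambda}{\eta-\max_{\xi'\in \Sigma}\loss(x,\xi')+\beta}\right) \d\hat{\Pb}(\xi) + (r-1)\lambda + \eta + \beta\alpha \right\}
\]
over $\beta \ge 0$, $\lambda \ge 0$ and $\eta \ge \max_{\xi\in\Sigma}\loss(x,\xi)$, so this will be my point of departure.

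First I would introduce a measurable function $w:\Sigma\to\Re$ constrained by $w(\xi)\ge \loss^\cN(x,\xi)$ and $w(\xi)\ge \max_{\xi'\in\Sigma}\loss(x,\xi')-\beta$ for all $\xi$. Since $\lambda\ge0$ and the logarithm is increasing, the integrand above equals $\lambda\log\frac{\lambda}{\eta-w(\xi)}$ evaluated at the pointwise maximum $w(\xi)=\max(\loss^\cN(x,\xi),\,\max_{\xi'}\loss(x,\xi')-\beta)$; moreover the objective is monotone nondecreasing in $w$ (increasing $w$ shrinks $\eta-w$ and hence enlarges the logarithm), so replacing $w$ by a free function subject only to the two lower bounds leaves the infimum unchanged and the minimizing $w$ is exactly that pointwise maximum. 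A short check confirms $w(\xi)\le \max_{\xi'}\loss(x,\xi')\le \eta$, using that $0\in\cN$ forces $\max_\xi \loss^\cN(x,\xi)=\max_\xi\loss(x,\xi)$, so $\eta-w(\xi)\ge 0$ throughout and the logarithm is well defined.

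It then remains to minimize $\lambda\mapsto \int \lambda\log\frac{\lambda}{\eta-w(\xi)}\,\d\hat\Pb(\xi)+(r-1)\lambda$ over $\lambda\ge0$ for fixed $(w,\eta,\beta)$. This is precisely the analytic reduction underlying \citet[Proposition~5]{van2021data}: writing the objective as $\lambda\log\lambda + A\lambda$ with $A=(r-1)-\int\log(\eta-w(\xi))\,\d\hat\Pb(\xi)$, the first-order condition yields $\lambda^\star=\exp(-r)\exp(\int\log(\eta-w(\xi))\,\d\hat\Pb(\xi))$ and optimal value $-\lambda^\star$, which after adding back $\eta+\beta\alpha$ gives exactly the claimed objective $\eta-\exp(-r)\exp(\int\log(\eta-w(\xi))\,\d\hat\Pb(\xi))+\beta\alpha$, with the stated constraints on $w$, $\beta$ and $\eta$. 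The main obstacle is purely a matter of careful bookkeeping at the boundary: when $\int\log(\eta-w)\,\d\hat\Pb=-\infty$ (for instance if $w=\eta$ on a set of positive mass) the minimizer degenerates to $\lambda^\star=0$ and the value is simply $\eta$, so I would verify that the stationarity computation and this degenerate case are consistent and that the conventions $0\log0=0$ and $\exp(-\infty)=0$ glue them together, exactly as in \citet{van2021data}.
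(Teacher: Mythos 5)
Your proof is correct, and it takes a mildly but genuinely different route from the paper's. The paper does not post-process the final dual display of Theorem \ref{thm: general dual rep}; instead it backs up to the intermediate minimax identity established inside that proof, namely $\hat{c}^{\cN,\alpha,r}_{\mathrm{HR}}(x,\hat{\Pb})=\inf\{\beta(1-\alpha)+\sup\{\Eb_{\Qb'}[\max(\loss^\cN(x,\txi)-\beta,0)] : \Qb'\in\cP,\ \DKL(\hat{\Pb}||\Qb')\leq r\} : \beta\leq\max_{\xi\in\Sigma}\loss(x,\xi)\}+\alpha\max_{\xi\in\Sigma}\loss(x,\xi)$, applies \citet[Proposition 5]{van2021data} as a black box to the inner KL-supremum (which directly yields the $\eta-\exp(-r)\exp(\int\log(\eta-\cdot)\,\d\hat{\Pb})$ form), and then performs the same changes of variables $\eta'=\eta+\beta$, $\beta'=\max_{\xi\in\Sigma}\loss(x,\xi)-\beta$ as in Theorem \ref{thm: general dual rep}; the epigraph variable $w$ in the lemma statement then encodes the pointwise maximum $\max(\loss^\cN(x,\xi),\max_{\xi'\in\Sigma}\loss(x,\xi')-\beta')$ by monotonicity. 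Your route — introducing $w$ first and then eliminating $\lambda$ via the elementary computation $\inf_{\lambda\geq 0}\{\lambda\log\lambda+A\lambda\}=-e^{-(1+A)}$ with $A=(r-1)-\int\log(\eta-w(\xi))\,\d\hat{\Pb}(\xi)$ — re-derives inline precisely the analytic reduction that Proposition 5 packages, and your bookkeeping checks out: $\lambda^\star=\exp(-r)\exp(\int\log(\eta-w)\,\d\hat{\Pb})$ with optimal value $-\lambda^\star$ recovers the stated objective; the epigraph relaxation is justified because the objective is nondecreasing in $w$ and the minimizing $w$ equals the pointwise maximum, which satisfies $w(\xi)\leq\max_{\xi'\in\Sigma}\loss(x,\xi')\leq\eta$ since $0\in\cN$; and the degenerate case $\int\log(\eta-w)\,\d\hat{\Pb}=-\infty$ gluing to $\lambda^\star=0$ is handled by the usual conventions. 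What your version buys is self-containedness and the transparent structural statement that Lemma \ref{thm: general dual rep bis} is exactly Theorem \ref{thm: general dual rep} with $\lambda$ optimized out in closed form; what the paper's version buys is brevity, delegating the boundary and attainment analysis of the $\lambda$-minimization to the cited proposition rather than verifying it by hand.
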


\begin{proof}
  We shown in the Proof of Theorem \ref{thm: general dual rep}  that
  \begingroup
  \allowdisplaybreaks
  \begin{align*}
    & \hat{c}^{\cN,\alpha,r}_{\HRo}(x,\hat{\Pb})\\
    = & \inf\set{\beta(1-\alpha)+\sup \set{\Eb_{\Qb'}[ \max(\loss^\cN(x, \txi)-\beta,0)]}{\Qb' \in \cP, ~ \KL (\hat{\Pb}||\Qb') \leq r}
        }{\beta\leq \max_{\xi\in \Sigma} \loss(x, \xi)}\\
      & \qquad +
        \alpha\max_{\xi\in \Sigma}\loss(x,\xi).
  \end{align*}

  Using \citet[Proposition 5]{van2021data}, we have
  \begin{align*}
    & \hat{c}^{\cN,\alpha,r}_{\HRo}(x,\hat{\Pb})\\
    = & \left\{
        \begin{array}{r@{~}l}
          \inf & \displaystyle \eta -\exp(-r) \exp\left(\int \log\left(\eta - \max(\loss^\cN(x, \xi)-\beta,0)\right) \d \Pb'(\xi)\right)+\beta(1-\alpha)+\alpha\textstyle\max_{\xi\in \Sigma}\loss(x,\xi)\\
          \st & \beta\leq \max_{\xi\in \Sigma} \ell(x, \xi), \eta \geq \max_{\xi \in \Sigma} \max(\loss^\cN(x,\xi)-\beta, 0)
        \end{array}
        \right.\\
    = & \left\{
        \begin{array}{r@{~}l}
          \inf & \displaystyle \eta' -\exp(-r) \exp\left(\int \log\left(\eta'- \max(\loss^\cN(x, \xi),\beta)\right) \d \Pb'(\xi)\right)-\beta\alpha+\alpha\textstyle\max_{\xi\in \Sigma}\loss(x,\xi)\\
          \st & \beta\leq \max_{\xi\in \Sigma} \ell(x, \xi), \eta' \geq \max_{\xi \in \Sigma} \max(\loss^\cN(x,\xi), \beta)
        \end{array}
            \right.\\
    = & \left\{
        \begin{array}{r@{~}l}
          \inf & \displaystyle \eta' -\exp(-r) \exp\left(\int \log\left(\eta'-\max\left(\loss^\cN(x, \xi),{\textstyle\max_{\xi\in \Sigma}\loss(x,\xi)}-\beta'\right)\right) \d \Pb'(\xi)\right)+\beta'\alpha\\
          \st & \beta'\geq 0, \eta' \geq \max_{\xi \in \Sigma} \loss(x,\xi)
        \end{array}
        \right.
  \end{align*}
  \endgroup
  The second equality follows from the fact that the logarithm function is an increasing function. The third and fourth inequalities follow from the change of variables $\eta'=\eta+\beta$ and $\beta'=\max_{\xi\in\Sigma}\ell(x, \xi)-\beta$. Finally, remark that $\max_{\xi\in\Sigma}\ell^\cN(x, \xi) = \max_{\xi\in\Sigma}\ell(x, \xi)$ as $0\in \cN$.
\end{proof}

\subsection{Continuity of the HR predictor}
\label{sec:cont-hr-pred}

\begin{lemma}\label{lemma: HR continuous}
For $\alpha>0$, $r>0$ and $x\in \cX$, the cost predictor
$\hat{\Pb} \mapsto \hat{c}^{\cN,\alpha,r}_{\HRo}(x,\hat{\Pb})$ is weakly continuous.
\end{lemma}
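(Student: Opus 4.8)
The plan is to work in the weak topology on $\cP$, which since $\Sigma$ is a compact metric space is itself compact and metrizable, so that continuity reduces to sequential continuity. My starting point is the dual representation of Lemma~\ref{thm: general dual rep bis}. There the objective is nonincreasing in $\int \log(\eta - w(\xi))\,\d\hat\Pb(\xi)$ and hence nonincreasing in $w$, so the pointwise smallest feasible choice $w(\xi)=\max(\loss^{\cN}(x,\xi),\,M-\beta)$, with $M:=\max_{\xi\in\Sigma}\loss(x,\xi)$, is optimal. Eliminating $w$ reduces the predictor to
\[
  \hat{c}^{\cN,\alpha,r}_{\mathrm{HR}}(x,\hat\Pb)=\inf_{\beta\ge 0,\ \eta\ge M} F(\beta,\eta,\hat\Pb),\qquad F(\beta,\eta,\hat\Pb):=\eta-e^{-r}\exp\big(\textstyle\int \log\big(\eta-w_\beta(\xi)\big)\,\d\hat\Pb(\xi)\big)+\beta\alpha,
\]
where $w_\beta(\xi):=\max(\loss^{\cN}(x,\xi),\,M-\beta)$. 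Note that $w_\beta$ is jointly continuous in $(\beta,\xi)$ because $\loss^{\cN}(x,\cdot)$ is continuous (Berge's maximum theorem and compactness of $\cN$, exactly as in the proof of Theorem~\ref{thm: general dual rep}), and that $m\le w_\beta(\xi)\le M$ with $m:=\min_{\xi\in\Sigma}\loss(x,\xi)$.

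First I would restrict the infimum to a fixed compact box, and this is precisely where the hypotheses $r>0$ and $\alpha>0$ enter. Jensen's inequality with $w_\beta\in[m,M]$ gives $\exp(\int\log(\eta-w_\beta)\,\d\hat\Pb)\le \eta-m$, so $F(\beta,\eta,\hat\Pb)\ge (1-e^{-r})\eta+e^{-r}m+\beta\alpha$; on the other hand the feasible point $(\beta,\eta)=(0,M)$ gives $F=M$, so $\hat{c}^{\cN,\alpha,r}_{\mathrm{HR}}\le M$. Since $r>0$ any optimal $\eta$ must satisfy $\eta\le \eta_{\max}:=(M-e^{-r}m)/(1-e^{-r})$, and since $\alpha>0$ any optimal $\beta$ must satisfy $\beta\le\beta_{\max}:=e^{-r}(M-m)/\alpha$. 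Hence $\hat{c}^{\cN,\alpha,r}_{\mathrm{HR}}(x,\hat\Pb)=\min_{(\beta,\eta)\in C'}F(\beta,\eta,\hat\Pb)$ uniformly in $\hat\Pb$, for the compact box $C':=[0,\beta_{\max}]\times[M,\eta_{\max}]$.

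Upper semicontinuity is then immediate: for each fixed $(\beta,\eta)$ with $\eta>M$ the integrand $\xi\mapsto\log(\eta-w_\beta(\xi))$ is bounded and continuous, so $\hat\Pb\mapsto F(\beta,\eta,\hat\Pb)$ is weakly continuous, and an infimum of continuous functions is upper semicontinuous. It remains to check that discarding the boundary $\eta=M$ does not change the value, which holds because for fixed $(\beta,\hat\Pb)$ the map $\eta\mapsto F(\beta,\eta,\hat\Pb)$ is right-continuous at $\eta=M$ (monotone convergence applied to $\log(\eta-w_\beta)$, which decreases to $\log(M-w_\beta)\in[-\infty,\log(M-m)]$, together with $\exp(-\infty)=0$). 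For lower semicontinuity I would take $\hat\Pb_n\to\hat\Pb$ weakly and minimizers $(\beta_n,\eta_n)\in C'$ of $F(\cdot,\cdot,\hat\Pb_n)$; passing to a subsequence, $(\beta_n,\eta_n)\to(\beta^\star,\eta^\star)\in C'$. The crucial observation is that $(\beta,\eta,\xi)\mapsto\log(\eta-w_\beta(\xi))$ is jointly upper semicontinuous on $C'\times\Sigma$ (continuous where positive and $-\infty$ on the closed set $\{\eta=w_\beta(\xi)\}$) and uniformly bounded above by $\log(\eta_{\max}-m)$. By the upper-semicontinuous ``moving function'' form of the portmanteau theorem one then gets $\limsup_n\int\log(\eta_n-w_{\beta_n})\,\d\hat\Pb_n\le\int\log(\eta^\star-w_{\beta^\star})\,\d\hat\Pb$, whence $\liminf_n F(\beta_n,\eta_n,\hat\Pb_n)\ge F(\beta^\star,\eta^\star,\hat\Pb)\ge \hat{c}^{\cN,\alpha,r}_{\mathrm{HR}}(x,\hat\Pb)$, i.e. lower semicontinuity. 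Combining the two directions yields continuity.

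I expect the main obstacle to be the logarithmic singularity of the dual objective at $\eta=M$ combined with the fact that $\hat\Pb_n$ converges only weakly and not in total variation: atoms of $\hat\Pb$ at near-maximizers of $\loss^{\cN}$ can drive the integrand to $-\infty$, so bounded-continuous test functions do not suffice and one must instead invoke the upper-semicontinuous portmanteau estimate above. Verifying its two hypotheses, namely the joint upper semicontinuity of $(\beta,\eta,\xi)\mapsto\log(\eta-w_\beta(\xi))$ and the uniform upper bound on this integrand over the compact box $C'$, is the technical heart of the argument; the compactification step (and thus the use of $r>0$ and $\alpha>0$) is what makes both the subsequence extraction and the uniform bound available.
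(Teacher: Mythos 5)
Your proof is correct, but it takes a genuinely different route from the paper's. Both arguments start from the dual representation of Lemma~\ref{thm: general dual rep bis} with $w$ eliminated at its pointwise-minimal value $w_\beta(\xi)=\max(\loss^{\cN}(x,\xi),\max_{\xi'\in\Sigma}\loss(x,\xi')-\beta)$, and both use $r>0$ and $\alpha>0$ to confine $(\beta,\eta)$ to a fixed compact box (your $\beta_{\max},\eta_{\max}$ mirror the paper's $M_1,M_2$). The difference is how the logarithmic singularity at $\eta=\max_{\xi\in\Sigma}\loss(x,\xi)$ is handled. The paper sidesteps it entirely: it perturbs the constraint to $\eta\geq\max_\xi\loss(x,\xi)+\epsilon$, so the integrand is bounded and continuous, applies Berge's maximum theorem to get continuity of each perturbed value function $c_\epsilon$, and concludes via the sandwich $c_\epsilon-\epsilon\leq \hat{c}^{\cN,\alpha,r}_{\mathrm{HR}}(x,\cdot)\leq c_\epsilon$, i.e.\ a uniform limit of continuous functions. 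You instead confront the boundary directly, proving upper and lower semicontinuity separately: upper semicontinuity as an infimum of weakly continuous functions over $\eta>M$ plus monotone-convergence right-continuity at $\eta=M$, and lower semicontinuity via the upper-semicontinuous moving-function portmanteau estimate applied to $(\beta,\eta,\xi)\mapsto\log(\eta-w_\beta(\xi))$, which is indeed jointly continuous into $[-\infty,\infty)$ and uniformly bounded above on your box, so the estimate is legitimate (e.g.\ via the product measures $\delta_{(\beta_n,\eta_n)}\otimes\hat{\Pb}_n$, or by approximating the u.s.c.\ integrand from above by continuous functions). The paper's perturbation trick buys simplicity --- no extended-real-valued integrands, and Berge's theorem does all the topological work --- at the price of the small sandwich argument; your version avoids the auxiliary family $c_\epsilon$ and yields the two semicontinuity directions as separate, reusable statements, at the price of heavier machinery in the lower-semicontinuity step. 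Two cosmetic points: the phrase ``hence nonincreasing in $w$'' has the monotonicity reversed (the objective is nondecreasing in $w$, which is exactly why the pointwise smallest feasible $w$ is optimal, as you then correctly use), and your extraction of exact minimizers $(\beta_n,\eta_n)$ tacitly uses lower semicontinuity of $F(\cdot,\cdot,\hat{\Pb}_n)$ on the compact box --- this follows from the same reverse-Fatou observation you already make, or one can simply work with $\epsilon_n$-minimizers; neither affects the validity of the argument.
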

\begin{proof}
  Let again $\loss^{\infty}(x) = \max_{\xi\in \Sigma} \loss(x,\xi)$ and define
\begin{equation}\label{proof eq: c}
    \begin{array}{r@{~}l}
      \hat{c}_{\epsilon}(\hat{\Pb})\defn \inf & f(\beta, \eta)\defn
     \eta - \exp(-r) \exp\left( \int \log\left(\eta - \max(\loss^\cN(x, \xi), \loss^{\infty}(x)-\beta)\right) \d\hat{\Pb}(\xi) \right) + \beta \alpha\\
      \st & \beta \geq 0, \; \eta \geq \loss^{\infty}(x)+\epsilon
    \end{array}
  \end{equation}
  for all $\hat{\Pb}\in \cP$ and $\epsilon>0$. 
  Let us first show that the feasible set of the optimization problem in Equation \eqref{proof eq: c} can be restricted to a compact set independent of $\hat \Pb$ without loss of optimality.
  For any $\beta> M_1\defn \loss^{\infty}(x) - \min_{\xi\in\Sigma} \loss^\cN(x,\xi)$ we have that the objective function simplifies to
  \[
    f(\beta, \eta)=\eta - \exp(-r) \exp\left( \int \log\left(\eta - \loss^\cN(x, \xi)\right) \d\hat{\Pb}(\xi) \right) + \beta \alpha
  \]
  which is strictly increasing in $\beta$ as $\alpha>0$. Therefore, we may impose $\beta \leq M_1$ without loss of optimality.
  \citet[Proposition 5]{van2021data} ensures that for any fixed $\beta$ the minimum in $\eta$ is attained at 
  \[
    \ell^\infty(x)\leq \eta \leq M_2=\frac{\ell^\infty(x)+\epsilon-\exp(-r)\min_{\xi\in \Sigma}\ell^{\cN}(x, \xi)}{1-\exp(-r)}.
  \]
    
  Let us show the continuity of the function $c_{\epsilon}: \cP \rightarrow \Re$, defined as
  \begin{equation}\label{proof eq: cepsilon}
    c_{\epsilon} (\hat{\Pb}) =
    \left\{
      \begin{array}{rl}
        \inf & \eta - \exp(-r) \exp\left( \int \log\left(\eta - \max(\loss^\cN(x, \xi), \loss^{\infty}(x)-\beta)\right) \d\hat{\Pb}(\xi) \right) + \beta \alpha\\[0.5em]
        \st & M_1 \geq \lambda \geq 0, \; M_1 \geq \beta \geq 0, \; M_2\geq \eta \geq \loss^{\infty}(x) + \epsilon.
      \end{array}
    \right.
  \end{equation}
  for all $\hat{\Pb}\in \cP$. We will use the maximum theorem of \cite[p.\ 116]{berge1997topological} to show that the function $c_\epsilon:\cP \to \Re$ is continuous. It suffices to show that the set defined by the constraints is compact and that the objective function is continuous.
  First, the feasible region is clearly compact and nonempty.
  We need to show that the objective function is continuous in $(\beta, \eta, \Pb)$.
  Consider an arbitrary sequence $(\beta_i, \eta_i, \Pb_i)$ for $i\geq 1$ with limit $(\beta, \eta, \Pb)$.
  Remark that we have
  \begin{align*}
    & \lim_{i\to\infty} \int \log\left(\eta - \max(\loss^\cN(x, \xi), \loss^{\infty}(x)-\beta)\right) \,\d\hat{\Pb}_i(\xi) 
   =  \int \log\left(\eta - \max(\loss^\cN(x, \xi), \loss^{\infty}(x)-\beta)\right) \,\d\hat{\Pb}(\xi)
  \end{align*}
  as the integrant is bounded in $[\log(\epsilon), \log(M_2-\min_{\xi\in\Sigma}\loss^{\cN}(x, \xi))]$ and continuous. It is immediate that we have also
   \begin{align*}
     & \abs{ \log\left(\eta - \max(\loss^\cN(x, \xi), \loss^{\infty}(x)-\beta)\right) - \log\left(\eta' - \max(\loss^\cN(x, \xi), \loss^{\infty}(x)-\beta')\right) } 
   \leq  \tfrac{\left(\abs{\eta-\eta'}+\abs{\beta-\beta'}\right)}{\epsilon}.
   \end{align*}
   for all $(\beta, \eta)$ and $(\beta', \eta')$ feasible in \eqref{proof eq: c}. Hence,
   \begin{align*}
      & \lim_{i\to\infty} \Bigg| \int \log\left(\eta_i - \max(\loss^\cN(x, \xi), \loss^{\infty}(x)-\beta_i)\right) \,\d\hat{\Pb}_i(\xi)  - \int \log\left(\eta - \max(\loss^\cN(x, \xi), \loss^{\infty}(x)-\beta)\right) \,\d\hat{\Pb}(\xi)\Bigg|\\
     \leq & \lim_{i\to\infty} \Bigg| \int \log\left(\eta_i - \max(\loss^\cN(x, \xi), \loss^{\infty}(x)-\beta_i)\right) \,\d\hat{\Pb}_i(\xi) - \int \log\left(\eta - \max(\loss^\cN(x, \xi), \loss^{\infty}(x)-\beta)\right) \,\d\hat{\Pb}_i(\xi)\Bigg|+\\
      & \lim_{i\to\infty} \Bigg|\int \log\left(\eta - \max(\loss^\cN(x, \xi), \loss^{\infty}(x)-\beta)\right) \,\d\hat{\Pb}_i(\xi) - 
      \int \log\left(\eta - \max(\loss^\cN(x, \xi), \loss^{\infty}(x)-\beta)\right) \,\d\hat{\Pb}(\xi)\Bigg|\\
     \leq & \lim_{i\to\infty} \int\tfrac{\left(\abs{\eta_i-\eta}+\abs{\beta_i-\beta}\right)}{\epsilon}  \,\d\hat{\Pb}_i(\xi)=0.
   \end{align*}
   As the exponential function is continuous, we can apply Berge's maximum theorem and conclude that $c_{\epsilon}$ is continuous.
   
  Following Lemma \ref{thm: general dual rep bis} it follows that $c_{\epsilon} (\hat{\Pb}) \geq \hat{c}^{\cN,\alpha,r}_{\HRo}(x,\hat{\Pb})$ for all $\hat{\Pb}$, as any feasible dual solution $(\beta ,\eta)$ in the problem \eqref{proof eq: cepsilon} is also feasible in problem \eqref{eq: general dual rep bis}. Furthermore, for a given $\hat{\Pb} \in \cP$, if  $(\beta ,\eta)$ is a feasible solution of problem \eqref{eq: general dual rep bis} with cost $c$, then the dual point $(\lambda,\beta ,\eta + \epsilon)$ is feasible in problem \eqref{proof eq: c} and its cost is less than $c + \epsilon$. Hence, $c_{\epsilon} (\hat{\Pb}) \leq \hat{c}^{\cN,\alpha,r}_{\HRo}(x,\hat{\Pb}) + \epsilon$. We conclude that for all $\hat{\Pb} \in \cP$ and $\epsilon>0$, $c_{\epsilon} (\hat{\Pb}) \geq \hat{c}^{\cN,\alpha,r}_{\HRo}(x,\hat{\Pb}) \geq c_{\epsilon} (\hat{\Pb}) - \epsilon$. Furthermore, as $c_{\epsilon}$ is continuous for all $\epsilon>0$, we have that $\hat{c}^{\cN,\alpha,r}_{\HRo}(x,\cdot)$ is continuous as uniform limits of continuous functions are continuous.
\end{proof}

\section{Application to Linear Regression and Classification}
\label{App: Linear-application}
\subsection{Linear Regression Example}
We consider here a linear regression problem based on historically observed data  $(X_t,Y_t)_{t\in[T]}$ and $L_1$ loss function $\loss(\theta, (X,Y)) = |\theta^\top X - Y|$.
Given a particular covariate and response variable $(X,Y) \in \Re^d \times \Re$, $d>1$, this loss function is convex in the regression vector $\theta$ which we assume is constrained in the convex parameter set $\Theta\subseteq \Re^d$. Ordinary $L_1$ regression considers minimizing the empirical risk 
\[
  \min_{\theta \in \Theta} ~\textstyle \frac{1}{T} \sum_{t\in [T]} |\theta^\top X_t - Y_t|.
\]
The resulting $L_1$ regressor is however well documented to be susceptible to overfitting effects. 

In what follows we first point out that several classical regression regularization schemes studied in the literature reduce to a particular case of the HR linear regression formulation introduced here. We do remark though that none of these regularization schemes protect against statistical error, noise and misspecification simultaneously as does HR linear regression. Typically, these regularization schemes are designed to protect against only one particular source of overfitting instead.

\paragraph{Robustness Against Statistical Error  $(\cN = \{0\}, r>0, \alpha=0)$}
Suppose our objective is robustness against statistical error only.
Assume here that a worst case has been observed and $\Sigma = \{(X_t, Y_t)\}_{t\in[T]}$. Hence, we have the identities $\max_{k\in [K]}\loss^{\cN}(x,\xi_k)= \max_{\xi \in \Sigma} \loss(x,\xi)=\max_{k\in [K]}\loss(x,\xi_k)$.
HR linear regression reduces following Equation \eqref{eq:hr alpha=0} to
\begin{equation*}
  \begin{array}{r@{~}l}
    \min_{\theta\in\Theta} \sup & \frac 1T \sum_{t\in[T]} p_t |\theta^\top X_t - Y_t|\\[0.5em]
    \st & p \in \Re_+^T , ~\sum_{t\in[T]} \frac{1}{T} \log(\tfrac{1}{(Tp_t)}) \leq r
  \end{array}
\end{equation*}
which is precisely the likelihood robust approach proposed in \citet{wang2016likelihood}. We remark again that this approach protects against statistical error but offers no safeguards against either noise or misspecification.

Let us investigate the regime in which the robustness radius decreases as $r(T)=r'/T$ with $T\to\infty$.
From \citet[Theorem 2]{namkoong2017variance} for $\Theta$ compact it follows that
$$
\left\{
      \begin{array}{rl}
        \sup & \frac 1T \sum_{t\in[T]} p_t |\theta^\top X_t - Y_t|\\[0.5em]
        \st & p \in \Re^T , ~\sum_{t\in[T]}\log(\tfrac{1}{(Tp_t)}) \leq r'
      \end{array}
\right.
=
\frac{1}{T}
  \sum_{t\in[T]}
  |\theta^\top X_t - Y_t|+
  \sqrt{\frac{2r'}{T} \Var_{\hat{\Pb}_T}(|\theta^\top \tilde{X} - \tilde{Y}|)
  }
  +\Delta_T(\theta) 
$$
with $\lim_{T\to\infty}\sup_{\theta\in\Theta} \Delta_T(\theta) \sqrt{T} = 0$ in probability which we recognize as the sample variance penalization approach of \citet{maurer2009empirical}. Hence, in the large data regime in the absence of noise and misspecification the HR regression formulations  offer a variance regularization interpretation. As observed though by \citet{namkoong2017variance} it should be remarked that the variance regularized formulation is not convex and hence it is often preferable to use convex likelihood robust formulation instead.

\paragraph{Robustness Against Noise $(\cN = \cB_2(0,\epsilon)\times \{0\}, r=0, \alpha =0)$}
Suppose now our objective to protect again noise in the data input.
As in \citet{xu2009robustness} we will assume that the covariate data may be noisy and subject to misspecification while the response is not. We assume more precisely that the covariate data is subjected to noise bounded in Euclidean norm while the response variable is noise free and hence take $\cN = \cB_2(0,\epsilon)\times \{0\}$ for $\epsilon\geq 0$. In this regression context we have that the inflated loss function can be evaluated efficiently and in fact admits the analytic expression \(\ell^{\cN}(\theta, (X, Y)) = |\theta^\top X - Y| + \epsilon \|\theta\|_2. \)

As here only robustness against norm bounded noise is desired, HR linear regression reduces following Equation (\ref{eq:hr r=0 alpha=0}) to
\begin{align*}
    \textstyle \min_{\theta\in \Theta} \frac{1}{T} \sum_{t\in[T]}\sup_{\|n_t\|_2 \leq \epsilon} |\theta^\top (X_t-n_t) - Y_t| 
    =
    \min_{\theta\in \Theta} \frac{1}{T} \sum_{t\in[T]} |\theta^\top X_t - Y_t| + \epsilon \|\theta\|_2
\end{align*}
which is readily identified as classical Tikhonov or Ridge regularization. This robust interpretation of classical Ridge regularization was pointed out by \citet{xu2009robustness} already. We do indicate though that classical Ridge regularization does not guard against either statistical noise or misspecification. In fact, as we assumed here that the noise set is of the form $\cN = \cB_2(0,\epsilon)\times \{0\}$ it can be remarked that Ridge regularization only protects against noisy covariate data.

\paragraph{Holistic Robustness $(\cN = \cB_2(0,\epsilon)\times \{0\}, r>0, \alpha >0)$}
While several classical ``robust'' formulations exist for linear regression, a natural question is how to combine the benefits of each formulation and provide efficient robustness. The previous sections in this paper suggest that Ridge regression protects against noise while SVP and the likelihood approach protect again statistical error. A natrual combination of both, with additional protection against misspecification, naturally spurs from the HR predictor.

Holistic robust linear regression can be defined following the discussion in Section \ref{sec:HR-robustness} for the particular loss function and noise sets considered here.
We assume here the existence of a compact set $\Sigma$ so that $(X, Y) \in \Sigma$ and take here for simplicity $\Sigma=\set{(X_t,Y_t)}{t\in[T]}+\cB_2(0,\epsilon')\times \{0\}$ for some sufficiently large $\epsilon'\geq \epsilon$.
Theorem \ref{thm: dual rep HD} suggests the following HR linear regression formulation
\begin{equation*}
  \left\{
    \begin{array}{rl}
      \inf &  \frac 1T \sum_{t \in [T]} w_t + \lambda (r-1) + \beta \alpha + \eta\\[0.5em]
      \st & \theta\in \Theta, \; w\in \Re^T, \; \lambda \geq 0, \; \beta \geq 0, \; \eta \in \Re,\\[0.5em]
           & w_t \geq \lambda \log \left( \frac{\lambda}{\eta - |\theta^\top X_t - Y_t| - \epsilon \|\theta\|_2}\right), ~ w_t \geq \lambda \log \left( \frac{\lambda}{\eta - \max_{t'\in[T]} \, |\theta^\top X_{t'} - Y_{t'}| - \epsilon' \|\theta\|_2 }\right)-\beta \quad \forall t \in [T], \\[0.5em]
           & \eta \geq \max_{t\in[T]} \, |\theta^\top X_t - Y_t| + \epsilon' \|\theta\|_2.
    \end{array}\right.
\end{equation*}
As we have pointed out in Section \ref{sec:HR-robustness}, this HR linear regression formulation is tractable convex minimization problems which can be solved using off-the-shelf optimization routines. Interestingly, this formulation does not appear to be a straightforward combination of the previous robust formulations at a first glance. However, it appears naturally from our HR-DRO results of Section \ref{sec:HR-robustness} and indeed offers efficient simultaneous robustness to statistical error, noise and misspecification.

\subsection{Linear Classification Example}

A linear classifier characterized by a coefficient vector $\theta\in \Theta\subseteq\Re^d$ and bias term $b \in \Re$ predicts class labels $Y\in \{-1,1\}$ as $\theta^\top X - b$ when given access to covariate data $X \in \Re^d$. 
Given historical data $(X_t,Y_t)_{t\in[T]}$, empirical risk minimization following \citet{bertsimas2019robust} would suggest to use a classifier associated with a minimizer in
\begin{align}\label{eq: ERM SVM}
   \min_{\theta\in \Theta,\,b\in \Re} \textstyle \frac{1}{T} \sum_{t\in [T]} \max\{ 1- Y_t(\theta^\top X_t -b), 0\}=\left\{
      \begin{array}{rl}
        \min & \frac{1}{T} \sum_{t\in [T]} \nu_t\\
        \st & \theta\in \Theta,~b\in \Re,~\nu\in\Re^T_+,\\
            & Y_t(\theta^\top X_t -b) \geq 1 - \nu_t \quad \forall t\in [T].
      \end{array}
      \right.
\end{align}
A data set is denoted as separable if there exists a separating classifier with coefficient vector $\bar \theta\in \Theta$ and bias term $\bar b\in \Re$ so that $Y_t(\bar \theta^\top X_t -\bar b)\geq 1$ for all $t\in [T]$.
Clearly, when the data points are separable the minimum in Equation \eqref{eq: ERM SVM} will be zero.
The hinge loss function $\ell((\theta, b), (X, Y))=\max\{ 1- Y(\theta^\top X -b), 0\}$ considered here however does not require the data to be separable.
As in the previous section we will assume that the covariate data may be noisy or may be misspecified while the labels data are correct.
We assume more precisely that the historical covariate data is subjected to noise bounded in Euclidean norm while the dependent data is noise free and hence take $\cN = \cB_2(0,\epsilon)\times \{0\}$ for $\epsilon\geq 0$. 

HR linear classification can now be defined following the discussion in Section \ref{sec:HR-pred} for the hinge loss function and noise set considered here.
Assume here that $(X, Y)\in \Sigma=\set{(X_t,Y_t)}{t\in[T]}+\cB_2(0,\epsilon')\times \{0\}$ for some sufficiently large $\epsilon'\geq \epsilon$.
In this context the inflated loss function can be characterized as \(\ell^{\cN}((\theta,b), (X, Y)) = \max (1- Y(\theta^\top X -b) + \epsilon\|\theta\|_2, 0).
\)
Theorem \ref{thm: dual rep HD} suggests the following holistic robust classifier
\begin{equation*}
  \left\{
    \begin{array}{rl}
      \inf & \frac 1T  \sum_{t \in [T]} w_t + \lambda (r-1) + \beta \alpha + \eta\\[0.5em]
      \st & \theta\in \Theta,  \; b\in \Re, \; w\in \Re^T, \; \lambda \geq 0, \; \beta \geq 0, \; \eta \in \Re_+,\\[0.5em]
           & w_t \geq \lambda \log \left( \frac{\lambda}{\eta} \right)\quad \forall t \in [T],\\[0.5em]
           & w_t \geq \lambda \log \left( \frac{\lambda}{\eta - 1+ Y_t(\theta^\top X_t -b) - \epsilon\|\theta\|_2} \right) \quad \forall t \in [T],\\[0.5em]
           & w_t \geq \lambda \log \left( \frac{\lambda}{\eta - 1+ \min_{t'\in[T]}Y_{t'}(\theta^\top X_{t'} -b) - \epsilon' \|\theta\|_2 }\right)-\beta \quad \forall t \in [T], \\[0.5em]
           & \eta \geq \max_{t\in[T]} \, 1- Y_t(\theta^\top X_t -b) + \epsilon\|\theta\|_2
    \end{array}\right.
\end{equation*}
to safeguard against statistical error, noise and misspecification.
As we have pointed out in Section \ref{sec:HR-robustness}, the HR linear classification formulations is a tractable convex minimization problems which can be solved using off-the-shelf optimization routines.

In the remainder of this section we shall visualize the effect each of the robustness parameters $(\cN,r,\alpha)$ has on the HR linear classifiers by way of a small example.
We consider data points $(X_t,Y_t)_{t\in[T]}$ distributed in the two dimensional square $[0,1]^2$.
The data points of the two classes are distributed following uniform distributions on rectangles on the two sides of the hyperplane with coefficient vector $\theta=(-1,1)$ and bias term $b=0$.
Figure \ref{fig: SVM Uniform} illustrates the data distribution along with the true classifier, and an ERM classifier run on smaller data sample.
In the following experiments, we do not observe data points from this distribution directly but rather from a corrupted distributions and study the effect of robustness provided by our HR approach.

\begin{figure}[t]
  \centering
  \begin{subfigure}{0.45\textwidth}
    \includegraphics[width=\textwidth]{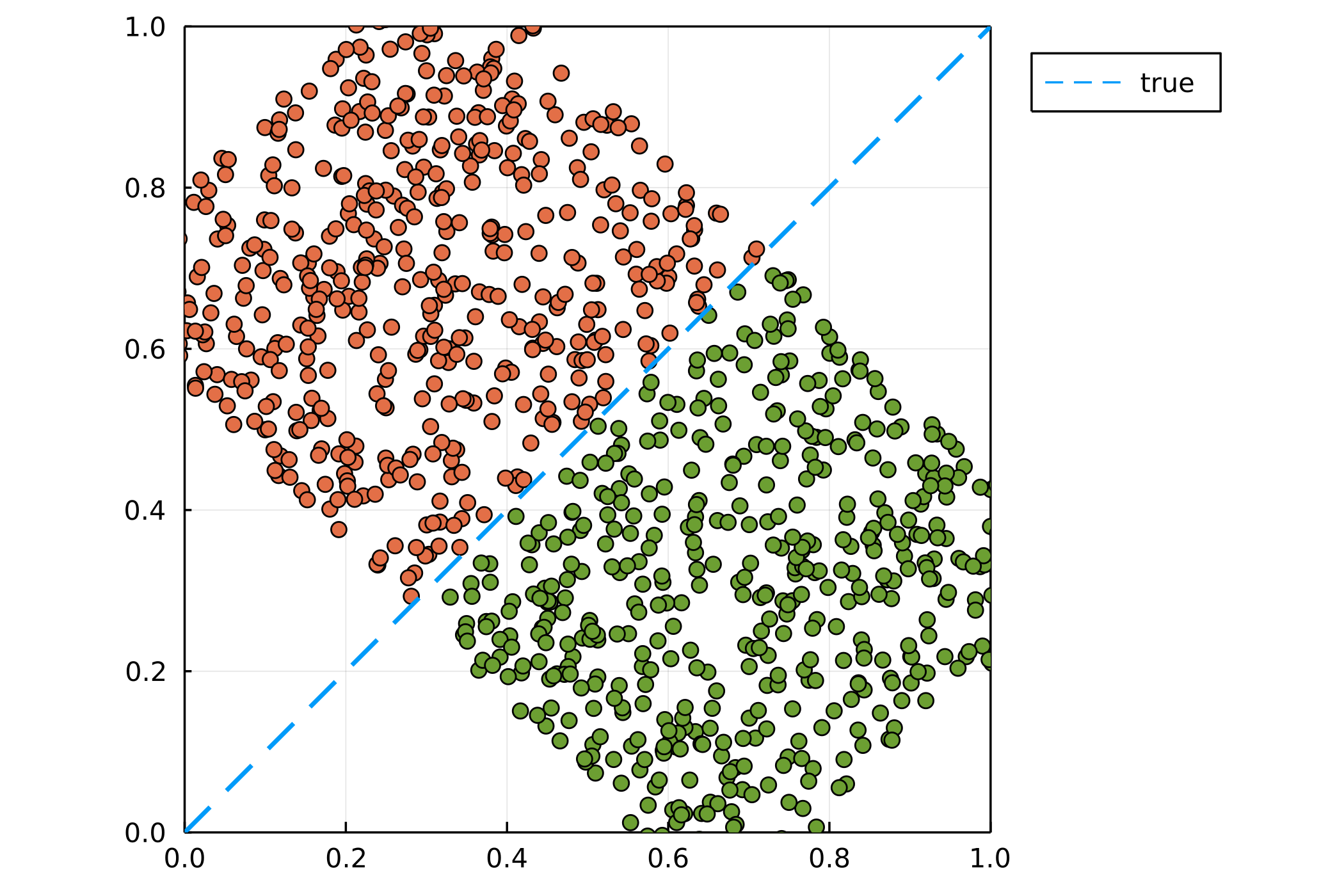}
    \caption{}
  \end{subfigure}
  \hfill
  \begin{subfigure}{0.45\textwidth}
    \includegraphics[width=\textwidth]{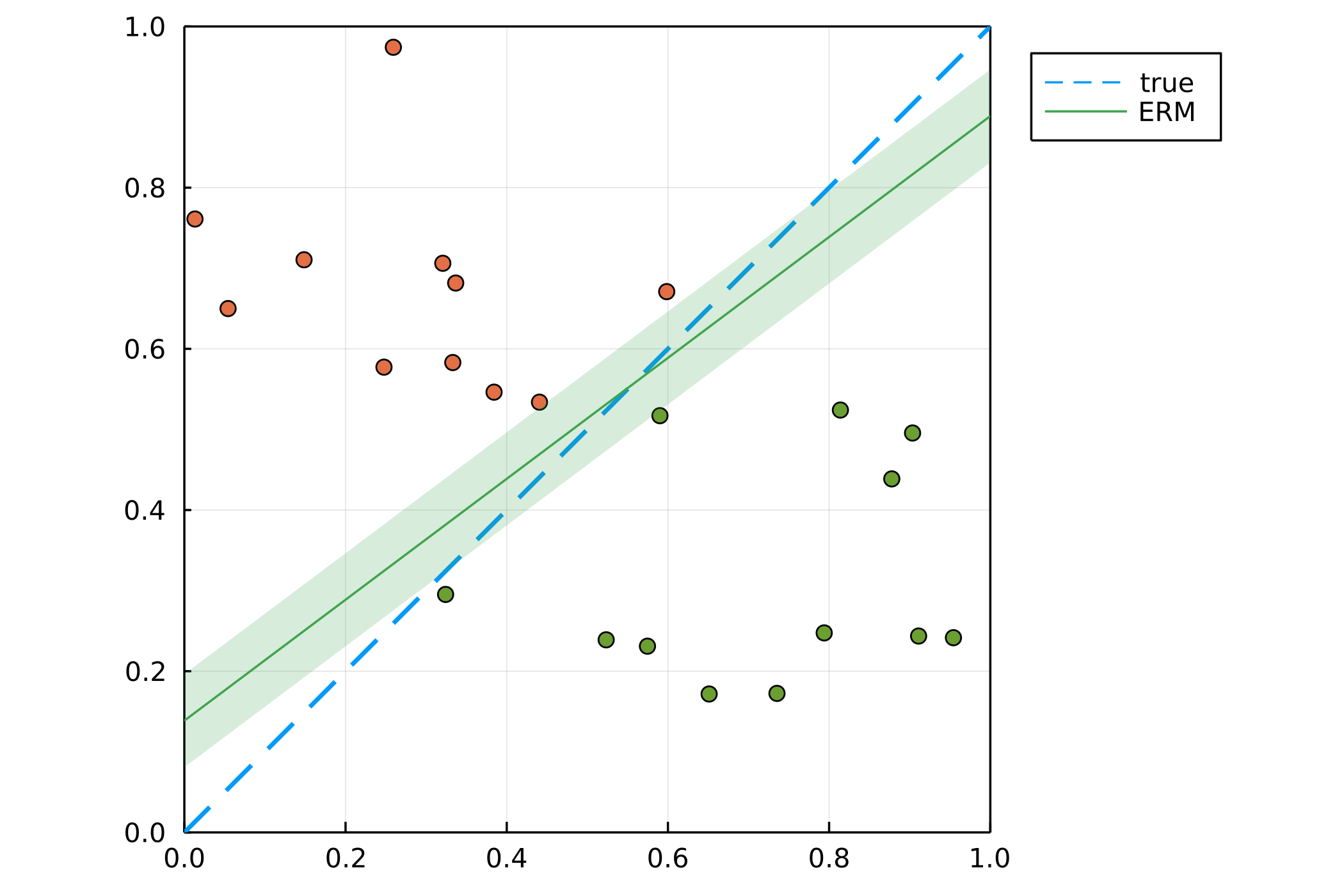}
    \caption{}
  \end{subfigure}
  \caption{(a) Data points of class $-1$ (in red) are distributed following a uniform distribution on a rectangle of center $(0.3,0.7)$, length 0.52 and width 0.46. Data points of class $1$ (in blue) are distributed following the symmetric of class $-1$ with respect to the true classifier $\theta=(-1,1),b=0$. (b) The ribbon around an ERM classifier represents its margin $\set{X\in\Re^d}{|\theta^\top X -b| \leq 1}$ with width $\tfrac{1}{\norm{\theta}_2}$. Here, the ERM loss is $0$ as the data is separable.}
  \label{fig: SVM Uniform}
\end{figure}

\paragraph{Robustness Against Noise $(\epsilon>0, r=0, \alpha =0)$}
Consider the HR robust classifier safeguarded against noise but not against either statistical error nor misspecification $(\epsilon>0, r=0, \alpha =0)$. The HR robust classifier can here be reduced to
\begin{align*}
  & \textstyle \min_{\theta\in\Theta,\,b\in \Re} \frac{1}{T} \sum_{t\in[T]} \max \left\{ 1- Y_t(\theta^\top X_t -b) + \epsilon \|\theta\|_2, 0\right\}\\
  = & \left\{
      \begin{array}{rl}
        \min &   \epsilon \|\theta\|_2+\frac{1}{T} \sum_{t\in[T]} \nu'_t\\
        \st  & \theta\in\Theta,~b\in \Re,~\nu\in\Re^T,\\
             & Y_t(\theta^\top X_t -b)\geq 1-\nu_t\quad \forall t\in[T],\\
             & \nu_t\geq -\epsilon \|\theta\|_2 \quad \forall t\in[T]
      \end{array}
      \right.
\end{align*}
which is nearly identical to the popular soft-margin SVM of \citet{cortes1995support} with the exception of the ultimate constraint; see also \citet[Appendix A]{bertsimas2019robust}.
The soft-margin SVM classifier balances between minimizing the total hinge loss and the maximization of the classifier margin $1/\norm{\theta}_2$. The HR support vector machine formulation enjoys the alternative interpretation that soft-margin SVM minimizes the maximum hinge loss with covariate data subjected to norm bounded noise; see also Figure \ref{fig: SVM HR noise}(a).

The equivalence between regularization and noise robustness for classification was noted already by \citet{xu2009robustness, bertsimas2019robust}.
Data distributions where soft-margin SVM is known to outperform ERM are normal distributions for each class.
We indicate that we can plausibly attribute this superior performance to the robustness against noise interpretation of soft-margin SVM.
Observe that normally distributed points can be seen as uniformally distributed points perturbed by (mostly) small noise; see Figure \ref{fig: noise generation}.

\begin{figure}[!htb]
  \centering
  \begin{subfigure}{0.45\textwidth}
    \includegraphics[width=\textwidth]{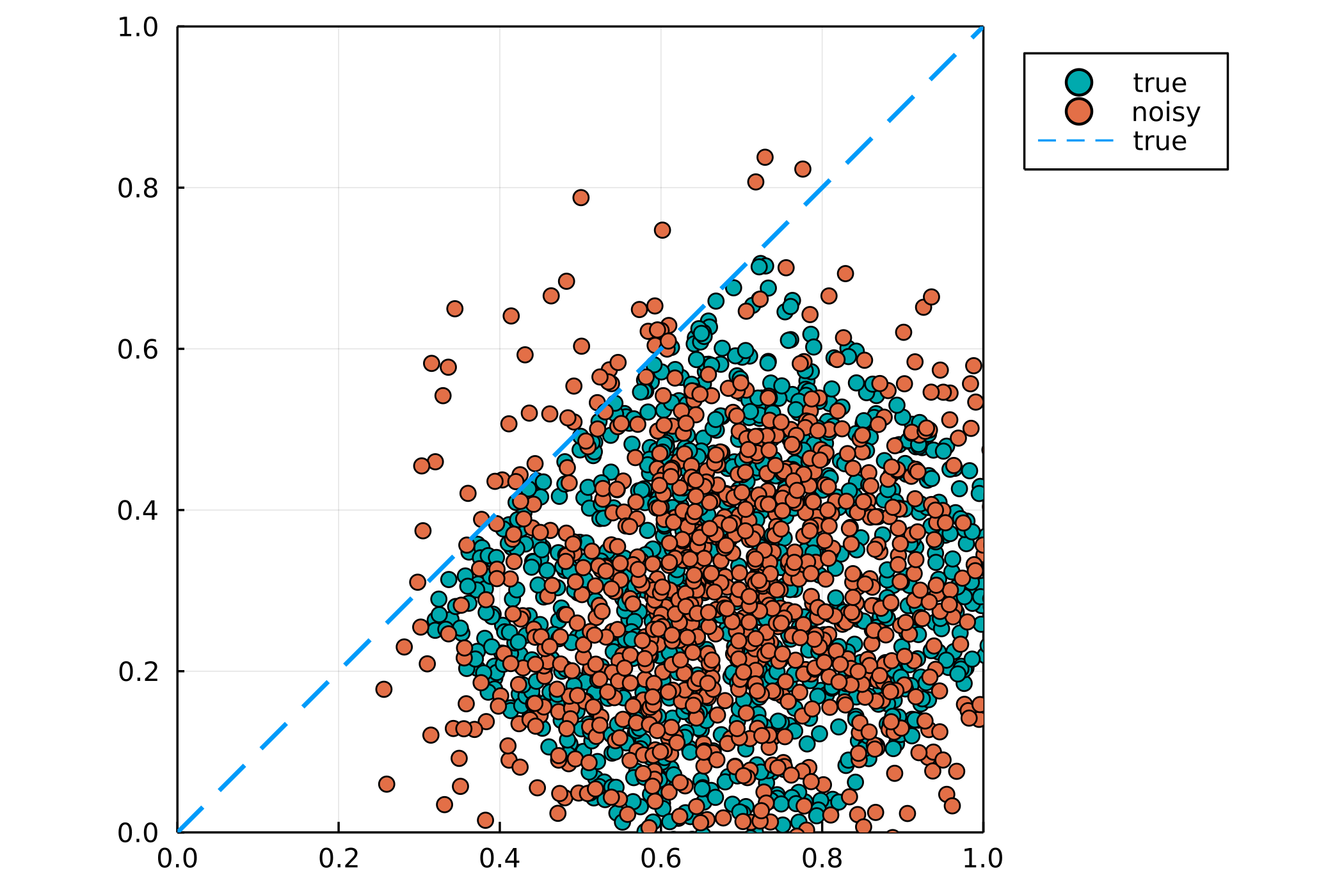}
    \caption{}
  \end{subfigure}
  \hfill
  \begin{subfigure}{0.45\textwidth}
    \includegraphics[width=\textwidth]{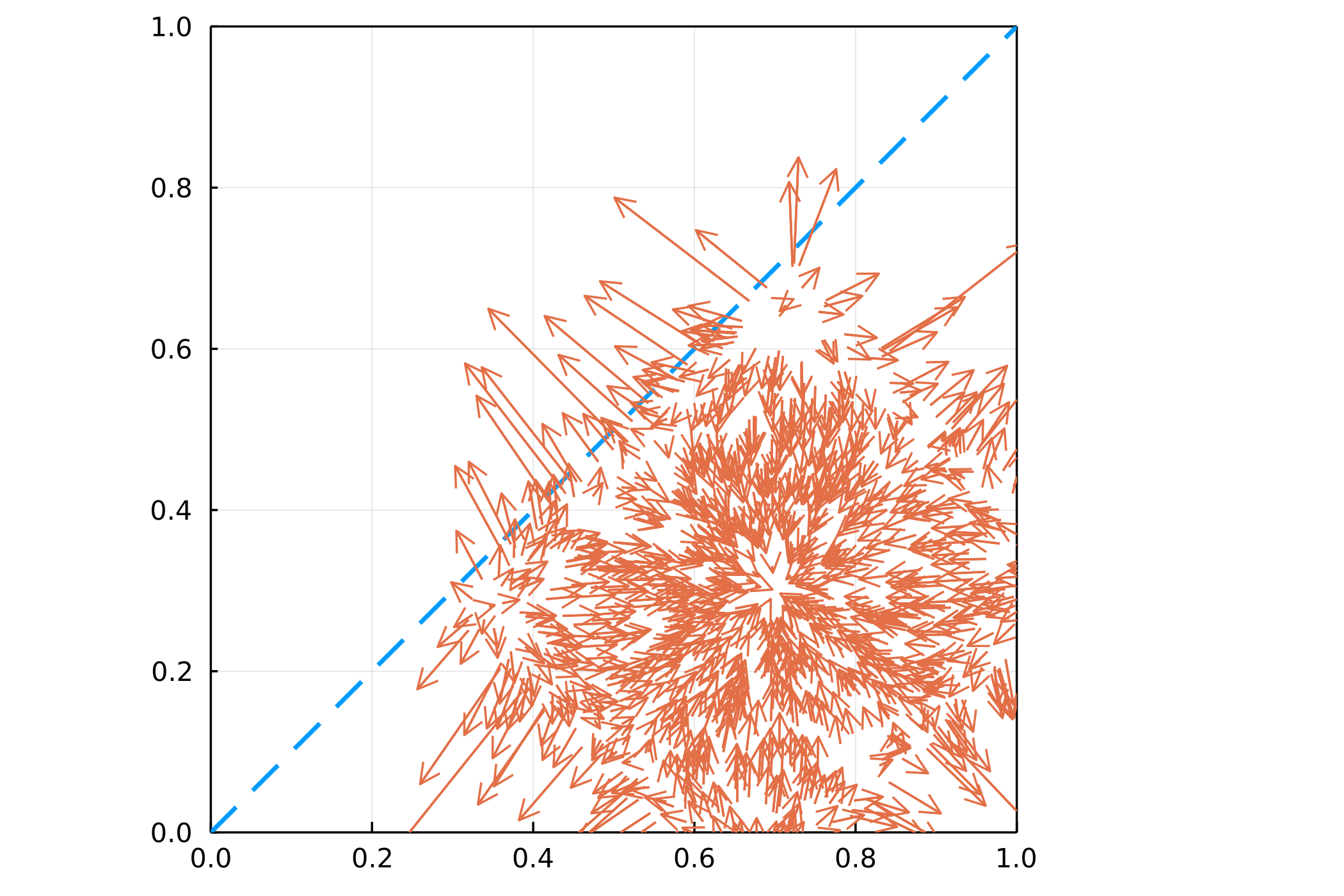}
    \caption{}
  \end{subfigure}
\caption{Let $\Pb_n^c$ denote a corrupted data distribution in which the uniform distributions characterizing $\Pb$ for both label classes have been replaced by an equivalent normal distribution sharing the same mean and variance.
 (a) True distribution $\Pb$ versus corrupted distribution $\Pb^c$ for a the points associated with the label 1. (b) Equivalent noise perturbation corrupting each of the uniform samples.}
\label{fig: noise generation}
\end{figure}

When run on the uncorrupted uniformally distributed data points, ERM and HR robust classification perform similarly as shown in Figure \ref{fig: SVM HR noise}(a). However, when run on the normally distributed noisy data points, HR robust classification out-performs ERM (see Figure \ref{fig: SVM HR noise}(b)).
One intuitive explanations is that the normally distributed cloud of points have a circular shape, hence, very few points actually are near the separating hyperplane $\tset{X\in\Re^d}{\theta\tpose X=b}$ and shape the classifier making it sensitive to noise in a few individual data points.
The soft-margin classifier, however, by encouraging a larger margin $1/\|\theta\|_2$ is associated with a separating hyperplane shaped by many more points decreasing sensitivity to the noise corrupting an individual data point.
The soft-margin classifier viewed as an HR robust classifier offers an alternative perspective on its superior performance.
The HR classifier considers the worst-case scenario of each data point perturbed by Euclidean bounded noise and hence it perturbs all data points in the direction of the hyperplane. Naturally, this leads to many more points in the vicinity of the hyperplane all of which shape the classifier thereby decreasing its noise sensitivity.

\begin{figure}[!htb]
  \centering
  \begin{subfigure}{0.45\textwidth}
    \includegraphics[width=\textwidth]{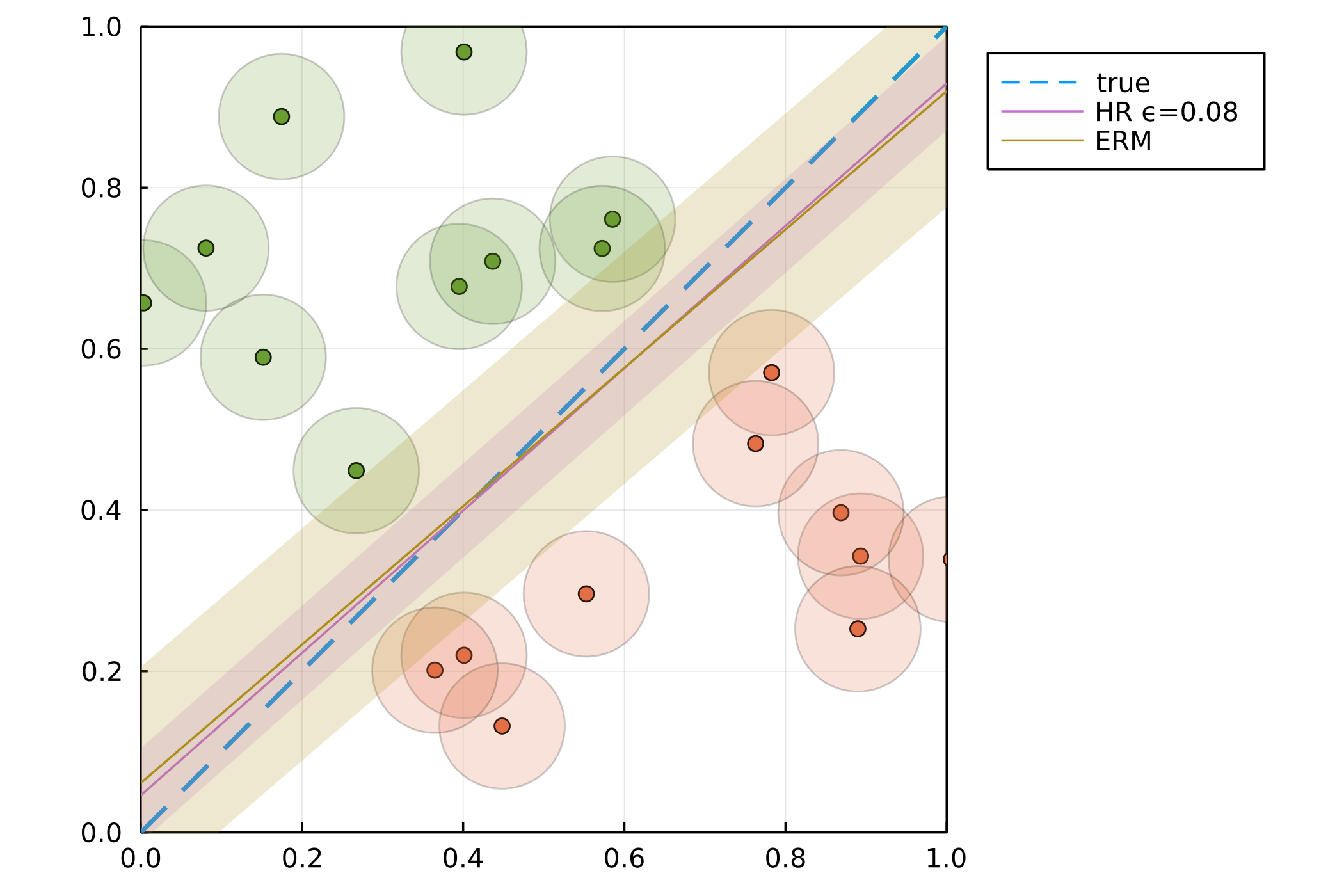}
    \caption{}
  \end{subfigure}
  \hfill
  \begin{subfigure}{0.45\textwidth}
    \includegraphics[width=\textwidth]{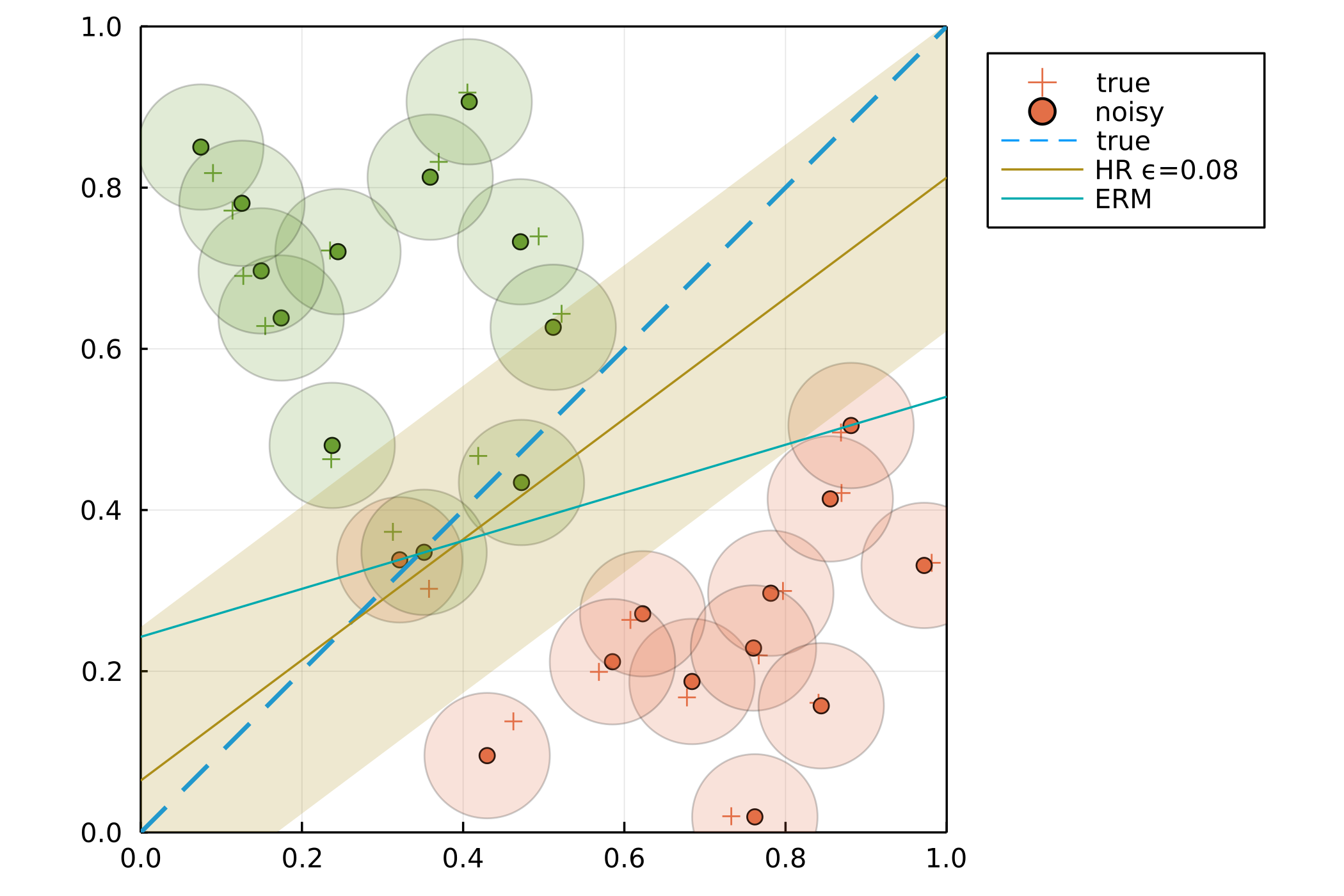}
    \caption{}
  \end{subfigure}
\caption{(a) ERM and HR robust classification  run on uncorrupted data points. The circles around each point represent the Euclidean noise ball of radius $\epsilon = 0.08$. (b) ERM and HR robust classification with $\epsilon = 0.08$ run on the normally distributed noisy data points.}
\label{fig: SVM HR noise}
\end{figure}

\paragraph{Robustness Against Noise and Statistical Error $(\epsilon=\epsilon'>0, r > 0, \alpha =0$)}
We consider now HR robust classification with $\epsilon>0$ and $r > 0$. That is, we desire robustness against noise and statistical error simultaneously but not to misspecification as here $\alpha =0$. The classifier following  Equation \eqref{eq:hr alpha=0} can be characterized here as the minimizer to
\[
  \begin{array}{r@{~}l}
    \min_{\theta\in\Theta,\,b\in \Re} \sup & \textstyle \sum_{t\in [T]} p_t \max\left\{ 1- Y_t(\theta^\top X_t -b) + \epsilon\|\theta\|_2, 0 \right\}\\
    \st & p \in \Re_+^T, \; \sum_{t\in [T]}p_t = 1, \; \sum_{t\in[T]} \frac{1}{T} \log(\tfrac{1}{(Tp_t)}) \leq r.
  \end{array}
\]
The HR robust classifier combines soft-margin classification which provides robustness against noise with weighting all data points according to their likelihood as advocated by \citet{wang2016likelihood}. Instead of taking the empirical average of the loss of the data points, HR-SVM indeed redistributes the weights of the data points to provide robustness against statistical error. 

Statistical error can play a prominent role in classification when the data is imbalanced.
Even when the out-of-sample distribution gives an equal chance of observing both label classes, by the luck of the draw there may be an imbalance between the number of observed data points in each label class.
Data imbalance becomes even more problematic if the out-of-sample distribution only gives a small chance of observing data associated with the minority label.
Both ERM or soft-margin SVM will on imbalanced data sets favor accuracy in labeling the majority class at the expense of the minority class.
Two common strategies to combat such data imbalance are artificially weighing the samples or by directly removing or adding samples \citep{chawla2002smote}.
Our HR robust classifier can be interpreted to reweigh the observed samples to avoid overfitting to data imbalance as well as safeguard against statistical error in general. The weighing here is based on the statistical likelihood of observing the sample and is closely related to empirical likelihood method by \citet{owen2001empirical}. It is important to note that this weighing is a function of the particular considered loss function and even adapts to the particular classifier considered. This in contrast to popular weighing methods \citep{chawla2002smote} which are generic and independent of the loss function.

Figure \ref{fig: SVM HR noise stat error} illustrate the HR robust classifier on an imbalanced data set. Figure \ref{fig: SVM HR noise stat error}(a) shows in particular that increasing the parameter $r$ weighs the data points in a way as to favor the minority class. Figure \ref{fig: SVM HR noise stat error}(b) exhibits these weights visually for $r=0.5$.

\begin{figure}[h]
  \centering
  \begin{subfigure}{0.45\textwidth}
    \includegraphics[width=\textwidth]{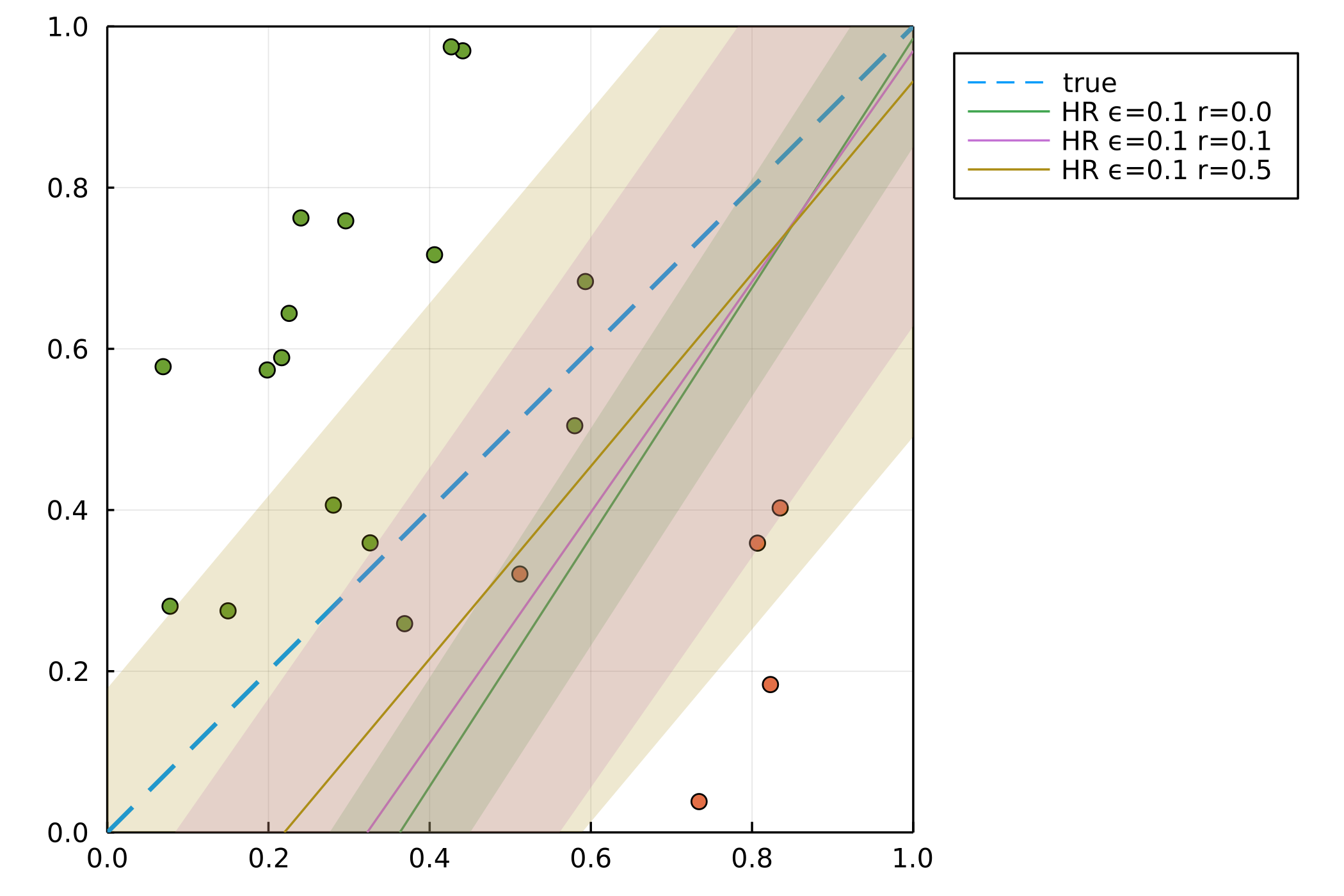}
    \caption{}
  \end{subfigure}
  \hfill
  \begin{subfigure}{0.45\textwidth}
    \includegraphics[width=\textwidth]{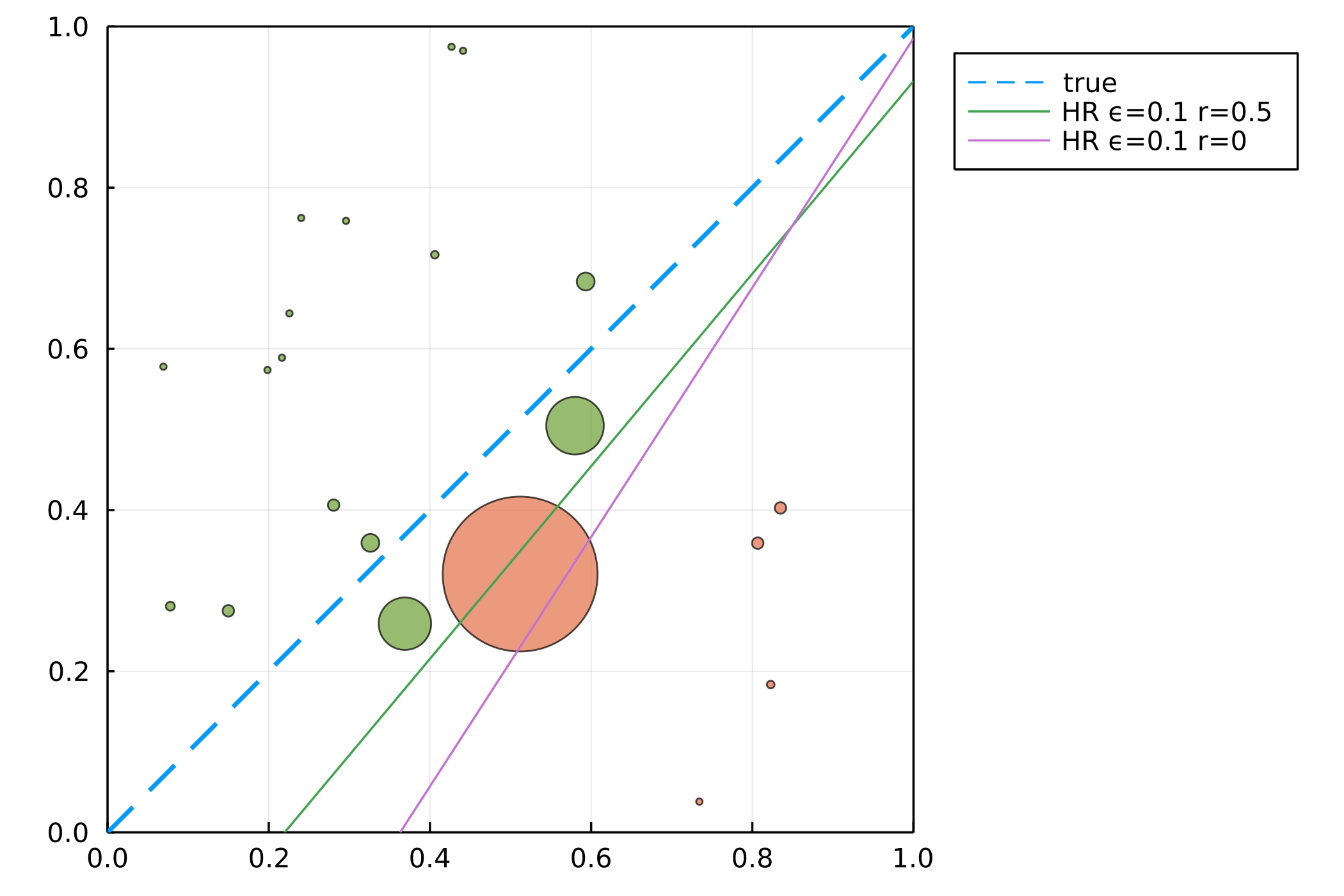}
    \caption{}
  \end{subfigure}
\caption{(a) The HR robust classifier with $r\in \{0,0.1,0.5\}$ on 25 data points with ratio $1/4$ imbalance between classes. (b) Weights characterizing the HR robust classifier with $r=0.5$ for each data point.}
\label{fig: SVM HR noise stat error}
\end{figure}

\subsubsection{Robustness Against Noise and Misspecification $(\epsilon=\epsilon'>0,  r=0, \alpha >0$)}

We remark that for the special case of $r=0$ HR SVMs reduce to another well known robust SVM approach, namely the $\nu$-SVMs of \citet{scholkopf2000new}. We consider here that the worst-case cost is independent of the classifier, and hence does not affect the optimization problem.
We can therefore write the HR-SVM following formulation Equation \eqref{eq:hr r=0}  as
\begin{align*}
    \min_{\theta\in\Theta,b\in B}\, (1-\alpha)\CVaR^{\alpha}_{\hat{\Pb}_T}\left( \max \left\{ 1- \tilde{Y}(\theta^\top \tilde{X} -b) + \epsilon\|\theta\|_2, 0 \right\} \right)
\end{align*}
This formulation results in the well known $\nu$-SVMs classifiers by \citet[Section 10]{akansu2016financial}.

\section{Application to Portfolio Selection}
\subsection{Hyperparameters}\label{sec: hyperparams}
We detail our choices of hyperparameters below. Denote by $\mathcal{U}_n([a,b])$ a uniform discretization of $[a,b]$ with $n$ points, starting at $a$ included.
The parameters of each model are chosen as:
\begin{itemize}
    \item \textbf{HR-DRO:} $\epsilon \in \{0\} \cup \{ 10^{-k} \; : \; k \in \mathcal{U}_{44}([-1,1])\}$, 
    $\alpha \in \{0\}$,
    $r \in \{0\} \cup\{ 10^{-k} \; : \; k \in \mathcal{U}_{44}([-1,3])\}$.
    \item \textbf{W-DRO:} $\epsilon \in  \{ 10^{-k} \; : \; k \in \mathcal{U}_{2000}([-1,3])\}$.
    \item \textbf{KL-DRO:} $r \in \{ 10^{-k} \; : \; k \in \mathcal{U}_{2000}([-1,3])\}$.
    \item \textbf{Mean-CVaR:} $\rho \in \mathcal{U}_{2000}([0,100])$.
    \item \textbf{Markowitz:} $\rho \in \mathcal{U}_{2000}([0,100])$.
\end{itemize}
\subsection{Selected Stock Tickers}
\label{stock-tickers}
\noindent MSFT, AAPL, LLY, JPM, XOM, WMT, UNH, PG, JNJ, COST, ORCL, HD, MRK, BAC, CVX, KO, AMD, PEP, TMO, ADBE, WFC, DHR, MCD, DIS, AMAT, TXN, ABT, CAT, GE, AXP, AMGN, VZ, PFE, IBM, NEE, CMCSA, UNP, SCHW, MU, COP, RTX, NKE, SPGI, INTC, ETN, HON, LOW, LRCX, SYK, T, C, PGR, BA, MDT, LMT, TJX, DE, ADI, KLAC, MMC, ADP, CI, FI, BMY, SO, WM, GD, DUK, CDNS, MO, SHW, CL, TT, ITW, EOG, TGT, CVS, CTAS, PH, NOC, SLB, BDX, ECL, CSX, EMR, USB, AON, PNC, FDX, MSI, WELL, APD, MMM, OXY, AJG, MNST, PCAR, VLO, TFC, AIG.

\subsection{Further experiments}
\label{app:other-performance}
In this section, we present further experiments exploring more in details insights of the portfolio selection experiment.
\begin{itemize}
    \item Figures \ref{fig:20231231-3}, \ref{fig:20231231-1}, \ref{fig:20230930-3}, \ref{fig:20230930-1} show the performance of the return risk, with and without validation of the benchmark methods with different data sets.
    \item Table \ref{tab:test-violation-formula-2}, \ref{tab:validation-violation-formula-1} and \ref{tab:validation-violation-formula-2} present testing and validation average risk tolerance threshold (positive part of standard deviation minus threshold), and testing and validation average risk tolerance deviate (standard deviation minus threshold).
    \item Figure \ref{fig:contour-plot-concatenated} shows a more detailed visualization of the influence of the three HR hyper-parameters on risk and return.
\end{itemize}

\begin{figure}[h]
    \begin{minipage}[b]{0.45\textwidth}
        \includegraphics[width=\textwidth]{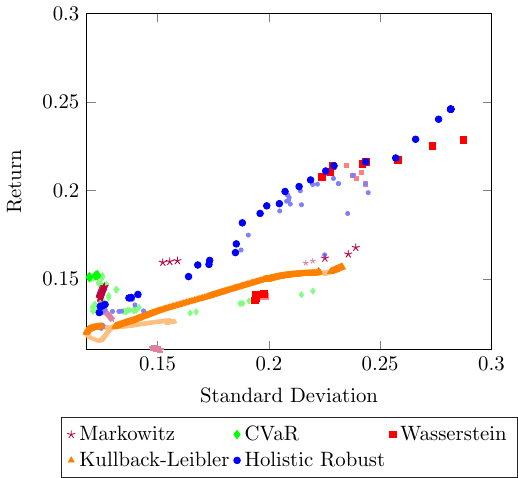}
        \caption{Testing return-risk with hyper-parameters chosen using validation. Dataset shifted by 1 quarter.}
        \label{fig:20231231-3}
    \end{minipage}
    \hfill
    \begin{minipage}[b]{0.45\textwidth}
        \includegraphics[width=\textwidth]{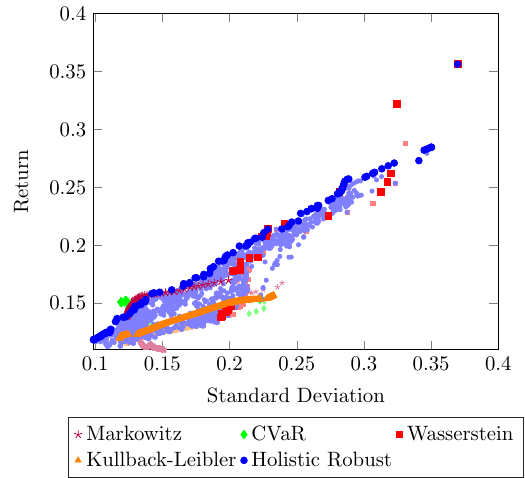}
        \caption{Full achievable testing return-risk with all hyper-parameters. Dataset shifted back by 1 quarter.}
        \label{fig:20231231-1}
    \end{minipage}
\end{figure}

\begin{figure}[h]
    \begin{minipage}[b]{0.45\textwidth}
        \includegraphics[width=\textwidth]{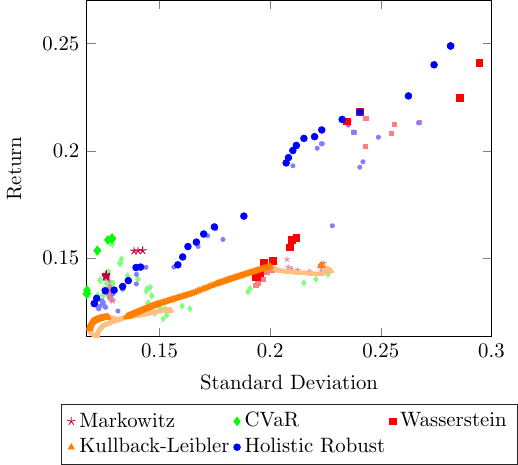}
        \caption{Testing return-risk with hyper-parameters chosen using validation. Dataset shifted back by 2 quarters.}
        \label{fig:20230930-3}
    \end{minipage}
    \hfill
    \begin{minipage}[b]{0.45\textwidth}
        \includegraphics[width=\textwidth]{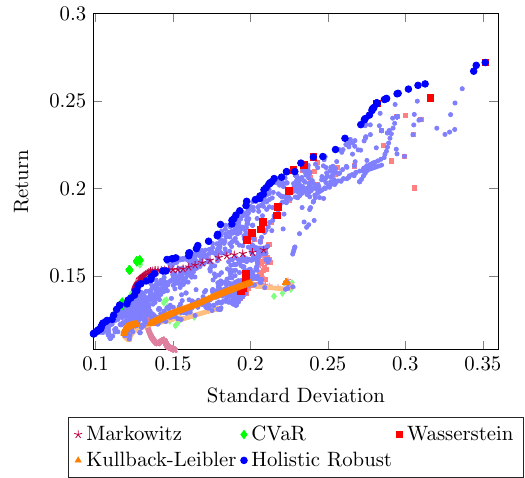}
        \caption{Full achievable testing return-risk with all hyper-parameters. Dataset shifted back by 2 quarters.}
        \label{fig:20230930-1}
    \end{minipage}
\end{figure}

\begin{figure}[h]
    \begin{minipage}[b]{0.45\textwidth}
        \includegraphics[width=\textwidth]{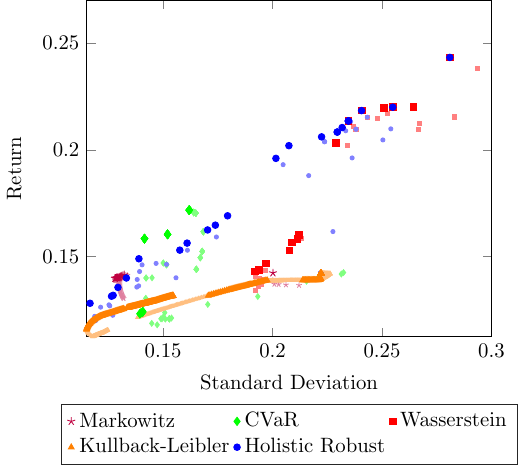}
        \caption{Testing return-risk with hyper-parameters chosen using validation. Dataset shifted by 3 quarters.}
        \label{20230630-3}
    \end{minipage}
    \hfill
    \begin{minipage}[b]{0.45\textwidth}
        \includegraphics[width=\textwidth]{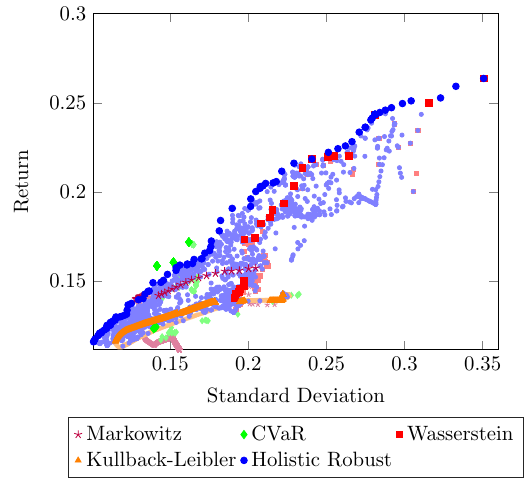}
        \caption{Full achievable testing return-risk with all hyper-parameters. Dataset shifted by 3 quarters.}
        \label{20230630-1}
    \end{minipage}
\end{figure}

\begin{table}[htbp]
\centering
\begin{tabular}{lccccc}
\toprule
Dataset & CVaR & W & KL & Markowitz & HR \\
\midrule
Original & $\boldsymbol{-5.20 \times 10^{-2}}$ & $1.70 \times 10^{-2}$ & $-4.32 \times 10^{-2}$ & $-3.77 \times 10^{-2}$ & \textbf{$-4.87 \times 10^{-3}$} \\
Shifted 1Q & $\boldsymbol{-5.11 \times 10^{-2}}$ & $1.85 \times 10^{-2}$ & $-3.02 \times 10^{-2}$ & $-2.15 \times 10^{-2}$ & \textbf{$2.88 \times 10^{-3}$} \\
Shifted 2Q & $\boldsymbol{-4.50 \times 10^{-2}}$ & $8.98 \times 10^{-3}$ & $-3.28 \times 10^{-2}$ & $-3.08 \times 10^{-2}$ & \textbf{$8.81 \times 10^{-4}$} \\
Shifted 3Q & $\boldsymbol{-4.17 \times 10^{-2}}$ & \textbf{$1.91 \times 10^{-2}$} & $-3.32 \times 10^{-2}$ & $-3.48 \times 10^{-2}$ & $2.02 \times 10^{-3}$ \\
\bottomrule
\end{tabular}
\caption{Average risk tolerance \textit{deviation} in the \textit{testing} set. Tolerance deviation is defined as \textit{testing} standard deviation minus the tolerance. Each dataset is shifted back by 0, 1, 2, or 3 quarters from the original dataset.}
\label{tab:test-violation-formula-2}
\end{table}

\begin{table}[htbp]
\centering
\begin{tabular}{lccccc}
\toprule
Dataset & CVaR & W & KL & Markowitz & HR \\
\midrule
Original & $3.45 \times 10^{-3}$ & $4.90 \times 10^{-3}$ & $5.67 \times 10^{-3}$ & $2.70 \times 10^{-3}$ & $\boldsymbol{6.01 \times 10^{-4}}$ \\
Shifted 1Q & $3.51 \times 10^{-3}$ & $5.47 \times 10^{-3}$ & $5.56 \times 10^{-3}$ & $2.57 \times 10^{-3}$ & $\boldsymbol{2.94 \times 10^{-5}}$ \\
Shifted 2Q & $3.39 \times 10^{-3}$ & $5.27 \times 10^{-3}$ & $5.94 \times 10^{-3}$ & $2.55 \times 10^{-3}$ & $\boldsymbol{0.0}$ \\
Shifted 3Q & $1.54 \times 10^{-3}$ & $5.11 \times 10^{-3}$ & $5.46 \times 10^{-3}$ & $1.83 \times 10^{-3}$ & $\boldsymbol{0.0}$ \\
\bottomrule
\end{tabular}
\caption{Average risk tolerance violation in the \textit{validation} set. Tolerance violation is defined as the positive part of the \textit{validation} standard deviation minus the tolerance. Each dataset is shifted back by 0, 1, 2, or 3 quarters from the original dataset.}
\label{tab:validation-violation-formula-1}
\end{table}

\begin{table}[htbp]
\centering
\begin{tabular}{lccccc}
\toprule
Dataset & CVaR & W & KL & Markowitz & HR \\
\midrule
Original & $-1.59 \times 10^{-2}$ & $\boldsymbol{-3.55 \times 10^{-2}}$ & $-7.27 \times 10^{-3}$ & $-1.49 \times 10^{-2}$ & $-2.75 \times 10^{-2}$ \\
Shifted 1Q & $-3.85 \times 10^{-2}$ & $\boldsymbol{-7.00 \times 10^{-2}}$ & $-4.08 \times 10^{-3}$ & $-1.48 \times 10^{-2}$ & $-3.75 \times 10^{-2}$ \\
Shifted 2Q & $-2.91 \times 10^{-2}$ & $\boldsymbol{-6.23 \times 10^{-2}}$ & $6.29 \times 10^{-5}$ & $-1.13 \times 10^{-2}$ & $-4.56 \times 10^{-2}$ \\
Shifted 3Q & $-1.96 \times 10^{-2}$ & $-2.09 \times 10^{-2}$ & $-1.82 \times 10^{-3}$ & $-2.34 \times 10^{-2}$ & $\boldsymbol{-2.64 \times 10^{-2}}$ \\
\bottomrule
\end{tabular}
\caption{Average risk tolerance \textit{deviation} in the \textit{validation} set. Tolerance deviation is defined as \textit{validation} standard deviation minus the tolerance. Each dataset is shifted back by 0, 1, 2, or 3 quarters from the original dataset.}
\label{tab:validation-violation-formula-2}
\end{table}

\begin{figure}[h]
  \centering
  \includegraphics[width=1\linewidth]{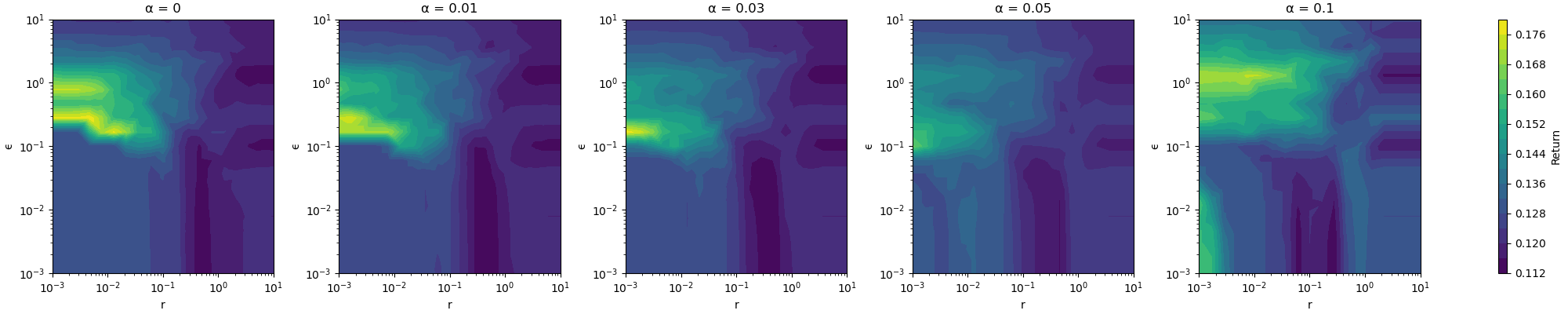}
  \includegraphics[width=1\linewidth]{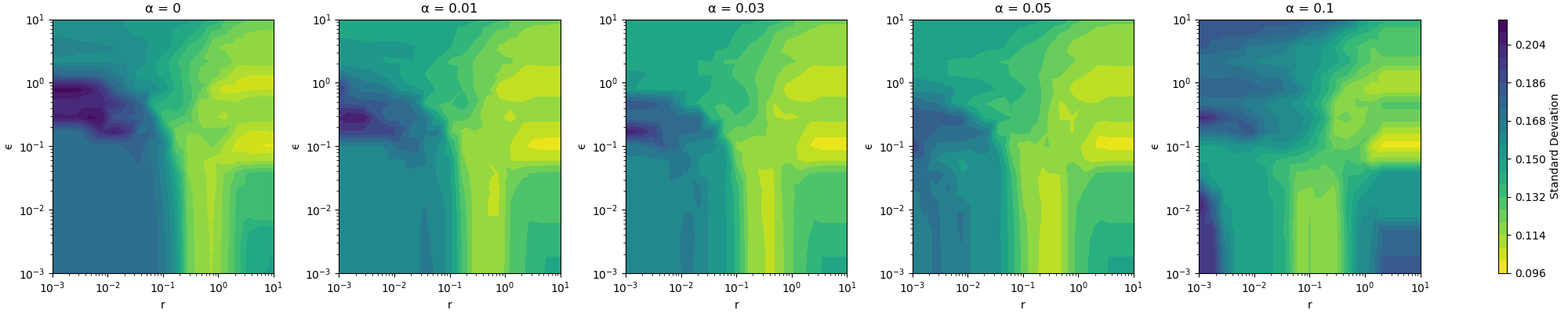}
  \includegraphics[width=1\linewidth]{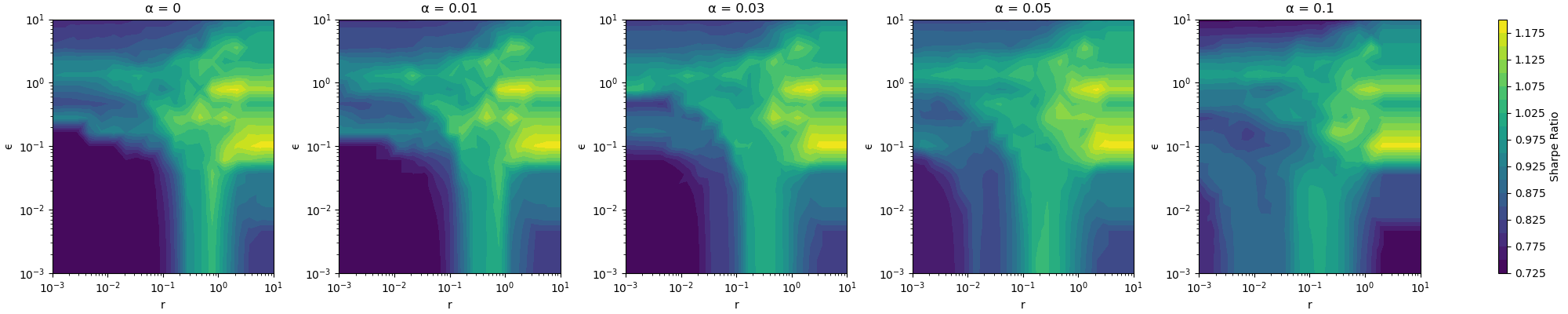}
  \caption{From top to bottom: Testing return, standard deviation, and Sharpe Ratio as a function of HR parameters. Brighter is better. Trained on data from \DTMdate{1995-03-31} to \DTMdate{2017-09-30} and tested on data from \DTMdate{2018-09-30} to \DTMdate{2023-09-30}. }
  \label{fig:contour-plot-concatenated}
\end{figure}

\end{document}